\newcommand{\Rmnum}[1]{\expandafter\@slowromancap\romannumeral #1@}
\newtheorem{theorem}{Theorem}
\newtheorem{lemma}{Lemma}
\newtheorem{proposition}{Proposition}
\newtheorem{definition}{Definition}
\newtheorem{assumption}{Assumption}
\pgfplotsset{compat=newest}
\pgfplotsset{plot coordinates/math parser=false,trim axis left}
\newlength\figureheight
\newlength\figurewidth
\newcommand{\eins}{\boldsymbol{1}}
\newcommand{\argmax}{\operatornamewithlimits{arg \, max}}
\pgfplotsset{
standard/.style={
	axis x line=middle,
	axis y line=middle,
	enlarge x limits=0.15,
	enlarge y limits=0.15,
	every axis x label/.style={at={(current axis.right of origin)},anchor=north west},
	every axis y label/.style={at={(current axis.above origin)},anchor=north east}
}
}
\author[]{Hanyuan Hang}
\date{\today}
\affil[]{Department of Applied Mathematics \\ 
	University of Twente, The Netherlands \\
	{\tt h.hang@utwente.nl}
}
\begin{document}

\title{Bagged $k$-Distance for Mode-Based Clustering 
	Using the Probability of Localized Level Sets}



\allowdisplaybreaks

\maketitle


\begin{abstract}In this paper, we propose an ensemble learning algorithm named \textit{bagged $k$-distance for mode-based clustering} (\textit{BDMBC}) by putting forward a new measurement called the \textit{probability of localized level sets} (\textit{PLLS}), which enables us to find all clusters for varying densities with a global threshold.
	On the theoretical side, we show that with a properly chosen number of nearest neighbors $k_D$ in the bagged $k$-distance, the sub-sample size $s$, the bagging rounds $B$, and the number of nearest neighbors $k_L$ for the localized level sets, BDMBC can achieve optimal convergence rates for mode estimation. 
	It turns out that with a relatively small $B$, the sub-sample size $s$ can be much smaller than the number of training data $n$ at each bagging round, and the number of nearest neighbors $k_D$ can be reduced simultaneously.
	Moreover, we establish optimal convergence results for the level set estimation of the PLLS in terms of Hausdorff distance, which reveals that BDMBC can find localized level sets for varying densities and thus enjoys local adaptivity.
	On the practical side, we conduct numerical experiments to empirically verify the effectiveness of BDMBC for mode estimation and level set estimation, which demonstrates the promising accuracy and efficiency of our proposed algorithm. 
\end{abstract}


\section{Introduction}

In the field of \textit{density-based clustering}, the common assumption that all clusters have similar levels of densities is shared by many algorithms. 
In detail, those algorithms employ a global threshold for densities to define the high-density regions and categorize them as clusters.
Due to the algorithmic simplicity, such paradigm, also named as \textit{single-level} density-based clustering, attracts lots of attention in the early stage of clustering researches \cite{ester1996density,rehioui2016denclue,hinneburg2007denclue,idrissi2015improvement,jang2019dbscanpp}.
However, with the rapid development of information technology, the assumption is hard to hold as the number of clusters and the size of data keeps growing.
It has also been proved in experiments that the well-performed single-level clustering algorithms are incompetent in encountering datasets that have varying densities for different clusters \cite{zhu2021cdf,mcinnes2017hdbscan}.
Consequently, a more general setting for density-based clustering called \textit{multi-level density} clustering comes into vogue \cite{zhu2021cdf,mistry2021aedbscan,chazal2013persistence} and is applied in various subjects including computer vision \cite{pla2019finding,cesario2021towards,hu2022gmc_fm}, medicine and biometrics \cite{liu2015segmentation,ranjbarzadeh2020automated}, etc.

To solve the multi-level density clustering problem, a primary idea is to expand the solutions proposed in single-level clustering problems. 
Some researchers therefore hold the opinion that increasing the number of thresholds can help seek clusters with different densities, and propose the paradigm called \textit{hierarchical density-based clustering}.
The term \textit{hierarchical} means that the algorithm follows either an agglomerative (bottom-up) or a divisive (top-down) order to move the threshold, estimates the clusters with each threshold, and finally grows a clustering tree based on the clustering results. And by carefully selecting the nodes in the clustering tree, the hierarchical methods can obtain promising results in multi-level density situations \cite{mcinnes2017hdbscan,malzer2020hybrid,amini2016mudi}. 
For example, \cite{mcinnes2017hdbscan} takes the advantage of DBSCAN and proposes an automatic framework called HDBSCAN to decide which thresholds are better according to some pre-determined informational metric; 
In addition, \cite{malzer2020hybrid} also studied how to choose the optimal clustering results from a clustering tree.
Nevertheless, the hierarchical methods are criticized for the heavy computational cost of growing the clustering tree. 
And it is still an unsolved problem for automatically determining the clusters from a cluster tree still remains.

Therefore, to improve the computation efficiency and directly obtain the clustering result, another part of the research aims at finding a suitable transforming for the current density measure to balance the levels of densities for all clusters into a similar level \cite{cheng1995mean,jang2021meanshift++,zhu2021cdf,zhu2016density,mitra2011kddclus}. 
To be specific, such algorithms care little about the absolute density value of samples. Instead, they attach more importance to the relative information of samples in a local area.
For example, \cite{cheng1995mean} proposes the mean shift method to find the density hill of clusters by iteratively searching the center of mass from several randomly chosen initial points.
And in \cite{zhu2021cdf}, the estimated probability density function is transformed to a new measure called density ratio to help balance the density measure.
Since they result in seeking the \textit{bump} or \textit{hill} in the distribution, they are also called mode-based clustering algorithms.

Although mode-based methods largely increase the computational efficiency of multi-level density clustering problems, they still suffer from two inevitable shortcomings. 
Firstly, many mode-based algorithms require the estimation of the probability density function, e.g.~\cite{zhu2021cdf}. 
Nevertheless, density estimation problems suffer from the curse of dimensionality, which means estimating a satisfactory density function will be much harder and require more training samples when the dimension of the input variables is high.
Hence, it is hard to perform the mode-based clustering algorithm on high-dimensional datasets. 
Secondly, the computational efficiency of the mode-based algorithms may not be as satisfactory as expected. On the one hand, mode-based algorithms require much training time when the sample size is large. On the other hand, the procedure of searching an optimal combination of parameters can be even more tiresome.

Under such background, in this paper, we propose an ensemble learning algorithm called
\textit{bagged $k$-distance for mode-based clustering} (\textit{BDMBC})  to solve the multi-level density clustering problems. 
To be specific, we first introduce a new measurement called \textit{probability of localized level sets} (\textit{PLLS}) to deal with the multi-level density  problems. 
PLLS represents the local rank of the density which makes it possible to employ a global threshold to recognize the multi-level density clusters.
Secondly, to resist the curse of dimensionality in density estimation, we introduce the \textit{$k$-distance} as the density function which is then plugged into the localized level set estimation. As a distance-based measure, $k$-distance has strong resistance to the curse of dimensionality, and hence enables BDMBC to deal with high-dimensional data sets.
Last but not least, we further employ the bagging technique to enhance the computational efficiency in calculating the $k$-distance. In particular, when dealing with large-scale datasets, the bagging technique can accelerate the algorithm with a small sampling ratio and thus uses a much smaller training dataset in each bagging iteration. Since the size of the training dataset in each iteration is largely decreased by sub-sampling, the searching grid for  sample-size-based hyper-parameters can also be simplified, preventing the practitioners from tedious hyper-parameter tuning.

The theoretical and experimental contributions of this paper are summarized as follows:

\textit{(i)}  
From the theoretical perspective, we first conduct a learning theory analysis of the bagged $k$-distance by introducing the \textit{hypothetical density estimation}.
Under the H\"{o}lder smoothness of the density function, with properly chosen $k$, we establish optimal convergence rates of the hypothetical density estimation in terms of the $L_\infty$-norm. It is worth pointing out that our finite sample results demonstrate the explicit relationship among bagging rounds $B$, the number of nearest neighbors $k$, and the sub-sample size $s$. 

Then we propose a novel mode estimation built from the probability of a localized level set. Based on the convergence rates of the hypothetical density estimation, we show that under mild assumptions on modes, we obtain optimal convergence rates for mode estimation with properly chosen parameters. We show that the bagging technique helps to reduce the subsample size and the number of neighbors simultaneously for mode estimation and thus increases the computational efficiency.

Moreover, under mild assumptions on the density function, we establish convergence results of level set estimation for the probability of localized level set in terms of Hausdorff distance. Compared to previous works on level set estimation in clustering that focus on a single threshold, our results reveal level sets for varying densities. This reveals the local adaptivity of our BDMBC in multi-level density clustering.

\textit{(ii)}  
From the experimental perspective, we conduct numerical experiments to illustrate the properties of our proposed BDMBC.
Firstly, we verify our theoretical results about mode estimation by conducting the experiment of mode estimation on synthetic datasets. We demonstrate that our BDMBC can detect all modes successfully and thus can cluster all mode-based clusters.
Secondly, we verify our theoretical results about level-set estimation by conducting numerical comparisons with other competing methods. We show the promising accuracy and efficiency of our proposed algorithm compared with other density-based, cluster-tree-based, and mode-based methods.
Thirdly, we conduct parameter analysis on our proposed BDMBC, and empirically demonstrate that with a relatively small subsample ratio, bagging can significantly narrow the search-grid of parameters. Moreover, we compare the bagging and non-bagging version of the BDMBC on large-scale synthetic datasets and verify that bagging can largely shorten the computation time without sacrificing accuracy.

The remainder of this paper is organized as follows.
Section \ref{sec::Methodology} is a warm-up section for the introduction of some notations and the new measurement, the \textit{probability of localized level sets} (\textit{PLLS}). Then we propose \textit{the bagged $k$-distance for mode-based clustering} (\textit{BDMBC}) in Section \ref{sec::Methodology}. 
In Section \ref{sec::results}, we first present our main results on the convergence rates for mode estimation and level set estimation. Then we provide some comments and discussions concerning the main results in this section. 
In Section \ref{sec::Error}, we conduct the error analysis for the bagged $k$-distance and calculate its computational complexity. 
Section \ref{sec::Experiment} presents experimental results on both real and synthetic data. We also conduct scalability experiments to show the computational efficiency of our algorithm in this section. In Section \ref{sec::proofs}, we demonstrate the details of proofs. 
Finally, we summarize our work in Section \ref{sec::conclusions}.

\section{Methodology}\label{sec::Methodology}

In this section, we briefly recall some necessary notations and algorithms as preliminaries in Section \ref{sec::preliminaries}.
Then, to avoid the drawback of classical density-based clustering methods, we propose a new measurement, the \textit{probability of localized level sets} (\textit{PLLS}) in Section \ref{sec::densitycomparison}, introduce the bagged $k$-distance in Section \ref{sec::bagdistance}, and construct the corresponding density-based clustering algorithm called \textit{bagged $k$-distance for mode-based clustering} (\textit{BDMBC}) in Section \ref{sec::dcbc}.

\subsection{Preliminaries} \label{sec::preliminaries}

First, we introduce some basic notations that will be frequently used in this paper. 
We use the notation $a \vee b := \max \{ a, b \}$ and $a \wedge b := \min \{ a, b \}$. 
For any $x \in \mathbb{R}$, let $\lfloor x \rfloor$ denote the largest integer less than or equal to $x$ and $\lceil x \rceil$ the smallest integer greater than or equal to $x$. 
Recall that for $1 \leq p < \infty$, the $\ell_p$-norm is defined as $\|x\|_p := ( x_1^p + \cdots + x_d^p )^{1/p}$, and the $\ell_{\infty}$-norm is defined as $\|x\|_{\infty} := \max_{i = 1, \ldots, d} |x_i|$.  
Let $(\Omega, \mathcal{A}, \mu)$ be a probability space.
We denote $L_p(\mu)$ as  the space of (equivalence classes of) measurable functions $g : \Omega \to \mathbb{R}$ with finite $L_p$-norm $\|g	\|_p$. 
For any $x \in \mathbb{R}^d$ and $r > 0$, denote $B(x, r) := \{ x' \in \mathbb{R}^d : \|x' - x\|_2 \leq r \}$ as the closed ball centered at $x$ with radius $r$. 
For a set $A \subset \mathbb{R}^d$, the cardinality of $A$ is denoted by $\#(A)$ and the indicator function on $A$ is denoted by $\eins_A$ or $\eins \{ A \}$.

In the sequel, the notations $a_n \lesssim b_n$ and $a_n=\mathcal{O}(b_n)$ denote that there exists some positive constant $c \in (0, 1)$, such that $a_n \leq c b_n$ and $a_n \gtrsim b_n$ denotes that there exists some positive constant $c \in (0, 1)$, such that $a_n \geq c^{-1} b_n$. 
Moreover, the notation $a_n \asymp b_n$ means that there hold $a_n\lesssim b_n$ and $b_n\lesssim a_n$ simultaneously.
Let $P$ be a probability distribution on $\mathbb{R}^d$ with the underlying density $f$ which has a compact support $\mathcal{X} \subset [-R, R]^d$ for some $R > 0$. 
Suppose that the data $D_n = (X_1, \ldots, X_n) \in \mathcal{X}^n$ is drawn from $P$ in an i.i.d. fashion.
With a slight abuse of notation, in this paper, $c, c', C$ will be used interchangeably for positive constants while their values may vary across different lemmas, propositions, theorems, and corollaries.

\subsection{Probability of Localized Level Sets} \label{sec::densitycomparison}

One of the main drawbacks of density-based clustering based on density estimation is that it can not find all clusters with varying densities using a global threshold, see, e.g., \cite{chacon2015population,zhu2016density,chacon2020modal}.
Here we give a simple univariate example of this phenomenon in Figure \ref{fig::comparative::truedensity}.
For the univariate trimodal density, there are three different clusters that are visually identifiable, yet none of the level sets of the density has three connected components.
In fact, if the level is chosen too low, the two clusters of high densities will be merged into a single cluster. 
If the density level is chosen too high, the other cluster exhibiting a lower density will be lost.
Clearly, in such case, the clusters derived from a single density level cannot completely describe the inherent clustering structure of the data set.

\begin{figure}[!h]
	\centering
	\centerline{\includegraphics[height=0.32\columnwidth]{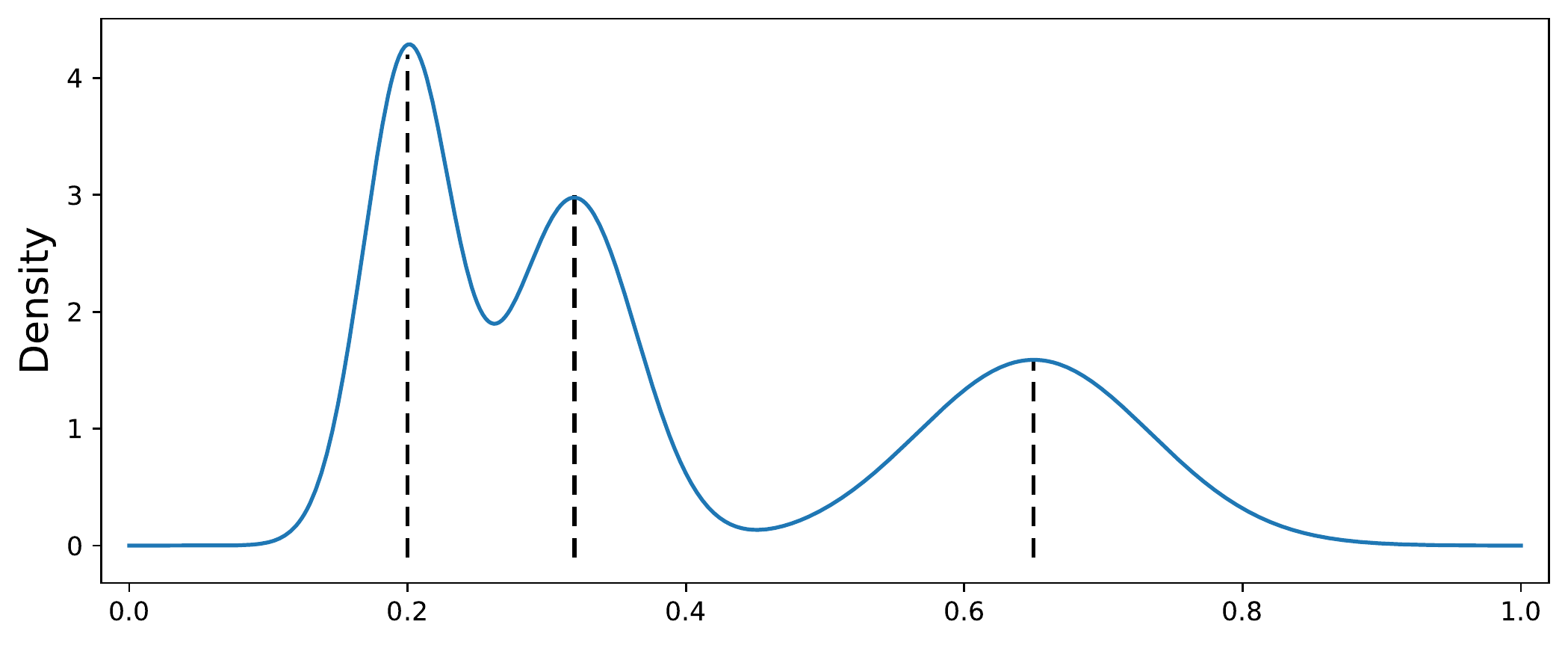}}
	\caption{Univariate trimodal density for which it is not possible to capture its whole cluster structure using a global threshold.}
	\label{fig::comparative::truedensity}
	\vskip -0.05in
\end{figure}

To deal with this issue, we propose a local measurement named the \textit{probability of localized level sets} (\textit{PLLS}) to implement the density-based clustering.

\begin{definition}[Probability of Localized Level Sets] \label{def::plls}
	Let $x \in \mathcal{X}$ and $\eta(x) > 0$ be the local radius parameter.
	Given the true density function $f : \mathcal{X} \to \mathbb{R}$, the \textit{probability of localized level sets} (\textit{PLLS}) is defined by
	\begin{align}\label{equ::frx}
		p_{\eta}(x)
		= P \bigl( f(y) \leq f(x) | y \in B(x,\eta(x)) \bigr)
		= \frac{P \bigl( f(y) \leq f(x), y \in B(x,\eta(x)) \bigr)}{P(y \in B(x,\eta(x)))}.
	\end{align}
\end{definition}

Note that the PLLS is the conditional probability of the event where the density of the instance is larger than that of its neighborhood.
To explain the advantages of the PLLS over the original probability density function for clustering, we point out two critical observations from \eqref{equ::frx}. 
On the one hand, if $x$ is a mode of $f$, then $f(y) \leq f(x)$ for all $y \in B(x, \eta(x))$. 
This yields that $p_{\eta}(x) = 1$. 
On the other hand, if $f(x)$ is a local minimum of the density, i.e., if $f(y) \geq f(x)$ for all $y \in B(x, \eta(x))$, then we have $p_{\eta}(x) = 0$.
Therefore, the PLLS figures out the relative positions to the modes of the density $f$ unlike the probability density function.
As a result, we can deal with the variation in density across different clusters and thus allow for a single threshold to identify all the modes and the corresponding clusters simultaneously.

\subsection{Bagged $k$-Distance}\label{sec::bagdistance}

In this section, we introduce the bagged $k$-distance, which represents the density implicitly, for the construction of mode-based clustering.
For any $x \in \mathbb{R}^d$ and any subset $D \subset D_n$, we denote $X_{(k)}(x):=X_{(k)}(x;D)$ as the $k$-th nearest neighbor of $x$ in $D$. Then we denote $R_k(x;D)$ as the distance between $x$ and $X_{(k)}(x;D)$, termed as the $k$-distance of $x$ in $D$. Specifically, we let $R_k(x) := R_k(x;D_n)$.

We first recall $k$-nearest neighbor ($k$-NN) for density estimation. 
To be specific, denote $\mu(B(x,r))$ as the area (described in the Lebesgue measure) of the ball $B(x,r)$. Then the $k$-NN density estimator \cite[Definition 3.1]{biau2015lectures} is defined by
\begin{align}\label{eq::fD}
	f_{k}(x) = \frac{k/n}{\mu(B(x,R_k(x)))} = \frac{k/n}{V_d R_k(x)^d},
\end{align}
where $V_d := \pi^{d/2} / \Gamma(d/2+1)$ is the volume of the unit ball.

However, in practice, a major problem is the numerical issues when computing $k$-NN density estimation for high-dimensional data where samples in a finite dataset can distribute quite sparsely. As a consequence, the target density can be extremely small in areas of the input space. In this case, $R_k(x)^d$ in \eqref{eq::fD} can be extremely large when $d$ is big, leading to arithmetic overflow in the process of computing. Therefore, density estimation is problematic to be derived directly for high-dimensional data.
On the other hand, for large-scale datasets, the computational burden of searching for $k$-nearest neighbors can be heavy.
To deal with these two problems, in this work, we adopt the bagging technique to reduce the number of nearest neighbors to search, and investigate a bagged variant of $k$-distance, called \textit{bagged $k$-distance}. To be specific, let $\{ D_b\}_{b=1}^B$ be a series of subsets uniformly subsampled from $D_n$ without replacement.
We define the bagged $k$-distance as 
\begin{align}\label{eq::bd}
	R_k^B(x) =\frac{1}{B}\sum_{b=1}^BR_k(x; D_b).
\end{align}

For the following theoretical analysis, we show that the bagged $k$-distance can be used to construct a \textit{hypothetical density estimator},
\begin{align}\label{equ::fbk}
	f_B(x) := \frac{\bigl( \sum_{i=1}^n p_i(i/n)^{1/d} \bigr)^d}{V_d R_k^B(x)^d}
\end{align}
with the weights 
\begin{align}
	p_i
	& := P(X_i \text{ is the } k\text{-th nearest neighbor of } x \text{ in } D_b)
	\nonumber\\
	& =
	\begin{cases}
		\binom{i-1}{k-1}\binom{n-i}{s-k}/\binom{n}{s}, & \text{ if } k\leq i\leq n-s+k\\
		0, & \text{ if } i\leq k \text{ or } i>n-s+k,
	\end{cases}
	\label{equ::def}
\end{align}
where $s$ denotes the subsample size of bagging.

The terminology \textit{hypothetical} derives from the observation that it is difficult to compute the $p_i$'s in practice due to the complicated calculations of combinations for the large sample size $n$. That is, rather than for practical use, the hypothetical density estimator is only for understanding the bagged $k$-distance and thus the theoretical analysis. 

The above definition acts as a bridge between bagged $k$-distance and hypothetical density estimator \eqref{equ::fbk}, where $f_B(x)$ is proportional to $R_k^B(x)^{-d}$.
We show that $f_B(x)$ has the same relative magnitude as $R_k^B(x)^{-1}$, that is, for a given $x$, larger bagged $k$-distance $R_k^B(x)$ indicates smaller hypothetical density estimation $f_B(x)$.
We delay the discussions that $f_B(x)$ is indeed a desired estimator of the underlying density function $f$ to Proposition \ref{thm::main2} in Section \ref{sec::RatesPseudoDE}.

\begin{figure}[!h]
	\centering
	\vskip -0.05in
	\begin{minipage}{0.43\columnwidth}
		\centering
		\centerline{\includegraphics[height=0.75\columnwidth]{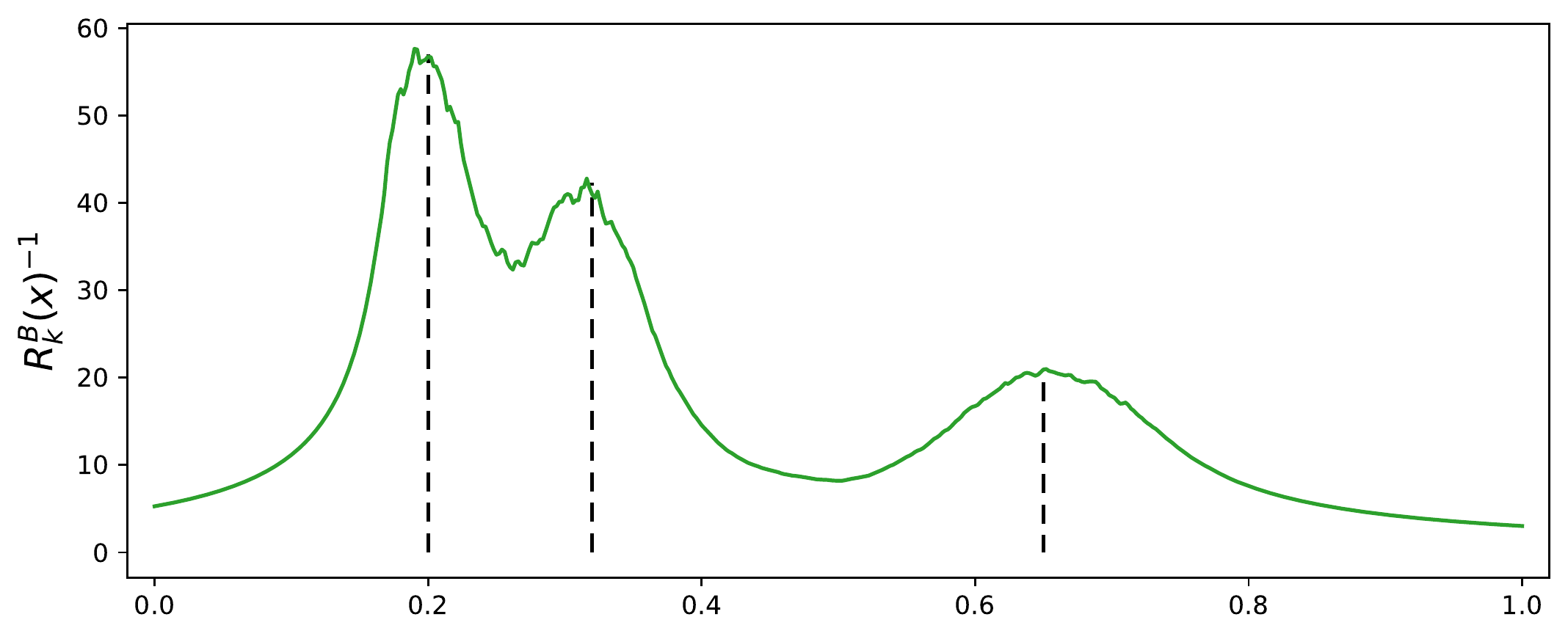}}
	\end{minipage}
	\vskip 0.0in
	\caption{Examples of the inverse of bagged $k$-distance $R_k^B(x)^{-1}$ with $B=25$, $s=0.9n$, $k_D=300$. Three vertical dash lines present modes of these three Gaussian distributions, showing that our method can find out all modes with varying densities at a time and make clustering much easier.}
	\label{fig::comparative::bknndensity}
	\vskip -0.05in
\end{figure}

In Figure \ref{fig::comparative::bknndensity}, we empirically illustrate the relationship between the inverse of bagged $k$-distance $R_k^B(x)^{-1}$ and the true density $f(x)$.
Here, we use a one-dimensional synthetic dataset named {\tt 3Mix} containing three Gaussian distributions $\mathcal{N}(0.20, 0.001)$, $\mathcal{N}(0.32, 0.002)$, and $\mathcal{N}(0.65, 0.007)$ with equal mixture component weights.
Figure \ref{fig::comparative::bknndensity} illustrates a one-dimensional example of the bagged $k$-distance.
The true density which is a mixture of three Gaussian distributions is provided in Figure \ref{fig::comparative::truedensity}.
With $d=1$, we show that the inverse of bagged $k$-distance $R_k^B(x)^{-1}$ in Figure \ref{fig::comparative::bknndensity} is proportional to the underlying density $f(x)$ in Figure \ref{fig::comparative::truedensity}.

\subsection{Bagged $k$-Distance for Mode-Based Clustering} \label{sec::dcbc}

The reformulation in \eqref{equ::frx} inspires us to empirically estimate both the numerator and denominator term of $p_{\eta}(x)$ respectively. 
More specifically, let $k_L\in \mathbb{N}$ be the number of nearest neighbors for localized level sets and $\eta(x):=R_{k_L}(x)$, then $p_{\eta}(x)$ can be estimated by
\begin{align}\label{equ::empiricalpkl}
	\widehat{p}_{k_L}(x)
	= \frac{\sum_{i=1}^n \eins \{ X_i \in B(x, R_{k_L}(x)), f(X_i) \leq f(x) \}}{\sum_{i=1}^n \eins \{ X_i \in B(x, R_{k_L}(x))\}}
	= \frac{1}{k_L} \sum_{i=1}^{k_L} \eins \{ f(X_{(i)}(x)) \leq f(x) \}.
\end{align}

To derive a computationally efficient estimator for PLLS, we use the bagged $k$-distance in Section \ref{sec::bagdistance} to define the \textit{empirical PLLS} by
\begin{align}\label{getah}
	\widehat{p}_{k_L}^B(x)
	:= \frac{1}{k_L} \sum_{i=1}^{k_L} \eins \bigl\{ R_{k_D}^B(X_{(i)}(x)) \geq R_{k_D}^B(x) \bigr\}.
\end{align}
In fact, $\widehat{p}_{k_L}^B(x)$ denotes the proportion of instances in the $k_L$ nearest neighbors whose density estimates are smaller than that of $x$.

With respect to the bagged $k$-distance plotted in Figure \ref{fig::comparative::bknndensity}, here we plot the empirical PLLS in Figure \ref{fig::comparative::bknnlocaldensity} which shows that PLLS pushes all density peaks towards $1$ and forces all density valleys towards $0$. This enlarges the difference between peaks and valleys, and therefore it is easier to use a global threshold to separate high-density regions and low-density regions. 
Moreover, note that three vertical dash lines in Figure \ref{fig::comparative::bknnlocaldensity} present modes of these three Gaussian distributions, the figures show that the PLLS based on bagged $k$-distance can find out all modes with varied densities at a time. 
Specifically, in Figure \ref{fig::comparative::truedensity}, there are three density peaks (modes) in $x=0.2$, $x=0.32$ and $x=0.65$, respectively. 
Two density valleys are located nearby $x=0.25$ and $x=0.50$. 
By introducing the PLLS, we see from Figure \ref{fig::comparative::bknnlocaldensity} that, the values of density peaks are close to $1$, and the values of density valleys are close to $0$. 
The score difference between the density peak in $x=0.32$ and the density valley in $x=0.25$ is significantly enlarged. 
The score difference between the density peak in $x=0.65$ and the density valley in $x=0.50$ is also significantly enlarged. 

\begin{figure}[!h]
	\centering
	\vskip -0.1in
	\begin{minipage}{0.428\columnwidth}
		\centering
		\centerline{\includegraphics[height=0.75\linewidth]{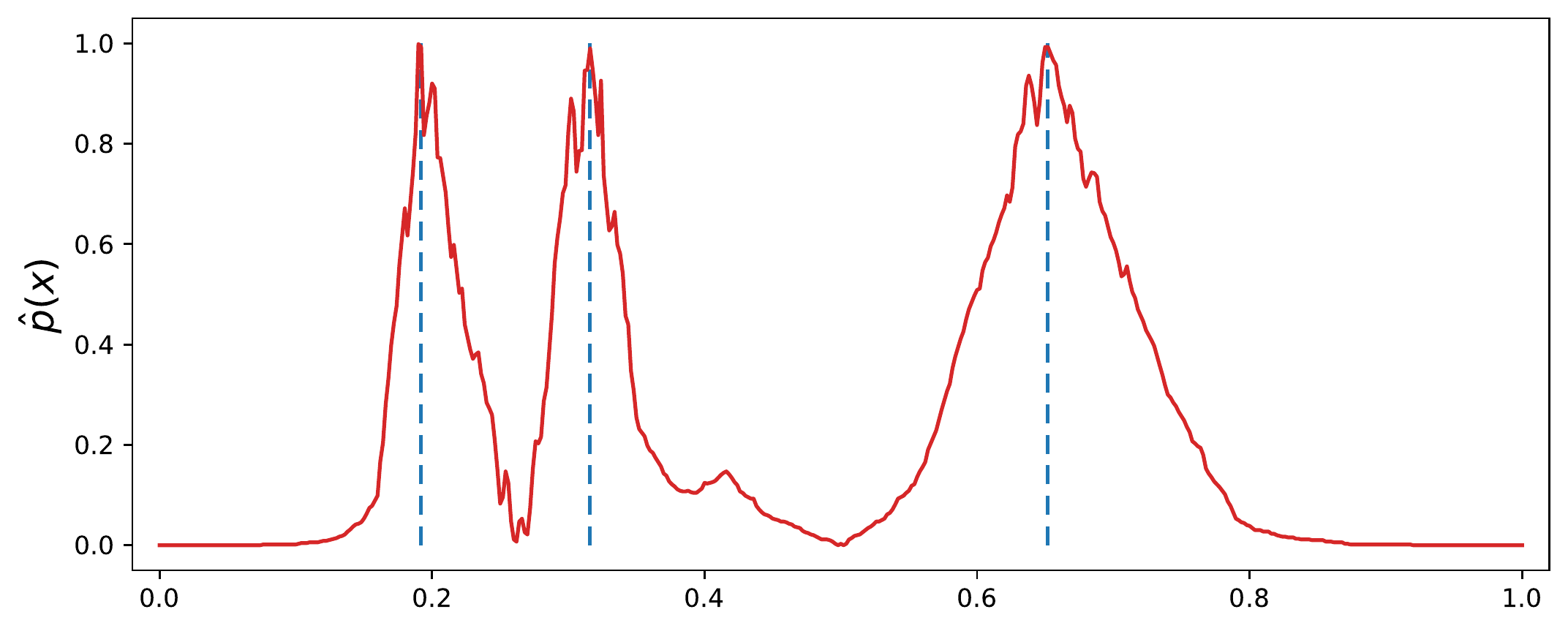}}
		\label{fig::comparative::bknnlocaldensity1}
	\end{minipage}
	\vskip 0.0in
	\caption{Examples of empirical PLLS with $B=25$, $s=0.9n$, $k_D=300$, and $k_L=750$. Three vertical dash lines present modes of these three Gaussian distributions, showing that our method can find out all modes with varied densities at a time and make density-based clustering much easier.}
	\label{fig::comparative::bknnlocaldensity}
	\vskip -0.1in
\end{figure} 

Now we can use a global threshold to recover the clusters with the following density-based clustering algorithm named \textit{bagged $k$-distance for mode-based clustering} (\textit{BDMBC}), which is summarized in Algorithm \ref{alg::BDMBC}.
BDMBC uses the \textit{probability of localized level sets} (\textit{PLLS}) to construct the graph on sample points in the upper level set and find connected components as clusters. 
By this approach, we can find the regions of \textit{locally} high density based on the upper level set of $\widehat{D}_{B}(\lambda)$ defined by \eqref{equ::detah}.
Then we recover the clusters according to the $k$-NN graph which utilizes the local density information to connect points.
We mention that in Algorithm \ref{alg::BDMBC}, we only consider the instances with $\widehat{p}_{k_L}^B(x)\geq \lambda$ since these instance are regarded as more important following from the statistically-principled approach in \cite{hartigan1975clustering}. 
Those instances not in the graph $G_B(\lambda)$ can be assigned to their closest clusters, see e.g.~\cite{jang2019dbscanpp}.

\begin{algorithm}[!h]
	\caption{Bagged $k$-Distance for Mode-Based Clustering (BDMBC)}
	\label{alg::BDMBC}
	\KwIn{
		A dataset $D := D_n := \{ X_1, \ldots, X_n \}$; \\
		\quad\quad\quad\,\,\,\, Bagging size $B$ and subsample size $s$;
		\\
		\quad\quad\quad\,\,\,\, Nearest neighbor $k_D$ for hypothetical density estimation;
		\\
		\quad\quad\quad\,\,\,\, Nearest neighbor $k_L$ for localized level set;
		\\
		\quad\quad\quad\,\,\,\, Nearest neighbor $k_G$ for graph.
		\\
	} 
	1. Subsample $s$ points as $\{ D_b \}_{b=1}^B$ from $D_n$ without replacement. \\
	2. Compute the bagged $k$-distance $R_{k_D}^B$ by \eqref{eq::bd} based on $(D_b)_{b=1}^B$. \\
	3. Compute the empirical PLLS $\widehat{p}_{k_L}^B(x)$ by \eqref{getah} for each $X_i$, $i=1,\ldots,n$. \\
	4. Construct $k_G$-nearest neighbor graph $G$ of all training samples $D$. \\
	5. Construct the subgraph of $G_B(\lambda)$ retaining the core-samples 
	\begin{align}\label{equ::detah}
		\widehat{D}_B(\lambda)=\{X_i\in D:\widehat{p}_{k_L}^B(X_i)\geq \lambda\}
	\end{align}
	and the mode set
	\begin{align}\label{equ::modesestimator}
		\widehat{\mathcal{M}}=\{X_i \in D:\widehat{p}_{k_L}^B(X_i)=1\}.
	\end{align}
	6. Compute the cluster estimators $
	{\mathcal{C}}_{B}(\lambda)$ that is the connected components of $G_B(\lambda)$.
	\\
	\KwOut{The proper cluster estimator $
		\mathcal{C}_{B}(\lambda)$.
	}
\end{algorithm}

From the definition of the empirical PLLS, we mention that it is critical to choose a proper number of nearest neighbors $k_L$ for localized level sets. 
On the one hand, if $k_L$ is too large, the neighborhood will contain more than one mode and thus can not reflect the local behavior of the densities.
On the other hand, if $k_L$ is too small, there will be too few instances in the neighborhood, which leads to an unreliable estimator for clustering.

When we replace the probability density function to the empirical PLLS $\widehat{p}_{k_L}^B(x)$ in \eqref{getah}, the set of modes can be naturally estimated by \eqref{equ::modesestimator}. 
From \eqref{equ::modesestimator}, we see that the mode set $\widehat{\mathcal{M}}$ picks the point with minimal bagged $k_D$-distance out of the $k_L$ nearest neighbors.
In this case, the difference in densities between mode estimations at dense and sparse regions can be reduced to zero with an appropriate $k_L$. 
We mention that our mode estimation is  different from gradient-based mode-seeking algorithms in the literature.
Examples of such procedures include the mean shift algorithm \cite{fukunaga1975estimation,comaniciud2002arobust}, the modal EM \cite{li2007nonparametric}, and the quick shift algorithm \cite{jiang2017consistency}.

Moreover, we highlight the role of mode estimation in the density-based clustering algorithm. As pointed out in \cite{hartigan1975clustering}, clusters can be identified as modes of the probability density function $f$ by the statistically-principled approach.

\section{Theoretical Results}\label{sec::results}

In this section, we establish theoretical results related to our algorithm BDMBC. 
As pointed out in Section \ref{sec::dcbc}, the ability of mode detection plays a fundamental role in density-based clustering,
so we begin with the convergence rates of mode estimators based on the probability of localized level set in Section \ref{sec::ratesbdmbc}. 
More specifically,
we present the convergence rates of BDMBC for mode estimation in Section \ref{sec::ratesbdmbc}. Our results reveal the benefits of bagging to reduce the number of nearest neighbors in bagged $k$-distance at each round.
Then we further show the convergence rates of the level set estimation in Section \ref{equ::ratelevelset}. Moreover, we show that BDMBC can find all clusters with varying densities using a single threshold.
Finally, we compare our studies with other existing ones in the literature in Section \ref{sec::comments}.

We first introduce the general assumptions needed throughout our theoretical analysis. We first make assumptions about the underlying density function in Assumption \ref{ass::cluster}.

\begin{assumption}\label{ass::cluster}
	Assume that $P$ has a Lebesgue density $f$ with the support $\mathcal{X}=[0,1]^d$. 
	\begin{enumerate}
		\item[(i)] [\textbf{Boundedness}]
		There exist constants $\underline{c}, \overline{c} > 0$ such that $\underline{c} \leq f \leq \overline{c}$.
		\item[(ii)] [\textbf{Smoothness}] 
		$f$ is $\alpha$-H\"{o}lder continuous, where $0 < \alpha \leq 1$, i.e., for all $x, x' \in \mathcal{X}$, there exists a constant $c_L > 0$ such that $|f(x) - f(y)| \leq c_L \|x - y\|^{\alpha}$.
	\end{enumerate}
\end{assumption}

Condition \textit{(i)} in Assumption \ref{ass::cluster} requires that the density is upper and lower bounded by positive constants, which is a mild density assumption, see, e.g., \cite{audibert2007fast,cannings2020local}.
Condition \textit{(ii)} in Assumption \ref{ass::cluster} requires the H\"{o}lder continuity on the global density function.
When $\alpha$ is small, the density function fluctuates more sharply, which results in the difficulty of estimating the density accurately.

Then we need to make the following assumption on the modes. 
Before we proceed, we denote the set of modes of $f$ by 
\begin{align*}
	\mathcal{M} := \{ x \in \mathcal{X} :\exists r > 0, \forall x' \in B(x, r), f(x') \leq f(x) \}. 
\end{align*}

\begin{assumption}[Twice Differentiability around Modes] \label{ass::modes}
	Assume that there exists some $r_{\mathcal{M}}>0$ such that $f$ is twice continuously differentiable around the disjoint neighborhood $B(m_i,r_{\mathcal{M}})$ of each $m_i\in \mathcal{M}$, $i = 1, \ldots, \#(\mathcal{M})$.
	Denote the gradient and Hessian of $f$ by $\nabla f$ and $H$, respectively,  and assume that $H(x)$ is negative definite at all $x \in \mathcal{M}$.
\end{assumption}

The above mode assumption is widely adopted for mode estimation \cite{dasgupta2014optimal,jiang2018quickshift++,jang2021meanshift++}, which requires that the density $f$ near the modes is concave \cite{saxena2017review,xu2019power}.
Compared to Assumption \ref{ass::modes}, this condition requires second-order smoothness of the density function near the modes.
Here we exclude modes at the boundary of support of $f$, where $f$ can not be continuously differentiable. In fact, this problem can be handled under an additional boundary smoothness assumption. This approach only complicates the analysis, while the main insights remain the same for interior modes. We refer the reader to \cite{dasgupta2014optimal} for more discussions.

We mention that Assumption \ref{ass::modes} holds for a large non-parametric class of functions including \textit{Morse} density functions, which are widely used in the density-based clustering and mode estimation and topological data analysis; see, e.g., \cite{chacon2015population,arias2016estimation,chacon2020modal} and the references therein. A map $f$ is a Morse function if its critical points are non-degenerate, i.e., the Hessian of $f$ at each critical point is non-singular.
As pointed out in \cite[Corollary 1.12]{matsumoto2002introduction}, critical points of morse functions are isolated. It thus follows that Morse functions on compact sets have finitely many critical points, which implies that Morse density functions satisfy Assumption \ref{ass::modes} if we choose a sufficiently small $r_{\mathcal{M}}$.

\subsection{Convergence Rates of BDMBC for Mode Estimation} \label{sec::ratesbdmbc}

To derive the convergence rates of our BDMBC for mode estimation, we need the following assumption under which clusters can be separated with respect to distinct modes. 

\begin{assumption}[Unflatness]\label{ass::flatness1}
	Assume that $P$ has a Lebesgue density $f$ with the support $\mathcal{X} = [0, 1]^d$ and
	there exist constants $\gamma > 0$, $c_{\gamma} > 0$, $\epsilon_0 > 0$ such that for all $\theta \in [0, \overline{c}]$ and $\epsilon \in(0, \epsilon_0]$, we have
	$P(x : |f(x) - \theta| \leq \epsilon) \leq c_{\gamma} \epsilon^{\gamma}$.
\end{assumption}

Assumption \ref{ass::flatness1} is a well-known condition introduced by \cite{polonik1995measuring} for the level set estimation problem. Clearly, the larger the $\gamma$, the more steeply $f$ must approach $\lambda$ from above. In fact, Assumption \ref{ass::flatness1} ensures there are no such flat regions where there is no change in density.  It is commonly adopted in cluster analysis \cite{steinwart2015fully, jiang2018quickshift++}.

The following theorem presents the convergence rates of the mode recovery based on the PLLS with respect to the Euclidean distance.

\begin{theorem}\label{thm::mode}
	Let Assumptions \ref{ass::cluster}, \ref{ass::modes} and \ref{ass::flatness1} hold with $2\alpha\gamma \leq 4+d$ and $\widehat{\mathcal{M}}$ be the mode estimator as in \eqref{equ::modesestimator}. 
	Then for every mode $m_i \in \mathcal{M}$ and $\lambda \geq c$ with the constant $c$ which will be specified in the proof, by choosing 
	\begin{align}\label{equ::relation}
		\begin{split}
			k_{D,n} \asymp & \log n,
			\qquad
			s_n \asymp  n^{\frac{d}{4+d}} (\log n)^{\frac{4}{4+d}},
			\qquad
			B_n \geq n^{\frac{3}{4+d}}(\log n)^{\frac{d+1}{4+d}},
			\\
			& k_{G,n} \asymp \log n,
			\quad \quad
			k_{L,n} \gtrsim n^{1-\frac{\alpha\gamma}{4+d}}(\log n)^{1+\frac{\alpha\gamma}{4+d}},
		\end{split}
	\end{align} 
	there exists a mode estimate $\widehat{m}_i$ such that with probability at least $1 - 3/n^2$, there holds 
	\begin{align*}
		\|\widehat{m}_i - m_i\|_2
		\lesssim (\log n/n)^{\frac{1}{4+d}}.
	\end{align*}
	Moreover, there exist distinct cluster estimators $\widehat{C}_i \in \mathcal{C}_B(\lambda)$, $1 \leq i \leq k$, such that $\widehat{m}_i \in \widehat{C}_i$. 
\end{theorem}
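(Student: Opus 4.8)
The plan is to translate the event $\{\widehat{p}_{k_L}^B(x)=1\}$ into a statement about the hypothetical density estimator $f_B$ and then run a ``uniform control plus curvature'' argument. Since $f_B(x)$ in \eqref{equ::fbk} is a strictly decreasing function of the bagged $k$-distance $R_{k_D}^B(x)$, the indicator in \eqref{getah} satisfies $\eins\{R_{k_D}^B(X_{(i)}(x)) \geq R_{k_D}^B(x)\} = \eins\{f_B(X_{(i)}(x)) \leq f_B(x)\}$, so $\widehat{p}_{k_L}^B(x)=1$ holds exactly when $x$ attains the largest value of $f_B$ among $\{x\}$ together with its $k_L$ nearest neighbors; that is, the set $\widehat{\mathcal{M}}$ in \eqref{equ::modesestimator} consists of the $k_L$-local maximizers of $f_B$. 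The first ingredient I would invoke is Proposition \ref{thm::main2}, which under the parameter choice \eqref{equ::relation} gives a uniform error bound $\|f_B - f\|_{\infty} \lesssim \epsilon_n$ on a high-probability event $E_1$. Combined with the second-order smoothness of $f$ near the modes (Assumption \ref{ass::modes}), where the linear Taylor term integrates out over symmetric balls and the bias becomes $O(R_{k_D}^2)$, I expect $\epsilon_n \asymp (\log n/n)^{2/(4+d)}$, whose square root is precisely the claimed rate.

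For the localization, fix a mode $m_i$ and a constant $r_0 \le r_{\mathcal{M}}$ small enough that, by Assumption \ref{ass::modes}, $f(m_i)-f(x) \gtrsim \|x-m_i\|_2^2$ holds on $B(m_i,r_0)$ and $m_i$ is the unique maximizer of $f$ there. I would define $\widehat{m}_i$ as the sample point in $B(m_i,r_0)$ with the smallest bagged $k_D$-distance, equivalently the largest $f_B$. On a coverage event $E_2$ there is a sample within distance $(\log n/n)^{1/d}$ of $m_i$, so that using $E_1$ and the near-mode curvature one obtains $f_B(\widehat{m}_i) \ge f(m_i) - O(\epsilon_n)$, hence $f(m_i)-f(\widehat{m}_i) \lesssim \epsilon_n$; the quadratic lower bound then yields $\|\widehat{m}_i - m_i\|_2 \lesssim \sqrt{\epsilon_n} \asymp (\log n/n)^{1/(4+d)}$. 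It remains to verify $\widehat{p}_{k_L}^B(\widehat{m}_i)=1$: since $k_L = o(n)$ under \eqref{equ::relation}, the $k_L$-NN radius $R_{k_L}(\widehat{m}_i) \asymp (k_L/n)^{1/d} \to 0$, so for $n$ large the entire $k_L$-neighborhood of $\widehat{m}_i$ lies inside $B(m_i,r_0)$, where $\widehat{m}_i$ already maximizes $f_B$; thus $\widehat{m}_i \in \widehat{\mathcal{M}}$.

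For the cluster statement I would show that the estimated modes fall in distinct connected components of $G_B(\lambda)$. Each $\widehat{m}_i$ has $\widehat{p}_{k_L}^B(\widehat{m}_i)=1 \ge \lambda$, so it is a core point of $\widehat{D}_B(\lambda)$ in \eqref{equ::detah} and belongs to some cluster $\widehat{C}_i \in \mathcal{C}_B(\lambda)$. Within-component connectivity follows from a standard $k_G$-nearest-neighbor graph argument: with $k_G \asymp \log n$, on a further event $E_3$ the samples lying in a fixed neighborhood of $m_i$ on which $\widehat{p}_{k_L}^B \ge \lambda$ are mutually reachable, since consecutive core points lie within the connection radius $R_{k_G} \asymp (\log n/n)^{1/d}$. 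For separation I would argue that along any path joining two modes the density must descend through a valley, where the empirical PLLS drops below $\lambda$: using the Unflatness Assumption \ref{ass::flatness1} to bound the mass in density bands and the ranking accuracy of $\widehat{p}_{k_L}^B$ guaranteed by the lower bound on $k_L$ in \eqref{equ::relation}, no core point survives in the valley, while the vanishing radius $R_{k_G} \to 0$ cannot bridge the constant-width gap. Hence distinct modes give $\widehat{m}_i \in \widehat{C}_i$ with the $\widehat{C}_i$ pairwise distinct, and a union bound over $E_1, E_2, E_3$ delivers the probability $1 - 3/n^2$.

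The step I expect to be the main obstacle is the cluster-separation analysis, namely certifying that $\widehat{p}_{k_L}^B$ stays below $\lambda$ throughout the valleys between modes while remaining $\ge \lambda$ on a connected neighborhood of each mode. This is delicate because it couples three error sources --- the density-estimation error $\epsilon_n$, the bias from using a finite $k_L$-neighborhood for the localized level set in \eqref{getah}, and the level-set fluctuation controlled by $\gamma$ --- and it is precisely here that the hypotheses $2\alpha\gamma \le 4+d$ and the calibrated lower bound $k_{L,n} \gtrsim n^{1-\alpha\gamma/(4+d)}(\log n)^{1+\alpha\gamma/(4+d)}$ must be used to keep the $k_L$-neighborhood inside a single mode's basin yet large enough for the comparison to be reliable. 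By contrast, the mode-localization rate itself is comparatively routine once the uniform bound of Proposition \ref{thm::main2} and the near-mode curvature are in hand.
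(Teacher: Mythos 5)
Your proposal is correct and follows essentially the same route as the paper: a uniform bound on $|f_B - f|$ near the modes (Proposition \ref{thm::main2}) combined with the quadratic curvature from Assumption \ref{ass::modes} and a Bernstein-type coverage argument to localize $\widehat{m}_i$; a vanishing $k_L$-NN radius to certify $\widehat{p}_{k_L}^B(\widehat{m}_i)=1$; and, for separation, the concentration of $\widehat{p}_{k_L}^B$ around $p_{k_L}$ together with the fact that $p_{k_L}$ is bounded away from $1$ outside $\mathcal{M}_{r_{\mathcal{M}}}$, so that core points cannot bridge the constant-width gaps with a $k_G$-NN radius of order $(\log n/n)^{1/d}$. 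The only cosmetic difference is that the paper defines $\widehat{m}_i$ as the $f_B$-maximizer over the shrinking ball $B(m_i,r_n)$ (so the rate is automatic and the curvature comparison is packaged as Proposition \ref{lem::kernelmode111}), whereas you maximize over a fixed ball and recover the same rate from $f(m_i)-f(\widehat{m}_i)\lesssim\epsilon_n$.
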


Theorem \ref{thm::mode} together with Theorem \ref{thm::modesingle} implies that up to a logarithm factor, the convergence rate of BDMBC turns out to be minimax optimal for mode estimation, if we choose the sub-sample size $s$, the number of nearest neighbors $k_D$ and $k_L$, and the bagging rounds $B$ according to \eqref{equ::relation}, respectively. In other words, when the bagging technique is combined with the $k$-distance for mode estimation, the optimal convergence rate is obtainable. Moreover, if we choose the number of nearest neighbors $k_G$ properly, then we can recover the cluster that corresponds to the modes in a subjective manner. 

Notice that for a given dataset, \eqref{equ::relation} yields that $k_D$ and $B$ is proportional to $s$ and ${k_D/s}$, respectively. Therefore, only a few independent bootstrap samples are required to use for the computation of $k$-distance at each bagging round. As a result, $k_{D,n}$ is reduced to $\mathcal{O}(\log n)$ in \eqref{equ::relation}, instead of $\mathcal{O}(n^{4/(4+d)} (\log  n)^{d/(4+d)})$ in the following Theorem \ref{thm::modesingle} in Section \ref{sec::ratesdmbc} for DMBC, the special case of BDMBC without bagging, i.e. $B=1$ and $s=n$.

\subsection{Convergence Rates of BDMBC for Level Set Estimation} \label{equ::ratelevelset}

In this section, we establish convergence rates of level set estimation for the PLLS of our BDMBC algorithm. 
Before we proceed, we need to introduce the population version of $R_i(x)$, namely $\overline{R}_i(x)$ defined by
\begin{align}\label{equ::overlinerx}
	\overline{R}_i(x):=\inf\{r\geq 0: P(B(x,r))\geq i/n\}.
\end{align} 
For $k_L\in \mathbb{N}$, we define the population version of the probability of the localized level set,
\begin{align}\label{equ::populationpkl}
	p_{k_L}(x):=P(f(y)\leq f(x)|y\in B(x,\overline{R}_{k_L}(x)).
\end{align}
where $\overline{R}_{k_L}(x)$ is defined by \eqref{equ::overlinerx}. Compared with the empirical version defined by \eqref{equ::empiricalpkl}, the local radius function in \eqref{equ::populationpkl} relies on the population version of the $k_L$-distance.

Then for $k_L\in \mathbb{N}$ and $\lambda \in [0,\overline{c}]$, we define the level set of $p_{k_L}(x)$ by $L_{k_L}(\lambda) := \{ x : p_{k_L}(x) \geq \lambda\}$. 
Then the level set estimation of our BDMBC is
$\widehat{L}_{k_L}(\lambda) := \{ x : \widehat{p}_{k_L}^B(x) \geq \lambda\}$ with $\widehat{p}_{k_L}^B(x)$ defined by \eqref{getah}.

To further conduct our analysis, we need the following assumption introduced in \cite{jiang2017density,jiang2019robustness} on the behavior of level set boundaries.

\begin{assumption}[$\beta$-regularity]\label{ass::flatness}
	Assume that $P$ has a Lebesgue density $f$ with the support $\mathcal{X}=[0,1]^d$. 
	Let $C$ be a connected component of the level set $L_{k_L}(\lambda)$. 
	Assume that there exists a constant $c_{\beta} > 0$ such that for $x \notin C$ and $k_L\in \mathbb{N}$, we have $c_{\beta} d(x, L_{k_L}(\lambda))^{\beta} \leq \lambda - p_{k_L}(x)$, where $d(x,A):=\inf_{y\in A} d(x,y)$.
\end{assumption}

The $\beta$-regularity in Assumption \ref{ass::flatness} ensures that there is a sufficient decay around level set boundaries so that the level sets are salient enough to be detected.
The next theorem gives the estimation rate in terms of the Hausdorff distance 
$d_{\mathrm{Haus}}(A,A')=\max\{\sup_{x\in A} d(x,A'),
$
\noindent
$\sup_{x'\in A'} d(x',A)\}$.

\begin{theorem}\label{thm::levelset}
	Let Assumptions \ref{ass::cluster}, \ref{ass::flatness1} and \ref{ass::flatness} hold with $\gamma > d/(2\alpha+d)$ and $\alpha\gamma \geq \beta$.
	By choosing 
	\begin{align*}
		k_{D,n} \asymp \log n,
		\
		s_n \asymp n^{\frac{d}{2\alpha+d}}(\log n)^{\frac{2\alpha}{2\alpha+d}},
		\
		B_n \geq n^{\frac{1+\alpha}{2\alpha+d}}(\log n)^{\frac{\alpha+d-1}{2\alpha+d}}, 
		\
		k_L \gtrsim n^{1 - \frac{\alpha \gamma-\beta}{2\alpha+d}} (\log n)^{\frac{\alpha\gamma-\beta}{2\alpha+d}}, 
	\end{align*}
	then with probability $P^n$ at least $1 - 3/n^2$, there holds
	\begin{align*}
		d_{\mathrm{Haus}} \Bigl( \widehat{L}_{k_L}(\lambda),L_{k_L}(\lambda) \Bigr)
		\lesssim (\log n/n)^{\frac{1}{2\alpha+d}}.
	\end{align*}
\end{theorem}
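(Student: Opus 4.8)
The plan is to reduce the Hausdorff bound to a single uniform deviation estimate $\sup_{x\in\mathcal{X}}|\widehat{p}_{k_L}^B(x)-p_{k_L}(x)|\le\varepsilon_n$ and then convert this $L_\infty$-closeness into a geometric bound through the $\beta$-regularity of Assumption \ref{ass::flatness}. Concretely, set $\varepsilon_n:=(\log n/n)^{\beta/(2\alpha+d)}$; I will show the uniform deviation holds on an event of probability at least $1-3/n^2$, and then invoke Assumption \ref{ass::flatness} to conclude $d_{\mathrm{Haus}}(\widehat{L}_{k_L}(\lambda),L_{k_L}(\lambda))\lesssim(\varepsilon_n/c_{\beta})^{1/\beta}=(\log n/n)^{1/(2\alpha+d)}$. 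The high probability $1-3/n^2$ will arise from a union bound over the three failure events below (density estimation, radius concentration, and conditional-probability concentration), each controlled at level $1/n^2$.

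For the geometric reduction, assume $\|\widehat{p}_{k_L}^B-p_{k_L}\|_\infty\le\varepsilon_n$. For the outer inclusion, any $x\in\widehat{L}_{k_L}(\lambda)$ satisfies $p_{k_L}(x)\ge\widehat{p}_{k_L}^B(x)-\varepsilon_n\ge\lambda-\varepsilon_n$, so $\lambda-p_{k_L}(x)\le\varepsilon_n$ and Assumption \ref{ass::flatness} gives $d(x,L_{k_L}(\lambda))\le(\varepsilon_n/c_{\beta})^{1/\beta}$, which bounds $\sup_{x\in\widehat{L}}d(x,L)$. For the inner inclusion I take $x\in L_{k_L}(\lambda)$: points with $p_{k_L}(x)\ge\lambda+\varepsilon_n$ satisfy $\widehat{p}_{k_L}^B(x)\ge\lambda$ and so lie in $\widehat{L}_{k_L}(\lambda)$ automatically, while points with $p_{k_L}(x)\in[\lambda,\lambda+\varepsilon_n)$ are, by continuity of $p_{k_L}$ together with the $\beta$-regular decay of Assumption \ref{ass::flatness}, within $(\varepsilon_n/c_{\beta})^{1/\beta}$ of a point at which $p_{k_L}\ge\lambda+\varepsilon_n$, hence within that distance of $\widehat{L}_{k_L}(\lambda)$. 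This controls $\sup_{x\in L}d(x,\widehat{L})$ and completes the reduction.

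The core of the argument is the uniform deviation bound, which I split along three bridges between $\widehat{p}_{k_L}^B(x)=k_L^{-1}\sum_{i=1}^{k_L}\eins\{R_{k_D}^B(X_{(i)}(x))\ge R_{k_D}^B(x)\}$ and $p_{k_L}(x)=P(f(y)\le f(x)\mid y\in B(x,\overline{R}_{k_L}(x)))$. First, replacing the bagged-distance comparison by the true-density comparison: since $R_{k_D}^B(X_{(i)}(x))\ge R_{k_D}^B(x)$ is equivalent to $f_B(X_{(i)}(x))\le f_B(x)$, a comparison can flip only when $|f(X_{(i)}(x))-f(x)|\le 2\|f_B-f\|_\infty=:2\delta_n$; by Proposition \ref{thm::main2} the chosen $k_{D,n},s_n,B_n$ yield $\delta_n\lesssim(\log n/n)^{\alpha/(2\alpha+d)}$, and the Unflatness of Assumption \ref{ass::flatness1} bounds the conditional mass of such near-tie neighbours, so the flipped fraction is $\lesssim\delta_n^{\gamma}\,n/k_L$. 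Second, replacing the empirical average over the $k_L$ nearest neighbours by the conditional probability over the empirical ball $B(x,R_{k_L}(x))$: this is a uniform-over-$x$ concentration of the empirical measure on the VC class of Euclidean balls intersected with the sublevel sets of $f$, contributing $\lesssim\sqrt{\log n/k_L}$. Third, replacing the empirical radius $R_{k_L}(x)$ by its population counterpart $\overline{R}_{k_L}(x)$: uniform concentration of $k_L$-NN radii controls the symmetric difference of the two balls, whose near-tie mass is again bounded via Assumption \ref{ass::flatness1}. Summing gives $\|\widehat{p}_{k_L}^B-p_{k_L}\|_\infty\lesssim\delta_n^{\gamma}\,n/k_L+\sqrt{\log n/k_L}+(\text{radius terms})$; the prescribed $k_L\gtrsim n^{1-(\alpha\gamma-\beta)/(2\alpha+d)}(\log n)^{(\alpha\gamma-\beta)/(2\alpha+d)}$ together with $\alpha\gamma\ge\beta$ forces the dominant flipped term below $\varepsilon_n$, the condition $\gamma>d/(2\alpha+d)$ guaranteeing that the density and radius error scales are compatible so that the near-tie estimates apply and the remaining terms are of smaller order.

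The main obstacle I anticipate is precisely this uniform deviation bound, and within it the interaction between the density-estimation error $\delta_n$ and the neighbourhood size: the flipped-comparison term carries the factor $n/k_L$, so pinning down the sharp exponent on $k_L$ requires combining Proposition \ref{thm::main2} with the Unflatness margin at exactly the right scale, and every concentration step must be made uniform over all query points $x\in\mathcal{X}$ through a covering or VC argument rather than for a fixed $x$. By contrast, once the uniform bound is in hand the $\beta$-regularity reduction is routine, modulo the continuity argument needed to make the inner inclusion precise.
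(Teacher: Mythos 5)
Your proposal is correct and follows essentially the same route as the paper: the paper's proof is exactly the combination of a uniform deviation bound $\|\widehat{p}_{k_L}^B-p_{k_L}\|_\infty\lesssim n(\log n/n)^{\alpha\gamma/(2\alpha+d)}/k_L$ (Proposition \ref{lem::petah1}, proved by the same three-part decomposition you describe: density-estimation error converted to a near-tie mass via Assumption \ref{ass::flatness1}, VC/Bernstein concentration over balls intersected with sublevel sets, and $k_L$-radius concentration) followed by the $\beta$-regularity conversion to a Hausdorff bound. Two minor remarks: the hypothetical-density rate $(\log n/n)^{\alpha/(2\alpha+d)}$ for these parameter choices is Proposition \ref{thm::main3}, not Proposition \ref{thm::main2} (which is the second-order rate near modes); and for the inner inclusion the paper does not stop at a point of $L_{k_L}(\lambda+2\delta_n)$ as you do, but further locates a nearby sample point via Lemma \ref{lem::netax} and the H\"older continuity of $p_{k_L}$ (Lemma \ref{lem::petahholder}), which is precisely where the hypothesis $\gamma>d/(2\alpha+d)$ is invoked — your shortcut is legitimate given that $\widehat{L}_{k_L}(\lambda)$ is defined over all of $\mathcal{X}$ and the deviation bound is uniform.
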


Note that the choice of $k_{D,n}$ in Theorem \ref{thm::levelset} is the same as that in Theorem \ref{thm::mode}, whereas the choice of $s_n$ and $B_n$ are different. In fact, compared with Theorem \ref{thm::mode}, we take H\"{o}lder smoothness assumptions in Theorem \ref{thm::levelset} instead of the twice differentiability in Assumption \ref{ass::modes}, and thus larger subsample size $s$ is required.
Moreover, the convergence rate in Theorem \ref{thm::levelset} turns out to be $\mathcal{O}(n^{-1/(2\alpha+d)})$ up to a logarithm factor, which matches the lower bound established in \cite{tsybakov1997nonparametric,jiang2017density}.

\subsection{Comments and Discussions}
\label{sec::comments}

This section presents some comments on the obtained results on the convergence rates for mode estimation and level set estimation and compares them with related findings in the literature.

\subsubsection{Comments on Convergence Rates for Mode Estimation}

Existing modal clustering algorithms using gradient ascent or borrowing from work in cluster tree estimation to seek modes. 
To the best of our knowledge, \cite{dasgupta2014optimal} first gives a procedure that recovers multiple modes of a density by using a top-down traversal of the density levels.
The best known practical approach for mode estimation is the mean-shift procedure and its variants \cite{fukunaga1975estimation,li2007nonparametric,chen2018modal,ghassabeh2018modified} consisting of gradient ascent of the appropriately smooth density estimator $f_D$. For the theoretical analysis, \cite{arias2016estimation} shows that mean-shift's updates converge to the correct gradient ascent steps. More recently, \cite{jiang2017consistency,jiang2018quickshift++}
show that Quick Shift and its variants can attain strong statistical guarantees without the second-order density assumption required to analyze mean-shift. However, most of these methods need a proper smooth density estimator as preliminaries.  Thus these clustering methods can be very sensitive to user-defined parameters.

In this paper, we first propose a new measurement called the \textit{probability of localized level sets} (\textit{PLLS}) built from the bagged $k$-distance. Then we provide a novel mode estimation and then establish optimal convergence rates for multi-level density problems.
Moreover, as a result of the analysis of bagged $k$-distance, we show that the bagging technique helps to reduce the subsample size and the number of neighbors simultaneously for mode estimation and thus increases the robustness in Theorem \ref{thm::mode}.

\subsubsection{Comments on Convergence Rates for Level Set Estimation}

We show in Theorem \ref{thm::levelset} the convergence rate turns out to be $\mathcal{O}(n^{-1/(2\alpha+d)})$, which matches the lower bound established in \cite{tsybakov1997nonparametric,jiang2017density}.
Previous studies were limited in that  they mainly focus on the level set estimation with a single mode. Therefore, these results are inadequate for multiple modes with varying densities. In this paper, our results recover localized level sets for varying densities (Theorem \ref{thm::levelset}). We show that under mild continuity and regularity assumptions on the density function, optimal convergence rates can be derived for the level set estimation of the PLLS.
Since the level set corresponds to the ``domain of attraction'' of the modes of $f$, i.e., population clusters, we can find all clusters using a single threshold.
This reveals the local adaptivity of our BDMBC in multi-level density clustering.

\section{Error and Complexity Analysis}
\label{sec::Error}

In this section, we first conduct error analysis related to the bagged $k$-distance in Section \ref{sec::analysisbagged}. We mention that the theoretical results for mode estimation and level set estimation in Section \ref{sec::results} are all built upon the results for bagged $k$-distance in this Section.
To be specific, in Section \ref{sec::analysisbagged}, we first conduct error decomposition for the hypothetical density estimation.
Then, in Subsections \ref{sec::BaggingError}-\ref{sec::ApproximationError}, we present the upper bounds for the bagging error, estimation error, and approximation error, respectively. 
With these preparations, we establish in Section \ref{sec::RatesPseudoDE} the uniform convergence rates for the hypothetical density estimation under mild smoothness Assumption \ref{ass::cluster}.
Moreover, in this Section, we further establish faster convergence rates for the hypothetical density estimation around the modes under mode Assumption \ref{ass::modes}. 
Finally, we conduct algorithm complexity analysis in Section \ref{sec::algorithmcomplexity} to demonstrate the efficiency of our algorithm.

\subsection{Error Analysis for Bagged $k$-Distance} \label{sec::analysisbagged}

The bagged $k$-distance can not be analyzed directly since it is not of the form of commonly used estimators. 
According to \eqref{equ::fbk}, the problem of analyzing the bagged $k$-distance can be reduced to the problem of analyzing the hypothetical density estimation.
Then we can apply standard techniques to the analysis of $f_B(x)$ and then use it for our bagged $k$-distance.

Let us turn to the empirical probability of the localized level set defined in \eqref{getah}. By using the hypothetical density estimation \eqref{equ::fbk}, $\widehat{p}_{k_L}^B(x)$ can be re-expressed as
\begin{align}\label{getah1}
	\widehat{p}_{k_L}^B(x):=\frac{1}{k_L}\sum_{i=1}^{k_L}\eins\{f_B(X_{(i)}(x))\geq f_B(x)\}.
\end{align}

Then we conduct the following error decomposition of hypothetical density estimation
\begin{align*}
	\bigl| f_B(x) - f(x) \bigr| 
	& = \biggl| \frac{\bigl( \sum_{i=1}^n p_i (i/n)^{1/d} \bigr)^d}{V_d \bigl( R_k^B(x) \bigr)^d} - f(x) \biggr| 
	\nonumber\\
	& = \biggl| \frac{\bigl( \sum_{i=1}^n p_i \bigl( (i/n) / (V_d f(x)) \bigr)^{1/d} \bigr)^d - \bigl( R_k^B(x) \bigr)^d}{\bigl( R_k^B(x) \bigr)^d} \biggr| \cdot f(x)
	\nonumber\\
	& = \biggl| \frac{\sum_{i=1}^n p_i \bigl( (i/n) / (V_d f(x)) \bigr)^{1/d}  -  R_k^B(x) }{\bigl( R_k^B(x) \bigr)^d} \biggr| \cdot  f(x) \cdot
	\\
	& \phantom{=}
	\cdot \sum_{j=0}^{d-1} \biggl( \sum_{i=1}^n p_i ((i/n) / (V_d f(x)))^{1/d}\bigg)^j \big(R_k^B(x)\big)^{d-1-j}.
\end{align*}

Let us consider the first term of the product on the right-hand side of the decomposition above. 
The numerator term is regarded as the difference between the weighted $k$-distance $\sum_{i=1}^n p_i \bigl( (i/n) / (V_d f(x)) \bigr)^{1/d} $ and the bagged $k$-distance, while the denominator term is the bagged $k$-distance $R_k^B(x)$ to the power $d$.

To conduct theoretical analysis for the bagged $k$-distance, we need to consider the estimator with infinite bagging rounds, which can be expressed as 
\begin{align}
	\widetilde{R}_k^B(x):=\mathbb{E}_{P_Z}^B[R_k^B(x) | \{ X_i \}_{i=1}^n],
\end{align}
where $P_Z$ denotes the sub-sampling probability distribution.

Note that the bagged $k$-distance can be re-expressed as a weighted $k$-distance, which is amenable to statistical analysis. To be specific, let $X_{(i)}(x)$ be the $i$-th nearest neighbor of $x$ in $D_n$ w.r.t. the Euclidean distance and $R_i(x)=\|x-X_{(i)}(x)\|$. For $1\leq b\leq B$, we can re-express the $k$-distance with respect to the set $D_b$ as
\begin{align*}
	R_k(x,D_b)=\sum^n_{i=1} p_i^b R_i(x)
\end{align*}
with $p_i^b:=\eins\{X_{(i)}(x) \text{ is the } k\text{-th nearest neighbor of } x \text{ in } D_b\}$.
Then the bagged $k$-distance in  \eqref{eq::bd} can be re-expressed as 
\begin{align*}
	R_k^B(x)=\frac{1}{B}\sum_{b=1}^B\sum_{i=1}^n 	p_i^b R_i(x).
\end{align*}
Therefore, we have the estimator with infinite bagging rounds
\begin{align}\label{equ::tilderbkx}
	\widetilde{R}_k^B(x)=\sum_{i=1}^n p_iR_i(x).
\end{align}
with $p_i$ defined by \eqref{equ::def}.

Finally, we are able to make the following error decomposition on the numerator as 
\begin{align*}
	& \biggl| R_k^B(x) - \sum_{i=1}^n p_i ((i/n) / (V_d f(x)))^{1/d} \biggr|
	\nonumber\\
	& \leq \bigl| R_k^B(x) -\widetilde{R}_k^B(x)\bigr|
	+ \biggl| \sum_{i=1}^n p_i \bigl( R_i(x) - \overline{R}_i(x) \bigr) \biggr| 
	+ \biggl| \sum_{i=1}^n p_i \bigl( \overline{R}_i(x) - ((i/n) / (V_d f(x)))^{1/d} \bigr) \biggr|.
\end{align*}
The three terms on the right hand side are called \textit{bagging error}, \textit{estimation error}, and \textit{approximation error}, respectively. 
More specifically, since we are not able to repeat the sampling strategy an infinite number of times, the bagging procedure brings about the first error term.
The second term is called the \textit{estimation error} since it is associated with the empirical measure $D_n$ and the last term is called \textit{approximation error} since it indicates how the error is propagated by the bagged $k$-distance for hypothetical density estimation. In the next three sections, we will bound these three terms respectively.

\subsubsection{Bounding the Bagging Error} \label{sec::BaggingError}

The next proposition shows that the bagging error term is determined by the number of bagging rounds $B$ and the ratio $k/s$.

\begin{proposition}\label{prop::weightB}
	Let Assumption \ref{ass::cluster} hold.
	Moreover, let $R_k^B(x)$ and $\widetilde{R}_k^B(x)$ be defined by \eqref{eq::bd} and \eqref{equ::tilderbkx}, respectively. 
	Then for all $x\in \mathcal{X}$, with probability $P^B_Z\otimes P^n$ at least $1-1/n^2$, there holds
	\begin{align*}
		\big| R_k^B(x) - \widetilde{R}_k^B(x) \big|
		\lesssim \sqrt{(k/s)^{2/d}\log n/B}+\log n/B.
	\end{align*}
\end{proposition}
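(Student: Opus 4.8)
The plan is to condition on the data $\{X_i\}_{i=1}^n$ and exploit that, conditionally, the summands $R_k(x;D_b)$, $b=1,\dots,B$, are i.i.d.\ draws whose common conditional mean is exactly $\widetilde{R}_k^B(x)=\sum_{i=1}^n p_i R_i(x)$ by \eqref{equ::tilderbkx}. Thus $R_k^B(x)-\widetilde{R}_k^B(x)$ is the centered empirical average of $B$ i.i.d.\ bounded random variables, and the natural tool is Bernstein's inequality. Writing $W_b:=R_k(x;D_b)-\widetilde{R}_k^B(x)$, Bernstein yields, conditionally on the data,
\[
P_Z^B\Bigl( \bigl| R_k^B(x) - \widetilde{R}_k^B(x) \bigr| > t \Bigr)
\le 2\exp\Bigl( - \frac{B t^2}{2\sigma^2 + (2/3) M t} \Bigr),
\]
where $M$ bounds $|W_b|$ almost surely and $\sigma^2$ bounds the conditional variance $\mathrm{Var}_{P_Z}(R_k(x;D_b))$. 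Inverting the tail at level $1/n^2$ gives $t\asymp \sqrt{\sigma^2\log n/B}+M\log n/B$, so the whole proof reduces to producing the two ingredients $\sigma^2\lesssim (k/s)^{2/d}$ and $M\lesssim 1$.

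The range bound is immediate: since $\mathcal{X}=[0,1]^d$, every value $R_k(x;D_b)=R_i(x)$ satisfies $R_k(x;D_b)\le \mathrm{diam}(\mathcal{X})\le \sqrt{d}$, whence $M\le \sqrt{d}\lesssim 1$ and the additive term matches $\log n/B$.

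The variance bound is the crux. Since $R_k(x;D_b)$ equals $R_i(x)$ with probability $p_i$, I would write
\[
\mathrm{Var}_{P_Z}\bigl( R_k(x;D_b) \bigr)
= \sum_{i=1}^n p_i \bigl( R_i(x) - \widetilde{R}_k^B(x) \bigr)^2
\le \sum_{i=1}^n p_i R_i(x)^2 ,
\]
and control the right-hand side by splitting the index range around $i^\ast\asymp kn/s$, the location where the combinatorial weights $p_i$ of \eqref{equ::def} concentrate. For indices $i$ close to $i^\ast$ I would invoke a standard $k$-NN order-statistic estimate, available from the density bounds in Assumption \ref{ass::cluster}(i) together with a binomial concentration of ball counts, to obtain $R_i(x)\lesssim (i/n)^{1/d}\lesssim (k/s)^{1/d}$ on a data event of probability at least $1-1/(2n^2)$; this contributes $\mathcal{O}((k/s)^{2/d})$. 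For indices far from $i^\ast$, where $R_i(x)$ can only be bounded by $\sqrt{d}$, I would use that the hypergeometric-type weights $p_i$ concentrate around $i^\ast$ with exponentially small tails, so that $\sum_{i:\,|i-i^\ast|\ \text{large}} p_i R_i(x)^2$ is negligible against $(k/s)^{2/d}$. Combining the two regimes gives $\sigma^2\lesssim (k/s)^{2/d}$.

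Finally I would union the conditional Bernstein event (probability at least $1-1/(2n^2)$ over subsampling once $\sigma^2$ and $M$ are fixed) with the data event controlling the order statistics, so that the claimed deviation bound holds with total probability at least $1-1/n^2$. The main obstacle is the variance estimate: one must simultaneously exploit the concentration of the combinatorial weights $p_i$ and the near-deterministic growth $R_i(x)\asymp (i/n)^{1/d}$ of the order statistics, while carefully discarding the heavy-tail contribution from large $i$, where the per-term distance is only of constant order.
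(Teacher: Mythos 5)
Your overall architecture matches the paper's: condition on the data, observe that $R_k(x;D_1),\dots,R_k(x;D_B)$ are conditionally i.i.d.\ with conditional mean $\widetilde{R}_k^B(x)=\sum_i p_i R_i(x)$, apply Bernstein with range $M\lesssim 1$ and conditional variance $\sigma^2\le\sum_i p_i R_i(x)^2\lesssim (k/s)^{2/d}$, and combine with a data event controlling the order statistics. Your route to the variance bound differs in detail: the paper does not need any tail concentration of the weights $p_i$ around $i^\ast\asymp kn/s$; it splits the sum at $c_{d,n}\asymp\log n$ (where $R_i(x)\lesssim(\log n/n)^{1/d}$, handling the small indices your "$R_i(x)\lesssim (i/n)^{1/d}$" estimate does not cover) and then bounds $\sum_i p_i(i/n)^{2/d}\lesssim (k/s)^{2/d}$ by a direct moment computation for the beta--binomial law of the weights (Lemma \ref{lem::ribound}, via Jensen and H\"older). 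Your alternative — exponential tails of $p_i$ away from $i^\ast$ so that the constant-order distances for far indices are negligible — is plausible but is left entirely unproven, and it is precisely the step the paper's moment argument is designed to avoid.

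The genuine gap is uniformity in $x$. Your Bernstein step is pointwise: you invert the conditional tail at level $1/n^2$ for a \emph{fixed} $x$, and your final union bound is only over the subsampling event and the order-statistic event, never over $x\in\mathcal{X}$. The conclusion the paper actually establishes (and needs downstream, since the bound is later evaluated at the random points $X_{(i)}(x)$ and at candidate modes) is the sup-norm statement $\|R_k^B-\widetilde{R}_k^B\|_\infty\lesssim\sqrt{(k/s)^{2/d}\log n/B}+\log n/B$. Since $\mathcal{X}$ is uncountable, a naive union bound is unavailable; the paper resolves this by observing that $x\mapsto\bigl(R_k^B(x),\widetilde{R}_k^B(x)\bigr)$ depends on $x$ only through the rank vector of the distances $\|X_i-x\|$, and that the number of distinct rank vectors obtainable by moving $x$ over $\mathbb{R}^d$ is at most $(25/d)^d n^{2d}$ (Lemma \ref{lem::numberreorder}). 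Accordingly it takes $\tau=(2d+4)\log n$ in Bernstein so that the union over these $O(n^{2d})$ configurations still leaves failure probability $O(1/n^2)$. Without this combinatorial step (or an equivalent covering/VC argument), your proof only yields the weaker pointwise claim with the quantifiers in the opposite order, which does not suffice for Propositions \ref{thm::main3} and \ref{thm::main2}.
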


\subsubsection{Bounding the Estimation Error} \label{sec::EstimationError}

We now establish the upper bound of the estimation error of weighted $k$-distance.
This oracle inequality will be crucial in establishing the convergence results of the estimator.

\begin{proposition}\label{prop::weightsamplerho}
	Let Assumption \ref{ass::cluster} hold. 
	Furthermore, let $R_{k}(x)$ be the $k$-nearest neighbor distance of $x$ and $\overline{R}_k(x)$ be the quantile diameter function of $x$ defined by \eqref{equ::overlinerx}.
	Moreover, let $p_i$ be the probability as in  \eqref{equ::def} and suppose that $(kn/s)^{1-d/2} \gtrsim (\log n)^{1+d/2}$. 
	Then for all $x\in \mathcal{X}$, with probability $P^n$ at least $1-2/n^2$, there holds
	\begin{align*}
		\Bigg| \sum_{i=1}^n p_i \bigl( R_i(x) - \overline{R}_i(x) \bigr) \Bigg|
		\lesssim (k/s)^{1/d-1/2}(\log n/n)^{1/2}.
	\end{align*}
\end{proposition}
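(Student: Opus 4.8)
The plan is to reduce the weighted deviation to a \emph{uniform} control of the ordinary $i$-th nearest-neighbor distances and then average against the weights $p_i$. First I would introduce the empirical ball-mass process $\widehat{F}_x(r) := n^{-1} \sum_{j=1}^n \eins\{X_j \in B(x,r)\}$ and its population counterpart $F_x(r) := P(B(x,r))$, and observe that $R_i(x)$ and $\overline{R}_i(x)$ are exactly the empirical and population $(i/n)$-quantiles associated with $F_x$. Since the collection of Euclidean balls $\{B(x,r) : x \in \mathcal{X}, r > 0\}$ is a VC class of dimension $O(d)$, a relative (Bernstein-type) uniform deviation inequality yields, with probability at least $1 - 1/n^2$ and simultaneously over all balls,
\[
	|\widehat{F}_x(r) - F_x(r)| \lesssim \sqrt{F_x(r) \log n / n} + \log n / n .
\]
This single high-probability event drives the whole argument and supplies the uniformity over $x \in \mathcal{X}$ demanded by the statement.

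Next I would convert this mass deviation into a radius deviation. Fixing $x$ and evaluating at $r = \overline{R}_i(x)$, where $F_x(r) = i/n$, the lower bound $f \geq \underline{c}$ from Assumption \ref{ass::cluster} gives $F_x(r') - F_x(r) \gtrsim r^{d-1} |r' - r|$ for $r'$ near $r$, so a mass discrepancy of size $\delta$ translates into a radius discrepancy of order $\delta / r^{d-1}$; the two-sided bound on $f$ also yields $\overline{R}_i(x) \asymp (i/n)^{1/d}$. Substituting $\delta \asymp \sqrt{(i/n)\log n/n}$ and $r^{d-1} \asymp (i/n)^{(d-1)/d}$, the variance term produces
\[
	|R_i(x) - \overline{R}_i(x)| \lesssim (i/n)^{1/d - 1/2} \sqrt{\log n / n},
\]
uniformly in $x$ and over the relevant range of $i$. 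The hypothesis $(kn/s)^{1-d/2} \gtrsim (\log n)^{1+d/2}$ enters here: it guarantees that the expected neighbor counts $\asymp i \asymp kn/s$ are large enough both for the binomial counts to concentrate and for the variance (Bernstein) term to dominate the residual $\log n/n$ contribution, so that the clean power-law bound above holds throughout the support of the weights.

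Finally I would average against the weights. Since the increments $R_i(x) - \overline{R}_i(x)$ are signed, I bound $\bigl| \sum_{i=1}^n p_i (R_i(x) - \overline{R}_i(x)) \bigr| \leq \sqrt{\log n/n} \sum_{i=1}^n p_i (i/n)^{1/d - 1/2}$ and reduce the claim to showing $\sum_{i=1}^n p_i (i/n)^{1/d-1/2} \lesssim (k/s)^{1/d-1/2}$. This is where the combinatorial structure of $p_i$ in \eqref{equ::def} matters: $p_i$ is the law of the global rank of the $k$-th order statistic of a size-$s$ subsample, which concentrates around $i \asymp kn/s$ with mean $k(n+1)/(s+1)$, so the weighted average of $(i/n)^{1/d-1/2}$ should be governed by its value $(k/s)^{1/d-1/2}$ at the centre. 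I expect the main obstacle to be exactly this last step: when the exponent $1/d - 1/2$ is negative (that is, $d \geq 3$) the small-$i$ tail of the weight distribution inflates the factor $(i/n)^{1/d-1/2}$, and one must show via tail estimates for the negative-hypergeometric-type weights $p_i$ that this tail carries negligible mass. By comparison, the uniform-in-$x$ VC step and the mass-to-radius inversion are routine given the density bounds of Assumption \ref{ass::cluster}.
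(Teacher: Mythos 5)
Your overall strategy coincides with the paper's: a uniform VC/Bernstein bound on empirical ball masses (the paper's Lemma \ref{lem::Rrho}), inversion from a mass deviation to a radius deviation via the two-sided density bounds and the factorization $R_i^d-\overline{R}_i^d=(R_i-\overline{R}_i)\sum_{j}R_i^j\overline{R}_i^{d-1-j}$, and finally a weighted moment bound for the $p_i$ (the paper's Lemma \ref{lem::ribound}). The genuine gap is in your claim that the clean bound $|R_i(x)-\overline{R}_i(x)|\lesssim (i/n)^{1/d-1/2}\sqrt{\log n/n}$ holds ``throughout the support of the weights.'' That bound requires the ball counts to concentrate, which the paper only obtains for $i\geq 32(d+4)\log n$ (this threshold is needed both for $|R_i^d-\overline{R}_i^d|\lesssim\sqrt{i\log n}/n$ and for the lower bound $R_i(x)\gtrsim(i/n)^{1/d}$ that controls the denominator in your mass-to-radius inversion). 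The support of $p_i$ begins at $i=k$, and nothing in the hypotheses forces $k\gtrsim\log n$: the stated condition constrains $kn/s$, not $k$, and in the paper's applications $k_D\asymp\log n$ with an unspecified constant. You have also mislocated the role of the hypothesis $(kn/s)^{1-d/2}\gtrsim(\log n)^{1+d/2}$. The paper's actual argument splits the sum at $c_{d,n}=\lceil 32(d+4)\log n\rceil$, bounds the low-index block crudely by $|R_i-\overline{R}_i|\lesssim(\log n/n)^{1/d}$ (both radii are of at most this order there), obtaining a contribution $\lesssim\log n\,(\log n/n)^{1/d}$, and then uses the hypothesis precisely to absorb this extra term into $(k/s)^{1/d-1/2}(\log n/n)^{1/2}$. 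This split is not cosmetic: for $d=1$ the crude low-index bound is \emph{not} dominated by the claimed power law, so without it your argument fails.

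A secondary issue is the final averaging step, which you correctly flag but do not resolve: for $d\geq 3$ the exponent $1/d-1/2$ is negative, and the trivial support bound $i\geq k$ gives only $\sum_i p_i\,i^{1/d-1/2}\leq k^{1/d-1/2}$, which is \emph{larger} than the target $(kn/s)^{1/d-1/2}$ whenever $s<n$. A genuine negative-moment or left-tail estimate for the beta-binomial rank distribution is therefore required here; the paper's Lemma \ref{lem::ribound} as stated covers only exponents $\beta\in(0,2]\cup\{3\}$, so this step must be supplied explicitly in any complete write-up.
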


\subsubsection{Bounding the Approximation Error} \label{sec::ApproximationError}

The following result on bounding the approximation error term shows that the approximation error can be small by choosing the ratio $k/s$ appropriately.

\begin{proposition}\label{prop::weightedrho}
	Let Assumption \ref{ass::cluster} hold. 
	Moreover, let $p_i$ be the probability as in \eqref{equ::def} and $\overline{R}_i(x)$ be the quantile diameter function of $x$ defined by \eqref{equ::overlinerx}. 
	Then for all $x\in \mathcal{X}$ we have 
	\begin{align*}
		\biggl| \sum_{i=1}^n p_i \overline{R}_i(x) - \sum_{i=1}^n p_i ((i/n) / (V_d f(x)))^{1/d} \biggr|
		\lesssim (k/s)^{(1+\alpha)/d}.
	\end{align*}
\end{proposition}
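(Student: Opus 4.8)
The plan is to show that each weight $p_i$ concentrates on indices $i$ for which the quantile radius $\overline{R}_i(x)$ is well-approximated by its ``ideal'' value $((i/n)/(V_d f(x)))^{1/d}$, and then to exploit the fact that the $p_i$ sum to one to turn a pointwise bound into the claimed aggregate bound. First I would analyze the single-index discrepancy
\begin{align*}
	\Delta_i(x) := \Bigl| \overline{R}_i(x) - \bigl( (i/n)/(V_d f(x)) \bigr)^{1/d} \Bigr|.
\end{align*}
By Definition~\eqref{equ::overlinerx}, $\overline{R}_i(x)$ is the radius $r$ for which $P(B(x,r)) = i/n$ (under Assumption~\ref{ass::cluster} the density is bounded away from zero, so this quantile is attained and well-defined for $x$ in the interior). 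The idea is that $P(B(x,r)) = \int_{B(x,r)} f(y)\,dy$, and under the $\alpha$-H\"older smoothness of Assumption~\ref{ass::cluster}(ii) the density over the small ball $B(x, \overline{R}_i(x))$ is close to its value $f(x)$ at the center, with error controlled by $c_L \overline{R}_i(x)^{\alpha}$. Thus $P(B(x,\overline{R}_i(x)))$ equals $V_d f(x) \overline{R}_i(x)^d$ up to a relative perturbation of order $\overline{R}_i(x)^{\alpha}$, and inverting this relation yields $\Delta_i(x) \lesssim \overline{R}_i(x)^{1+\alpha}$.

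Next I would control the scale of the relevant radii. The weights $p_i$ in \eqref{equ::def} are the probabilities that the $i$-th order statistic becomes the $k$-th nearest neighbor in a subsample of size $s$; these are hypergeometric-type weights concentrated around $i \approx kn/s$. Since the boundedness of $f$ forces $\overline{R}_i(x) \asymp (i/n)^{1/d}$, the radii entering the sum with non-negligible weight are of order $(k/s)^{1/d}$. Substituting $\overline{R}_i(x)^{1+\alpha} \lesssim (i/n)^{(1+\alpha)/d}$ into the per-index bound and summing against the probability weights gives
\begin{align*}
	\biggl| \sum_{i=1}^n p_i \overline{R}_i(x) - \sum_{i=1}^n p_i \bigl( (i/n)/(V_d f(x)) \bigr)^{1/d} \biggr|
	\leq \sum_{i=1}^n p_i \Delta_i(x)
	\lesssim \sum_{i=1}^n p_i (i/n)^{(1+\alpha)/d}.
\end{align*}
To finish, I would bound the weighted average $\sum_i p_i (i/n)^{(1+\alpha)/d}$ by $(k/s)^{(1+\alpha)/d}$, using the concentration of $p_i$ around $i \asymp kn/s$; since $x \mapsto x^{(1+\alpha)/d}$ is concave for the relevant exponent range, Jensen's inequality applied to $\sum_i p_i (i/n) = \mathbb{E}[\,\text{(normalized order index)}\,] \asymp k/s$ gives exactly the desired exponent on $k/s$.

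The main obstacle will be making the concentration step rigorous rather than heuristic: I need a clean bound on $\sum_i p_i (i/n)^{(1+\alpha)/d}$ in terms of $(k/s)^{(1+\alpha)/d}$, which requires controlling the moments or tail behavior of the hypergeometric weights $p_i$. A direct Jensen argument handles the concave regime cleanly, but one must verify that the contribution from the tails of $p_i$ (large $i$, where $(i/n)^{(1+\alpha)/d}$ is big) does not dominate; this should follow from the sharp concentration of the hypergeometric distribution of the $k$-th-order-statistic index around its mean $\asymp kn/s$. A secondary subtlety is the uniformity over all $x \in \mathcal{X}$, including boundary points where $\overline{R}_i(x)$ may be slightly distorted; since the bound here is purely deterministic (no sampling is involved — this is the approximation, not estimation, error), I expect the H\"older estimate and the boundedness of $f$ to deliver the bound uniformly without probabilistic overhead.
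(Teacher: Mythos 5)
Your proposal follows essentially the same route as the paper: H\"older continuity gives $|i/n - V_d f(x)\overline{R}_i^d(x)| \lesssim \overline{R}_i^{d+\alpha}(x)$, the two-sided density bound gives $\overline{R}_i(x) \asymp (i/n)^{1/d}$, the $d$-th root is inverted via the factorization $x^d - y^d = (x-y)\sum_j x^j y^{d-1-j}$ to obtain the per-index bound $(i/n)^{(1+\alpha)/d}$, and the weighted sum is then controlled through the moments of the index distribution of the $k$-th subsample neighbor. The only point needing care is the final moment step: Jensen's inequality applies only when $(1+\alpha)/d \leq 1$, and for $d=1$ (where the exponent lies in $(1,2]$) the paper's Lemma \ref{lem::ribound} instead interpolates via H\"older's inequality between the first and second moments of the beta-binomial index distribution---precisely the tail-control issue you flagged as the remaining obstacle.
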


\subsubsection{Convergence Rates for Hypothetical  Density Estimation}
\label{sec::RatesPseudoDE}

The next proposition presents the convergence rates of the hypothetical density estimator induced by the bagged $k$-distance.

\begin{proposition}\label{thm::main3}
	Let Assumption \ref{ass::cluster} hold. Moreover, let $f_B(x)$ be the hypothetical density estimator as in \eqref{equ::fbk}. By choosing 
	\begin{align*}
		k_{D,n} \asymp \log n,
		\quad
		s_n \asymp n^{\frac{d}{2\alpha+d}}(\log n)^{\frac{2\alpha}{2\alpha+d}},
		\quad 
		B_n \geq n^{\frac{1+\alpha}{2\alpha+d}}(\log n)^{\frac{\alpha+d-1}{2\alpha+d}}, 
	\end{align*}
	then for all $x \in \mathcal{X}$, with probability $P^n \otimes P_Z^B$ at least $1-3/n^2$,  there holds
	\begin{align*}
		| f_B(x) - f(x) |
		\lesssim (\log n/n)^{\frac{\alpha}{2\alpha+d}}.
	\end{align*}
\end{proposition}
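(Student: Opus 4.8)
The plan is to feed the three bounds from Propositions \ref{prop::weightB}, \ref{prop::weightsamplerho} and \ref{prop::weightedrho} into the error decomposition already derived in Section \ref{sec::analysisbagged}. Writing $A(x) := \sum_{i=1}^n p_i ((i/n)/(V_d f(x)))^{1/d}$ and $R := R_k^B(x)$, that decomposition reads
\begin{align*}
	|f_B(x) - f(x)| = f(x) \cdot \frac{|A(x) - R|}{R^d} \cdot \sum_{j=0}^{d-1} A(x)^j R^{d-1-j},
\end{align*}
so the task cleanly splits into two pieces: (a) bounding the \emph{numerator} $|A(x) - R|$, which is exactly the quantity controlled by the bagging/estimation/approximation triangle inequality from Section \ref{sec::analysisbagged}; and (b) controlling the \emph{conversion factor} $f(x)\,R^{-d}\sum_{j=0}^{d-1} A(x)^j R^{d-1-j}$. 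Everything is stated uniformly in $x\in\mathcal{X}$, so no extra union bound over $x$ is needed.

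For piece (a), I would first record that under the stated choices $k_{D,n}\asymp\log n$ and $s_n\asymp n^{d/(2\alpha+d)}(\log n)^{2\alpha/(2\alpha+d)}$ one has $k/s\asymp(\log n/n)^{d/(2\alpha+d)}$. Substituting this into Propositions \ref{prop::weightsamplerho} and \ref{prop::weightedrho} shows, after an elementary exponent computation, that both the estimation error $(k/s)^{1/d-1/2}(\log n/n)^{1/2}$ and the approximation error $(k/s)^{(1+\alpha)/d}$ collapse to the common order $(\log n/n)^{(1+\alpha)/(2\alpha+d)}$. For the bagging error $\sqrt{(k/s)^{2/d}\log n/B}+\log n/B$ of Proposition \ref{prop::weightB}, requiring each of its two terms to match this order yields a lower bound on $B$; the linear term $\log n/B$ turns out to be the binding constraint and reproduces exactly $B_n\geq n^{(1+\alpha)/(2\alpha+d)}(\log n)^{(\alpha+d-1)/(2\alpha+d)}$. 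Hence on the intersection of the good events, and after checking the mild side condition of Proposition \ref{prop::weightsamplerho} under these choices, we obtain $|A(x)-R|\lesssim(\log n/n)^{(1+\alpha)/(2\alpha+d)}$ with probability at least $1-3/n^2$ (the $1/n^2$ from the bagging bound plus the $2/n^2$ from the estimation bound; the approximation bound is deterministic).

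For piece (b), the key observation is that the weights $p_i$ concentrate around the index $i^*\approx kn/s$ (the mean rank of the $k$-th order statistic in a subsample of size $s$), so that together with the boundedness of $f$ in Assumption \ref{ass::cluster}(i) one gets $A(x)\asymp(k/s)^{1/d}$ uniformly in $x$. Since the numerator bound from piece (a) is of order $(\log n/n)^{(1+\alpha)/(2\alpha+d)}$, which is strictly smaller than $A(x)\asymp(\log n/n)^{1/(2\alpha+d)}$, we also conclude $R\asymp A(x)\asymp(k/s)^{1/d}$ on the same event. Consequently each summand $A(x)^jR^{d-1-j}\asymp(k/s)^{(d-1)/d}$ and $R^d\asymp k/s$, so the conversion factor is of order $(k/s)^{-1/d}\asymp(\log n/n)^{-1/(2\alpha+d)}$. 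Multiplying the numerator order by the conversion factor then gives
\begin{align*}
	|f_B(x)-f(x)|\lesssim(\log n/n)^{\frac{1+\alpha}{2\alpha+d}}\cdot(\log n/n)^{-\frac{1}{2\alpha+d}}=(\log n/n)^{\frac{\alpha}{2\alpha+d}},
\end{align*}
which is the claimed rate.

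The step I expect to be the main obstacle is establishing the uniform \emph{two-sided} control $R\asymp A(x)\asymp(k/s)^{1/d}$ needed for piece (b): the lower bound on $R^d$ in the denominator is what converts a bound on the $k$-distance into a bound on the reciprocal-power density estimate, and it must hold simultaneously for all $x\in\mathcal{X}$ on the good event. This relies on boundedness of $f$ to pin down $A(x)$ and on the fact that the numerator error is of genuinely smaller order than $A(x)$ so that $R$ cannot degenerate. Apart from this, the remaining work is the bookkeeping of exponents when substituting the parameter choices and confirming that the three error contributions are balanced to the same rate.
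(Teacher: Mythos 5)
Your proposal is correct and follows essentially the same route as the paper's proof: the paper likewise combines the bagging, estimation, and approximation bounds into the numerator control $|A(x)-R_k^B(x)|\lesssim(\log n/n)^{(1+\alpha)/(2\alpha+d)}$ (its Lemma \ref{prop::dist}), uses Lemma \ref{lem::ribound} with the boundedness of $f$ to get the two-sided control $A(x)\asymp R_k^B(x)\asymp(k/s)^{1/d}\asymp(\log n/n)^{1/(2\alpha+d)}$, and then converts via the same telescoping of $d$-th powers to obtain the rate $(\log n/n)^{\alpha/(2\alpha+d)}$. Your exponent bookkeeping, the identification of the linear bagging term as the binding constraint on $B_n$, and the probability accounting $1-3/n^2$ all match the paper.
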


We establish the following finite sample bounds of the hypothetical density estimation near the modes in terms of $L_{\infty}$-norm.

\begin{proposition}\label{thm::main2}
	Let Assumptions \ref{ass::cluster} and \ref{ass::modes} hold. 
	Moreover, let $f_B(x)$ be the hypothetical density estimator as in \eqref{equ::fbk}. 
	By choosing 
	\begin{align*}
		k_{D,n} \asymp \log n,
		\qquad
		s_n \asymp  n^{\frac{d}{4+d}} (\log n)^{\frac{4}{4+d}},
		\qquad
		B_n \geq n^{\frac{3}{4+d}}(\log n)^{\frac{d+1}{4+d}},
	\end{align*}
	then for all $x\in \mathcal{M}_{r/2}$, with probability $P^n \otimes P_Z^B$ at least $1-3/n^2$, there holds
	\begin{align*}
		|f_B(x) - f(x)|
		\lesssim (\log n/n)^{\frac{2}{4+d}}.
	\end{align*}
\end{proposition}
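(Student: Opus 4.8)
The plan is to combine the general error decomposition for the hypothetical density estimation established in Section~\ref{sec::analysisbagged} with the sharper local behavior of the density guaranteed by the twice-differentiability Assumption~\ref{ass::modes}. Recall that on $\mathcal{M}_{r/2}$ (the $r/2$-neighborhood of the mode set) each point $x$ lies within $r_{\mathcal{M}}$ of some mode $m_i$, where $f$ is $C^2$ with $\nabla f(m_i)=0$ and negative definite Hessian. The key gain over Proposition~\ref{thm::main3} comes from this vanishing gradient: while the generic approximation error of Proposition~\ref{prop::weightedrho} is $\mathcal{O}((k/s)^{(1+\alpha)/d})$ using only $\alpha$-H\"{o}lder smoothness, near a critical point a Taylor expansion of $f$ gives $|f(y)-f(x)| \lesssim \|y-x\|^2$ for $x,y$ in $B(m_i,r_{\mathcal{M}})$, effectively upgrading the local H\"{o}lder exponent from $\alpha$ to $2$. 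First I would therefore re-run the approximation-error estimate of Proposition~\ref{prop::weightedrho} under this local quadratic control, obtaining an improved bound of order $(k/s)^{3/d}$ (i.e.\ the exponent $(1+\alpha)/d$ is replaced by $(1+2)/d = 3/d$).

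Next I would reassemble the three error terms with the mode-tuned parameter choices. The bagging error (Proposition~\ref{prop::weightB}) and the estimation error (Proposition~\ref{prop::weightsamplerho}) are unchanged in form, contributing $\sqrt{(k/s)^{2/d}\log n/B}+\log n/B$ and $(k/s)^{1/d-1/2}(\log n/n)^{1/2}$ respectively, while the improved approximation error contributes $(k/s)^{3/d}$. Substituting $k_{D,n}\asymp\log n$, $s_n\asymp n^{d/(4+d)}(\log n)^{4/(4+d)}$, and $B_n\gtrsim n^{3/(4+d)}(\log n)^{(d+1)/(4+d)}$ yields $k/s \asymp (\log n/n)^{d/(4+d)} \cdot (\log n)^{\cdots}$, so that each of the three terms is of order $(\log n/n)^{2/(4+d)}$; the exponent $2/(4+d)$ is exactly what one gets by balancing the estimation and (improved) approximation errors with effective smoothness $2$ in place of $\alpha$, giving the standard rate $n^{-\beta/(2\beta+d)}$ with $\beta=2$. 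Multiplying the relative error $|R_k^B(x)-\widetilde{R}_k^B(x)|/(R_k^B(x))^d$ by the factor $f(x)\cdot\sum_{j=0}^{d-1}(\cdots)^j(R_k^B(x))^{d-1-j}$ from the decomposition — and using that the $k$-distances $R_k^B(x)$, $\overline{R}_i(x)$ are all of order $(k/(ns))^{1/d}$ up to constants by boundedness of $f$ (Assumption~\ref{ass::cluster}(i)) — transfers the numerator bound into the desired bound on $|f_B(x)-f(x)|$ with the same rate.

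The main obstacle I expect is establishing the improved approximation-error bound rigorously and uniformly over all $x\in\mathcal{M}_{r/2}$. The subtlety is that $\sum_{i=1}^n p_i \overline{R}_i(x)$ probes a whole range of radii $\overline{R}_i(x)$ around $x$, and the quadratic Taylor estimate $|f(y)-f(x)|\lesssim\|y-x\|^2$ is only valid while $y$ stays inside $B(m_i,r_{\mathcal{M}})$. One must therefore verify that the effective radius $(k/(ns))^{1/d}$ under the mode-tuned parameters is eventually smaller than $r_{\mathcal{M}}$, so that the relevant neighbors all fall in the region of $C^2$ control; this is where the specific scaling of $s_n$ and $k_{D,n}$ is essential. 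A second, more delicate point is handling the sign structure: near a mode the gradient vanishes but the Hessian is negative definite, so the first-order term drops out and the leading correction is genuinely the quadratic form $\tfrac12(y-x)^\top H(\xi)(y-x)$; controlling the induced bias in the quantile radius $\overline{R}_i(x)$ requires matching $P(B(x,r))$ against $V_d f(x) r^d$ with an error of order $r^{d+2}$ rather than $r^{d+\alpha}$, which is the precise mechanism producing the exponent $3/d$. Once these two points are in place, the remaining steps are the routine reassembly and the union bound over the three high-probability events, each failing with probability at most $1/n^2$, giving the stated $1-3/n^2$ confidence.
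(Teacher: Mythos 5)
Your overall architecture matches the paper's: the only new ingredient relative to Proposition \ref{thm::main3} is an improved approximation-error bound of order $(k/s)^{3/d}$ on $\mathcal{M}_{r/2}$ (the paper's Proposition \ref{prop::weightedrho2}), after which the bagging and estimation errors and the reassembly into $|f_B(x)-f(x)|$ are verbatim the same, and your parameter bookkeeping and final exponent $2/(4+d)$ are correct. However, the mechanism you give for the improved approximation error is wrong, and this is the one step that actually needs proving. You assert that ``near a critical point a Taylor expansion gives $|f(y)-f(x)|\lesssim\|y-x\|^2$ for $x,y\in B(m_i,r_{\mathcal{M}})$'' and later that ``near a mode the gradient vanishes.'' Neither is true: the gradient vanishes \emph{at} $m_i$, not on a neighborhood, so for a generic $x\in\mathcal{M}_{r/2}$ one only has $|f(y)-f(x)|\leq\|\nabla f(x)\|\,\|y-x\|+O(\|y-x\|^2)$, i.e.\ a Lipschitz bound. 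Re-running Proposition \ref{prop::weightedrho} with the absolute value inside the integral, as you propose, therefore yields $\int_{B(x,\overline{R}_i(x))}|f(x')-f(x)|\,dx'\lesssim\overline{R}_i^{d+1}(x)$, hence only $(k/s)^{2/d}$ for the approximation error and $(\log n/n)^{1/(4+d)}$ for the density error --- strictly weaker than the claimed rate. The correct argument, which the paper uses, keeps the \emph{signed} integral $i/n-V_df(x)\overline{R}_i^d(x)=\int_{B(x,\overline{R}_i(x))}(f(x')-f(x))\,dx'$ and observes that the linear term $\nabla f(x)^{\top}\int_{B(x,r)}(x'-x)\,dx'$ vanishes by the symmetry of the ball about its center, regardless of whether $\nabla f(x)=0$; only then does the quadratic remainder give the $\overline{R}_i^{d+2}(x)\lesssim(i/n)^{1+2/d}$ bound and the exponent $3/d$.

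A second, smaller gap: you correctly note that $\sum_i p_i\overline{R}_i(x)$ probes a whole range of radii, but checking that the \emph{effective} radius $(k/s)^{1/d}$ is eventually below $r_{\mathcal{M}}$ does not dispose of the issue, since $p_i>0$ for all $k\leq i\leq n-s+k$ and the radii $\overline{R}_i(x)$ for $i$ of order $n$ are $O(1)$, far outside the $C^2$ neighborhood. The paper handles this by splitting the sum at $c_{n,r}=\lfloor(r/2)^dn\underline{c}V_d\rfloor$, applying the quadratic (cancellation) bound for $i\leq c_{n,r}$ and the H\"{o}lder bound for $i>c_{n,r}$, and then showing the tail $\sum_{i>c_{n,r}}p_i(i/n)^{(1+\alpha)/d}$ is dominated by $(k/s)^{3/d}$ via the moment bounds of Lemma \ref{lem::ribound}. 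With the symmetry/cancellation argument substituted for the false pointwise quadratic bound and the tail split made explicit, your proof goes through exactly as the paper's does.
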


We compare our results with previous theoretical analysis of the $k$-NN for density estimation. \cite{biau2011weighted} introduced a weighted version of the $k$-nearest neighbor density estimate and establish  pointwise consistency results. Recently, \cite{zhao2020analysis} analyzed the $L_{\alpha}$ and $L_{\infty}$ convergence rates of $k$ nearest neighbor density estimation method including two different cases depending on whether the support set is bounded or not.
It is worth pointing out that our analysis of the bagged $k$-distance presents in this study is essentially different from that in the previous works.

First of all, the core challenge in the analysis of bagged $k$-distance is that it cannot be analyzed using existing techniques for standard $k$-nearest neighbor methods.
To solve this problem, we consider the hypothetical density function in \eqref{equ::fbk}. Under the H\"{o}lder continuity assumptions, we derive optimal convergence rates of the hypothetical density function with properly selected parameters. 
Moreover, our results are different from the previous statistical analysis since it is conducted from a learning theory perspective \cite{cucker2007learning,steinwart2008support} using techniques such as approximation theory and empirical process theory \cite{vandervaart1996weak,Kosorok2008introduction}.
By exploiting arguments such as Bernstein's concentration inequality from the empirical process theory, we can derive the relationships among the number of bagging rounds $B$, the number of nearest neighbors $k_D$ and the sub-sample size $s$ (Theorem \ref{thm::mode}).
Moreover, \eqref{equ::relation} implies that $B= \mathcal{O}( n^{3/(4+d)}(\log n)^{(d+1)/(4+d)})$, which is relatively small especially when $d$ is large.

\subsection{Algorithm Complexity Analysis}\label{sec::algorithmcomplexity}

In this subsection, we consider the computational complexity of Algorithm \ref{alg::BDMBC} by adopting tree structures such as $k$-d trees. We here denote the subsample ratio $\rho=s/n$.
Firstly, we consider the average time and space complexity. 
In the first step of Algorithm \ref{alg::BDMBC}, the time and space complexities of bagged $k$-distance are $\mathcal{O}(B \rho nd \log_2 (\rho n))$ and $\mathcal{O}(B \rho n d)$, respectively.
In the second step, the time and space complexities of the PLLS are $\mathcal{O}(B \rho nd \log_2 (\rho n))$ and $\mathcal{O}(B \rho n d)$, respectively.
The third step is level-set clustering, which includes finding $k_G$ nearest neighbors, calculating core points, and calculating the connected components.
The cost of time in finding $k_G$ nearest neighbors is $\mathcal{O}(nd \log_2 n)$ with $\mathcal{O}(n d)$ memory. 
The costs of calculating both core points and connected components are $\mathcal{O}(n)$ in time and $\mathcal{O}(n)$ in memory.
Overall, we need $\mathcal{O}((B \rho + 1) nd \log_2 n)$ time and $\mathcal{O}((B \rho +1) nd)$ memory. 
On the other hand, since the time complexity for finding the $k$ nearest neighbors in the worst case is $\mathcal{O}(n^2)$, we can derive the worst case complexity of BDMBC to be $\mathcal{O}((B \rho^2 + 1) n^2d)$ by similar inductions.

The time and space complexities are summarized in Table \ref{tab::complexity}.

\begin{table}[!ht]
	\centering
	\captionsetup{justification=centering}
	\caption{Time and Space Complexity for BDMBC}
	\label{tab::complexity}
	\begin{tabular}{lcc}
		\toprule
		Steps & Time Complexity &  Space Complexity   \\ 
		\hline
		Bagged $k$-distance & $\mathcal{O}(B\rho nd \log_2 (\rho n))$ & $\mathcal{O}(B\rho n d)$ \\
		Calculation of the PLLS & $\mathcal{O}(B \rho nd \log_2 (\rho n))$ & $\mathcal{O}(B \rho n d)$ \\
		Finding $k_G$ nearest neighbors & $\mathcal{O}(nd \log_2 n)$ & $\mathcal{O}(n d)$ \\
		Calculating core points & $\mathcal{O}(n)$ & $\mathcal{O}(n)$ \\
		Calculating the connected components & $\mathcal{O}(n)$ & $\mathcal{O}(n)$ \\
		Labeling points below the level-set by 1NN & $\mathcal{O}(nd \log_2 n)$ & $\mathcal{O}(n d)$ \\
		\hline
		\textbf{Averaged cost} & $\mathcal{O}((B \rho + 1) nd \log_2 n)$ & $\mathcal{O}((B \rho +1) nd)$ \\
		Worst cost & $\mathcal{O}((B \rho^2 + 1) n^2 d)$ & $\mathcal{O}((B \rho +1) nd)$ \\
		\bottomrule
	\end{tabular}
\end{table}

Moreover, reducing the sampling ratio $\rho$ can also reduce the time and space complexity.
For large-scale datasets, we can combine with sampling and/or core-set techniques \cite{agarwal2005geometric, ros2021progressive} to reduce the time and space complexity.

\section{Experiments} \label{sec::Experiment}

In this section, we conduct numerical experiments to illustrate our proposed BDMBC:
Firstly, we give an illustrative example in Section \ref{sec::subsec::illustrative} to demonstrate how and why the proposed BDMBC works.
Secondly, we conduct the experiments of mode estimation on several two-dimensional synthetic datasets in Section \ref{sec::subsec::synmode}.  The ability of the BDMBC algorithm to identify all modes verifies the theoretical results about the mode estimation of the BDMBC. The success of mode estimation is an essential part of our BDMBC for clustering.
Thirdly, we evaluate our proposed BDMBC by comparing with other methods on publicly available real-world datasets in Section \ref{sec::subsec::realcomp}.
In Section \ref{sec::subsec::params}, we conduct parameter analysis of the BDMBC algorithm, reveal the relationship between the parameter choosing strategies and the performances of BDMBC, and empirically verify the fact that bagging can narrow the searching grid of parameters.
We further provide the scalability experiments in Section \ref{sec::subsec::scala} to show that bagging can significantly decrease the computational cost of algorithms without sacrificing accuracy.
All the experiments are implemented in Python and run on a machine within a high-performance computing cluster, where one node with 64GB main memory and a 24-core CPU cluster is used.

\subsection{Illustrative Example} \label{sec::subsec::illustrative}

We continue to use the one-dimensional synthetic dataset {\tt 3Mix} containing three Gaussian distributions $\mathcal{N}(0.20, 0.001)$, $\mathcal{N}(0.32, 0.002)$ and $\mathcal{N}(0.65, 0.007)$ with equal mixture component weights for illustration.
For each point, we assign its class to the Gaussian mixture component with the highest probability and this synthetic dataset contains two high-density small clusters and one low-density large cluster. 

\begin{figure}[htbp]
	\centering
	\vskip -0.05in
	\vspace{-6pt}
	\subfigure[Ground-True Clusters]{
		\begin{minipage}{0.8\columnwidth}
			\centering
			\centerline{\includegraphics[height=0.35\columnwidth]{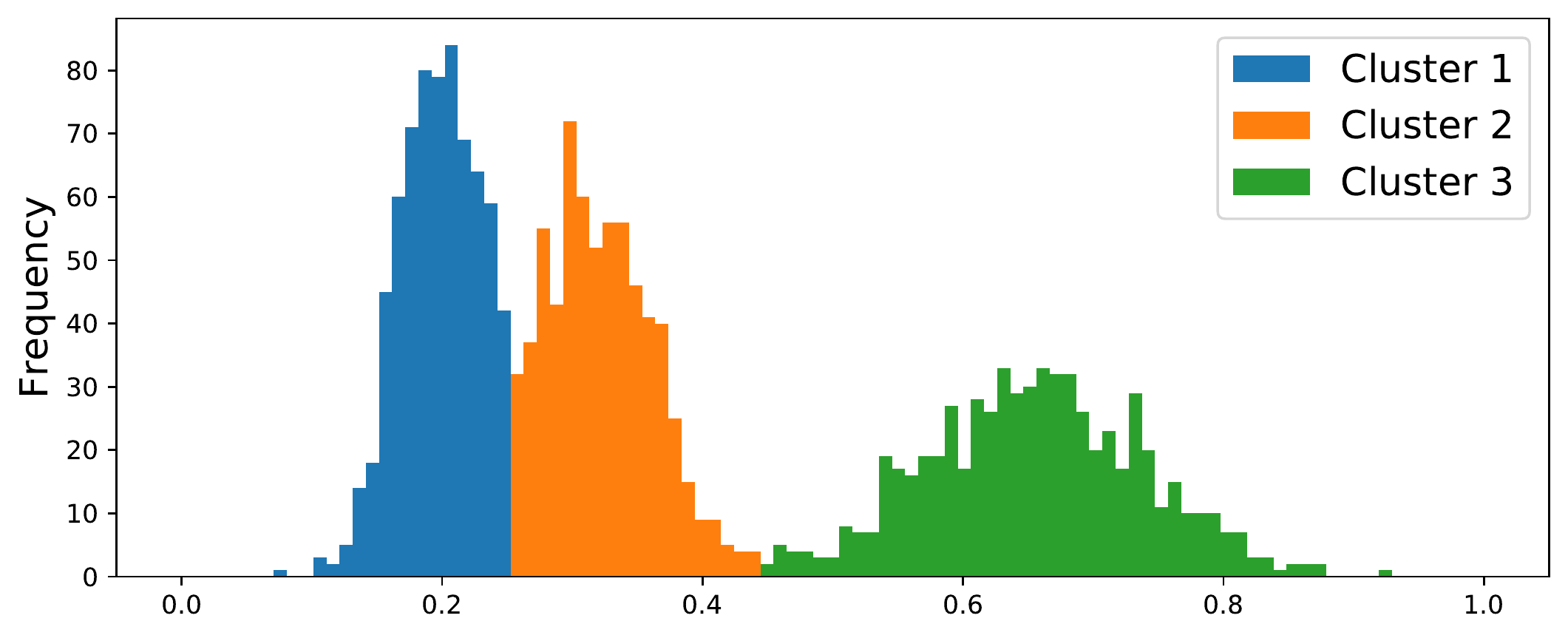}}
			\label{fig::illustrative::cluster}
		\end{minipage}
	} \\ 
	\vspace{-10pt}
	\subfigure[Level-Set Clustering with Empirical Probability of Localized Level Set]{
		\begin{minipage}{0.8\columnwidth}
			\centering
			\centerline{\includegraphics[height=0.35\columnwidth]{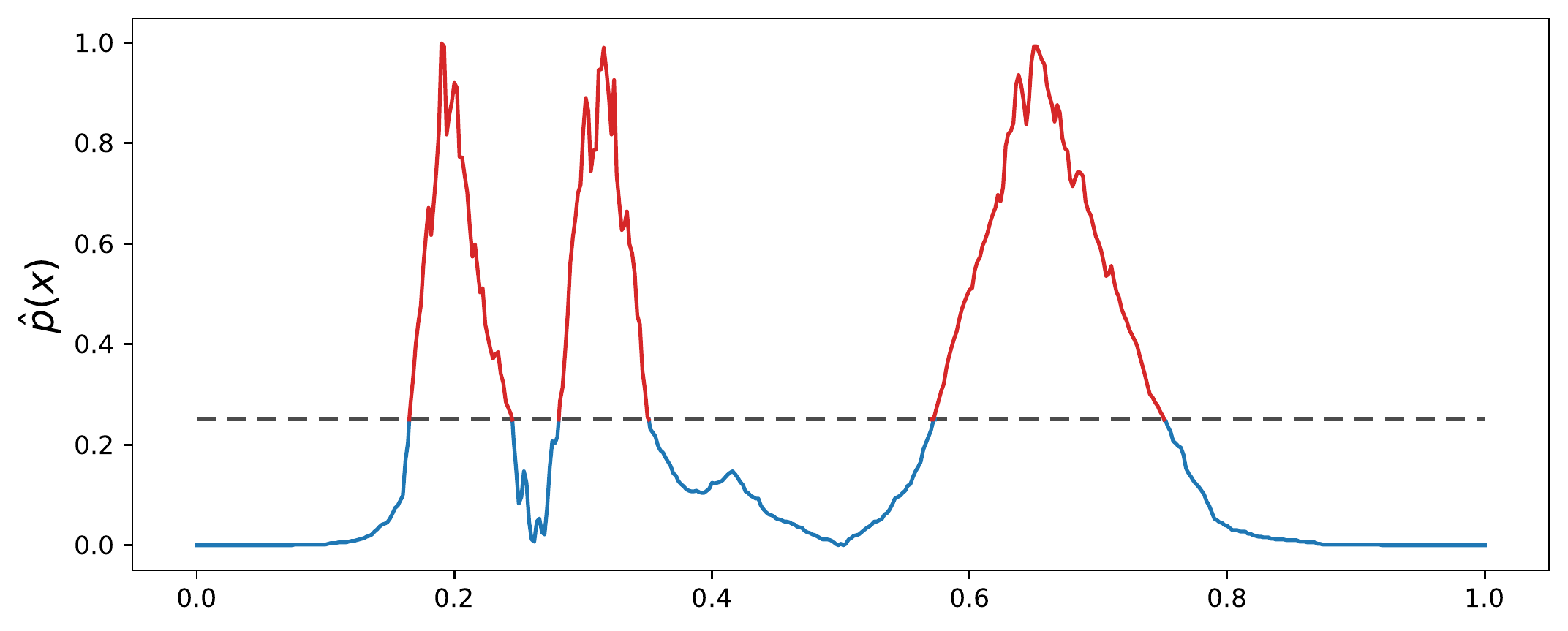}}
			\label{fig::illustrative::threshold}
		\end{minipage}
	} \\ 
	\vspace{-10pt}
	\subfigure[Inverse of Bagged $k$-Distance and the Cluster Tree Structure]{
		\begin{minipage}{0.8\columnwidth}
			\centering
			\centerline{\includegraphics[height=0.35\columnwidth]{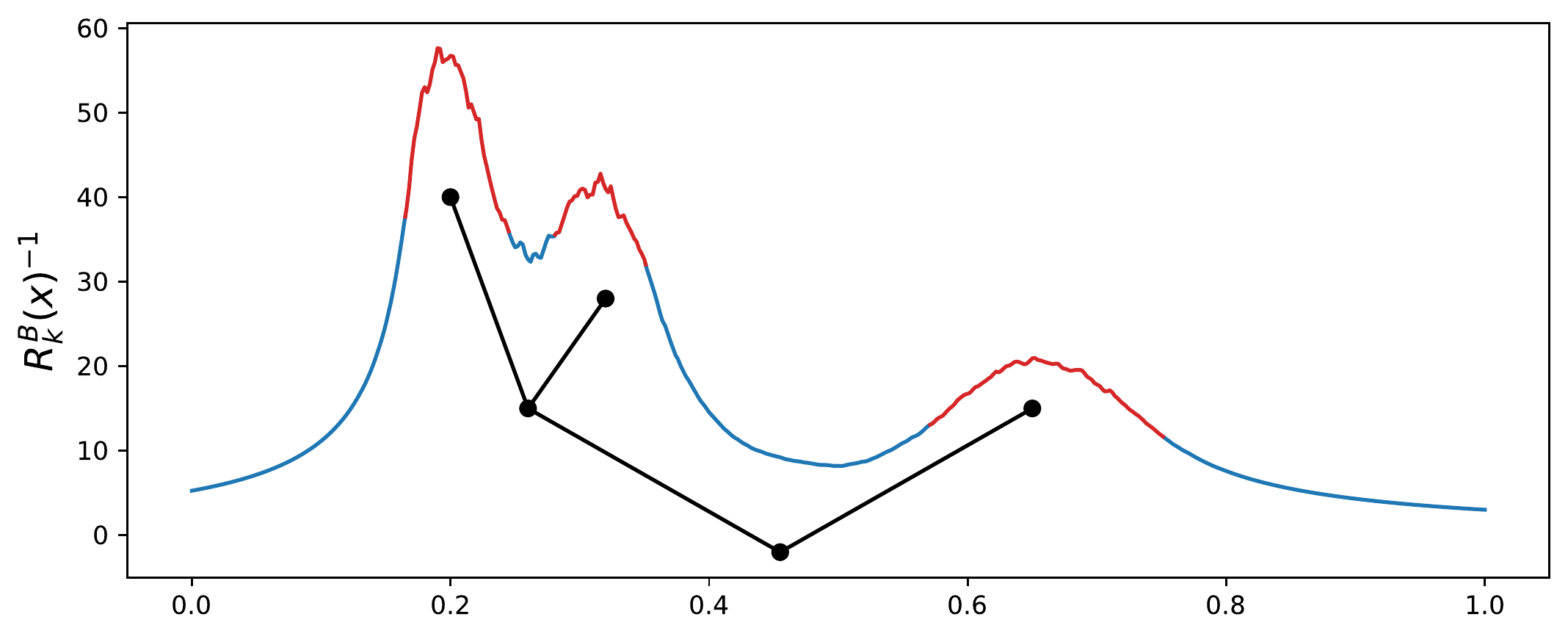}}
			\label{fig::illustrative::clustertree}
		\end{minipage}
	}
	\vspace{-10pt}
	\vskip 0.0in
	\caption{An illustrative example of the BDMBC algorithm, showing that our method can find out all modes with varied densities at a time and make density-based clustering much easier.}
	\label{fig::illustrative}
	\vspace{-6pt}
	\vskip -0.05in
\end{figure} 

We generate 2000 points from the distribution and plot the histogram in Figure \ref{fig::illustrative::cluster}. The clusters are filled in blue, orange, and green, respectively.
For out BDMBC, there are six hyper-parameters, including $B$, $s$, $k_D$, $k_L$, $k_G$, $\lambda$. We discuss the selection of these hyper-parameters later in Section \ref{sec::subsec::params}.
We mention that the points below the threshold $\lambda$ or belonging to clusters with too tiny cluster sizes will be simply affiliated to the nearest cluster.
For the three Gaussian distribution, the best ARI performance of BDMBC can reach 0.99. As the threshold $\lambda$ is applied on the PLLS instead of the density, the density-based clustering algorithm uses a single threshold to identify clusters successfully.

Figure \ref{fig::illustrative::threshold} shows the empirical probability function. The horizontal line $\lambda=0.2$ is the threshold. The line filled in red or blue represents the region whose PLLS is larger or smaller than the threshold, respectively. It can be seen that $\widehat{p}_{k_L}^B(x)$ can narrow the density difference between high-density clusters and low-density clusters by letting the value of local minimums close to zero and the value of maximums close to one. Thus, we can use a global threshold on the probability function $\widehat{p}_{k_L}^B(x)$ to separate different clusters. 

Figure \ref{fig::illustrative::clustertree} shows the BDMBC density estimates and the cluster tree structure. The line filled in red and blue represents regions whose $\widehat{p}_{k_L}^B(x)$ is larger or smaller than the threshold, respectively. More specifically, the density threshold depends on the original density function.
In other words, the threshold of probability function $\widehat{p}_{k_L}^B(x)$ can be adaptive to clusters with varying densities. In other words, the threshold corresponds to different split levels.
The experimental results verified that BDMBC can find out all modes and each cluster estimator corresponds to a leaf cluster as shown in the figures.

\subsection{Mode Detection}  \label{sec::subsec::synmode}

To demonstrate the ability of BDMBC to identify modes so that density-varying mode-based clusters can be detected, we use the following two-dimensional synthetic datasets: The synthetic dataset is generated by a Gaussian mixture model. The Gaussian distribution is consist of five covariance-varying Gaussian distributions with equal mixture component weights. The class of each generated point is the Gaussian mixture component with the highest density. 

\begin{figure}[htbp]
	\centering
	\vspace{-5pt}
	\subfigure[]{
		\begin{minipage}{0.3\columnwidth}
			\centering
			\centerline{\includegraphics[width=\columnwidth]{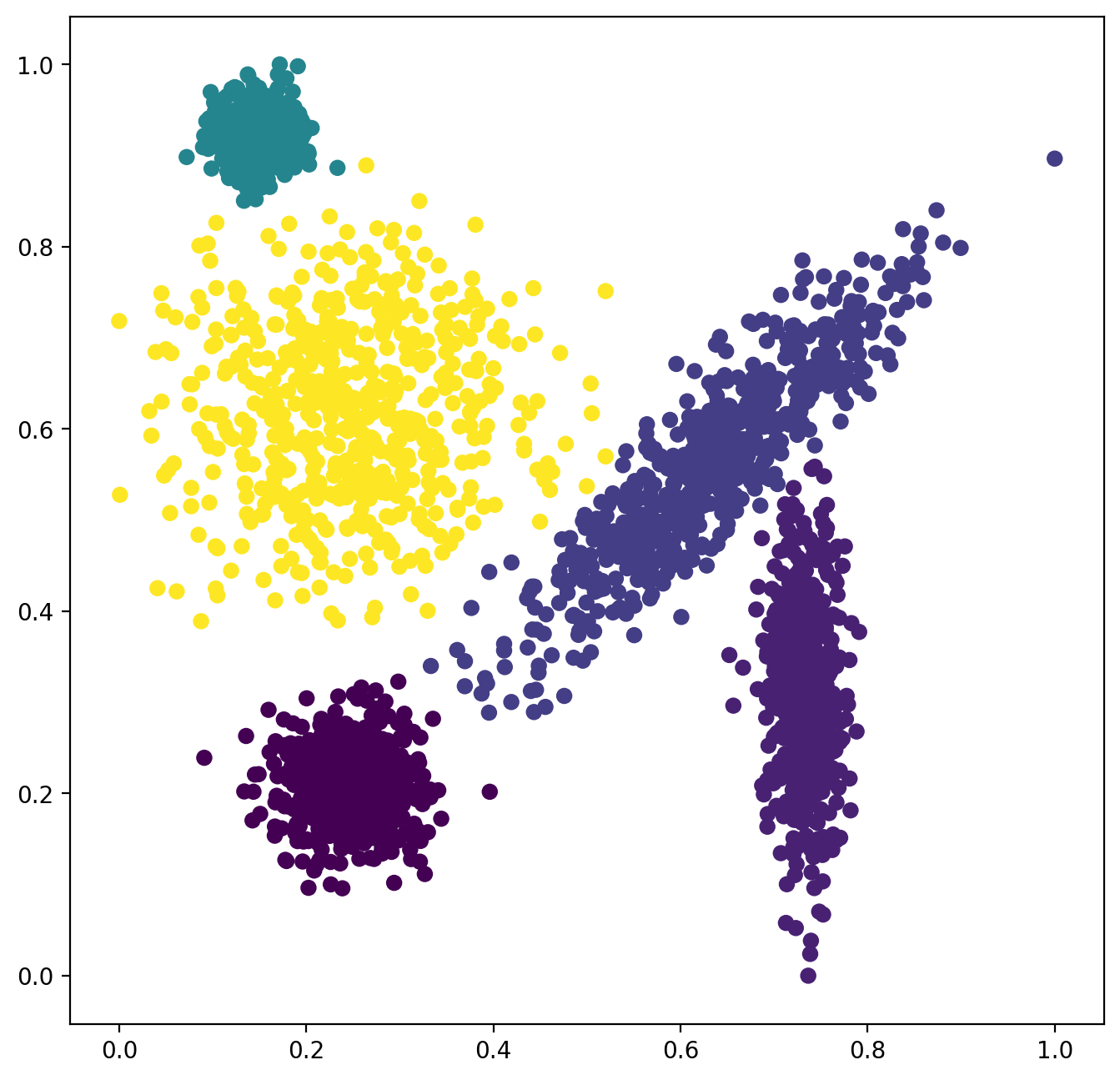}}
			\label{fig::mode:clusters}
	\end{minipage}}
	\subfigure[]{
		\begin{minipage}{0.3\columnwidth}
			\centering
			\centerline{\includegraphics[width=\columnwidth]{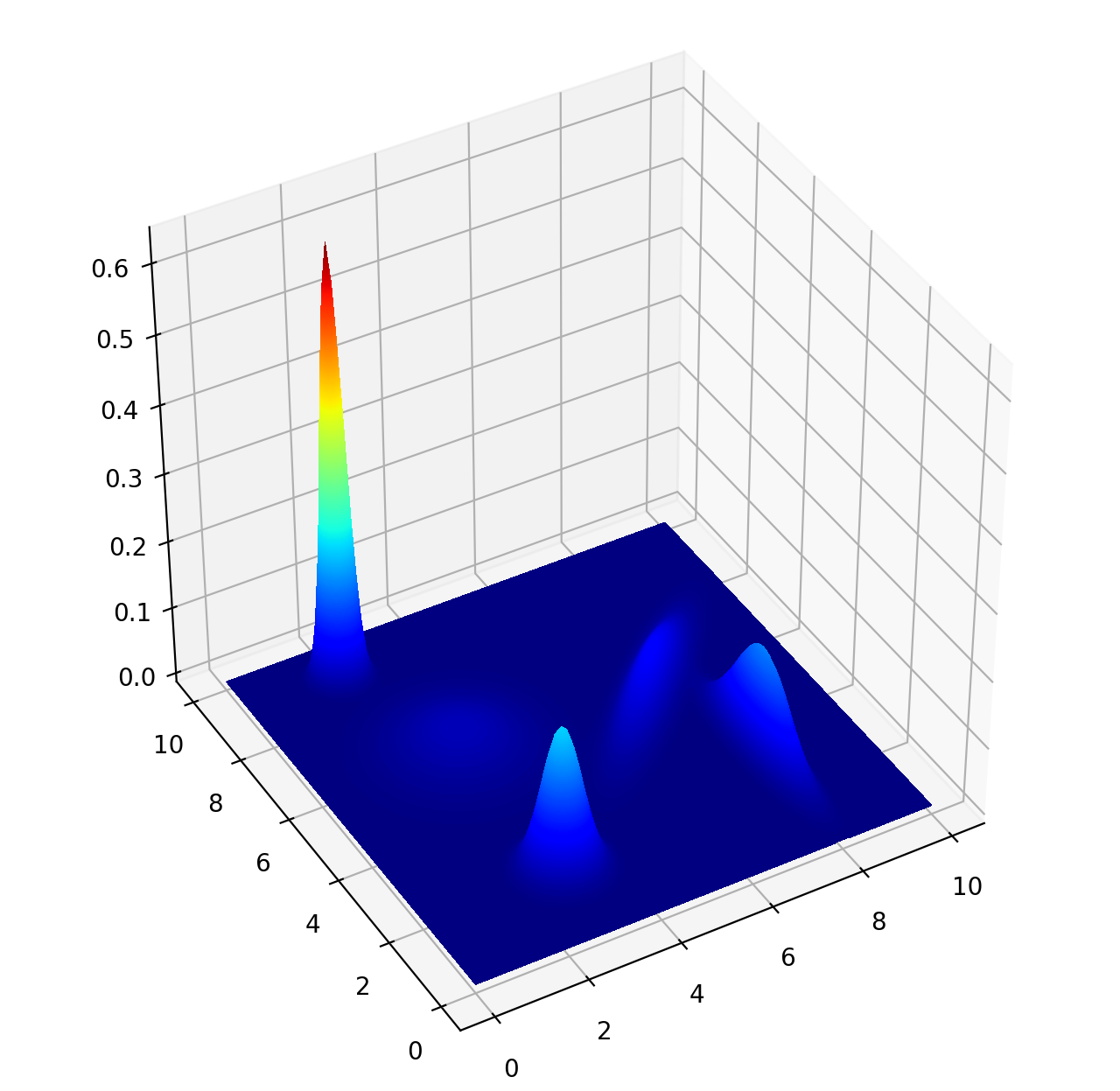}}
			\label{fig::mode:density}
	\end{minipage}}
	\subfigure[]{
		\begin{minipage}{0.3\columnwidth}
			\centering
			\centerline{\includegraphics[width=\columnwidth]{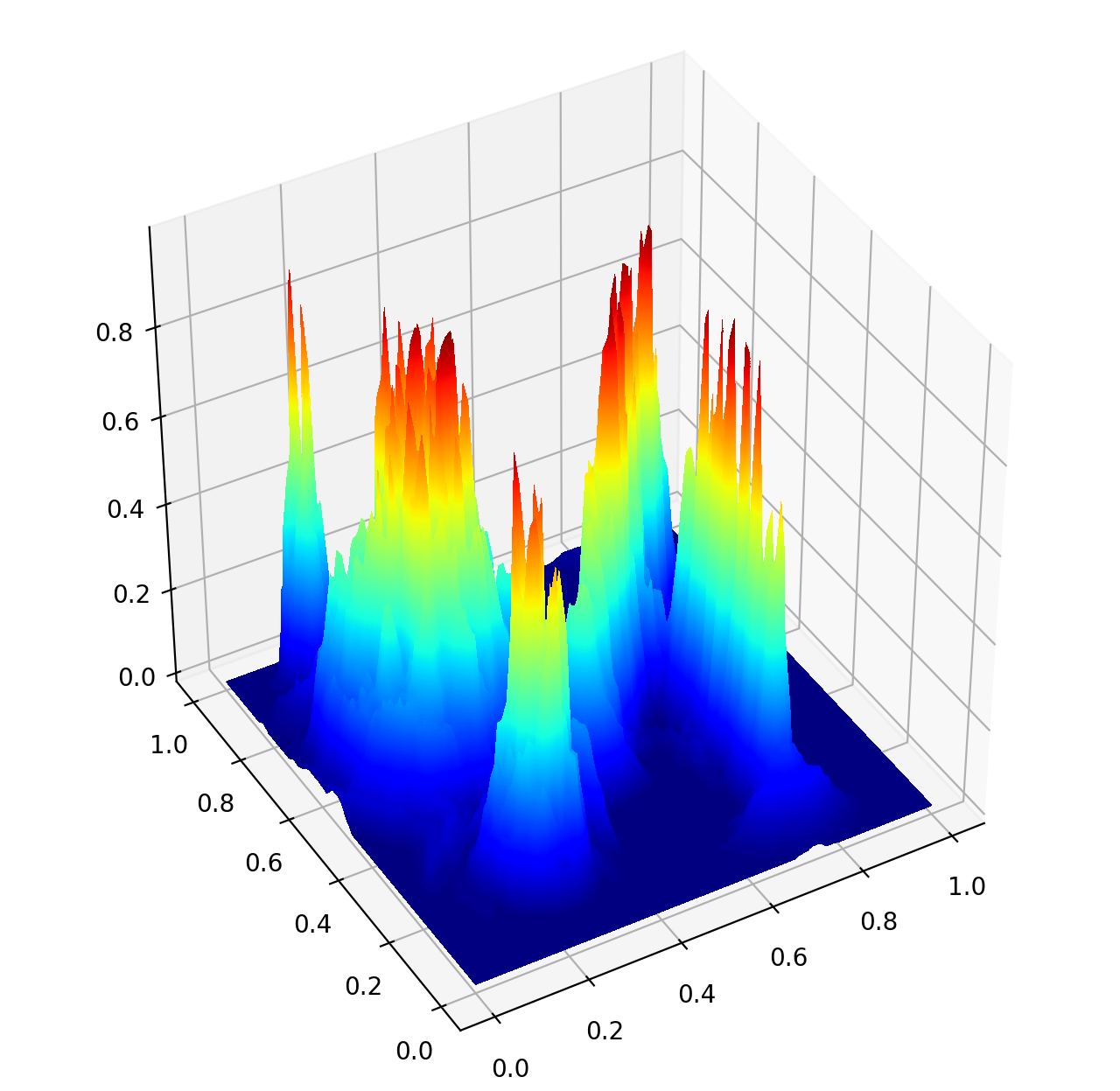}}
			\label{fig::mode:localden}
	\end{minipage}}
	\vskip -12pt
	\caption{Mode detection by the BDMBC algorithm for datasets with density-varying clusters on the Gaussian Mixture Model. (a) Raw Dataset generated from the synthetic distribution. (b) Density of the synthetic distribution. (c) Result of BDMBC on the estimated probability of localized level-set.}
	\label{fig::mode}
\end{figure} 

We generate 3000 points from the distribution, and show the scatter plot of the generated dataset in Figure \ref{fig::mode:clusters}. Different clusters are plotted in different colors. We also visualized the probability density function of the Gaussian mixture model in Figure \ref{fig::mode:density}. Figure \ref{fig::mode:density} shows that clusters are density-varied. The densities of the five modes are very different. 
We apply our BDMBC algorithm to this synthetic dataset, and the estimated PLLS are visualized in Figure \ref{fig::mode:localden}. 
Compared with Figure \ref{fig::mode:density}, the local minimums of the estimated PLLS are close to zero, and the local maximums are close to one.
As our BDMBC can narrow the density difference of high- and low-density clusters, our BDMBC can successfully distinguish five modes in Figure \ref{fig::mode:localden}. Moreover, Figure \ref{fig::mode2} on other three additional two-dimensional synthetic datasets \cite{barton2015clustering} also shows that all modes are covered as peaks. Note that we need not provide an accurate estimation of modes. 
Instead, we use non-overlapping clusters to cover modes and each mode is covered by only one cluster.
We mention that although our BDMBC may enlarge the difference of densities nearby local maximums in Figures \ref{fig::mode} and Figures \ref{fig::mode2}, these fluctuations do not affect the detection of modes and clusters.

\begin{figure}[htbp]
	\centering
	\subfigure[]{
		\begin{minipage}{0.32\columnwidth}
			\centering
			\centerline{\includegraphics[width=\columnwidth]{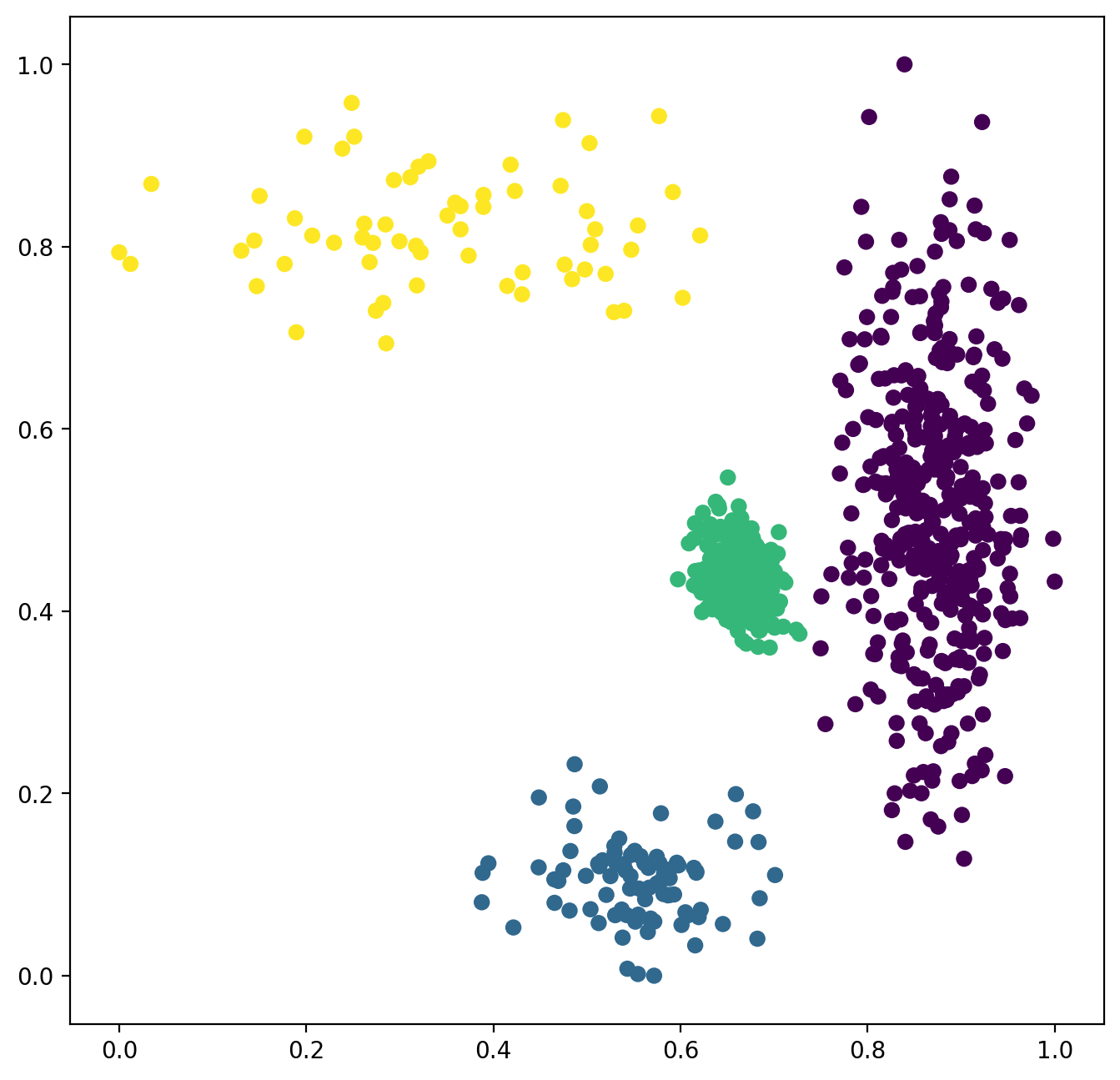}}
			\label{fig::mode2:clusters}
	\end{minipage}}
	\subfigure[]{
		\begin{minipage}{0.32\columnwidth}
			\centering
			\centerline{\includegraphics[width=\columnwidth]{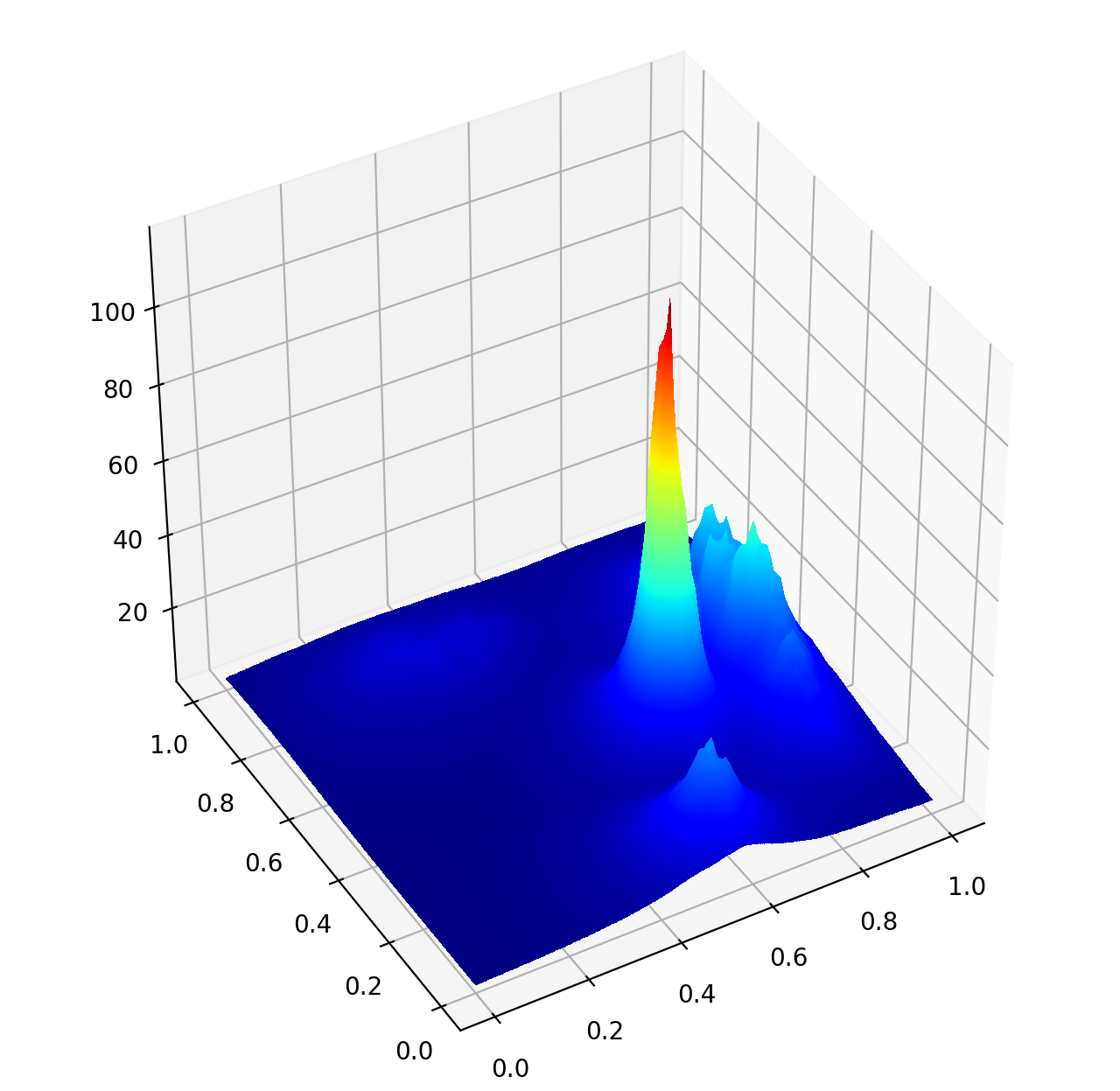}}
			\label{fig::mode2:estden}
	\end{minipage}}
	\subfigure[]{
		\begin{minipage}{0.32\columnwidth}
			\centering
			\centerline{\includegraphics[width=\columnwidth]{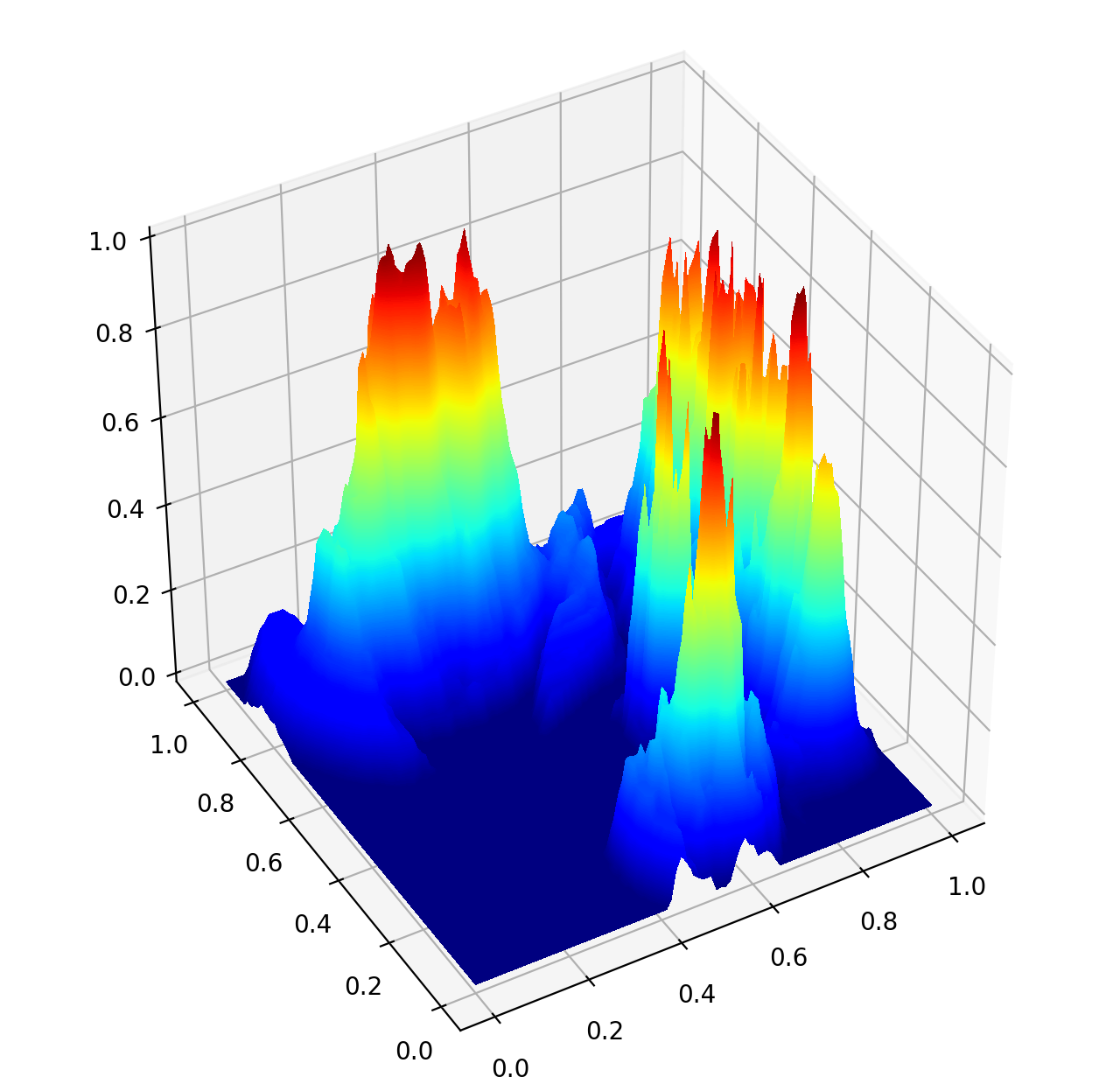}}
			\label{fig::mode2:localden}
	\end{minipage}} \\
	\subfigure[]{
		\begin{minipage}{0.32\columnwidth}
			\centering
			\centerline{\includegraphics[width=\columnwidth]{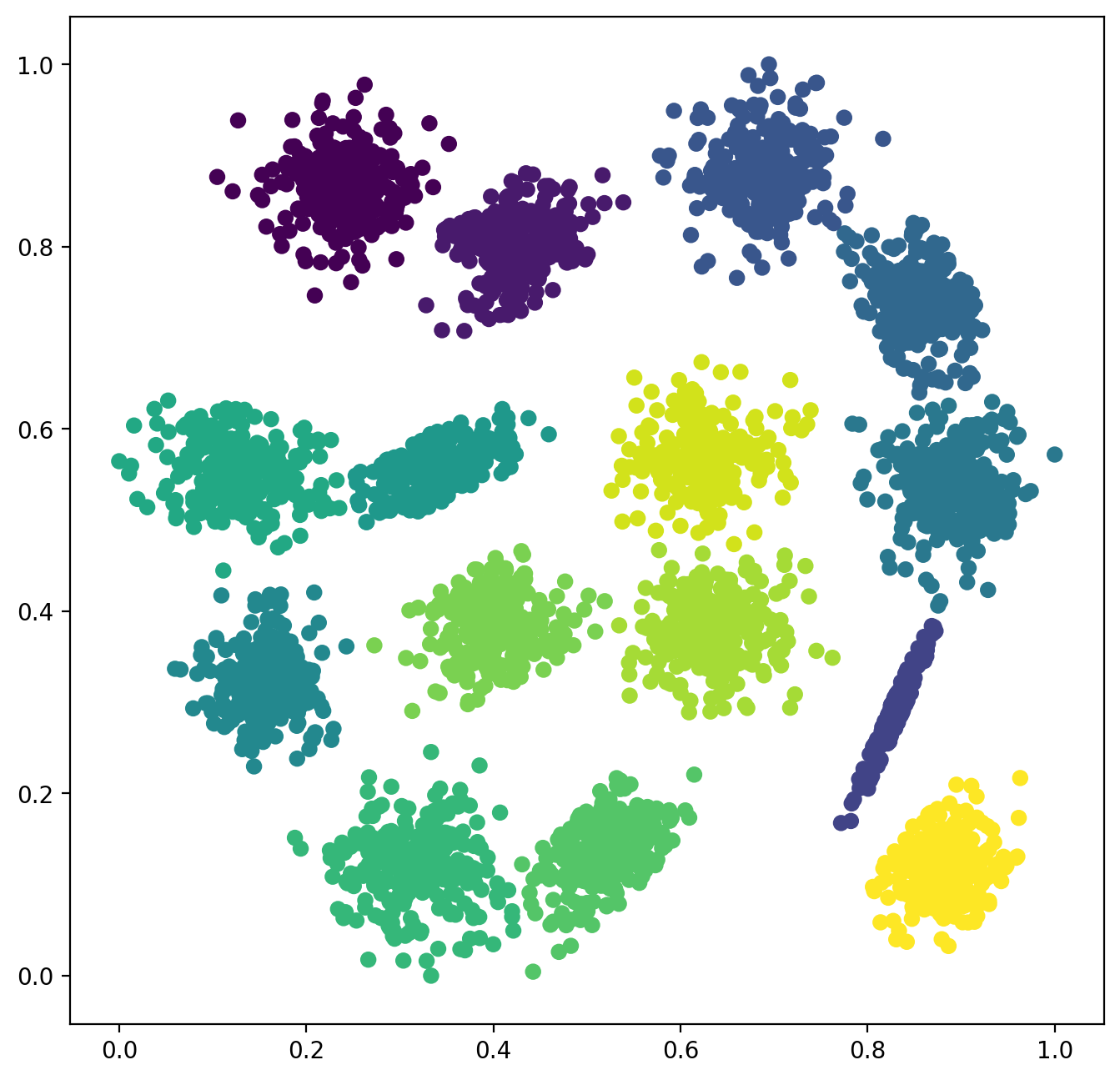}}
			\label{fig::mode3:clusters}
	\end{minipage}}
	\subfigure[]{
		\begin{minipage}{0.32\columnwidth}
			\centering
			\centerline{\includegraphics[width=\columnwidth]{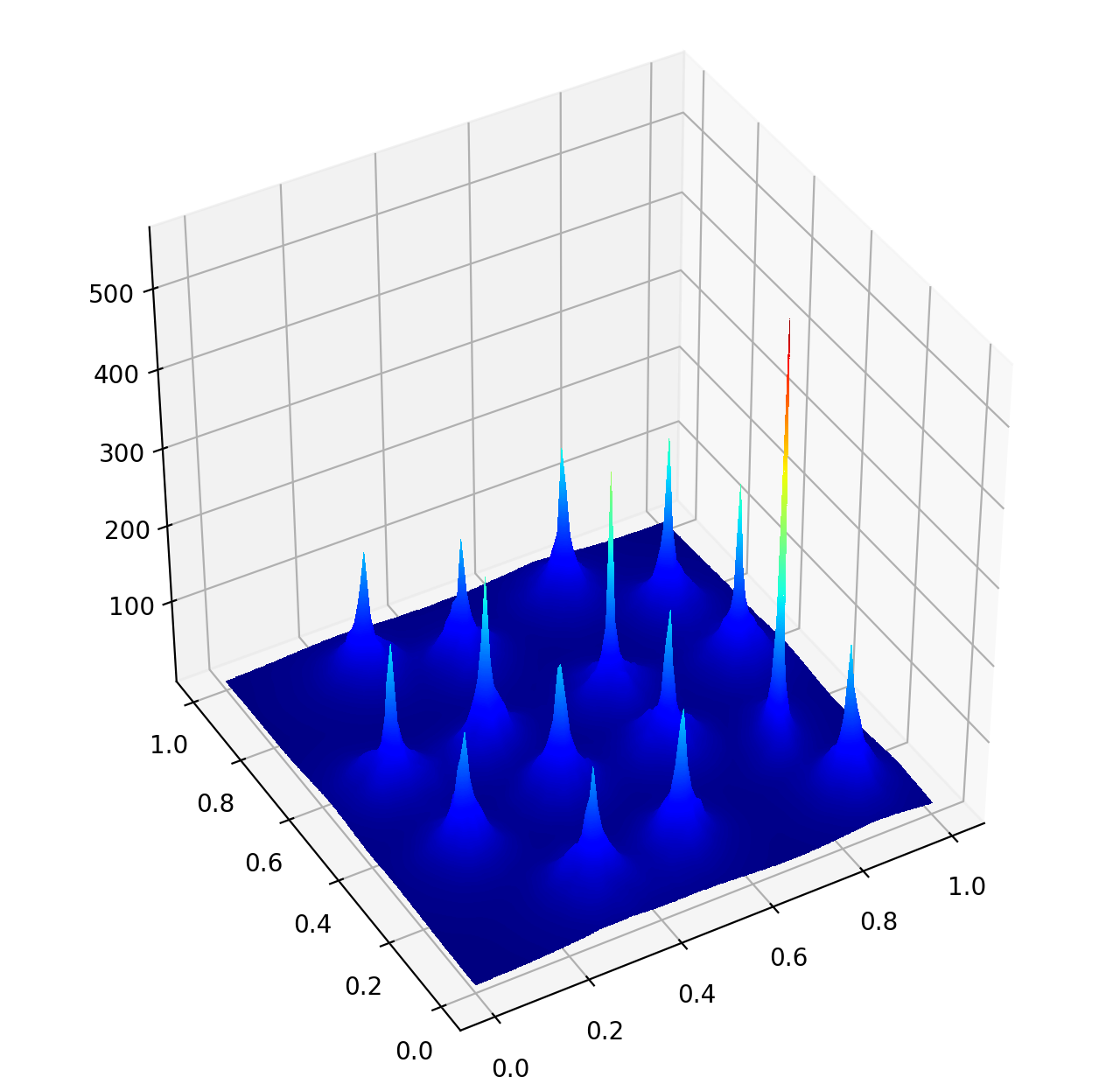}}
			\label{fig::mode3:estden}
	\end{minipage}}
	\subfigure[]{
		\begin{minipage}{0.32\columnwidth}
			\centering
			\centerline{\includegraphics[width=\columnwidth]{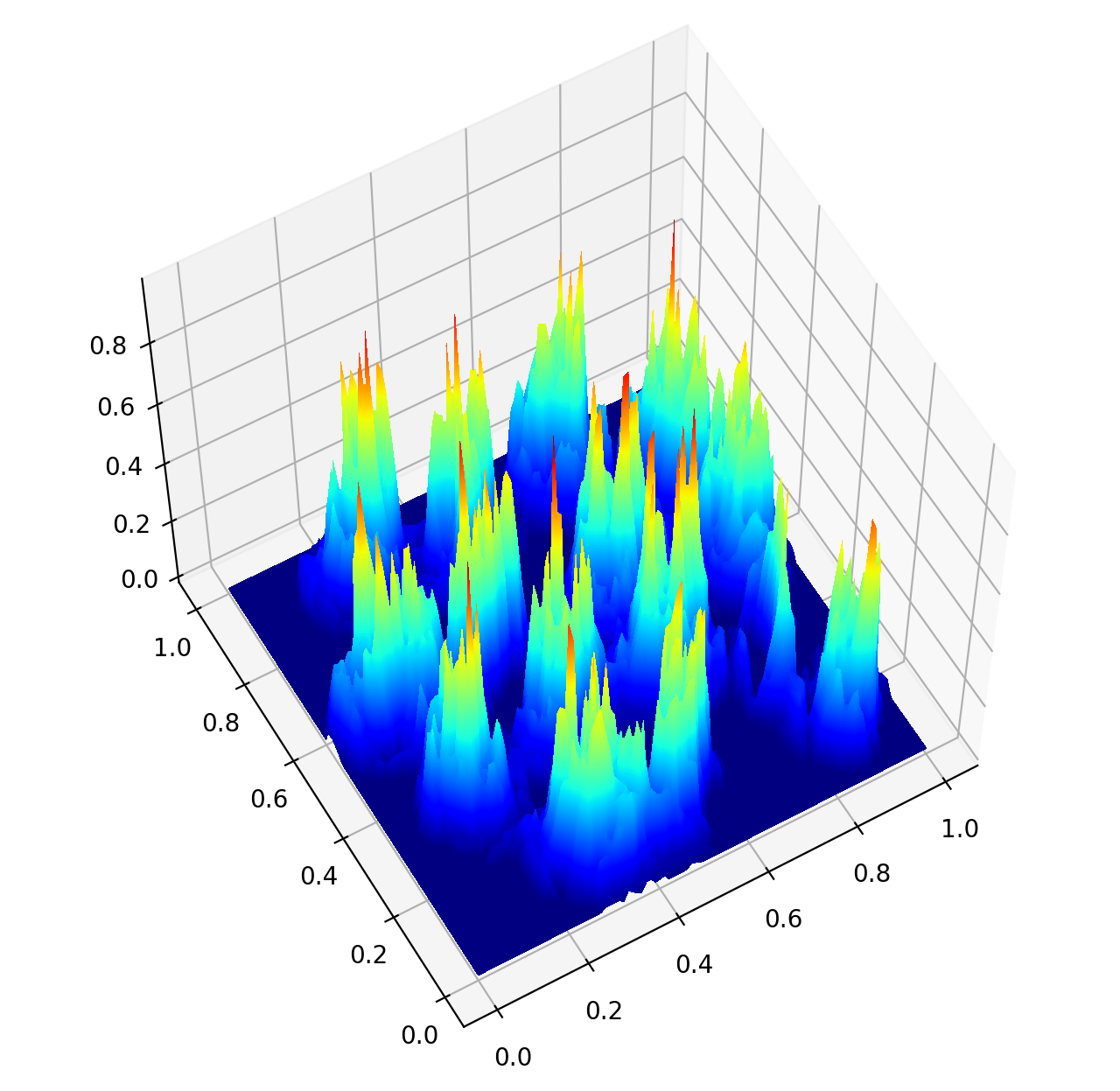}}
			\label{fig::mode3:localden}
	\end{minipage}} \\
	\subfigure[]{
		\begin{minipage}{0.32\columnwidth}
			\centering
			\centerline{\includegraphics[width=\columnwidth]{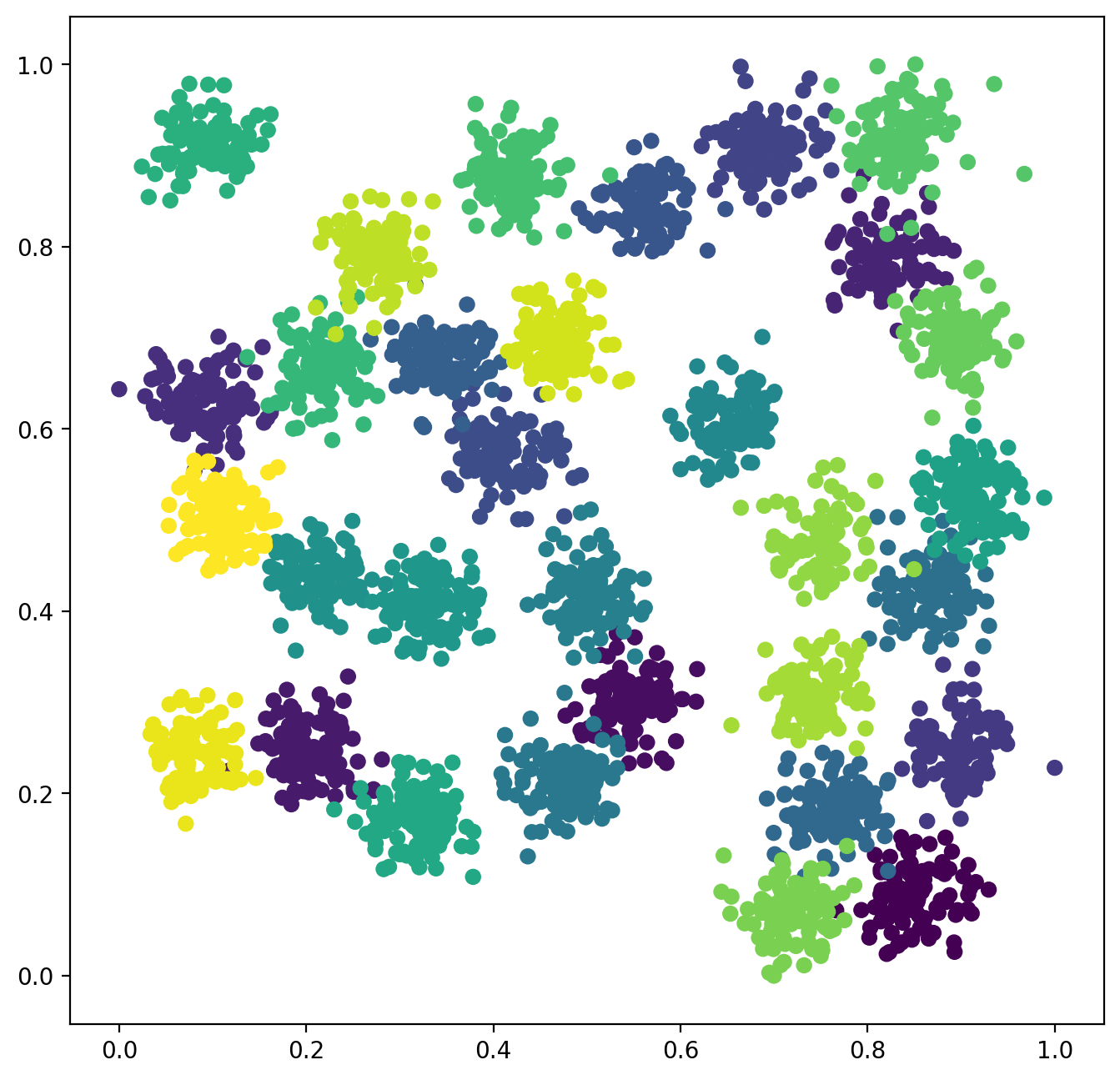}}
			\label{fig::mode4:clusters}
	\end{minipage}}
	\subfigure[]{
		\begin{minipage}{0.32\columnwidth}
			\centering
			\centerline{\includegraphics[width=\columnwidth]{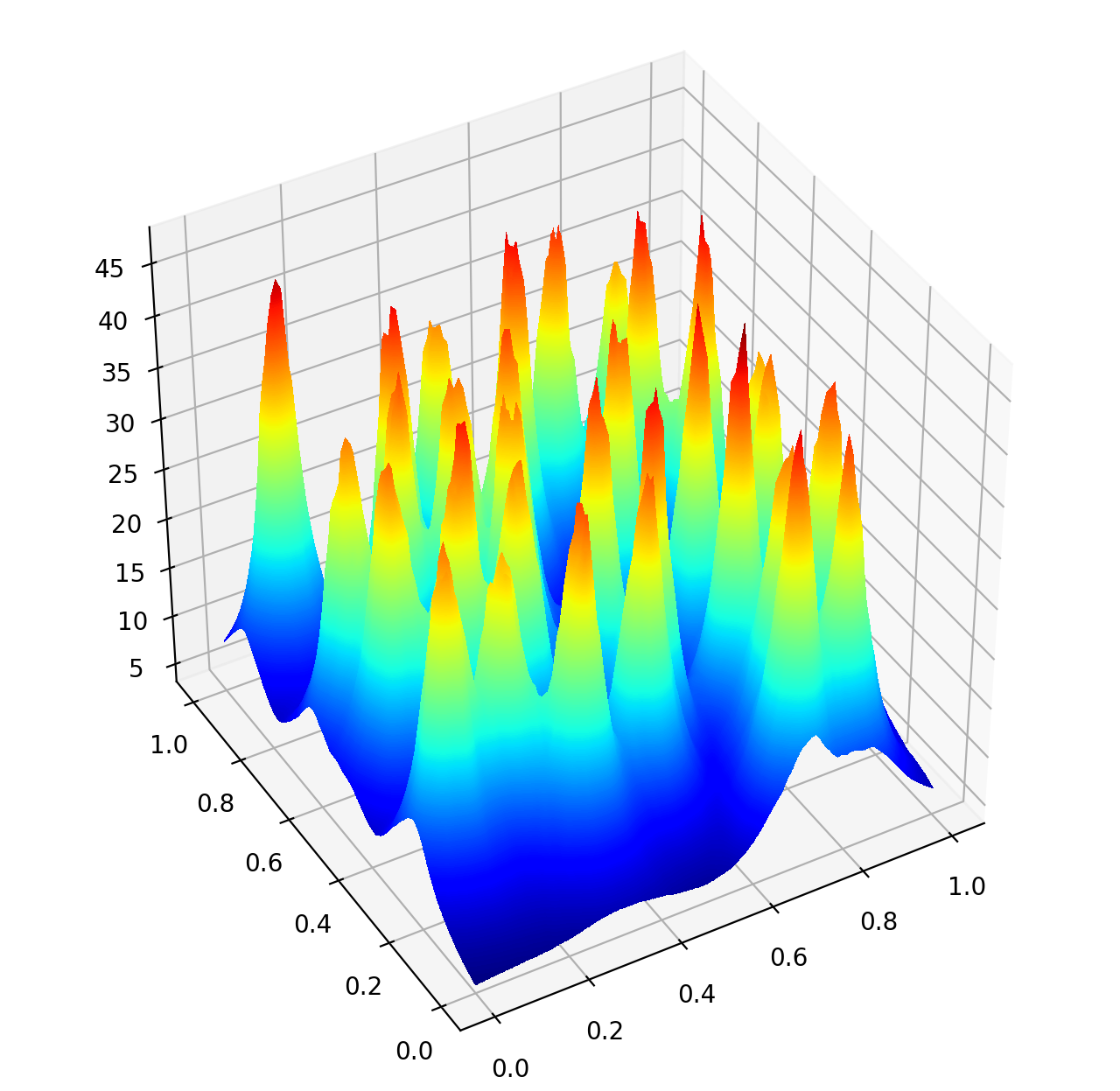}}
			\label{fig::mode4:estden}
	\end{minipage}}
	\subfigure[]{
		\begin{minipage}{0.32\columnwidth}
			\centering
			\centerline{\includegraphics[width=\columnwidth]{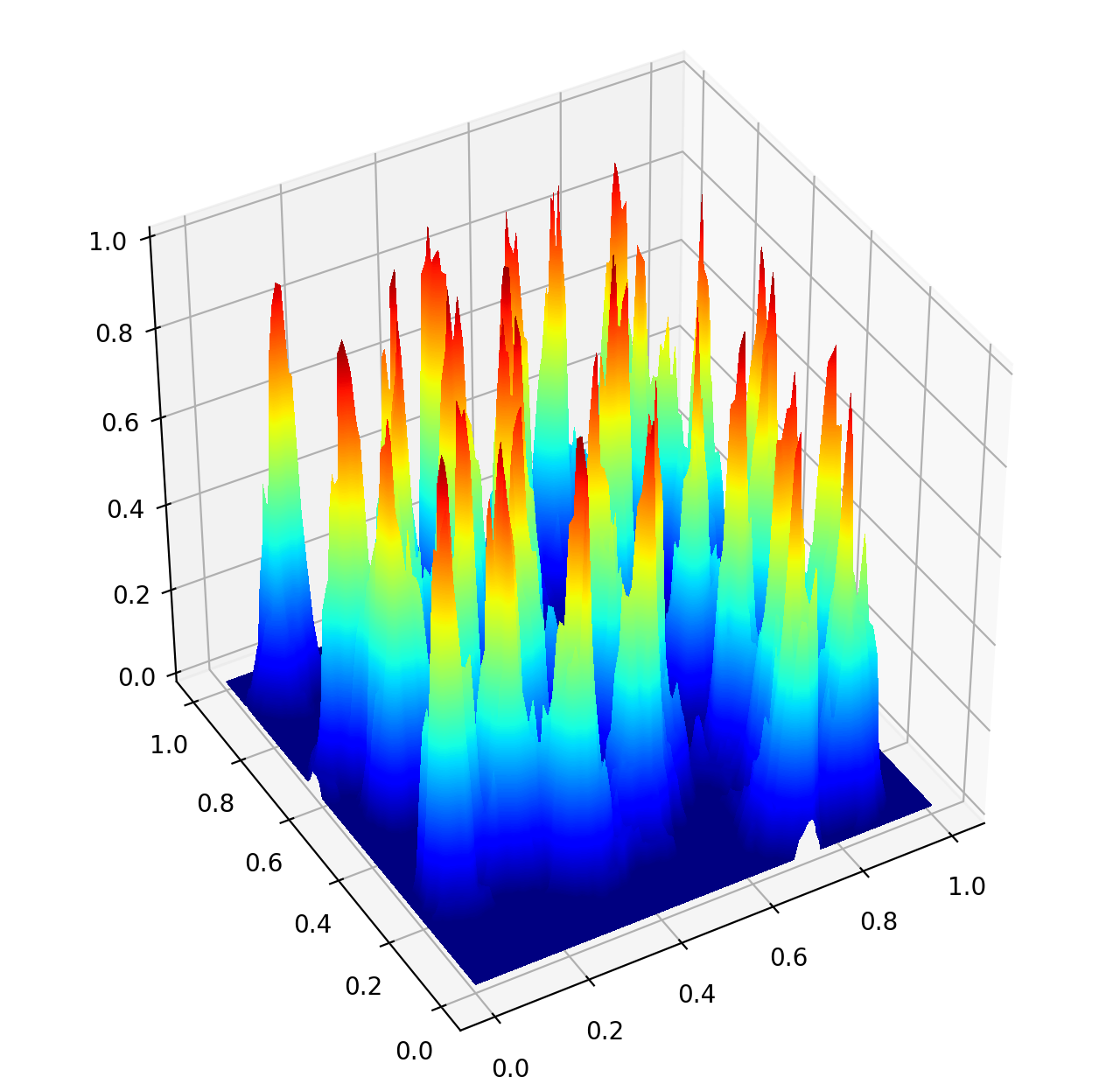}}
			\label{fig::mode4:localden}
	\end{minipage}}
	\caption{Mode detection by the BDMBC algorithm for datasets with density-varying clusters on other synthetic datasets. (a)(d)(g) Scatter plot of the raw dataset generated from the synthetic distribution. (b)(e)(h) Hypothetical density estimation of the synthetic distribution by bagged k-distance. (c)(f)(i) Result of BDMBC on the estimated probability of localized level-set.}
	\label{fig::mode2}
\end{figure}

\subsection{Comparisons on Real Datasets} \label{sec::subsec::realcomp}

We evaluate the clustering performance of our proposed BDMBC by comparing with other competing methods on real-world classification datasets.
Before illustrating the experimental results, we demonstrate the basic information of the real datasets, list all comparing methods and provide the parameter settings for each method, and introduce the metrics for clustering performance evaluation.

\subsubsection{Datasets}

We collect the binary and multi-class classification datasets from UCI Machine Learning Repository, including 
{\tt wine} \cite{UCI2017datasets}, 
{\tt banknote} \cite{UCI2017datasets}, 
{\tt HTRU2} \cite{lyon2016fifty}, 
{\tt iris} \cite{UCI2017datasets}, 
{\tt gisette} \cite{guyon2004result}. 
In addition, we further collect two image datasets for analyzing the high-dimension situations, including 
{\tt COIL} \cite{nene1996columbia} and 
{\tt USPS} \cite{uspsdataset}. 
These datasets are summarized in Table \ref{tab::datasetdescription}, where we list the number of samples $n$, the number of dimensions $d$, the number of clusters $c$, the number of samples falling into the smallest cluster {\tt min\_nums}, and the number of samples falling into the largest cluster {\tt max\_nums}.
Specifically, before experiments, we scale the datasets to the $[0,1]$ range on each dimension.

\begin{table}[!ht]
	\centering
	\captionsetup{justification=centering}
	\vspace{0pt}
	\caption{Descriptions of Datasets}
	\label{tab::datasetdescription}
	\vspace{-6pt}
	\begin{tabular}{c|ccccc}
		\toprule
		Dataset & $n$ & $d$ & $c$ & {\tt min\_nums} & {\tt max\_nums} \\ 
		\bottomrule
		{\tt iris} & 150 & 4 & 3 & 50 & 50 \\ 
		{\tt wine} & 178 & 13 & 3 & 48 & 71 \\ 
		{\tt seeds} & 210 & 7 & 3 & 70 & 70 \\ 
		{\tt banknote} & 1372 & 4 & 2 & 610 & 762 \\ 
		{\tt COIL} & 1440 & 1024 & 20 & 72 & 72 \\ 
		{\tt gisette} & 7000 & 5000 & 2 & 3500 & 3500 \\ 
		{\tt USPS} & 9298 & 256 & 10 & 708 & 1553 \\ 
		{\tt HTRU2} & 17898 & 8 & 2 & 1639 & 16259 \\ 
		\bottomrule
	\end{tabular}
\end{table}

\subsubsection{Baselines and Parameter Settings}

The three baselines include conventional and state-of-the-art clustering algorithms. They are an improved version of DBSCAN called DBSCAN++ \cite{jang2019dbscanpp}, HDBSCAN \cite{campello2013density}, and an improved version of mean-shift called quickshift++ \cite{jiang2018quickshift++}.

Parameter optimization is still an open question for clustering \cite{gan2020data}. 
For each baseline, we search the parameters according to the author's suggestion or try to search the best parameters within a reasonable range. 
For DBSCAN++, the sampling fraction $p$ is firstly set from 0.5 to 1; and the radius for determining the core points in clusters  $\varepsilon_d$ is searched from 0 to 0.8; in addition, the number of neighbors for a point to be labeled as a core point {\tt minPts} is searched from 1 to 20; and the radius for determining if two clusters are connected $\varepsilon_c$ is set from 0 to 0.8.
For HDBSCAN, the search range of the minimum restriction of clusters ${\tt ClusterSize}$ is from 2 to 100, and the search range of another parameter ${\tt MinSamples}$ is from 2 to 20.
For quickshift++, three parameters are included. The number of neighbors to calculate the density $k$ is set from 2 to 30; the threshold for mode detection $\beta$ from 0 to 1; the minimum restriction $\varepsilon$ for connecting clusters is set from 0 to 0.2. 
For DMBC, we set the parameter $k_D$ from 3 to 100 in default, which is used to construct the $k$-distance hypothetical density estimator. 
And we set the parameter grid that is used to constraint the region for calculating the localized density ratio $k_L$ from 3 to 100 in default. 
Finally, two parameters with respect to level-set clustering, including the threshold for the level-set $\lambda$ and the number of nearest neighbors for graph connection $k_G$, range from 0.05 to 0.95 and from 1 to 20 in default.
When it comes to BDMBC, we firstly set the number of bagging $B=100$ and try different sampling rates $\rho \in [0.1, 0.9]$.
And for the parameters for DMBC included in each bagging procedure, we set smaller parameter girds of $k_D$, ranging from 1 to 20.
The reason why we can set a much smaller grid will be detailedly discussed in Section \ref{sec::subsec::params}.

\subsubsection{Clustering Measures}

In our experiments, we use two clustering-based metrics and two classification-based metrics to evaluate the clustering performances of the BDMBC, including Adjusted Rand Index (ARI), Normalized Mutual Infomation (NMI), $F1$ score, and accuracy. The mathematical definition of each measure is defined as follows.

\begin{itemize}
	\item \textbf{ARI}:
	ARI \cite{hubert1985comparing} measures the differences between two clustering results, adjusted for the chance of grouping of elements for Rand Index (RI) \cite{rand1971objective}.
	\begin{align*}
		\displaystyle\mathrm{ARI} = \frac{RI - E[RI]}{\max(RI)-E[RI]},\quad RI=\frac{a+b}{\binom{n}{2}},
	\end{align*}
	where $a$ is the number of paired objects placed in the same cluster in both partitions and $b$ is the number of paired objects placed in different clusters in both partitions.
	\item \textbf{NMI}:
	The Mutual Information (MI) \cite{strehl2002cluster} is a symmetric measure that quantifies the mutual dependence between two random variables, or the information that two random variables share.
	\begin{align*}
		\displaystyle\mathrm{NMI} = \frac{I(Y, P)}{\sqrt{H(Y)H(P)}}
	\end{align*}
	where $H(x)$ represents the entropy of $x$, and $I(Y,P)$ represents the mutual information of $Y$ and $P$.
\end{itemize}
On the other hand, as for the classification measure $F$1 measure and accuracy, we have to first use the Kuhn-Munkres \cite{munkres1957algorithms,kuhn1955hungarian} methods to assign the clustering labels to the underlying labels of instances and then calculate the measure.
\begin{itemize}
	\item $\boldsymbol{F1}$:
	The $F1$ score can be interpreted as a harmonic mean of the precision and recall.
	\begin{align*}
		\displaystyle F1 = \frac{2*\mathrm{precision}*\mathrm{recall}}{\mathrm{precision}+\mathrm{recall}}
	\end{align*}
	where precision describes the ability to only predict really positive samples as samples, denoted as $\mathrm{precision}=\frac{TP}{TP+FP}$. And the recall, calculated by $\mathrm{recall} = \frac{TP}{TP+FN}$, can be interpreted as the ability of the classifier to find all the positive samples. 
	\item \textbf{Accuracy}:
	The accuracy measures the ratio of correct clustering.
	\begin{align*}
		\displaystyle ACC = \frac{\# \text{Correct Classification}}{n}
	\end{align*}
\end{itemize}

\subsubsection{Experimental Results}

In this subsection, we compare the performances of our algorithm with the other three baselines with four measures. The results are demonstrated in Table \ref{tab::realperformances}. The maximum obtained performances are highlighted in \textbf{bold} and the second maximum are highlighted in \textit{italic}. 
From Tables \ref{tab::realperformances}, DMBC and BDMBC outperform over other methods in most datasets. 
The traditional density-based clustering and modal-detecting method show less competitive performances. 
For some datasets with large sample size or high dimension, such as {\tt Gisette} and {\tt USPS}, we outperform these comparing methods by large margins.

\begin{table}[htpb]
	\centering
	\begin{tabular}{ccccccc}
		\toprule
		Data & Measure & DMBC & BDMBC & DBSCAN++ & HDBSCAN & Quickshift++ \\ \midrule
		\multirow{4}{*}{\tt Wine} & {\tt ARI} & \textbf{0.9133}  & \textbf{0.9133}    & \textit{0.8516}  & 0.4766  & 0.7316  \\
		~ & {\tt NMI} &  \textbf{0.8920}   &   \textbf{0.8920} & \textit{0.8364}  & 0.6281  & 0.7402  \\
		~ & {\tt F1} &  \textbf{0.9728} & \textbf{0.9728}    & \textit{0.9502}  & 0.5522  & 0.6813  \\
		~ & {\tt ACC} &  \textbf{0.9719}  & \textbf{0.9719}     & \textit{0.9494}  & 0.6517  & 0.8989  \\ \hline
		\multirow{4}{*}{\tt Iris} & {\tt ARI} & \textbf{0.9222}  & \textbf{0.9222}   & 0.8345  & 0.5681  & \textit{0.8753}  \\
		~ & {\tt NMI} &  \textbf{0.9144}   &  \textbf{0.9144} & 0.8334  & 0.7337  & \textit{0.8515}  \\
		~ & {\tt F1} & \textbf{0.9733}  &  \textbf{0.9733}   & \textit{0.9397}  & 0.5556  & 0.7175  \\
		~ & {\tt ACC} &  \textbf{0.9733}    & \textbf{ 0.9733}   & 0.9400  & 0.6667  & \textit{0.9533}  \\ \hline
		\multirow{4}{*}{\tt Seeds} & {\tt ARI} & \textit{0.8509}  & \textbf{0.8647}    & 0.7789  & 0.5046  & 0.7457  \\
		~ & {\tt NMI} & \textit{ 0.8178}  & \textbf{0.8450}  & 0.7595  & 0.6132  & 0.6973  \\
		~ & {\tt F1} & \textit{0.9475}  &   \textbf{0.9520} & 0.9185  & 0.5425  & 0.8983  \\
		~ & {\tt ACC} &  \textit{0.9476}   &   \textbf{0.9524} & 0.9170  & 0.6571  & 0.9003 \\ \hline
		\multirow{4}{*}{\tt Banknote} & {\tt ARI} & \textit{0.9682}   & \textbf{0.9710}  & 0.9190  & \textit{0.9682}  & 0.8526  \\
		~ & {\tt NMI} & 0.9347 & \textit{0.9382} & 0.8153  & \textbf{0.9402} & 0.8235  \\
		~ & {\tt F1} &  \textit{0.9919}   & \textbf{0.9926} & 0.5663  & \textit{0.9919}  & 0.6422  \\
		~ & {\tt ACC} &    \textit{0.9920}  &  \textbf{0.9927}  & 0.9453  & \textit{0.9920}  & 0.9344  \\ \hline
		\multirow{4}{*}{\tt COIL} & {\tt ARI} &  0.8628 & \textbf{0.8865}  & 0.2080  & \textit{0.8797}  & 0.7179  \\
		~ & {\tt NMI} &  0.9602      & \textit{0.9613}   & 0.5893  & \textbf{0.9639}  & 0.8833  \\
		~ & {\tt F1} &  \textit{0.8807}   &\textbf{ 0.9047}  & 0.2423  & 0.8694  & 0.6114 \\
		~ & {\tt ACC} &   0.8917 & \textbf{0.9153}  & 0.3264  & \textit{0.8944} & 0.7910  \\ \hline
		\multirow{4}{*}{\tt Gisette} & {\tt ARI} & \textit{0.6250}  &  \textbf{0.6482}  & 0.1238  & 0.0822  & 0.0000  \\
		~ & {\tt NMI} &  \textit{0.5268}  & \textbf{0.5401}  & 0.1938  & 0.1790  & 0.0000  \\
		~ & {\tt F1} &  \textit{0.8951 }   & \textbf{0.9026}   & 0.0534   & 0.5767  & 0.0000  \\
		~ & {\tt ACC} &   \textit{0.8953} &  \textbf{0.9026}  & 0.4739  & 0.6277  & 0.0000  \\ \hline
		\multirow{4}{*}{\tt HTRU2} & {\tt ARI} &  \textit{0.8151} & \textbf{0.8327}  & 0.7834  & 0.7401  & 0.7178  \\
		~ & {\tt NMI} &  \textit{0.6733}   & \textbf{0.6782}  & 0.5638  & 0.5504  & 0.4917  \\
		~ & {\tt F1} &   \textit{0.9198} & \textbf{0.9254} & 0.8784  & 0.7719  & 0.3088  \\
		~ & {\tt ACC} &   \textit{0.9755}   & \textbf{0.9765} & 0.9641  & 0.9451  & 0.9465  \\ \hline
		\multirow{4}{*}{\tt USPS} & {\tt ARI} &  \textit{0.8671}  & \textbf{0.8672}  & 0.3125  & 0.6016  & 0.6104  \\
		~ & {\tt NMI} &  \textit{0.8483} & \textbf{0.8490}  & 0.3993  & 0.6734  & 0.7050  \\
		~ & {\tt F1} &   \textit{0.9162} &  \textbf{0.9203}  & 0.2819  & 0.5360  & 0.0398  \\
		~ & {\tt ACC} & \textit{0.9235}    & \textbf{0.9276}   & 0.4133  & 0.6595  & 0.6638 \\ \bottomrule
	\end{tabular}
	\caption{Comparison with baselines on real-world datasets. For each dataset and each measure, we denote the best performance with \textbf{bold} and the second best performance with \textit{italic}.}
	\label{tab::realperformances}
\end{table}

\subsection{Parameter Analysis} \label{sec::subsec::params}

In this subsection, we firstly apply parameter analysis of four hyper-parameters including the number of nearest neighbors for hypothetical density estimation $k_D$, the number of nearest neighbors for the PLLS $k_L$, the number of nearest neighbors for graph connection $k_G$ and the level-set threshold $\lambda$ on the synthetic dataset {\tt 3Clusters}.
Then, we discuss how bagging helps with parameter tuning by comparing the optimal parameters between DMBC and BDMBC on the {\tt 3Clusters} dataset and five additional synthetic datasets. 
Lastly, we give some practical suggestions on the selection of hyper-parameters.
The synthetic datasets introduced in this subsection are from the {\tt Python} package {\tt sklearn} \cite{scikit-learn} and \cite{iglesias2019mdcgen}. For each synthetic dataset, we set the sample size to be $2000$, the noise rate as $0.05$, and visualize the dataset in Figure \ref{fig::syn_datasets}.

\begin{figure}[htbp]
	\centering
	\subfigure[{\tt 3Clusters}]{
		\begin{minipage}{0.32\columnwidth}
			\centering
			\centerline{\includegraphics[width=\columnwidth]{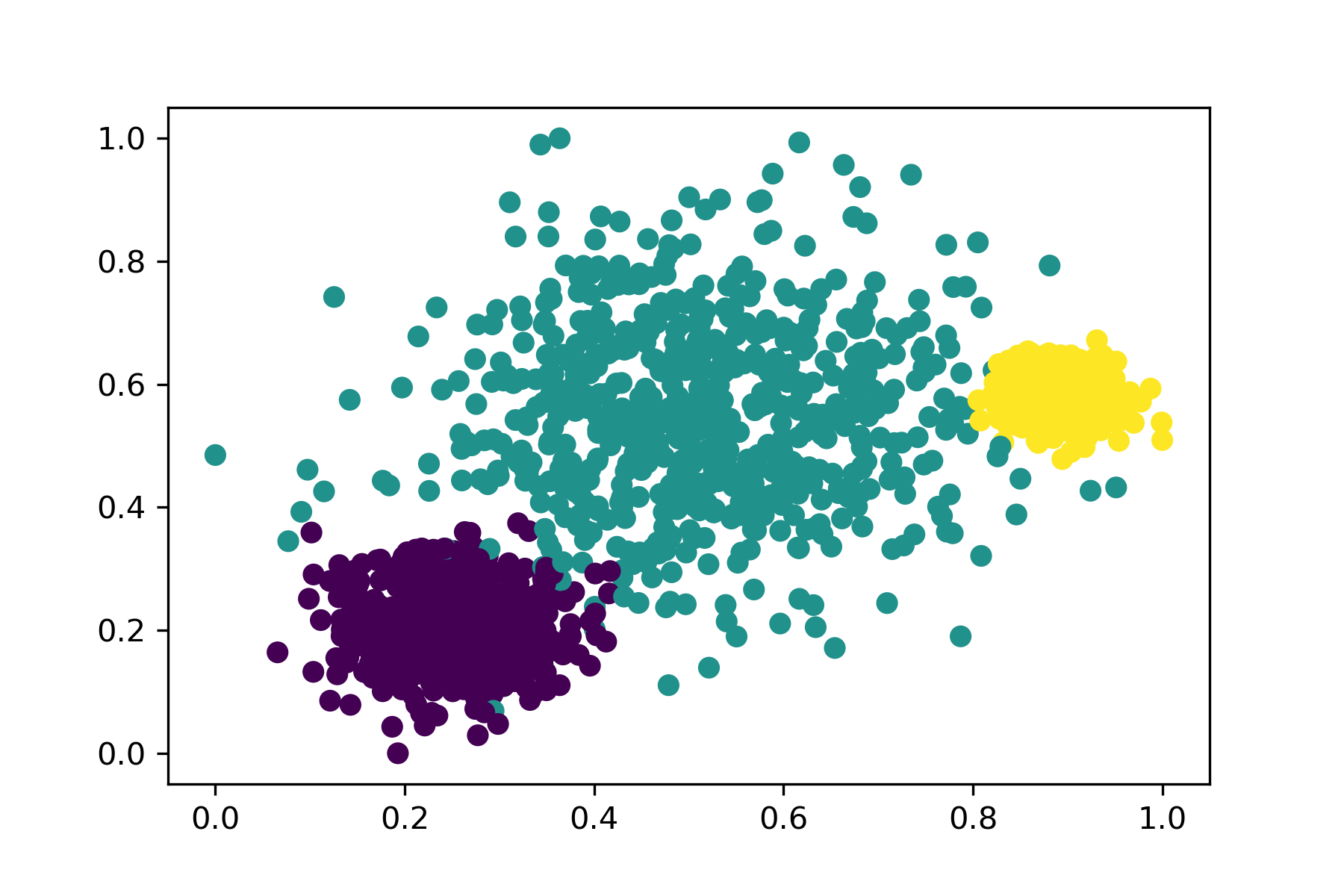}}
			\label{fig::syn:3Clusters}
	\end{minipage}}
	\subfigure[{\tt Anisotropic}]{
		\begin{minipage}{0.32\columnwidth}
			\centering
			\centerline{\includegraphics[width=\columnwidth]{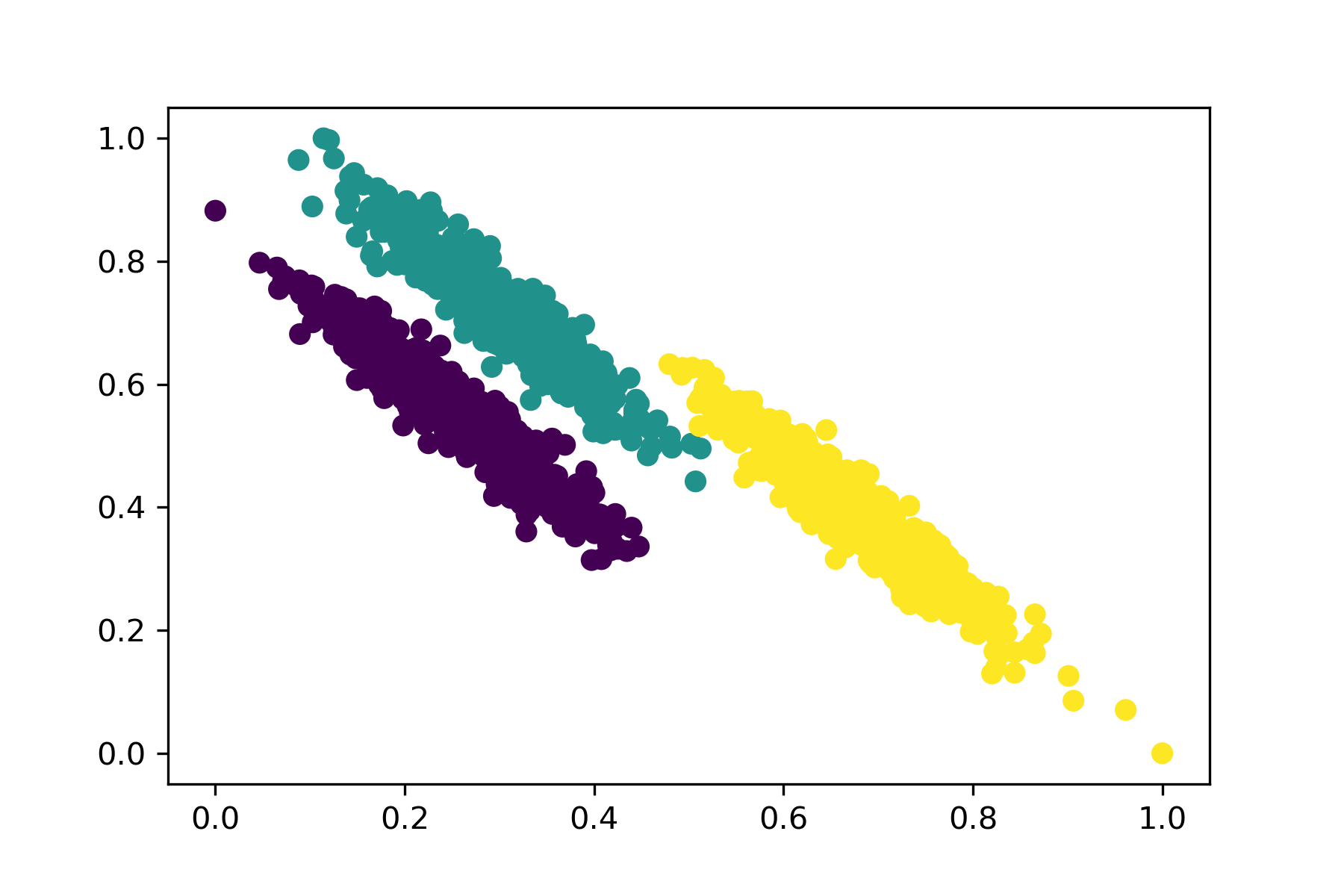}}
			\label{fig::syn:Anisotropic}
	\end{minipage}}
	\subfigure[{\tt Blobs}]{
		\begin{minipage}{0.32\columnwidth}
			\centering
			\centerline{\includegraphics[width=\columnwidth]{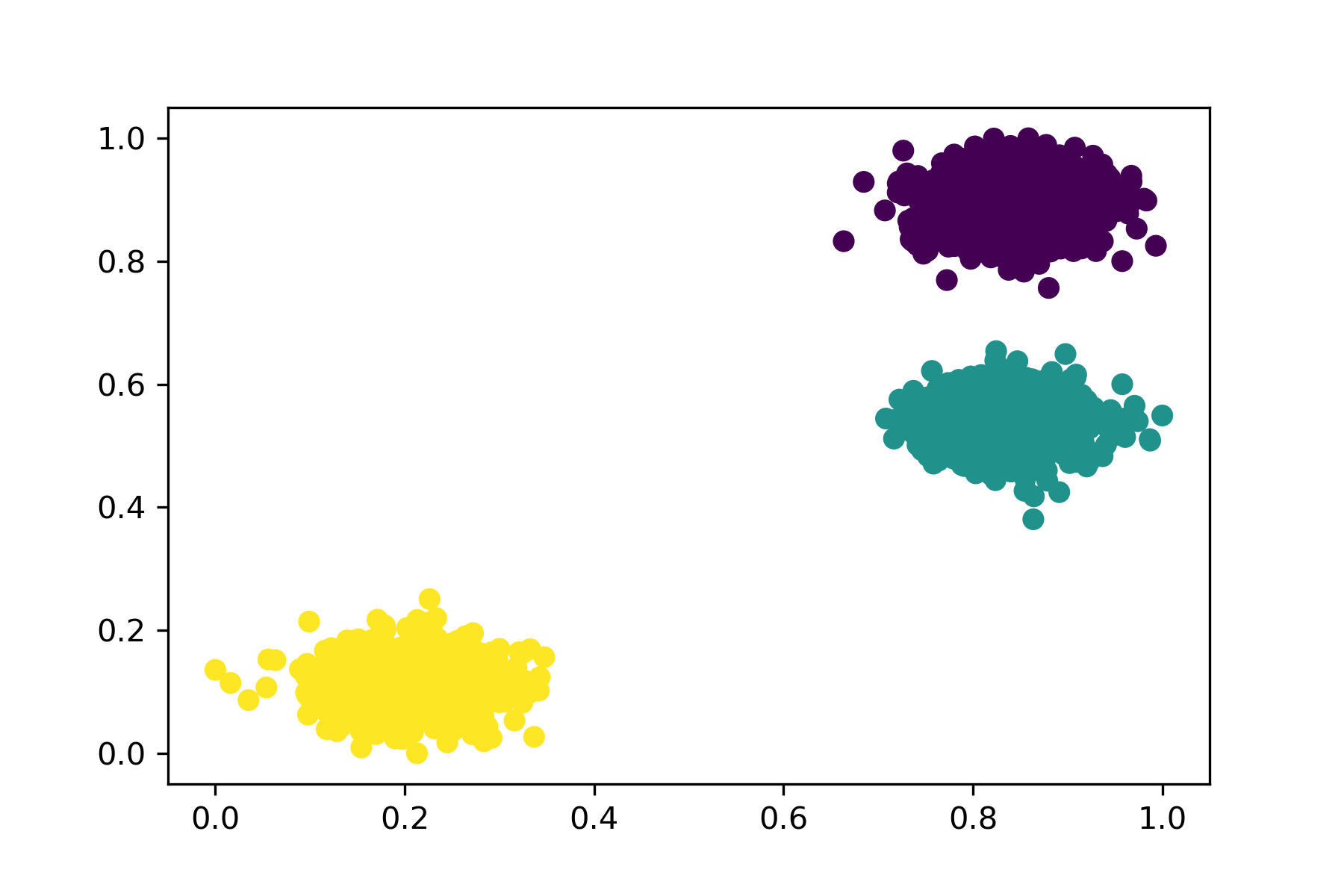}}
			\label{fig::syn:Blobs}
	\end{minipage}} \\ 
	\vspace{-15pt}
	\subfigure[{\tt Circles}]{
		\begin{minipage}{0.32\columnwidth}
			\centering
			\centerline{\includegraphics[width=\columnwidth]{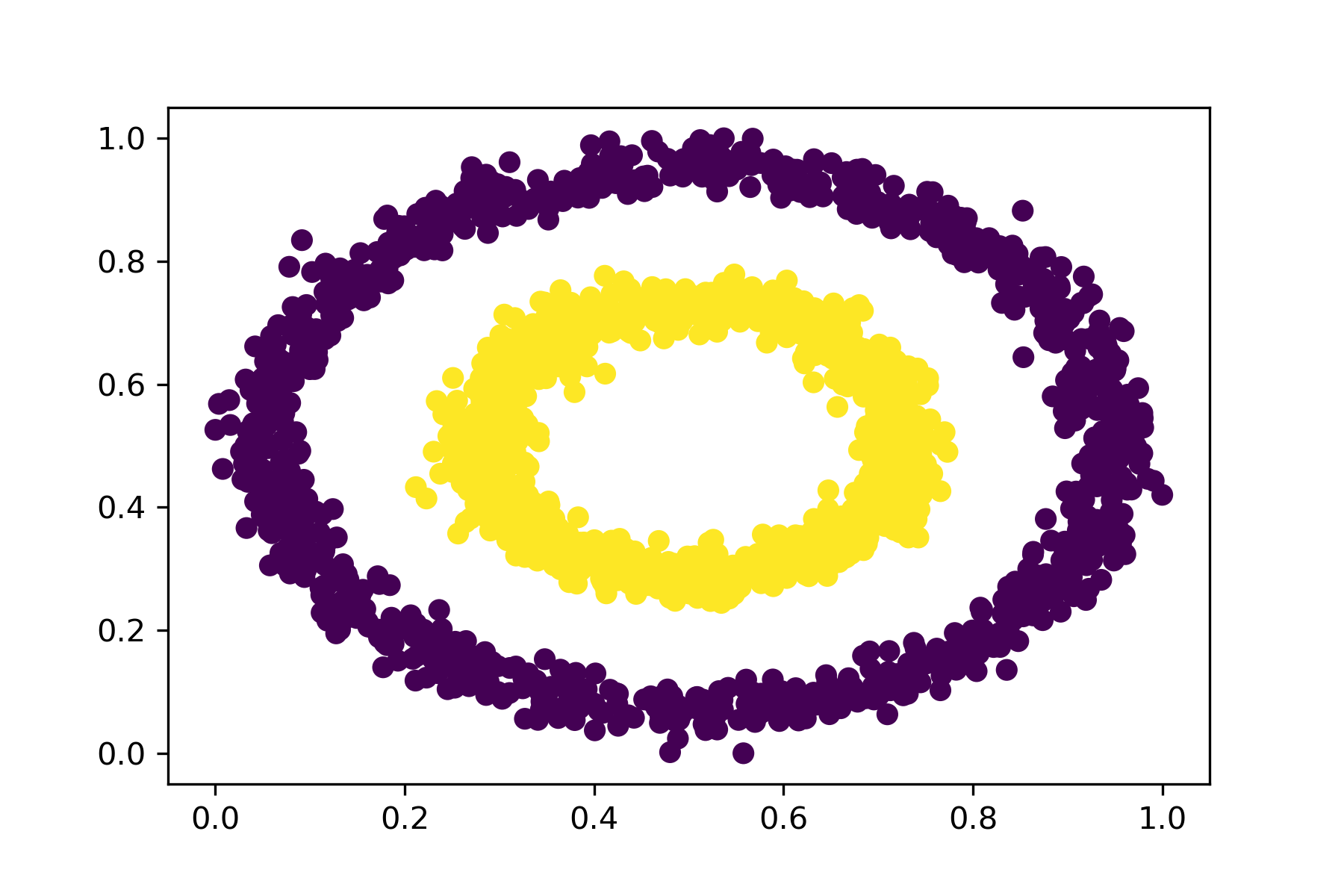}}
			\label{fig::syn:Circles}
	\end{minipage}}
	\subfigure[{\tt MDCGen}]{
		\begin{minipage}{0.32\columnwidth}
			\centering
			\centerline{\includegraphics[width=\columnwidth]{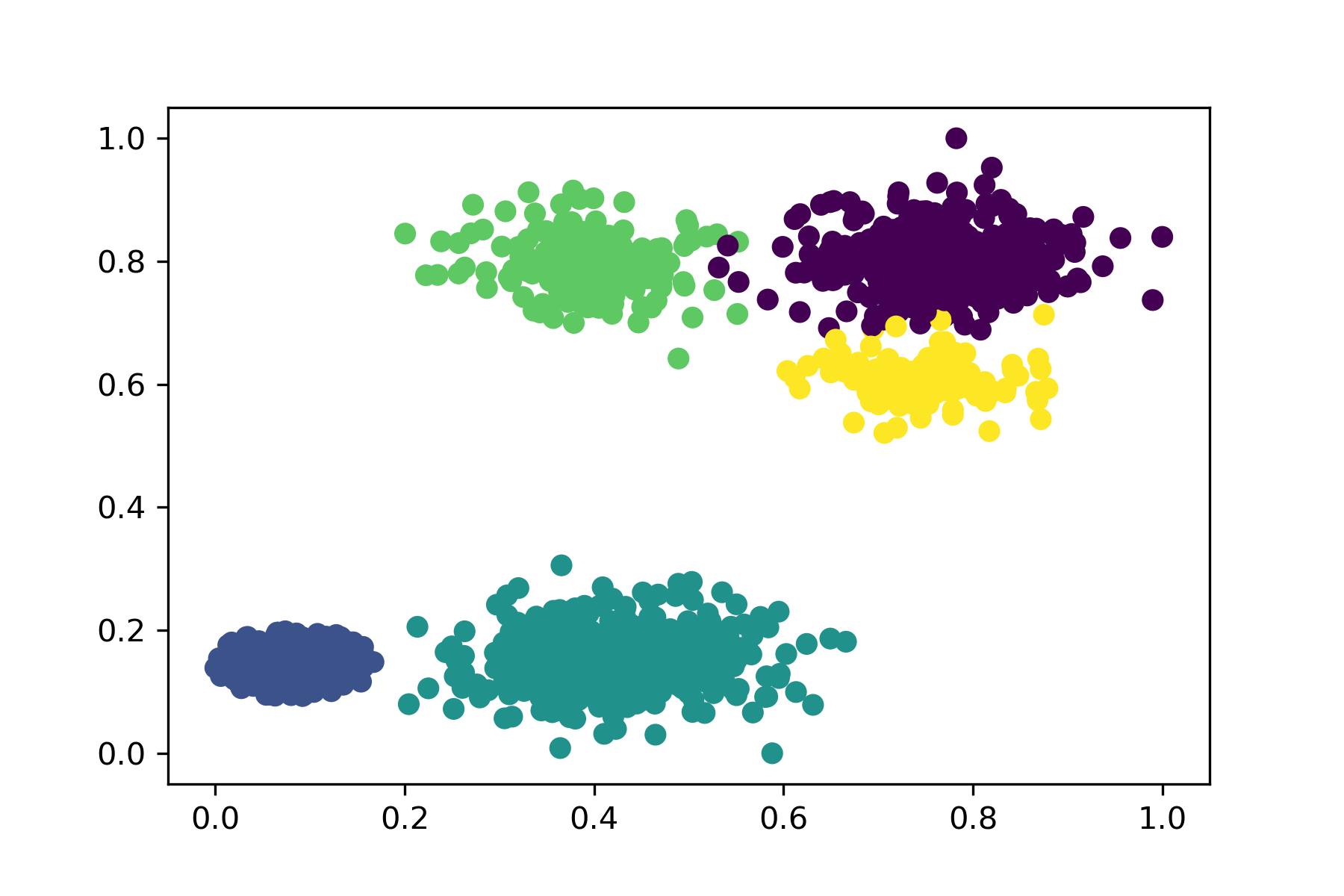}}
			\label{fig::syn:MDCGen}
	\end{minipage}} 
	\subfigure[{\tt Moons}]{
		\begin{minipage}{0.32\columnwidth}
			\centering
			\centerline{\includegraphics[width=\columnwidth]{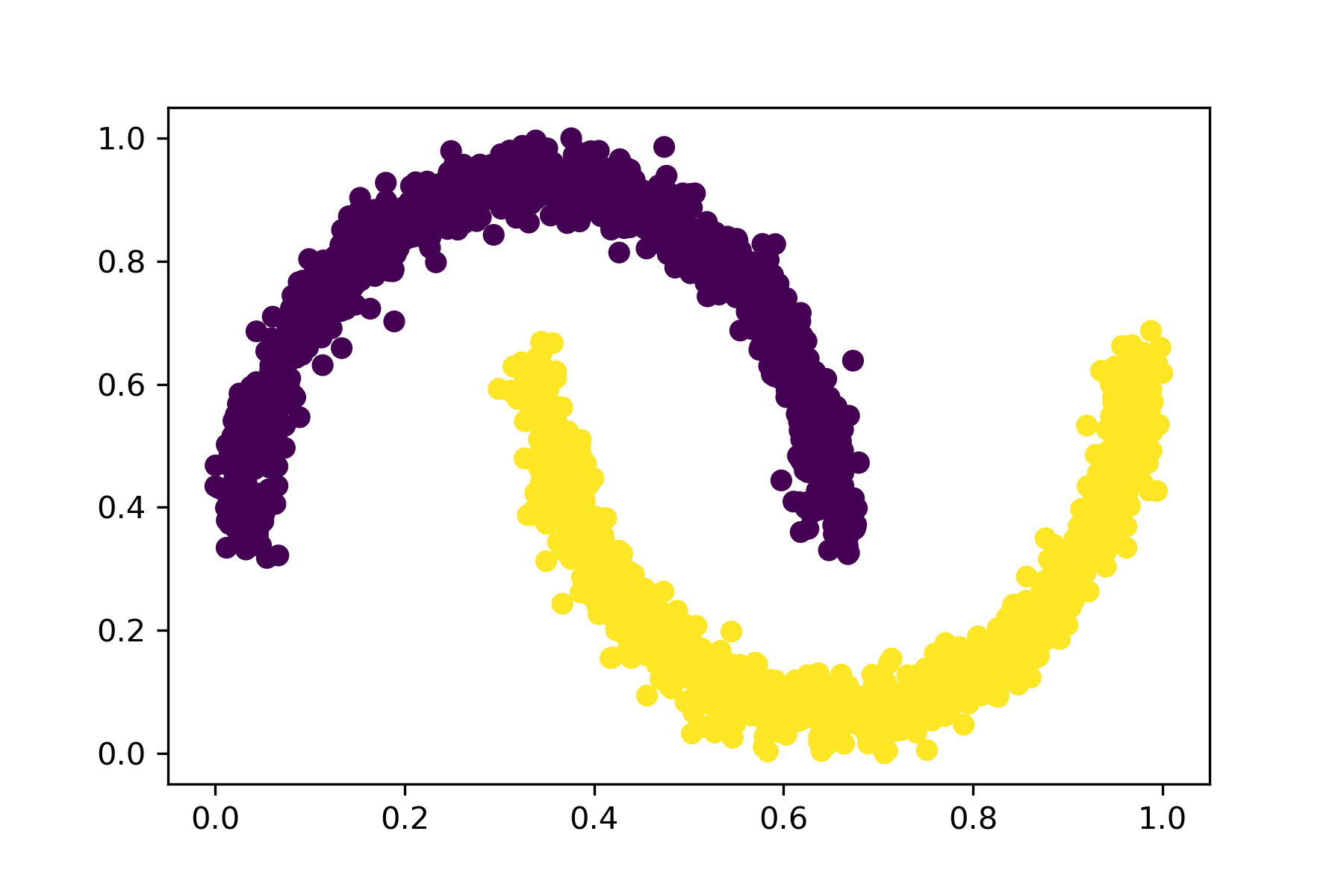}}
			\label{fig::syn:Moons}
	\end{minipage}} \\
	\vspace{-10pt}
	\centering
	\captionsetup{justification=centering}
	\caption{Visualization of the synthetic datasets.}
	\label{fig::syn_datasets}
\end{figure}

\subsubsection{Parameter analysis of hyper-parameters $k_D$ and $k_L$}

Firstly, we fix the number of nearest neighbors for graph connection $k_G=17$ and the level-set threshold $\lambda=0.75$ which are suitable hyper-parameters for a good clustering performance. We vary the number of nearest neighbors for hypothetical density estimation $k_D$ and for the PLLS $k_L$. The ARI scores and the number of clusters on {\tt 3Clusters} as the function of $(k_D, k_L)$ are visualized in Figure \ref{fig::param::kdkl}.
We find that the clustering performance is relatively insensitive to the parameters of $k_D$ and $k_L$: If $k_D$ and $k_L$ are not too small nor too large, the clustering performance is good, see the dark red filled region on the left side of Figure \ref{fig::param::kdkl}.
Moreover, the good clustering performance attributes to the performance of mode estimation. See the right side of Figure \ref{fig::param::kdkl}. A wide range of $k_D$ and $k_L$ can obtain the correct number of clusters (filled in green), which means that all the three modes are detected successfully.

\begin{figure}[htbp]
	\centering
	\vskip -0.05in
	\begin{minipage}{0.48\columnwidth}
		\centering
		\centerline{\includegraphics[width=\columnwidth]{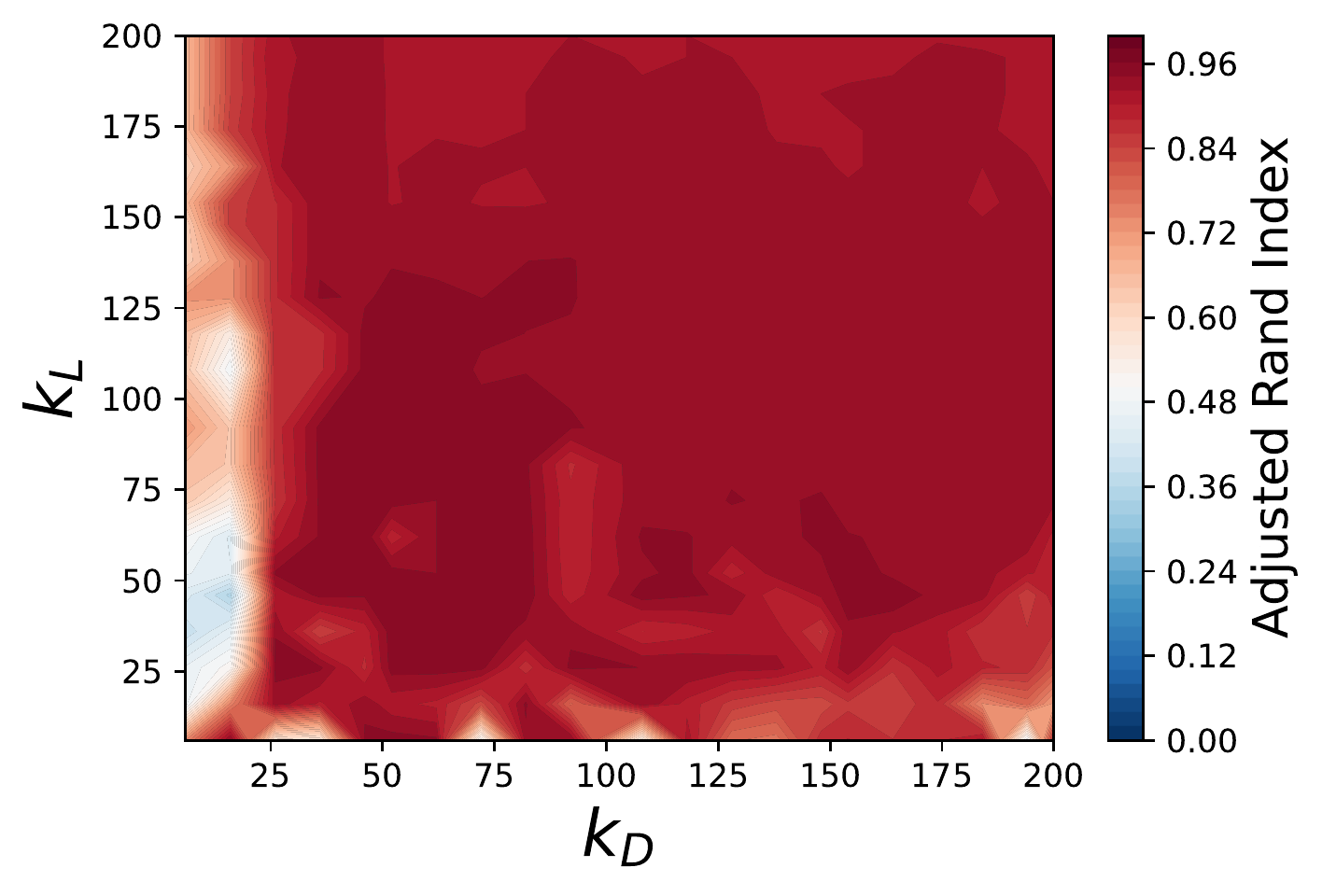}}
	\end{minipage}
	\begin{minipage}{0.48\columnwidth}
		\centering
		\centerline{\includegraphics[width=\columnwidth]{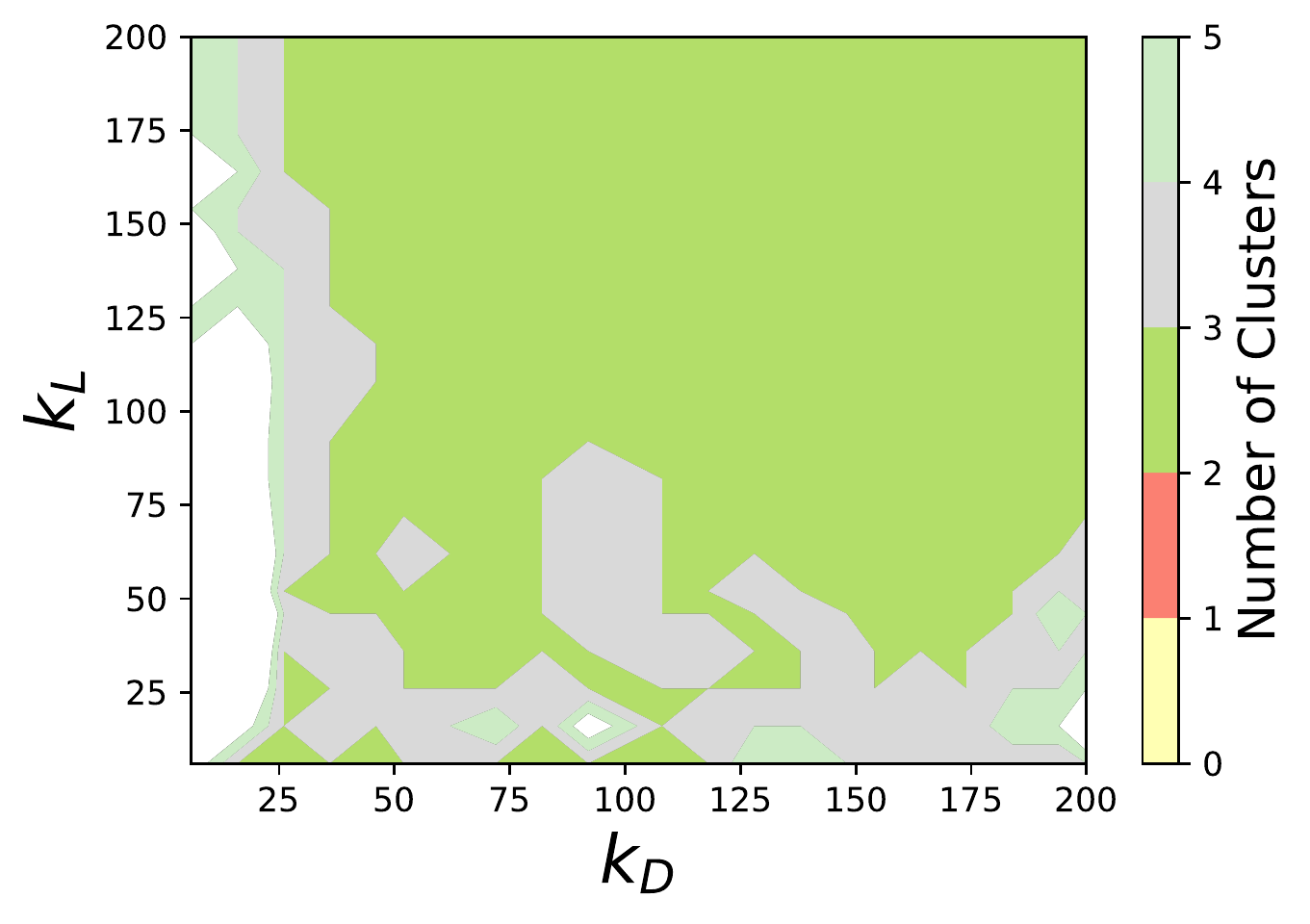}}
	\end{minipage}
	\vskip -10pt
	\caption{Visualization of {\tt 3Clusters} among with ARI scores and the number of clusters as $k_D$ and $k_L$ are changed and $k_G=18$, $\lambda=0.22$ are fixed. They shows that a wide range of $(k_D, k_L)$ obtain good clustering performance with correct mode estimation.}
	\label{fig::param::kdkl}
	\vskip -0.05in
\end{figure}

\subsubsection{Parameter analysis of hyper-parameters $k_G$ and $\lambda$}

Secondly, we fix the number of nearest neighbors for hypothetical density estimation and localized level-set $k_D=6$ and $k_L=108$, and explore the selection of hyper-parameters $k_G$ and $\lambda$.
In Figure \ref{fig::param::kglam}, we vary the $k_G$ and $\lambda$, and we visualize the ARI scores and the number of clusters on {\tt 3Clusters} dataset.
See the red filled region on the left figure which means good clustering performance, and we observe that there is a positive linear correlation between optimal $k_G$-s and optimal $\lambda$-s: we can achieve good clustering performance by selecting a pair of relatively small parameters $(k_G, \lambda)$ or a pair of relatively large parameters $(k_G, \lambda)$. This can guide the selection of these two hyper-parameters.
Similarly, the performance of mode estimation is also good for hyper-parameters with good clustering performance. (See the green-filled region on the right.)

\begin{figure}[htbp]
	\centering
	\vskip -10pt
	\begin{minipage}{0.48\columnwidth}
		\centering
		\centerline{\includegraphics[width=\columnwidth]{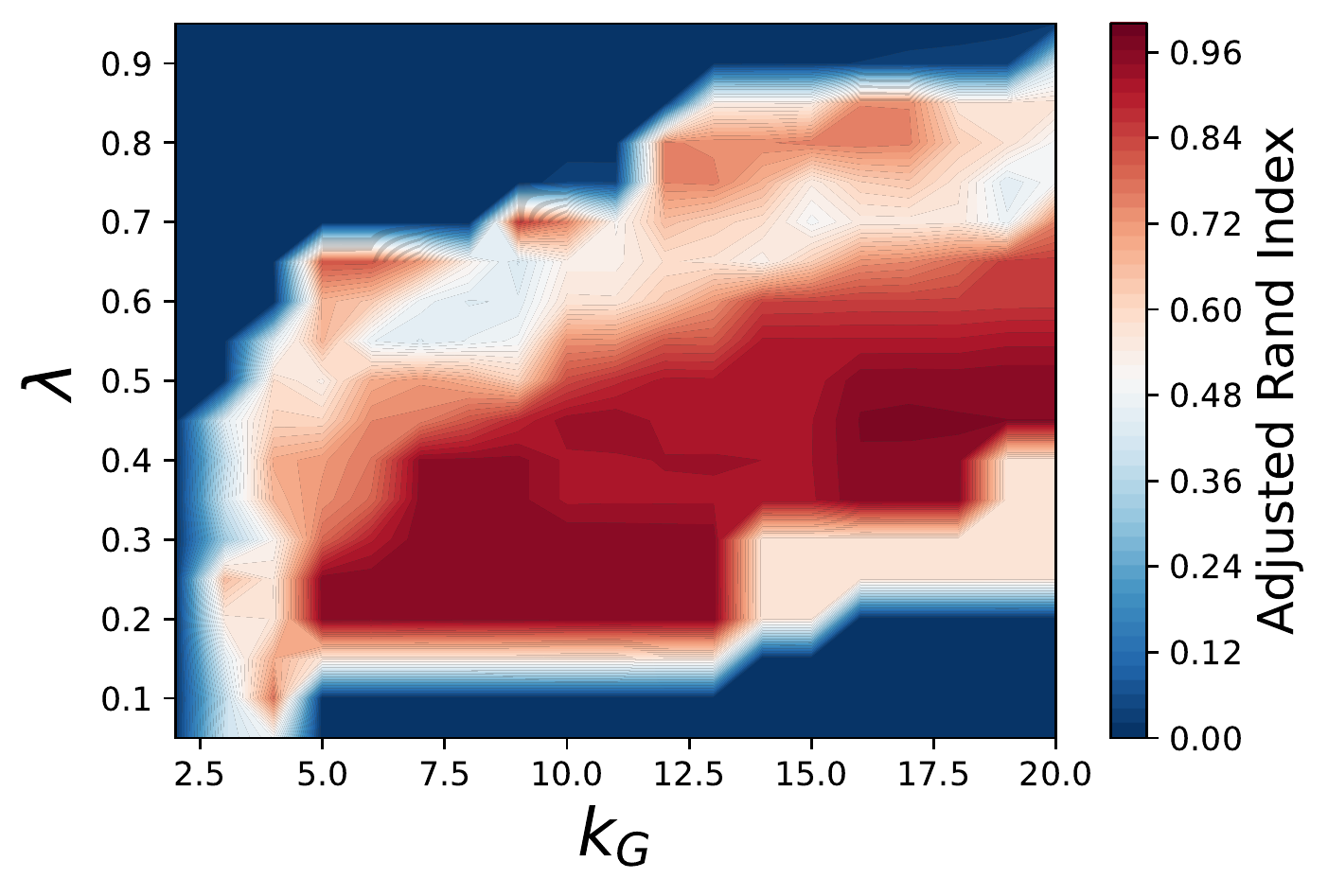}}
	\end{minipage}
	\begin{minipage}{0.48\columnwidth}
		\centering
		\centerline{\includegraphics[width=\columnwidth]{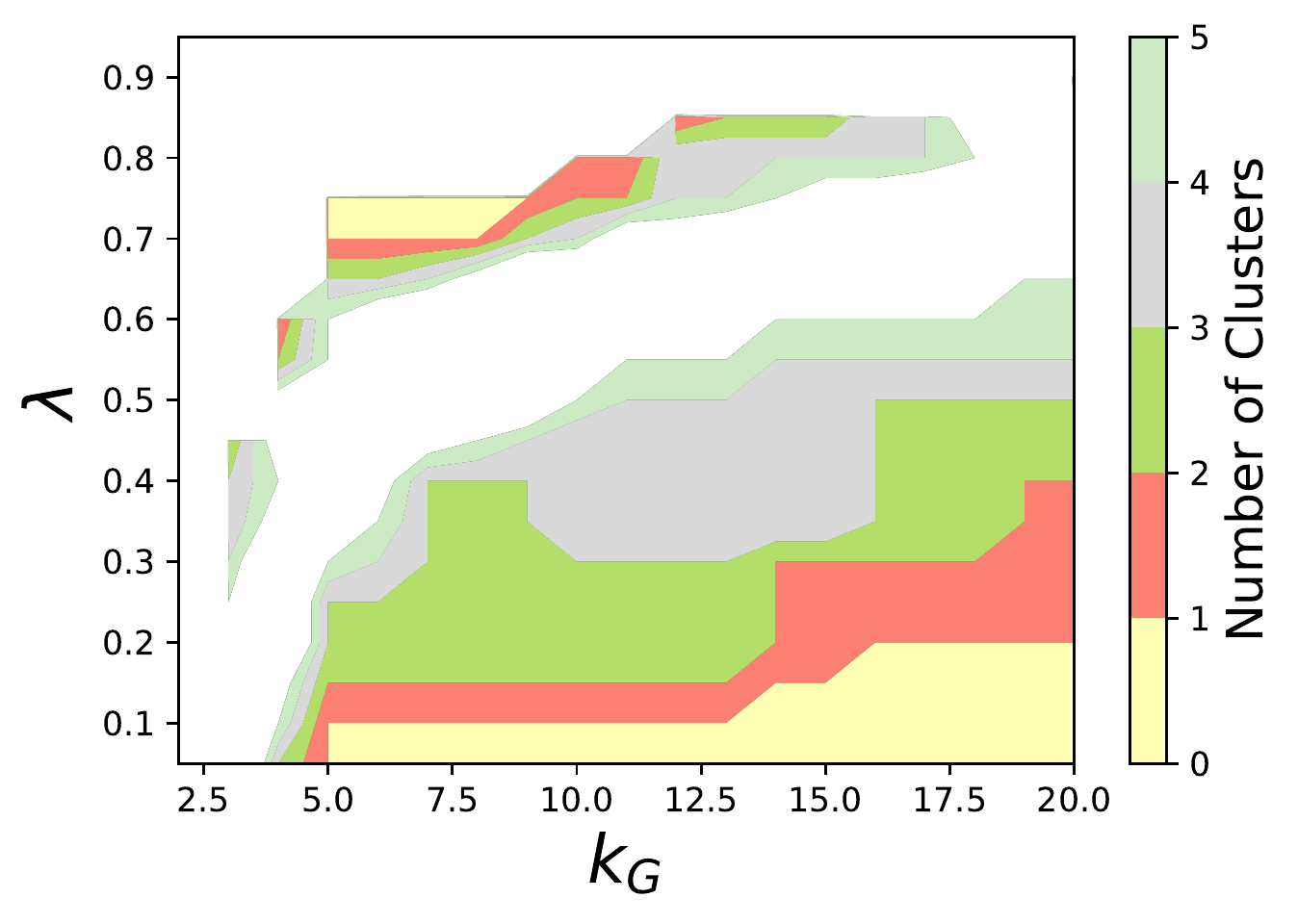}}
	\end{minipage}
	\vskip -10pt
	\caption{Visualization of {\tt 3Clusters} among with ARI scores and the number of clusters as $k_G$ and $\lambda$ are changed and $k_D=50$, $k_L=300$ are fixed. They show that a proper selection of $(k_D, k_L)$ obtains good clustering performance with correct mode estimation.}
	\label{fig::param::kglam}
\end{figure}

\subsubsection{The Effects of Bagging}

In this subsection, we list the optimal parameters for DMBC and BDMBC in various synthetic datasets in Table \ref{tab::parameters} to demonstrate the effects of bagging on parameter tuning, i.e., with a small sampling ratio, bagging can accelerate the algorithm by narrowing the range of parameter $k_D$.
In the experiments for synthetic datasets, we set the number of bagging iterations as $B=10$ and the sampling ratio $\rho=0.1$. 
As we can see from Table \ref{tab::parameters}, the optimal parameter of $k_D$ for BDMBC is much smaller than that for DMBC. Therefore, bagging enables BDMBC to have a more narrow searching grid of $k_D$ and prevents the algorithm from tedious parameter searching.
In addition, bagging with a relatively small $\rho$ can further speed up the algorithm by decreasing the number of training samples in each iteration. To be specific, bagging makes it possible to learn the distributional pattern of training datasets with only a small fraction of samples.
Meanwhile, bagging can also increase the randomness and boost the clustering performance. This is empirically verified in Table \ref{tab::realperformances}, where the clustering performances of BDMBC are significantly better than DMBC in many cases.

\begin{table}[htbp]
	\centering
	\captionsetup{justification=centering}
	\caption{The comparison of optimal parameters for DMBC and BDMBC in synthetic datasets}
	\label{tab::parameters}
	\begin{tabular}{ccccccc}
		\toprule
		Data & Bagging & $r$ & $k_D$ & $k_L$ & $\lambda$ & $k_G$ \\ \midrule
		\multirow[c]{2}{*}{\tt 3Clusters}  & No & - & 23 &  9 &  0.65  & 17 \\
		~ & Yes & 0.1 & 3 & 9 & 0.65 & 15 \\ \hline
		\multirow[c]{2}{*}{\tt Anisotropic} & No & - & 17 & 17 & 0.5 & 15 \\
		~ & Yes & 0.1 & 3 & 19 & 0.5 & 16 \\ \hline
		\multirow[c]{2}{*}{\tt Blobs} & No & - & 16 & 8 & 0.4 &  15 \\
		~ & Yes & 0.1 & 4 & 8 & 0.5 & 9 \\ \hline
		\multirow[c]{2}{*}{\tt Circles} & No & - & 17 & 14 & 0.45  &  19 \\
		~ & Yes & 0.1 & 7 & 13 & 0.55 & 15 \\ \hline
		\multirow[c]{2}{*}{\tt MDCGen}  & No & - & 17 & 26 & 0.6 & 19 \\
		~ & Yes & 0.1 & 9 & 28 & 0.4 & 9 \\ \hline
		\multirow[c]{2}{*}{\tt Moons} & No & - & 11 & 9 & 0.10  & 16 \\
		~ & Yes & 0.1 & 2 & 8 & 0.05 & 12 \\ \bottomrule
	\end{tabular}
\end{table}

\subsubsection{Practical suggestions for hyper-parameter selection}

In summary, we give some practical suggestions for the selection of hyper-parameters below:
\begin{itemize}
	\item Considering the computational cost, $B=10$ or $20$ is acceptable. As for $\rho$, an empirical rule is $\rho=0.1$ or $0.3$ to reduce the parameter grid.
	\item $k_D$ and $k_L$ are related to the hypothetical density estimation and the corresponding probability of localized level sets.  
	With a small $\rho$, $k_D$ can search in the range between 3 and 30; and the $k_L$ can search in the range between $3$ and $100$ (the range can be slightly increasing as the number of samples goes up). 
	\item $k_G$ and $\lambda$ are two hyper-parameters for level-set clustering. $k_G$ are quite stable for various dataset, and an empirical rule is $k_G \in \{5, 10, 15, 20\}$.
	A large $\lambda$ is more robust to noise samples and noisy density estimates in practice, so an empirical rule is to try various $\lambda$ from a relatively small $\lambda=0.20$ to a relatively large $\lambda=0.50$ or even larger (0.70 or 0.90).
\end{itemize}

\subsection{Scalability Experiments} \label{sec::subsec::scala}

In this subsection, we use a large-scale synthetic data named {\tt Artset}  \cite{iglesias2019mdcgen} to explore the clustering running times of the BDMBC algorithm. 
We fix the feature dimension $d = 10$ and the number of clusters $k = 10$, and change the sample size $n \in \{1\times {10}^{5}, 2\times {10}^{5}, 5\times {10}^{5}, 1\times {10}^{6}\}$.
Then we train BDMBC to compare the following two settings: the first is the bagging version with $B = 10$ and $\rho = 0.001$, and the second is the non-bagging version with $B = 1$ and $\rho = 1.0$.
For each setting, we select the optimal parameters including $k_D$, $k_L$, $\lambda$, $K_G$, calculate four clustering measures on behalf of the performances, and record the time consumptions of training the $k$-distance-based PLLS for each setting. The running time is measured in seconds.
As we can see from Table \ref{tab::time}, the clustering performance of the bagging version of the BDMBC algorithm with a very small sampling ratio $\rho$ is comparable with the non-bagging version. However, the time of training the PLLS for the bagging version of BDMBC can be ten times or even a hundred times less than that of the non-bagging version, which empirically verifies that bagging can improve the computational efficiency of BDMBC by training the $k$-distance-based PLLS with much fewer samples.

\begin{table}[htpb]
	\centering
	\caption{The comparison between the DMBC and BDMBC algorithms on the large-scale synthetic dataset {\tt Artset} with four different sample sizes. Four clustering measures and the time for training the $k$-distance-based PLLS are included.}
	\label{tab::time}
	\begin{tabular}{ccccccc}
		\toprule
		Sample Size & Bagging & ARI & NMI & F1 & Accuracy & Time (s)\\ \midrule
		\multirow{2}{*}{\tt $1\times {10}^{5}$} & No & $0.9910$ & $0.9800$ & $0.7009$ & $0.9886$ & $27.77$ \\
		~ & Yes & $0.9910$ & $0.9803$ & $0.7038$ & $0.9897$ & $0.41$ \\ \hline
		\multirow{2}{*}{\tt $2\times {10}^{5}$} & No & $0.9935$ & $0.9864$ & $0.5922$ & $0.9956$ & $85.30$ \\
		~ & Yes & $0.9936$ & $0.9865$ & $0.9922$ & $0.9956$ & $0.91$ \\ \hline
		\multirow{2}{*}{\tt $5\times {10}^{5}$} & No & $0.9875$ & $0.9737$ & $0.8862$ & $0.9805$ & $278.99$ \\ 
		~   & Yes & $0.9867$ & $0.9724$ & $0.8975$ & $0.9829$ & $3.94$ \\ \hline
		\multirow{2}{*}{\tt $1\times {10}^{6}$} & No & $0.9881$ & $0.9733$ & $0.8878$ & $0.9822$ & $1086.36$ \\ 
		~       & Yes & $0.9901$ & $0.9787$ & $0.9815$ & $0.9888$ & $13.15$ \\ \bottomrule
	\end{tabular}
\end{table}

\section{Proofs} \label{sec::proofs}

This section presents the proofs concerning the theoretical analysis.
We first present the convergence rate of the DMBC algorithm, i.e. the special case of BDMBC with $B = 1$ in Section \ref{sec::ratesdmbc}.
Section \ref{sec::baggedk} presents the proofs related to the $k$-distance and bagged $k$-distance in Section \ref{sec::analysisbagged}.
Section \ref{sec::proofmode} gives the proofs related to mode estimation in Section \ref{sec::ratesbdmbc}.
Section \ref{sec::prooflevelset} provides all proofs related to level set estimation for the proposed probability function PLLS in Section \ref{equ::ratelevelset}.

\subsection{Convergence Rates of DMBC for Mode Estimation} \label{sec::ratesdmbc}

To demonstrate the benefits of bagging in mode estimation, we consider the DMBC algorithm, which can be viewed as a special case of BDMBC in Algorithm \ref{alg::BDMBC} with $B = 1$ and $s = n$. More specifically, we only use $k$-distance for mode-based clustering without bagging. The procedure of DMBC can be described as follows. Firstly, we compute the empirical PLLS with respect to the $k$-distance by
\begin{align}\label{getah111}
	\widehat{p}_{k_L}^{k_D}(x)
	:= \frac{1}{k_L} \sum_{i=1}^{k_L} \eins \bigl\{ R_{k_D}(X_{(i)}(x)) \geq R_{k_D}(x) \bigr\}.
\end{align}
Then we construct the subgraph $G_k(\lambda)$ retaining the core-samples by
\begin{align}\label{equ::dklambda}
	\widehat{D}_k(\lambda)
	= \{ X_i \in D : \widehat{p}_{k_L}^{k_D}(X_i) \geq \lambda \}
\end{align}
and the mode set with respect to the $k$-distance by
\begin{align}\label{equ::modesestimator1}
	\widehat{\mathcal{M}}^k
	= \{ X_i \in D : \widehat{p}_{k_L}^{k_D}(X_i) = 1 \}.
\end{align}
Finally, we compute the cluster estimators ${\mathcal{C}}_k(\lambda)$, i.e., the connected components of $G_k(\lambda)$.

The next theorem presents the convergence rates of DMBC,~i.e., $k$-distance for multi-modal distribution under the above mild assumptions.

\begin{theorem}\label{thm::modesingle}
	Let Assumptions \ref{ass::cluster}, \ref{ass::modes} and \ref{ass::flatness1} hold with $2\alpha \gamma \leq 4+d$ and $\widehat{\mathcal{M}}^k$ be the mode estimator as in \eqref{equ::modesestimator1}. Then for every mode $m_i \in \mathcal{M}$ and $\lambda \geq c$ with the constant $c$ specified in the proof, by choosing 
	\begin{align*}
		k_{D,n} := n^{\frac{d}{4+d}} (\log n)^{\frac{d}{4+d}},
		\qquad
		k_{G,n}\asymp \log n,
		\qquad
		k_{L,n} \gtrsim n^{1-\frac{\alpha\gamma}{4+d}}(\log n)^{1+\frac{\alpha\gamma}{4+d}}, 
	\end{align*}
	there exists a mode estimate $\widehat{m}_i$ such that with probability $P^n$ at least $1 - 2/n^2$, there holds 
	\begin{align*}
		\|\widehat{m}_i - m_i\|_2
		\lesssim (\log n/n)^{\frac{1}{4+d}}.
	\end{align*}
	Moreover, there exist distinct cluster estimators $\widehat{C}_i\in \mathcal{C}_k(\lambda)$, $1\leq i\leq k$, such that $\widehat{m}_i\in \widehat{C}_i$. 
\end{theorem}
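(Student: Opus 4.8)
The plan is to reduce the statement to three ingredients: a two-scale accuracy bound for the $k$-distance density estimate, a localization argument converting density accuracy into spatial accuracy near each mode, and a concentration argument for the empirical PLLS that delivers the correct connected-component structure. Throughout I specialize the error analysis of Section~\ref{sec::analysisbagged} to $B=1$ and $s=n$; in this regime the bagging error of Proposition~\ref{prop::weightB} vanishes, the weights $p_i$ in \eqref{equ::def} collapse to $p_{k_D}=1$, so that $R_{k_D}^B=R_{k_D}$ and the hypothetical estimator $f_B$ of \eqref{equ::fbk} reduces to the $k$-NN density estimator $f_{k_D}$ of \eqref{eq::fD}. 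Combining the estimation and approximation bounds (Propositions~\ref{prop::weightsamplerho} and \ref{prop::weightedrho}) with the sharper second-order approximation valid near modes under Assumption~\ref{ass::modes} (as in Proposition~\ref{thm::main2}) and inserting the prescribed $k_{D,n}$, which balances $\sqrt{\log n/k_{D,n}}$ against the two approximation scales, I would obtain, with probability at least $1-2/n^2$, the two-scale bound
\begin{align*}
\sup_{x\in\mathcal{X}}|f_{k_D}(x)-f(x)|\lesssim (\log n/n)^{\frac{\alpha}{4+d}},
\qquad
\sup_{x\in\mathcal{M}_{r/2}}|f_{k_D}(x)-f(x)|\lesssim (\log n/n)^{\frac{2}{4+d}}=:\varepsilon_n .
\end{align*}

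Next I would localize the mode. Fix $m_i$ and a radius $r_0\le r_{\mathcal{M}}$ on which $H$ is negative definite, so that $f(m_i)-f(x)\asymp\|x-m_i\|_2^2$ on $B(m_i,r_0)$ by Assumption~\ref{ass::modes}. Let $\widehat m_i$ be the sample point of smallest $k_D$-distance (largest $f_{k_D}$) inside $B(m_i,r_0)$. A binomial-tail/covering argument shows that with probability at least $1-1/n^2$ the ball $B(m_i,(C\log n/n)^{1/d})$ contains a sample point; since $(\log n/n)^{2/d}\le\varepsilon_n$, that point already has $f_{k_D}\ge f(m_i)-O(\varepsilon_n)$, whence $f_{k_D}(\widehat m_i)\ge f(m_i)-O(\varepsilon_n)$ and, by the near-mode bound, $f(\widehat m_i)\ge f(m_i)-O(\varepsilon_n)$. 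The quadratic lower bound then yields $\|\widehat m_i-m_i\|_2^2\lesssim\varepsilon_n$, i.e.\ $\|\widehat m_i-m_i\|_2\lesssim(\log n/n)^{1/(4+d)}$. Because $\widehat m_i$ maximizes $f_{k_D}$ over $B(m_i,r_0)$ and, for $k_L=o(n)$, its $k_L$ nearest neighbors satisfy $R_{k_L}(\widehat m_i)\asymp(k_L/n)^{1/d}\to0$ and hence lie inside $B(m_i,r_0)$, every such neighbor has $k_D$-distance at least $R_{k_D}(\widehat m_i)$; thus $\widehat p_{k_L}^{k_D}(\widehat m_i)=1$ and $\widehat m_i\in\widehat{\mathcal{M}}^k$.

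For the cluster claim I would prove uniform closeness of the empirical PLLS to its population version $p_{k_L}$ of \eqref{equ::populationpkl}. The discrepancy $|\widehat p_{k_L}^{k_D}(x)-p_{k_L}(x)|$ splits into the error from replacing $\overline R_{k_L}(x)$ by the empirical radius $R_{k_L}(x)$, the error from ranking by $f_{k_D}$ rather than by $f$, and the fluctuation of an average of $k_L$ indicators. The middle term is supported on neighbors $y$ with $|f(y)-f(x)|\lesssim(\log n/n)^{\alpha/(4+d)}$, whose relative mass in the neighborhood is bounded, via the unflatness Assumption~\ref{ass::flatness1}, by $(\log n/n)^{\alpha\gamma/(4+d)}\cdot n/k_L$; the last term is handled by Bernstein's inequality at an extra $\log n$ cost. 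Both are controlled precisely when $k_{L,n}\gtrsim n^{1-\alpha\gamma/(4+d)}(\log n)^{1+\alpha\gamma/(4+d)}$, the stated choice; the requirement $2\alpha\gamma\le 4+d$ keeps this lower bound below the $o(n)$ scale used in localization, so the admissible range for $k_L$ is nonempty. It follows that $\widehat D_k(\lambda)=\{\widehat p_{k_L}^{k_D}\ge\lambda\}$ tracks $L_{k_L}(\lambda)$; for $\lambda\ge c$ the latter consists of disjoint salient pieces, one around each $m_i$. Finally, with $k_{G,n}\asymp\log n$ each piece is internally connected in the $k_G$-NN graph (consecutive core points lie within $O((\log n/n)^{1/d})$), while the inter-piece gaps exceed the $k_G$-NN radius, so the components $\mathcal{C}_k(\lambda)$ are in bijection with $\mathcal{M}$ and $\widehat m_i\in\widehat C_i$. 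A union bound over the finitely many modes and the events above yields the overall probability $1-2/n^2$.

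I expect the main obstacle to be the third step, the uniform control of $|\widehat p_{k_L}^{k_D}(x)-p_{k_L}(x)|$. The subtle point is that the density-estimation error flips the comparison $f_{k_D}(y)\gtrless f_{k_D}(x)$ exactly for neighbors $y$ whose true density lies within the error band of $f(x)$; counting those neighbors is what forces the exponent $\alpha\gamma$ into the lower bound on $k_L$, and doing so simultaneously with the empirical-radius perturbation and the Bernstein fluctuation---uniformly in $x$ through a covering of $\mathcal{X}$---is the technical crux. By contrast, the localization step is routine once the near-mode rate $\varepsilon_n$ and the quadratic behavior of $f$ are in hand.
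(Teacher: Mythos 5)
Your proposal follows essentially the same route as the paper: the paper's proof of Theorem \ref{thm::modesingle} first derives the near-mode rate $|f_{k_D}(x)-f(x)|\lesssim \sqrt{\log n/k_D}+(k_D/n)^{2/d}$ for the plain $k$-NN density estimator and then reruns the argument of Theorem \ref{thm::mode} (argmax localization via the quadratic decay of Lemma \ref{lem::Eigen}, $\widehat p^{k_D}_{k_L}=1$ at the localized point, the PLLS concentration of Proposition \ref{lem::petah11}, and the $k_G$-graph separation), which is exactly your three-step plan; note that, as in the paper's own proof, the balancing requires $k_{D,n}\asymp n^{4/(4+d)}(\log n)^{d/(4+d)}$ rather than the $n^{d/(4+d)}(\log n)^{d/(4+d)}$ printed in the statement, and you implicitly use the former. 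The one place your citation path differs is the density bound itself: you propose to specialize Propositions \ref{prop::weightsamplerho} and \ref{prop::weightedrho} to $s=n$, but the side condition $(kn/s)^{1-d/2}\gtrsim(\log n)^{1+d/2}$ then reads $k_D^{1-d/2}\gtrsim(\log n)^{1+d/2}$, which fails for $d\ge 2$ at this large $k_D$; the paper avoids the weighted decomposition entirely, bounding $|k/n-P(B(x,R_k(x)))|\lesssim\sqrt{k\log n}/n$ directly from Lemma \ref{lem::Rrho} and Taylor-expanding $P(B(x,R_k(x)))-f(x)V_dR_k^d(x)$ as in \eqref{equ::bxrix}, which yields the same two-scale bound without that hypothesis. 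Your localization step (argmax of $f_{k_D}$ over sample points in a fixed ball, then the quadratic lower bound) is an equivalent repackaging of the paper's Proposition \ref{lem::kernelmode111}, and your decomposition of $|\widehat p^{k_D}_{k_L}-p_{k_L}|$ matches the $f^{\pm}$-sandwich plus the $(I)+(II)$ split in the proof of Proposition \ref{lem::petah11}, so no substantive idea is missing.
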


Theorem \ref{thm::mode} shows that if $k_D$ and $k_L$ are chosen properly, then the convergence rate of DMBC matches the lower bound established in \cite{tsybakov1990recursive} up to a logarithmic factor. 
Therefore, Theorem \ref{thm::mode} coincides with the optimal recovery for multiple modes established in \cite{dasgupta2014optimal, jiang2017consistency, jiang2018quickshift++}.
Finally, we mention that the mode estimation returned by \eqref{equ::modesestimator} corresponds to the true modes of $f$ in a subjective manner.

\subsection{Proofs Related to Section \ref{sec::analysisbagged}}
\label{sec::baggedk}

In this section, we present the proofs related to the bagged $k$-distance. To be specific, in Sections \ref{sec::weightB}-\ref{sec::weightedrho}, we provide the proofs related to the bagging error, estimation error, and approximation error for the hypothetical density estimation in Sections \ref{sec::BaggingError}-\ref{sec::ApproximationError}, respectively. With these preparations, in Section \ref{sec::RatesPseudoDEproof}, we provide proofs related to Section \ref{sec::RatesPseudoDE}, we first establish convergence rates for hypothetical density estimation. Then we propose an important lemma related to Taylor's expansion of the density function around the modes, which supplies the key to proofs of the mode estimation and mode-based clustering. Finally, we derive faster convergence rates of the hypothetical density estimation around the modes using this lemma. These theoretical results play a fundamental role in the proof of mode estimation and level set estimation for BDMBC and DMBC in Sections \ref{sec::proofmode} and \ref{sec::prooflevelset}.

Before we proceed, we list the well-known Bernstein's inequality that will be used frequently in the proofs. Lemma \ref{lem::bernstein} was introduced in \cite{bernstein1946theory} and can be found in many statistical learning textbooks, see e.g., \cite{massart2007concentration, cucker2007learning, steinwart2008support}.

\begin{lemma}[Bernstein's inequality] \label{lem::bernstein}
	Let $B>0$ and $\sigma>0$ be real numbers, and $n\geq 1$ be an integer. Furthermore, let $\xi_1,\ldots,\xi_n$ be independent random variables satisfying $\mathbb{E}_{P}\xi_i=0$, $\|\xi_i\|_\infty\leq B$, and $\mathbb{E}_{P}\xi_{\color{blue} i}^2\leq \sigma^2$ for all $i=1,\ldots,n$. Then for all $\tau>0$, we have
	\begin{align*}
		P\biggl(\frac{1}{n}\sum_{i=1}^n\xi_i\geq \sqrt{\frac{2\sigma^2\tau}{n}}+\frac{2B\tau}{3n}\biggr)\leq e^{-\tau}.
	\end{align*}
\end{lemma}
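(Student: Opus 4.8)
The plan is to prove Bernstein's inequality by the classical Chernoff (exponential Markov) method, combined with a moment-generating-function bound for each summand. First I would apply Markov's inequality to the nonnegative random variable $e^{t\sum_{i=1}^n \xi_i}$ for a free parameter $t > 0$, and use the independence of the $\xi_i$ to factor the moment generating function, obtaining for any $\epsilon > 0$
\[
P\Bigl( \sum_{i=1}^n \xi_i \geq \epsilon \Bigr) \leq e^{-t\epsilon} \prod_{i=1}^n \mathbb{E}_P e^{t\xi_i}.
\]
This reduces the whole problem to controlling a single moment generating function $\mathbb{E}_P e^{t\xi_i}$.

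The key step is the bound that, for each $i$ and every $0 < t < 3/B$,
\[
\mathbb{E}_P e^{t\xi_i} \leq \exp\Bigl( \frac{\sigma^2 t^2 / 2}{1 - tB/3} \Bigr).
\]
To establish this I would expand $e^{t\xi_i} = 1 + t\xi_i + \sum_{k \geq 2} t^k \xi_i^k / k!$, take expectations, and use $\mathbb{E}_P \xi_i = 0$ to remove the linear term. For $k \geq 2$ the moments satisfy $|\mathbb{E}_P \xi_i^k| \leq \mathbb{E}_P |\xi_i|^k \leq B^{k-2} \mathbb{E}_P \xi_i^2 \leq \sigma^2 B^{k-2}$, so the tail of the series is dominated by a geometric series once I invoke the elementary inequality $k! \geq 2 \cdot 3^{k-2}$. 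Summing the geometric series requires $tB/3 < 1$ and yields $\mathbb{E}_P e^{t\xi_i} \leq 1 + (\sigma^2 t^2 / 2)/(1 - tB/3)$, after which $1 + x \leq e^x$ gives the claimed exponential form.

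Inserting this moment-generating-function bound into the product gives $P(\sum_i \xi_i \geq \epsilon) \leq \exp(-t\epsilon + n\sigma^2 t^2 / (2(1 - tB/3)))$. I would then choose $t = \epsilon / (n\sigma^2 + B\epsilon/3)$, which lies in $(0, 3/B)$, and a direct substitution collapses the exponent to the standard Bernstein tail
\[
P\Bigl( \sum_{i=1}^n \xi_i \geq \epsilon \Bigr) \leq \exp\Bigl( -\frac{\epsilon^2}{2(n\sigma^2 + B\epsilon/3)} \Bigr).
\]
Finally, to recover the stated threshold I would invert this bound: setting the exponent equal to $-\tau$ leads to the quadratic $\epsilon^2 - (2B\tau/3)\epsilon - 2n\sigma^2\tau = 0$, whose positive root is $\epsilon = B\tau/3 + \sqrt{(B\tau/3)^2 + 2n\sigma^2\tau} \leq \sqrt{2n\sigma^2\tau} + 2B\tau/3$ by the subadditivity $\sqrt{a+b} \leq \sqrt{a} + \sqrt{b}$. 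Since the tail exponent $\epsilon \mapsto \epsilon^2/(2(n\sigma^2 + B\epsilon/3))$ is increasing in $\epsilon > 0$, the deviation $\epsilon = \sqrt{2n\sigma^2\tau} + 2B\tau/3$ yields a probability bound of at most $e^{-\tau}$; dividing through by $n$ produces exactly the stated inequality.

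The main obstacle is the moment-generating-function bound of the second paragraph — in particular handling the moment estimates $|\mathbb{E}_P\xi_i^k| \leq \sigma^2 B^{k-2}$ and selecting the factorial bound $k! \geq 2 \cdot 3^{k-2}$ so that the resulting geometric series produces precisely the constant $B/3$ appearing in the statement. The subsequent choice of $t$ and the quadratic inversion are routine once this sharp single-variable bound is in hand.
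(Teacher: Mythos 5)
Your proposal is correct: the Chernoff--Markov reduction, the moment-generating-function bound $\mathbb{E}_P e^{t\xi_i} \leq \exp\bigl(\tfrac{\sigma^2 t^2/2}{1-tB/3}\bigr)$ via $|\mathbb{E}_P \xi_i^k| \leq \sigma^2 B^{k-2}$ and $k! \geq 2\cdot 3^{k-2}$, the choice $t = \epsilon/(n\sigma^2 + B\epsilon/3)$, and the quadratic inversion with $\sqrt{a+b} \leq \sqrt{a}+\sqrt{b}$ all check out and yield exactly the stated threshold after dividing by $n$. The paper itself gives no proof of this lemma --- it cites it as a classical result from \cite{bernstein1946theory} and the textbooks \cite{massart2007concentration,cucker2007learning,steinwart2008support} --- and your argument is precisely the standard proof found in those references, so nothing further is needed.
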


\subsubsection{Proofs Related to Section \ref{sec::BaggingError}}
\label{sec::weightB}

To prove Proposition \ref{prop::weightB}, we need to bound the number of reorderings of the data. To be specific, for fixed $x\in \mathbb{R}^d$, we reorder samples, $X_1,\ldots,X_n$, according to increasing values of $\|X_i-x\|$ with breaking ties by considering indices, i.e., $\|X_{\sigma_1}-x\|\leq \cdots \leq \|X_{\sigma_n}-x\|$, where $(\sigma_1, \ldots, \sigma_n)$ is a permutation of $(1, \ldots, n)$.  Then we define the inverse of the permutation, namely the rank $\Sigma_i$ by ${ \Sigma_i := \{ 1 \leq \ell \leq n : X_{\sigma_{\ell}} = X_i \}}$. Since we break ties by  considering indices, the rank $\Sigma_i$ is unique for all $1\leq i\leq n$. Therefore, the rank vector $(\Sigma_1,\ldots,\Sigma_n)$ for $x\in \mathbb{R}^d$ is well-defined.
Let $\mathcal{S} = \{(\Sigma_1,\ldots,\Sigma_n),x\in \mathbb{R}^d\}$ be the set of all rank vectors one can observe by moving $x$ around in space and we use the notation $|\mathcal{S}|$ to represent the cardinality of $\mathcal{S}$.

The next lemma, which plays a crucial role to derive the uniform bound for the proof of Propositions \ref{prop::weightB}, provides the upper bound for the number of reorderings, see also Lemma 20 in \cite{hang2022under}.

\begin{lemma}\label{lem::numberreorder}
	For any $d\geq 1$  and all $n \geq 2d$, there holds $|\mathcal{S}| \leq (25/d)^dn^{2d}$.
\end{lemma}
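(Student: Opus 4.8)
The plan is to recast $\mathcal{S}$ in terms of the arrangement of perpendicular bisector hyperplanes and then apply a standard face-counting bound. For each pair $1 \leq i < j \leq n$, let $H_{ij} := \{ x \in \mathbb{R}^d : \|x - X_i\| = \|x - X_j\| \}$ be the perpendicular bisector of $X_i$ and $X_j$; there are $m := \binom{n}{2}$ such hyperplanes. First I would encode each query point $x$ by its sign vector $\sigma(x) := \bigl( \sign(\|x - X_i\| - \|x - X_j\|) \bigr)_{i < j} \in \{ +, 0, - \}^m$, and observe that $\sigma(x)$ determines the rank vector $(\Sigma_1, \ldots, \Sigma_n)$. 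Indeed, for each pair the sign records which of $X_i, X_j$ is nearer to $x$, while the ties (the zero entries) are resolved identically for every $x$ by the fixed index rule; together these yield a consistent total order on the samples. Hence the map $x \mapsto (\Sigma_1, \ldots, \Sigma_n)$ factors through $x \mapsto \sigma(x)$, so $|\mathcal{S}|$ is at most the number of distinct sign vectors realized, which equals the number of faces (cells of all dimensions) of the arrangement $\{ H_{ij} \}$, each face being the convex set where $\sigma$ is constant.

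Next I would invoke the standard bound from the theory of hyperplane arrangements: any arrangement of $m$ hyperplanes in $\mathbb{R}^d$ has at most $\sum_{k=0}^{d} \binom{m}{k} 2^k$ faces of all dimensions, the maximum being attained by simple arrangements. Combining this with the previous step gives $|\mathcal{S}| \leq \sum_{k=0}^{d} \binom{m}{k} 2^k$.

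Finally I would reduce to the claimed form by elementary estimates. Bounding $2^k \leq 2^d$ for $k \leq d$ and using the Sauer--Shelah estimate $\sum_{k=0}^{d} \binom{m}{k} \leq (em/d)^d$, valid for $m \geq d$, together with $m = \binom{n}{2} \leq n^2/2$ so that $2m \leq n^2$, yields
\begin{align*}
	|\mathcal{S}|
	\leq \sum_{k=0}^{d} \binom{m}{k} 2^k
	\leq 2^d \sum_{k=0}^{d} \binom{m}{k}
	\leq \Bigl( \frac{2em}{d} \Bigr)^d
	\leq \Bigl( \frac{e n^2}{d} \Bigr)^d
	= \Bigl( \frac{e}{d} \Bigr)^d n^{2d}
	\leq \Bigl( \frac{25}{d} \Bigr)^d n^{2d},
\end{align*}
where the last inequality uses $e \leq 25$. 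The hypothesis $n \geq 2d$ guarantees $m = \binom{n}{2} \geq \binom{2d}{2} = d(2d-1) \geq d$, so the Sauer--Shelah bound indeed applies.

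The numerical reduction in the last step is routine and carries enormous slack, since the constant actually produced is $e$, far below the target $25$. The step that genuinely needs care is the first one: arguing that the rank vector is a function of the sign vector, so that the fixed index tie-breaking never manufactures a rank vector lying outside the faces of the arrangement. Encoding through $\sigma(x)$ handles this cleanly and simultaneously removes any need to treat points on the bisector hyperplanes separately. The only external ingredient is the arrangement face-count bound $\sum_{k=0}^{d} \binom{m}{k} 2^k$, which is the single piece borrowed from combinatorial geometry, and I expect verifying its applicability (no general-position assumption is needed, as degenerate arrangements have fewer faces) to be the main point to state carefully.
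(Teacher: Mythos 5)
Your proof is correct. Note that the paper itself gives no proof of this lemma---it only points to Lemma~20 of the cited reference \cite{hang2022under}---and your argument via the arrangement of the $\binom{n}{2}$ perpendicular bisector hyperplanes, encoding $x$ by its sign vector (which determines the tie-broken rank vector) and bounding $|\mathcal{S}|$ by the total face count $\sum_{k=0}^{d}\binom{m}{k}2^k$, is exactly the standard route for such reordering bounds; it even yields the sharper constant $(e/d)^d$ in place of $(25/d)^d$. The only point worth a passing remark is that if two samples coincide, the corresponding ``bisector'' is all of $\mathbb{R}^d$ rather than a hyperplane, but then its sign is identically zero and dropping it only decreases the face count, so the bound is unaffected.
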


To further our analysis, we first need to recall the definitions of \textit{VC dimension} and \textit{covering number}, which are frequently used in capacity-involved arguments and measure the complexity of the underlying function class \cite{vandervaart1996weak,Kosorok2008introduction,gine2021mathematical}.

\begin{definition}[VC dimension] \label{def::VC dimension}
	Let $\mathcal{B}$ be a class of subsets of $\mathcal{X}$ and $A \subset \mathcal{X}$ be a finite set. The trace of $\mathcal{B}$ on $A$ is defined by $\{ B \cap A : B \subset \mathcal{B}\}$. Its cardinality is denoted by $\Delta^{\mathcal{B}}(A)$. We say that $\mathcal{B}$ shatters $A$ if $\Delta^{\mathcal{B}}(A) = 2^{\#(A)}$, that is, if for every $A' \subset A$, there exists a $B \subset \mathcal{B}$ such that $A' = B \cap A$. For $n \in \mathrm{N}$, let
	\begin{align}\label{equ::VC dimension}
		m^{\mathcal{B}}(n) := \sup_{A \subset \mathcal{X}, \, \#(A) = n} \Delta^{\mathcal{B}}(A).
	\end{align}
	Then, the set $\mathcal{B}$ is a Vapnik-Chervonenkis class if there exists $n<\infty$ such that $m^{\mathcal{B}}(n) < 2^n$ and the minimal of such $n$ is called the VC dimension of $\mathcal{B}$, and abbreviate as $\mathrm{VC}(\mathcal{B})$.
\end{definition}

Since an arbitrary set of $n$ points $\{x_1,\ldots,x_n\}$ possess $2^n$ subsets, we say that $\mathcal{B}$ \textit{picks out} a certain subset from $\{ x_1, \ldots, x_n\}$ if this can be formed as a set of the form $B\cap \{x_1,\ldots,x_n\}$ for a $B\in \mathcal{B}$. The collection $\mathcal{B}$ \textit{shatters} $\{x_1,\ldots,x_n\}$ if each of its $2^n$ subsets can be picked out in this manner. 
From Definition \ref{def::VC dimension} we see that the VC dimension of the class $\mathcal{B}$ is the smallest $n$ for which no set of size $n$ is shattered by $\mathcal{B}$,  that is,
\begin{align*}
	\mathrm{VC}(\mathcal{B}) =\inf \Bigl\{n:\max_{x_1,\ldots,x_n} \Delta^{\mathcal{B}}(\{ x_1,\ldots,x_n \})\leq 2^n\Bigr\},
\end{align*}
where $\Delta^{\mathcal{B}}(\{ x_1, \ldots,x_n \})=\#\{B\cap \{x_1,\ldots,x_n\}:B\in \mathcal{B}\}$.
Clearly, the more refined $\mathcal{B}$ is, the larger its index.
Let us recall the definition of the covering number in \cite{vandervaart1996weak}.

\begin{definition}[Covering Number] \label{def::CovNum}
	Let $(\mathcal{X}, d)$ be a metric space and $A \subset \mathcal{X}$. For $\varepsilon>0$, the $\varepsilon$-covering number of $A$ is denoted as 
	\begin{align*}
		\mathcal{N}(A, d, \varepsilon) 
		:= \min \biggl\{ n \geq 1 : \exists x_1, \ldots, x_n \in \mathcal{X} \text{ such that } A \subset \bigcup^n_{i=1} B(x_i, \varepsilon) \biggr\},
	\end{align*}
	where $B(x, \varepsilon) := \{ x' \in \mathcal{X} : d(x, x') \leq \varepsilon \}$.
\end{definition}

The following Lemma, which is needed in the proof of Lemma \ref{lem::Rrho}, provides the covering number of the indicator functions on the collection of balls in $\mathbb{R}^d$, see also Lemma 25 in \cite{hang2022under}. 

\begin{lemma}\label{lem::CoveringNumber}
	Let $\mathcal{B} := \{ B(x, r) : x \in \mathbb{R}^d, r > 0 \}$ and $\eins_{\mathcal{B}} := \{ \eins_B : B \in \mathcal{B} \}$. Then for any $\varepsilon \in (0, 1)$, there exists a universal constant $C$ such that 
	\begin{align*}
		\mathcal{N}(\eins_{\mathcal{B}}, \|\cdot\|_{L_1(Q)}, \varepsilon)
		\leq C (d+2) (4e)^{d+2} \varepsilon^{-(d+1)}
	\end{align*}
	holds for any probability measure $Q$.
\end{lemma}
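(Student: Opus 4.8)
The plan is to combine a VC-dimension bound for the class of balls with the standard conversion from VC index to $L_1(Q)$-covering numbers. First I would show that $\mathcal{B} = \{ B(x,r) : x \in \mathbb{R}^d, r > 0 \}$ is a VC class whose VC index (in the sense of Definition \ref{def::VC dimension}) is at most $d+2$; the covering number estimate then follows at once from the generic covering bound for VC classes of sets.

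For the VC-dimension bound I would use the usual linearization (lifting) of balls. Writing a ball as
\[
	B(x,r) = \{ y \in \mathbb{R}^d : \|y - x\|^2 - r^2 \leq 0 \}
	= \{ y : -\|y\|^2 + 2 \langle x, y \rangle + (r^2 - \|x\|^2) \geq 0 \},
\]
I observe that the defining function has the form $g_0(y) + h(y)$, where $g_0(y) = -\|y\|^2$ is \emph{fixed} and, as $(x,r)$ range over $\mathbb{R}^d \times (0,\infty)$, the function $h(y) = 2 \langle x, y \rangle + (r^2 - \|x\|^2)$ ranges over the $(d+1)$-dimensional vector space $\mathrm{span}\{ y_1, \ldots, y_d, 1 \}$. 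By the standard fact that the family of sets $\{ g_0 + h \geq 0 : h \in \mathcal{G} \}$ associated with a $k$-dimensional vector space $\mathcal{G}$ of real functions has VC dimension at most $k$ (e.g. Lemma 2.6.15 in \cite{vandervaart1996weak}), the VC dimension of $\mathcal{B}$ is at most $d+1$, and hence its VC index is at most $d+2$.

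With the VC index controlled, I would invoke the classical covering-number bound for VC classes of sets: for a VC class $\mathcal{C}$ of index $V$, any probability measure $Q$, and any $\varepsilon \in (0,1)$, there holds
\[
	\mathcal{N}(\eins_{\mathcal{C}}, \|\cdot\|_{L_1(Q)}, \varepsilon)
	\leq C V (4e)^V \varepsilon^{-(V-1)}
\]
for a universal constant $C$ (Theorem 2.6.4 in \cite{vandervaart1996weak}). Substituting $V = d+2$ yields the claimed estimate $\mathcal{N}(\eins_{\mathcal{B}}, \|\cdot\|_{L_1(Q)}, \varepsilon) \leq C (d+2)(4e)^{d+2} \varepsilon^{-(d+1)}$, uniformly over all probability measures $Q$.

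The only genuinely delicate point is the VC-dimension computation. One must exploit that the coefficient of the quadratic term $\|y\|^2$ is fixed, so that balls correspond to an \emph{affine} family over a $(d+1)$-dimensional function space rather than to the full $(d+2)$-dimensional family of halfspaces in the lifted coordinates $(y, \|y\|^2) \in \mathbb{R}^{d+1}$. It is precisely this fixed-coefficient structure that sharpens the index from $d+3$ (naive lifting) to $d+2$ and produces the exponent $\varepsilon^{-(d+1)}$ rather than $\varepsilon^{-(d+2)}$. Everything after the VC index bound is a direct citation of the standard empirical-process machinery.
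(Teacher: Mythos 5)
Your proposal is correct and follows essentially the same route as the paper, which obtains the bound by combining the fact that the collection of balls in $\mathbb{R}^d$ is a VC class of index $d+2$ (cited from Dudley's work, as in the proof of Lemma \ref{lem::vcballlevelset}) with Theorem 2.6.4 of \cite{vandervaart1996weak}; the only difference is that you additionally supply the linearization argument for the VC index, which the paper simply cites. (The small imprecision that $h$ ranges only over a subset of the $(d+1)$-dimensional space is harmless, since the VC dimension is monotone under inclusion.)
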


To prove Proposition \ref{prop::weightB}, we need the following lemma, which provides the uniform bound on the distance between any point and its $k$-th nearest neighbor with high probability.

\begin{lemma}\label{lem::Rrho}
	Let Assumption \ref{ass::cluster} hold. Let 
	$R_{k}(x) := \|X_{(k)}(x) - x\|$ be the distance from $x$ to its $k$-th nearest neighbor and $\overline{R}_k(x)$ be the population version defined by \eqref{equ::overlinerx} for $1 \leq k \leq n$. Then for all $x\in \mathcal{X}$, if $k \geq 32(d+4)\log n$, there holds
	\begin{align}\label{equ::infrksuprk}
		R_k(x)\asymp (k/n)^{1/d}
	\end{align}
	with probability $P^n$ at least $1-2/n^2$. Moreover, we have
	\begin{align}\label{equ::rkxoverrkx}
		|\overline{R}_k^d(x)-R_k^d(x)|\lesssim \sqrt{k \log n}/n.
	\end{align}
\end{lemma}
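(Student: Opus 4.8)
The plan is to prove Lemma~\ref{lem::Rrho} in two stages, first establishing the two-sided bound \eqref{equ::infrksuprk} via a uniform concentration argument over balls, and then deriving \eqref{equ::rkxoverrkx} as a quantitative refinement. The key observation is that $R_k(x) \leq r$ if and only if the empirical measure $P_n(B(x,r)) := \frac{1}{n}\sum_{i=1}^n \eins\{X_i \in B(x,r)\}$ satisfies $P_n(B(x,r)) \geq k/n$, so controlling $R_k(x)$ reduces to controlling the deviation of $P_n(B(x,r))$ from its population counterpart $P(B(x,r))$ uniformly over all balls $B(x,r)$. Under the boundedness Assumption~\ref{ass::cluster}(i), we have $P(B(x,r)) \asymp V_d r^d$ for $r$ below some fixed scale, so the population version satisfies $\overline{R}_k(x) \asymp (k/n)^{1/d}$; the task is to transfer this to the empirical radius.

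First I would invoke the capacity estimate for the class of indicator functions on balls. By Lemma~\ref{lem::CoveringNumber}, the covering number of $\eins_{\mathcal{B}}$ in $L_1(Q)$ grows only polynomially in $\varepsilon^{-1}$, which gives a VC-type bound and hence a uniform Bernstein-type deviation inequality. Concretely, applying Bernstein's inequality (Lemma~\ref{lem::bernstein}) to the centered variables $\xi_i = \eins\{X_i \in B(x,r)\} - P(B(x,r))$, whose variance is bounded by $P(B(x,r)) \asymp V_d r^d$, and then taking a union bound over the covering net, I would obtain that with probability at least $1 - 2/n^2$,
\begin{align*}
	\sup_{x,r} \bigl| P_n(B(x,r)) - P(B(x,r)) \bigr| \lesssim \sqrt{\frac{P(B(x,r)) \log n}{n}} + \frac{\log n}{n}
\end{align*}
holds simultaneously for all balls. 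The condition $k \geq 32(d+4)\log n$ is exactly what is needed to ensure that at the radius $r$ where $P(B(x,r)) \asymp k/n$, the deviation term $\sqrt{(k/n)\log n / n}$ is dominated by a constant fraction of $k/n$, i.e.\ $\sqrt{(k/n)\log n/n} \lesssim k/n$ follows from $k \gtrsim \log n$. This two-sided control of $P_n(B(x,r))$ near the level $k/n$ pins $R_k(x)$ between two radii both of order $(k/n)^{1/d}$, yielding \eqref{equ::infrksuprk}.

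For the sharper estimate \eqref{equ::rkxoverrkx}, I would work directly with the relation $P_n(B(x, R_k(x))) = k/n$ and $P(B(x,\overline{R}_k(x))) = k/n$ (up to the discreteness of the empirical measure), and compare $P(B(x,R_k(x)))$ with $P(B(x,\overline{R}_k(x)))$. The uniform deviation bound above gives $|P(B(x,R_k(x))) - k/n| \lesssim \sqrt{(k/n)\log n/n}$, and since $P(B(x,r)) \asymp V_d f(\cdot) r^d$ is, up to constants from the density bounds, comparable to $V_d r^d$, the difference in the $P$-measures translates into the difference $|\overline{R}_k^d(x) - R_k^d(x)| \lesssim \sqrt{k\log n}/n$ after multiplying through by $n$. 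The main obstacle I anticipate is making the uniform-over-all-balls concentration rigorous: a naive union bound fails because there are uncountably many balls, so the argument must route through the covering/VC bound of Lemma~\ref{lem::CoveringNumber} and carefully track the variance-dependent (localized) Bernstein term so that the resulting rate $\sqrt{(k/n)\log n/n}$ is sharp rather than the cruder $\sqrt{\log n/n}$. A secondary technical point is handling the boundary of $\mathcal{X} = [0,1]^d$, where $P(B(x,r))$ may be a constant factor smaller than $V_d r^d$; this only affects constants and is absorbed into the $\asymp$ and $\lesssim$ notation under the lower bound $f \geq \underline{c}$.
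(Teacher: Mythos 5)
Your proposal is correct and follows essentially the same route as the paper: Bernstein's inequality applied to indicators of balls, combined with the covering-number bound of Lemma~\ref{lem::CoveringNumber} and a union bound over the net, with the variance localized to $P(B(x,r)) \asymp k/n$ so that the condition $k \gtrsim \log n$ makes the deviation a small fraction of $k/n$; the conversion of the measure difference to $|\overline{R}_k^d(x)-R_k^d(x)|$ via the two-sided density bound is also the paper's argument. The only cosmetic difference is that the paper avoids stating a fully radius-uniform localized deviation bound by restricting the concentration to the two specific quantile-radius classes $\mathcal{B}_k^{\pm}$ at levels $(k\pm 2\sqrt{\tau k})/n$ and sandwiching both $R_k(x)$ and $\overline{R}_k(x)$ between them, which delivers the same conclusion.
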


\begin{proof}[Proof of Lemma \ref{lem::Rrho}]
	For $x \in \mathcal{X}$ and $q \in [0,1]$, we define the $q$-quantile diameter 
	\begin{align*}
		\rho_x(q) := \inf \bigl\{ r : P(B(x, r)) \geq q \bigr\}.
	\end{align*}
	
	Let us consider the set $\mathcal{B}_k^- :=  \big\{ B \bigl( x, \rho_x \bigl( (k - 2\sqrt{\tau k})/n \bigr) \bigr) : x \in \mathcal{X} \bigr\} \subset \mathcal{B}$. Lemma \ref{lem::CoveringNumber} implies that for any probability $\mathrm{Q}$, there holds
	\begin{align} \label{Bk-CoveringNumber}
		\mathcal{N}(\eins_{\mathcal{B}_k^-}, \|\cdot\|_{L_1(\mathrm{Q})}, \varepsilon)
		\leq \mathcal{N}(\eins_{\mathcal{B}}, \|\cdot\|_{L_1(\mathrm{Q})}, \varepsilon)
		\leq C (d+2) (4e)^{d+2} \varepsilon^{-(d+1)}.
	\end{align}
	By the definition of the covering number, there exists an $\varepsilon$-net $\{A_j^-\}_{j=1}^J \subset \mathcal{B}_k^-$ with $J := \lfloor C (d+2) (4e)^{d+2} \varepsilon^{-(d+1)} \rfloor$ and for any $x \in \mathcal{X}$, there exists some $j \in \{ 1, \ldots, J \}$ such that 
	\begin{align} \label{eq::approxAj-1}
		\bigl\| \eins \bigl\{ B \bigl( x, \rho_x \bigl( (k -2\sqrt{\tau k})/n \bigr) \bigr) \bigr\} - \eins_{A_j^-} \bigr\|_{L_1(D)} 
		\leq \varepsilon.
	\end{align}
	For any $i = 1, \ldots, n$, let the random variables $\xi_i$ be defined by $\xi_i = \eins_{A_j^-}(X_i) - (k - 2\sqrt{\tau k})/n$. Then we have $\mathbb{E}_{P}\xi_i = 0$, $\|\xi_i\|_{\infty} \leq 1$, and $\mathbb{E}_{P}\xi_i^2 \leq \mathbb{E}_{P}\xi_i = (k - 2\sqrt{ \tau k})/n$. Applying Bernstein's inequality in Lemma \ref{lem::bernstein}, we obtain 
	\begin{align*}
		\frac{1}{n} \sum_{i=1}^n \eins_{A_j^- }(X_i) - (k - 2\sqrt{\tau k})/n
		\geq - \sqrt{2 \tau (k - 2\sqrt{\tau k})} / n - 2 \tau / (3n)
	\end{align*}
	with probability $P^n$ at least $1-e^{-\tau}$. Then the union bound together with the covering number estimate \eqref{Bk-CoveringNumber} implies that for any $A_j^-$, $j = 1, \cdots, J$, there holds 
	\begin{align*}
		& \frac{1}{n} \sum_{i=1}^n \eins_{A_j^-}(X_i) - (k - 2\sqrt{(\tau + \log J) k}) / n
		\\
		& \geq - \sqrt{2 (\tau + \log J) \bigl( k - 2\sqrt{(\tau + \log J) k} \bigr)} / n 
		- 2 (\tau + \log J) / (3 n).
	\end{align*}
	This together with \eqref{Bk-CoveringNumber} yields that for all $x\in \mathcal{X}$, there holds
	\begin{align*}
		&\frac{1}{n} \sum_{i=1}^n \eins\{X_i\in \rho_x\big((k-2\sqrt{\tau k}/n)\big)\} - (k - 2\sqrt{(\tau + \log J) k}) / n\\
		& \geq - \sqrt{2 (\tau + \log J) \bigl( k - 2\sqrt{(\tau + \log J) k} \bigr)} / n 
		- 2 (\tau + \log J) / (3 n)-\varepsilon.
	\end{align*}
	Now, if we take $\varepsilon = 1/n$, then for any $n >  (4 e) \vee (d + 2) \vee C$, there holds $\log J =  \log C + \log(d+2) +  (d+2)\log(4e) + (d+1) \log n \leq (2d+5)\log n$. 
	Let $\tau := 3 \log n$. A simple calculation yields that if $k \geq 32(d+4)\log n$, then we have
	\begin{align*}
		\sqrt{2 (\tau + \log J) \bigl( k - 2\sqrt{(\tau + \log J) k} \bigr)} / n \leq \sqrt{3(\tau+\log J)k}/n.
	\end{align*}
	Consequently, for all $n > (4 e) \vee (d + 2) \vee C$, there holds
	\begin{align*}
		\sqrt{2 (\tau + \log J) \bigl( k - 2\sqrt{ (\tau + \log J) k} \bigr)} / n 
		+ 2 (\tau + \log J) / (3 n)
		+ 1/n
		\leq 2\sqrt{(\tau+\log J) k} / n.
	\end{align*}
	Therefore, for all $x \in \mathcal{X}$, there holds $\frac{1}{n} \sum_{i=1}^n \eins \bigl\{ B \bigl( x, \rho_x \bigl( (k - 2\sqrt{\tau k})/n \bigr) \bigr) \bigr\} (X_i) \geq k/n$ with probability $P^n$ at least $1-1/n^3$. By the definition of $R_{k}(x)$, there holds 
	\begin{align}\label{equ::Rrho1-}
		R_{k}(x) \geq \rho_x \bigl( (k - 2\sqrt{\tau k})/n \bigr)
	\end{align}
	with probability $P^n$ at least $1-1/n^3$. 
	For any $x \in \mathcal{X}$, we have $P \bigl( B \bigl( x, \rho_x \bigl( (k - 2\sqrt{\tau k})/n \bigr) \bigr) \bigr) = (k - 2\sqrt{\tau k})/n$. By Assumption \ref{ass::cluster}, we have
	\begin{align*}
		P \bigl( B \bigl( x, \rho_x \bigl( (k - 2\sqrt{ \tau k})/n \bigr) \bigr) \bigr)
		= (k - 2\sqrt{\tau k})/n
		\leq V_d\overline{c}\rho_x^d \bigl( (k - 2\sqrt{\tau k})/n \bigr),
	\end{align*}
	which yields 
	\begin{align}\label{eq::condminus}
		\rho_x \bigl( (k - 2\sqrt{\tau k})/n \bigr) 
		\geq \bigl( (k - 2\sqrt{\tau k})/(V_d\overline{c}n) \bigr)^{1/d}
		\geq \bigl( (k /(4V_d\overline{c} n) \bigr)^{1/d}.
	\end{align}
	Combining \eqref{equ::Rrho1-} with \eqref{eq::condminus}, we obtain that $R_{k}(x) \geq \bigl( k /(4V_d\overline{c} n) \bigr)^{1/d}$ holds for all $x \in \mathcal{X}$ with probability $P^n$ at least $1-1/n^3$. Therefore, a union bound argument yields that for all $x \in \mathcal{X}$, all $k\geq 32(d+4)\log n$, and all sufficiently large $n$, there holds
	\begin{align}\label{equ::rkxlower}
		R_k(x)	
		\geq \rho_x((k-2\sqrt{\tau k})/n)
		\geq \bigl( k /(4V_d\overline{c} n) \bigr)^{1/d}
	\end{align}
	with probability $P^n$ at least $1 - 1/n^2$. 
	This proves the first inequality of \eqref{equ::infrksuprk}.
	
	On the other hand, let us consider the set $\mathcal{B}_k^+ :=  \big\{ B \bigl( x, \rho_x \bigl( (k + 2\sqrt{ \tau k})/n \bigr) \bigr) : x \in \mathcal{X} \bigr\} \subset \mathcal{B}$. 
	Similar to the proof of \eqref{equ::infrksuprk}, we can show that for all sufficiently large $n$, there holds
	\begin{align}\label{equ::rkxupper}
		R_{k}(x) 
		\leq \rho_x \bigl( (k + 2\sqrt{ \tau k})/n \bigr) 
		\leq \bigl( (k + 4 \sqrt{k \log n})/ (\underline{c} n) \bigr)^{1/d} 
		\leq \bigl( 2 k / (\underline{c} n) \bigr)^{1/d}
	\end{align}
	with probability $P^n$ at least $1-1/n^2$.
	
	Finally, combining \eqref{equ::rkxlower} and \eqref{equ::rkxupper}, we get
	\begin{align*}
		\rho_x((k-2\sqrt{\tau k})/n)
		\leq \rho_x(k/n)
		= \overline{R}_k(x)
		\leq \rho_x((k+2\sqrt{\tau k})/n)
	\end{align*}
	and consequently for all $k \geq 32(d+4)\log n$, there holds
	\begin{align*}
		|P(B(x,\overline{R}_k(x)))-P(B(x,R_k(x)))|
		\leq 2 \sqrt{3 k \log n} / n.
	\end{align*}
	Therefore, by Assumption \ref{ass::cluster} with the condition $\mathcal{X}:=[0,1]^d$, we have that for all $x\in \mathcal{X}$,
	\begin{align*}
		|\overline{R}_k^d(x)-R_k^d(x)|
		\leq 2^d|P(B(x,\overline{R}_k(x)))-P(B(x,R_k(x)))|/\underline{c} 
		\leq 2^{d+1}\sqrt{3k \log n}/(n\underline{c}),
	\end{align*}
	which proves \eqref{equ::rkxoverrkx}. 
	This completes the proof of Lemma \ref{lem::Rrho}.
\end{proof}

The following Lemma is needed in the proof of Proposition \ref{prop::weightB}.

\begin{lemma}\label{lem::ribound}
	Let $p_i$ be the probability as in \eqref{equ::def}.
	Then we have
	\begin{align}\label{equ::piibetabound}
		\sum_{i=1}^n p_i (i/n)^{\beta}
		\leq 2 (4k/s)^{\beta}, 
		\qquad \beta \in (0, 2] \cup \{ 3 \}.
	\end{align} 
	Moreover, if $d \geq 1$, then we have
	\begin{align}\label{eq::bothlower}
		\sum_{i=1}^n p_i (i/n)^{1/d} \geq  (k/s)^{1/d}/64.
	\end{align}
\end{lemma}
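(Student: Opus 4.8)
The plan is to read the weights $p_i$ probabilistically: if $I$ denotes the (random) rank in $D_n$ of the point that ends up as the $k$-th nearest neighbor of $x$ inside a subsample of size $s$, then $P(I=i)=p_i$ by \eqref{equ::def}, so that the left-hand sides of \eqref{equ::piibetabound} and \eqref{eq::bothlower} are exactly the moments $\mathbb{E}[(I/n)^{\beta}]$ and $\mathbb{E}[(I/n)^{1/d}]$ of a negative-hypergeometric variable. The engine behind every estimate is an exact formula for the rising-factorial moments of $I$. Applying the Chu--Vandermonde convolution $\sum_{j}\binom{j}{a}\binom{N-j}{b}=\binom{N+1}{a+b+1}$ after the substitution $j=i-1+m$ gives $\sum_{i}\binom{i-1+m}{k-1+m}\binom{n-i}{s-k}=\binom{n+m}{s+m}$, and dividing by $\binom{n}{s}$ yields, for every integer $m\geq 1$,
\[
\mathbb{E}\Big[\prod_{\ell=0}^{m-1}(I+\ell)\Big]=\prod_{\ell=0}^{m-1}(k+\ell)\cdot\frac{\binom{n+m}{s+m}}{\binom{n}{s}}=\prod_{\ell=0}^{m-1}(k+\ell)\cdot\prod_{\ell=1}^{m}\frac{n+\ell}{s+\ell}.
\]
In particular $\mathbb{E}[I]=k(n+1)/(s+1)$ and $\mathbb{E}[I(I+1)]=k(k+1)(n+1)(n+2)/((s+1)(s+2))$.

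For the upper bound \eqref{equ::piibetabound} I would first handle integer exponents $\beta\in\{1,2,3\}$: since $i^{\beta}\leq\prod_{\ell=0}^{\beta-1}(i+\ell)$, the displayed identity with $m=\beta$ gives $\mathbb{E}[(I/n)^{\beta}]\leq\prod_{\ell=0}^{\beta-1}(k+\ell)\prod_{\ell=1}^{\beta}(n+\ell)/(n(s+\ell))$, and bounding $\prod_{\ell=1}^{\beta}(1+\ell/n)\leq 2$ for large $n$, together with $(k+\ell)\leq 3k$ and $(s+\ell)\geq s$, collapses this to $2(3k/s)^{\beta}\leq 2(4k/s)^{\beta}$. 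For fractional $\beta\in(0,1]$ the map $t\mapsto t^{\beta}$ is concave, so Jensen's inequality gives $\mathbb{E}[(I/n)^{\beta}]\leq(\mathbb{E}[I]/n)^{\beta}\leq(2k/s)^{\beta}\leq 2(4k/s)^{\beta}$; for $\beta\in(1,2]$ I would instead apply Jensen to the concave function $t\mapsto t^{\beta/2}$ with $t=(I/n)^2$, reducing to the already-proved case $\beta=2$ and obtaining $\mathbb{E}[(I/n)^{\beta}]\leq(\mathbb{E}[(I/n)^2])^{\beta/2}\leq 2^{\beta/2}(4k/s)^{\beta}\leq 2(4k/s)^{\beta}$.

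The genuinely delicate part is the lower bound \eqref{eq::bothlower}, because the exponent $1/d\leq 1$ makes $t\mapsto t^{1/d}$ concave, so Jensen points the wrong way, and the trivial estimate $\mathbb{E}[(I/n)^{1/d}]\geq\mathbb{E}[I/n]\asymp k/s$ is too weak, since we need the strictly larger quantity $(k/s)^{1/d}$. The remedy is an anti-concentration argument guaranteeing that $Y:=I/n$ does not place too much mass near $0$. From the second-moment identity one checks $\mathbb{E}[Y^2]/(\mathbb{E}Y)^2\leq\frac{k+1}{k}\cdot\frac{n+2}{n+1}\cdot\frac{s+1}{s+2}\leq 3$, so the Paley--Zygmund inequality with $\theta=1/2$ gives $P(Y\geq\mu/2)\geq\frac14(\mathbb{E}Y)^2/\mathbb{E}[Y^2]\geq 1/12$, where $\mu=\mathbb{E}[Y]\geq k/(2s)$. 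Restricting the expectation to this event yields $\mathbb{E}[Y^{1/d}]\geq(\mu/2)^{1/d}P(Y\geq\mu/2)\geq\frac{1}{12}(k/(4s))^{1/d}\geq(k/s)^{1/d}/64$, which is precisely \eqref{eq::bothlower}. I expect the main obstacle to be verifying the uniform bound on the second-moment ratio and pinning down the Paley--Zygmund threshold so that the advertised constant $1/64$ survives; the remaining steps are elementary binomial bookkeeping built on the factorial-moment identity.
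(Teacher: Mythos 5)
Your proposal is correct, and while the skeleton (compute the first few moments of the rank variable, then use convexity tools to pass to fractional exponents) matches the paper, two of your ingredients are genuinely different. First, instead of shifting to $Z=I-k$ and quoting the beta-binomial moment formulas $\mathbb{E}Z$, $\mathbb{E}Z^2$, $\mathbb{E}Z^3$ as the paper does, you derive the exact rising-factorial identity $\mathbb{E}\bigl[\prod_{\ell=0}^{m-1}(I+\ell)\bigr]=\prod_{\ell=0}^{m-1}(k+\ell)\cdot\prod_{\ell=1}^{m}\frac{n+\ell}{s+\ell}$ directly from Chu--Vandermonde; this is self-contained and arguably cleaner (your $\beta\in(1,2]$ step, Jensen with $t\mapsto t^{\beta/2}$ applied to $(I/n)^2$, replaces the paper's H\"older interpolation $\sum p_i i^{\beta}\leq(\sum p_i i)^{2-\beta}(\sum p_i i^2)^{\beta-1}$ — both land on the same bound). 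Second, and more substantively, for the lower bound \eqref{eq::bothlower} the paper uses a two-step H\"older argument to obtain the reverse-Lyapunov inequality $\sum p_i i^{1/d}\geq(\sum p_i i)^{(2d-1)/d}(\sum p_i i^2)^{-(d-1)/d}$ and then plugs in the moment bounds, whereas you use Paley--Zygmund anti-concentration. Both routes ultimately rest on the same fact — the second-moment ratio $\mathbb{E}[I^2]/(\mathbb{E}I)^2$ is bounded by an absolute constant — but your version makes the probabilistic content explicit ($I/n$ cannot concentrate near $0$), and your constant bookkeeping ($12\cdot 4^{1/d}\leq 48\leq 64$) does recover the advertised $1/64$. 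The only cosmetic caveat is that your bound $\prod_{\ell=1}^{\beta}(1+\ell/n)\leq 2$ for $\beta=3$ needs $n\geq 8$, but there is ample slack in the factor $2(4k/s)^{\beta}$ to absorb this for all $n\geq 2$, so nothing breaks.
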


\begin{proof}[Proof of Lemma \ref{lem::ribound}]
	Using the substitution $z = i - k$, we get
	\begin{align}\label{equ::boundpibeta}
		\sum_{i=1}^n p_i i^{\beta} 
		= \sum_{z=0}^{n-s} p_{z+k} (k+z)^{\beta}.
	\end{align}
	Define the random variable $Z$ by $P(Z = z) = p_{z+k}$, $z = 0, \ldots, n - s$. 
	It is easy to verify that $Z$ follows the beta-binomial distribution with parameters $n-s$, $k$, and $s-k+1$ by \eqref{equ::def}. 
	The moments of $Z$ are $\mathbb{E} Z = (n - s) k / (s + 1)$, $\mathbb{E} Z^2 = k (n - s) (n - k + k n - k s + 1) / ((s + 1) (s + 2))$, and $\mathbb{E} Z^3 = k (n - s) [(n - s)^2 (k^2 + 3 k + 2) + 3 k (n - s) (s - k + 1) + (s - k + 1) (s - 2 k + 1)]/((s + 1) (s + 2) (s + 3))$.
	We refer the reader to \cite{johnson2005univariate} for more discussions on this distribution. 
	
	Let us first consider the case $\beta \in (0, 1]$. 
	Since $(k/(k+z))^{\beta} + (z/(z+d))^{\beta} \geq k/(k+z) + z/(k+z) = 1$, we have $(k+z)^{\beta} \leq z^{\beta} + k^{\beta}$.
	This together with \eqref{equ::boundpibeta} yields 
	\begin{align*}
		\sum_{i=1}^n p_i i^{\beta}
		\leq \sum_{z=0}^{n-s} p_{z+k} z^{\beta} + \sum_{z=0}^{n-s} p_{z+k} k^{\beta}
		= k^{\beta} + \sum_{z=0}^{n-s} p_{z+k} z^{\beta}.
	\end{align*}
	Since the function $g(x) := x^{\beta}$, $\beta \in (0, 1]$, is concave on $[0,\infty)$, by using Jensen's inequality, we get
	$\sum_{z=0}^{n-s} p_{z+k} z^{\beta}
	= \mathbb{E} Z^{\beta}
	\leq (\mathbb{E} Z)^{\beta}
	\leq (kn/s)^{\beta}$
	and consequently 
	\begin{align*}
		\sum_{i=1}^n p_i (i/n)^{\beta}
		\leq (k/n)^{\beta} + (k/s)^{\beta}
		\leq 2 (4k/s)^{\beta},
		\qquad  
		\beta \in (0, 1).
	\end{align*}
	
	Next, let us consider the case $\beta \in (1, 2)$ or equivalently $2 - \beta \in (0, 1)$. 
	Using H\"{o}lder's inequality, we get
	\begin{align*}
		\sum_{i=1}^n p_i i^{\beta}
		= \sum_{i=1}^n (p_i i)^{2-\beta} (p_i i^2)^{\beta-1}
		\leq \biggl( \sum_{i=1}^n p_i i\biggr)^{2-\beta} \cdot \biggl( \sum_{i=1}^n p_i i^2 \biggr)^{\beta-1}.
	\end{align*}	
	With the substitution $z = i - k$ we get $\sum_{i=1}^n p_i i = \sum_{z=0}^{n-s} p_{z+k} (z+k) = k + \mathbb{E} Z = k + (n - s) k /(s + 1) \leq k n / s$ and $\sum_{i=1}^n p_i i^2 = \sum_{z=0}^{n-s} p_{z+k} (z+k)^2 \leq 2 \sum_{z=0}^{n-s} p_{z+k} (z^2+k^2) \leq 2 k^2 + 2 \sum_{z=0}^{n-s} z^2 p_{z+k} = 2 k^2 + 2 \mathbb{E} Z^2 = 2 k^2 + k(n-s)(n-k+kn-ks+1)/((s+1)(s+2)) \leq 4 k^2 n^2 / s^2$.
	Consequently we obtain
	\begin{align*}
		\sum_{i=1}^n p_i i^{\beta}
		\leq (kn/s)^{2-\beta} (4k^2n^2/s^2)^{\beta-1}
		= 4^{\beta-1} (kn/s)^{\beta}
		\leq (4kn/s)^{\beta}.
	\end{align*}
	It is easy to see that this inequality also holds when $\beta = 2$. 
	Therefore, we have
	\begin{align*}
		\sum_{i=1}^n p_i (i/n)^{\beta}
		\leq (4k/s)^{\beta}
		< 2 (4k/s)^{\beta}, 
		\qquad  
		\beta \in (1, 2].
	\end{align*}
	
	Finally, for the case $\beta = 3$, we have 
	\begin{align*}
		\sum_{i=1}^n p_i i^3
		= \sum_{z=0}^{n-s} p_{z+k}(z+k)^3
		\leq \sum_{z=0}^{n-s} p_{z+k}
		\leq 4 \sum_{z=0}^{n-s} p_{z+k}(z^3 + k^3)
		= 4 k^3 + 4 \mathbb{E} Z^3
		\leq 64 (kn/s)^3
	\end{align*}
	and consequently
	$\sum_{i=1}^n p_i (i/n)^3\leq 2 (4k/s)^3$, which proves
	\eqref{equ::piibetabound}.
	
	Now we turn to the lower bound \eqref{eq::bothlower}. 
	Using H\"{o}lder's inequality, we get
	\begin{align*}
		\sum_{i=1}^n p_i i
		\leq \biggl( \sum_{i=1}^n p_i i^{1/d} \biggr)^{1/2} \biggl( \sum_{i=1}^n p_i i^{2-\frac{1}{d}} \biggr)^{1/2}
		\leq \biggl( \sum_{i=1}^n p_i i^{1/d} \biggr)^{1/2} \biggl( \sum_{i=1}^n p_i i \biggr)^{\frac{1}{2d}} \biggl( \sum_{i=1}^n p_i i^2 \biggr)^{\frac{d-1}{2d}}, 
	\end{align*}
	which leads to
	\begin{align*}
		\sum_{i=1}^n p_i i^{1/d} \geq \biggl( \sum_{i=1}^n p_i i \biggr)^{\frac{2d-1}{d}} \biggl( \sum_{i=1}^n p_i i^2 \biggr)^{-\frac{d-1}{d}}.
	\end{align*}
	With the substitution $z = i - k$ we get $\sum_{i=1}^n p_i i = k + \mathbb{E} Z =  k +  (n - s) k /(s + 1) \geq (k + k (n - s) / (s + 1)) / 2 = n k / (4s)$ and consequently 
	\begin{align*}
		\sum_{i=1}^n p_i i^{1/d} 
		\geq (n k / (4s))^{2-1/d} (2kn/s)^{-2+2/d}
		= 2^{-6+4/d}(nk/s)^{1/d}
		\geq (k/s)^{1/d}/64,
	\end{align*}
	which completes the proof.
\end{proof}

With the above results, we are in the position of deriving the bound for the bagging error. \\

\begin{proof}[Proof of Proposition \ref{prop::weightB}]
	By the definition of $R_k^B(x)$ and $\widetilde{R}_k^B(x)$, we have 
	\begin{align*}
		\bigl| R_k^B(x) - \widetilde{R}_k^B(x) \bigr|
		= \biggl| \frac{1}{B} \sum_{b=1}^B \sum_{i=1}^n p_i^b R_i(x) - \sum_{i=1}^n p_i R_i(x) \biggr|.
	\end{align*}
	For any $b = 1, \ldots, B$, define the random variables $\zeta_b(x)$ by $\zeta_b(x) := \sum_{i=1}^n (p_i^b-p_i) R_i(x)$.
	Then we have $\|\zeta_b\|_{\infty} \leq ( \sum_{i=1}^n p_i^b \bigvee \sum_{i=1}^n p_i) R_i(x) \leq R_n(x) \leq \mathrm{diam}(\mathcal{X})$.
	By the definition of $p_i$ in \eqref{equ::tilderbkx}, we have $\mathbb{E}_{P_Z}(\zeta_b(x) | D_n) = 0$ and $\mathrm{Var}(\zeta_b(x) | D_n) = \mathrm{Var} ( \sum_{i=1}^n (p_i^b - p_i) R_i(x) | D_n )$.
	For $1 \leq i < j \leq n$, we have $\mathrm{Cov} ( (p_i^b - p_i) R_i(x), (p_j^b - p_j) R_j(x) ) = R_i(x) R_j(x) \mathrm{Cov} (p_i^b - p_i, p_j^b - p_j) = R_i(x) R_j(x) \mathrm{Cov} (p_i^b, p_j^b)$.
	By the definition of $p_i^b$ and $p_j^b$, we have $p_i^b p_j^b=0$ and thus $\mathrm{Cov}(p_i^b, p_j^b) = \mathbb{E}(p_i^b p_j^b) - \mathbb{E} p_i^b \cdot \mathbb{E} p_j^b = - p_i p_j \leq 0$, which implies $\mathrm{Cov} \bigl( (p_i^b - p_i) R_i(x), (p_j^b - p_j) R_j(x) \bigr) \leq 0$ for $1 \leq i < j \leq n$.
	Consequently we have
	\begin{align}\label{equ::varzetab}
		\mathrm{Var}(\zeta_b(x) | D_n) 
		\leq \sum_{i=1}^n R_i^2(x) \mathrm{Var}(p_i^b)
		= \sum_{i=1}^n R_i^2(x) p_i (1 - p_i)
		\leq \sum_{i=1}^n p_i R_i^2(x).
	\end{align}
	Let $c_{d,n} := \lceil 32 (d+4) \log n \rceil$. 
	Lemma  \ref{lem::Rrho}  implies that with probability $P^n$ at least $1-2/n^2$, there holds 
	\begin{align*}
		\sup_{x\in\mathcal{X}} R_i(x)\lesssim 
		\begin{cases}
			(\log n/n)^{1/d}, & \text{ if } 1 \leq i \leq c_{d,n}, \\
			(i / n)^{1/d}, & \text{ if } c_{d,n} \leq i \leq n.
		\end{cases}
	\end{align*}
	Consequently we have
	\begin{align*}
		\sum_{i=1}^n p_i R_i^2(x) 
		= \sum_{i=1}^{c_{d,n}} p_i R_i^2(x) + \sum_{i=c_{d,n}}^n p_i R_i^2(x)
		\lesssim  \log n ( \log n /n)^{2/d} + \sum_{i=1}^n p_i (i/n)^{2/d}.
	\end{align*}
	Using Lemma \ref{lem::ribound}, we get $\sum_{i=1}^n p_i R_i^2(x) \lesssim (\log n) \cdot (\log n/n)^{2/d} + (k/s)^{2/d}$. This together with \eqref{equ::varzetab} yields $\mathrm{Var}(\zeta_b | D_n) \lesssim (k/s)^{2/d}$. 
	Applying Bernstein's inequality in Lemma \ref{lem::bernstein}, we obtain that for any $\tau > 0$, there holds
	\begin{align*}
		P_Z^B \biggl(  \biggl| \frac{1}{B} \sum_{b=1}^B \zeta_b(x) \biggr| \gtrsim \sqrt{\frac{2 \tau  (k/s)^{2/d}}{B}} + \frac{2 \tau\mathrm{diam}(\mathcal{X})}{3B } \bigg| D_n  \biggr) 
		\leq e^{-\tau}.
	\end{align*}
	Let $\tau := (2d+4) \log n$. Then we have
	\begin{align} \label{equ::pointwise2}
		P_Z^B \Bigl( \bigl| R_k^B(x) - \widetilde{R}_k^B(x) \big|
		\lesssim \sqrt{(k/s)^{2/d} \log n / B}+\log n/B \Big| D_n \Bigr)
		\geq 1 - 1 / n^{2d+4}.
	\end{align}
	In order to derive the uniform upper bound over $\mathcal{X}$, let
	\begin{align*}
		\mathcal{S} := \bigl\{ (\sigma_1, \ldots, \sigma_{n}) : \text{all permutations of } (1, \ldots, n)\text{ obtainable by moving } x \in \mathbb{R}^d \bigr\}
	\end{align*}
	and $\varepsilon\asymp \sqrt{(k/s)^{2/d}\log n/B}+\log n/B$.
	Then we have
	\begin{align*}
		& P_Z^B \biggl( \sup_{x \in \mathbb{R}^d} \biggl( \bigl| R_k^B(x) - \widetilde{R}_k^B(x) \bigr| - \varepsilon \biggr) > 0 \bigg| D_n \biggr)
		\\
		& \leq P_Z^B \biggl( \bigcup_{(\sigma_1, \ldots, \sigma_n) \in \mathcal{S}} \biggl| \frac{1}{B} \sum_{b=1}^B \sum_{i=1}^n p_{i,\sigma}^b R_{i,\sigma}(x) - \sum_{i=1}^n p_{i,\sigma} R_{i,\sigma}(x) \biggr| > \varepsilon \bigg| D_n \biggr)
		\\
		& \leq \sum_{(\sigma_1, \ldots, \sigma_n) \in \mathcal{S}} P_Z^B \biggl( \biggl| \frac{1}{B} \sum_{b=1}^B \sum_{i=1}^n p_{i,\sigma}^b R_{i,\sigma}(x) - \sum_{i=1}^n p_{i,\sigma} R_{i,\sigma}(x) \biggr| > \varepsilon \bigg| D_n \biggr),
	\end{align*}
	where $p_{i,\sigma}^b := \eins \{ \|x - X_{\sigma_i}\| = R_k(x; D_b)\}$ and  $R_{i,\sigma}(x) := \|x - X_{\sigma_i}\|$.
	For any $(\sigma_1,\ldots,\sigma_n)\in \mathcal{S}$, \eqref{equ::pointwise2} implies
	\begin{align*}
		P_Z^B \biggl( \sup_{x \in \mathcal{X}} \biggl| \frac{1}{B} \sum_{b=1}^B \sum_{i=1}^n p_{i,\sigma}^b R_{i,\sigma}(x) - \sum_{i=1}^n p_{i,\sigma} R_{i,\sigma}(x) \biggr| > \varepsilon \bigg| D_n \biggr)
		\leq 2 / n^{2d+3}.
	\end{align*}
	This together with Lemma \ref{lem::numberreorder} yields that for all $n \geq 2d$, there holds
	\begin{align*}
		P_Z^B \biggl( \sup_{x \in \mathbb{R}^d} \bigl( \bigl| R_k^B(x) - \widetilde{R}_k^B(x) \bigr| - \varepsilon \bigr) > 0 \bigg| D_n \biggr)
		\leq 2(25/d)^d / n^3.
	\end{align*}
	Consequently we obtain 
	\begin{align*}
		P_Z^B \otimes P^n \Bigl( \bigl\| R_k^B - \widetilde{R}_k^B \bigr\|_{\infty} \lesssim \sqrt{(k/s)^{2/d} \log n / B} +\log n/B\Bigr)
		\geq 1 - 1/n^2,
	\end{align*}
	which completes the proof.
\end{proof}

\subsubsection{Proofs Related to Section \ref{sec::EstimationError}} 
\label{sec::weightsamplerho}

In this section, we present the proof of the upper bound for the estimation error. \\

\begin{proof}[Proof of Proposition \ref{prop::weightsamplerho}]
	Let $c_{d,n} := \lceil 32 (d+4) \log n \rceil $. 
	Using the triangular inequality, we get
	\begin{align}\label{equ::sumprx}
		\biggl| \sum_{i=1}^n p_i \bigl( R_{i}(x) - \overline{R}_i(x) \bigr) \biggr|
		& \leq \sum_{i=1}^n p_i \bigl| R_i(x) - \overline{R}_i(x) \bigr|
		\nonumber\\
		& = \sum_{i=c_{d,n}}^n p_i \bigl| R_i(x) - \overline{R}_i(x) \bigr|
		+ \sum_{i=1}^{c_{d,n}-1} p_i \bigl| R_i(x) - \overline{R}_i(x) \bigr|.
	\end{align}
	
	Let us consider the first term of \eqref{equ::sumprx}.
	Lemma \ref{lem::Rrho} implies that for all $x \in \mathbb{R}^d$ and $i \geq 32 (d+4) \log n$, with probability $P^n$ at least $1-2/n^2$, there hold
	\begin{align} \label{equ::overlinerid}
		\bigl| \overline{R}_i^d(x) - R_i^d(x) \bigr|
		\lesssim \sqrt{i \log n} / (nV_d\underline{c})
	\end{align}
	and 
	\begin{align} \label{equ::rkxlowerbound}
		R_i(x)
		\gtrsim (i / n)^{1/d}.
	\end{align}
	By Assumption \ref{ass::cluster}, we have $i/n = P(B(x,\overline{R}_i(x))) \leq \overline{c} V_d \overline{R}_i(x)^d$ and consequently $\overline{R}_i(x) \geq (i/(\overline{c} V_d n)^{1/d}$.
	This together with \eqref{equ::rkxlowerbound} yields 
	\begin{align} \label{equ::sumj0d1}
		\sum_{j=0}^{d-1} R_i^j(x) \overline{R}_i^{d-1-j}(x)
		\geq \sum_{j=0}^{d-1} ( i / n)^{j/d} \cdot (i / (\overline{c} V_d n))^{(d-1-j)/d}
		\gtrsim  ( i / n) )^{1-1/d}.
	\end{align}
	Combining \eqref{equ::overlinerid} and \eqref{equ::sumj0d1}, we obtain
	\begin{align*}
		\bigl| R_i(x) - \overline{R}_i(x) \bigr|
		= \frac{\bigl| \overline{R}_i^d(x) - R_i^d(x) \bigr|}{\sum_{j=0}^{d-1}  R_i^j(x) \overline{R}_i^{d-1-j}(x)}
		\lesssim i^{1/d-1/2}n^{-1/d} (\log n)^{1/2}.
	\end{align*}
	Consequently we have
	\begin{align} \label{equ::sumpicase1}
		\sum_{i=c_{d,n}}^n p_i \bigl| R_i(x) - \overline{R}_i(x) \bigr|
		& \lesssim \sum_{i=c_{d,n}}^n p_i i^{1/d-1/2} n^{-1/d} (\log n)^{1/2}
		\nonumber\\
		& \lesssim n^{-1/d} (\log n)^{1/2} \sum_{i=1}^n p_i i^{1/d-1/2} 
		\lesssim (k/s)^{1/d-1/2} (\log n/n)^{1/2}.
	\end{align}
	
	Next, let us consider the second term of \eqref{equ::sumprx}. 
	Lemma \ref{lem::Rrho} implies that for all $x \in \mathbb{R}^d$, if $i \leq \lceil 32(d+4)\log n\rceil$, then $R_i(x) \leq ( 64 (d+5) \log n / (\underline{c} V_d n) )^{1/d}$. 
	Using Assumption \ref{ass::cluster} with $\mathcal{X} := [0, 1]^d$, we get $i/n = P(B(x,\overline{R}_i(x))) \geq \underline{c} V_d \overline{R}_i(x)^d/2^d$ and consequently $\overline{R}_i(x) \leq 2 (i / (\underline{c} V_d n))^{1/d} \lesssim  (\log n/n)^{1/d}$.
	Therefore, we have $\bigl| R_i(x) - \overline{R}_i(x) \bigr| \lesssim  (\log n / n)^{1/d}$ and thus 
	\begin{align} \label{equ::sumpicase2}
		\sum_{i=c_{d,n}}^n p_i \bigl| R_i(x) - \overline{R}_i(x) \bigr| 
		\lesssim  3 (c_{d,n}+1) ( \log n / n)^{1/d}
		\lesssim \log n(\log n/n)^{1/d}.
	\end{align}
	Combining \eqref{equ::sumprx}, \eqref{equ::sumpicase1}, \eqref{equ::sumpicase2}, and using the assumption $(kn/s)^{1-d/2} \geq (\log n)^{1+d/2}$, we obtain
	\begin{align*}
		\biggl| \sum_{i=1}^n p_i \bigl( R_{i}(x) - \overline{R}_i(x) \bigr) \biggr|
		& \lesssim \log n (\log n/ n)^{1/d} + (k/s)^{1/d-1/2} (\log n/n)^{1/2}
		\\
		& \lesssim (k/s)^{1/d-1/2} (\log n/n)^{1/2}
	\end{align*}
	for all $x \in \mathcal{X}$, which completes the proof.
\end{proof}

\subsubsection{Proofs Related to Section \ref{sec::ApproximationError}}
\label{sec::weightedrho}

In this section, we present the proof of the upper bound for the approximation error. \\

\begin{proof}[Proof of Proposition \ref{prop::weightedrho}]
	By Assumption \ref{ass::cluster}, we have that for all $x \in \mathcal{X}$, 
	\begin{align} \label{equ::ridxinvd}
		\biggl| \overline{R}_i^d(x) - \frac{i/n}{V_d f(x)} \biggr|
		= \biggl| \frac{i/n - V_d f(x) \overline{R}_i^d(x)}{V_d f(x)} \biggr|
		\leq \biggl| \frac{i/n - V_d f(x) \overline{R}_i^d(x)}{V_d \underline{c}} \biggr|.
	\end{align}
	By the definition of $\overline{R}_i(x)$ and the H\"{o}lder continuity in Assumption \ref{ass::cluster}, we have
	\begin{align} \label{equ::invdfx}
		\bigl| i/n & - V_d f(x) \overline{R}_i^d(x) \bigr|
		= \biggl| \int_{B(x, \overline{R}_i(x))} f(x') \, dx' - \int_{B(x, \overline{R}_i(x))} f(x) \, dx' \biggr|
		\nonumber\\
		& \leq \int_{B(x, \overline{R}_i(x))} |f(x') - f(x)| \, dx'
		\leq c_L \int_{B(x, \overline{R}_i(x))} \|x' - x\|^{\alpha} \, dx'
		\leq c_d c_L \overline{R}^{d+\alpha}_i(x),
	\end{align}
	where $c_d$ is a constant depending only on $d$. Moreover, by Assumption \ref{ass::cluster} and the definition of $\overline{R}_i(x)$, we have
	$\underline{c} V_d \overline{R}_i^d(x) / 2^d \leq P(B(x, \overline{R}_i(x))) = i/n \leq V_d \overline{c} \overline{R}_i^d(x)$ and consequently
	\begin{align}\label{equ::overliner}
		((i/n) / (V_d \overline{c}))^{1/d}
		\leq \overline{R}_i(x)
		\leq 2 ((i/n) / (\underline{c} V_d))^{1/d}.
	\end{align}
	Combining \eqref{equ::overliner} and \eqref{equ::invdfx}, we get
	$|i/n - V_d f(x) \overline{R}_i^d(x)|
	\leq 2^{d+\alpha} c_d c_L ((i/n) / (\underline{c} V_d))^{(d+\alpha)/d}$.
	This together with \eqref{equ::ridxinvd} yields 
	$|\overline{R}_i^d(x) - (i/n) / (V_d f(x))|
	\leq (2^{d+\alpha} c_d c_L / (V_d \underline{c})) \cdot ((i/n) / (\underline{c}V_d))^{(d+\alpha)/d}$.
	The first inequality of \eqref{equ::overliner} implies
	\begin{align*}
		& \sum_{j=0}^d \overline{R}_i(x)^j ((i/n) / (V_d f(x)))^{(d-i-j)/d}
		\\
		& \geq 	\sum_{j=0}^d ((i/n) / (V_d \overline{c}))^{j/d} ((i/n) / (V_d \underline{c}))^{(d-1-j)/d}
		\geq ((i/n) / (V_d \overline{c}))^{(d-1)/d}.
	\end{align*}
	Using the equality $x^d - y^d = (x-y)(\sum_{i=0}^{d-1} x^i \cdot y^{d-1-i})$, we get
	\begin{align*}
		\bigl| \overline{R}_i(x) - ((i/n) / (V_d f(x)))^{1/d} \bigr|
		= \frac{\bigl| \overline{R}^d_i(x) - (i/n) / (V_d f(x)) \bigr|}{\sum_{j=0}^d \overline{R}_i(x)^j ((i/n) / (V_d f(x)))^{(d-1-j)/d}}
		\lesssim (i/n)^{(1+\alpha)/d}
	\end{align*}
	and consequently
	\begin{align*}
		& \biggl| \sum_{i=1}^n p_i \overline{R}_i(x) -  \sum_{i=1}^n p_i ((i/n) / (V_d f(x)))^{1/d} \biggr| 
		\\
		& \lesssim \sum_{i=1}^n p_i	\biggl| \overline{R}_i(x) -  \sum_{i=1}^n p_i ((i/n) / (V_d f(x)))^{1/d} \biggr|
		\lesssim \sum_{i=1}^n p_i (i/n)^{(1+\alpha)/d}.
	\end{align*}
	Lemma \ref{lem::ribound} implies
	$\sum_{i=1}^n p_i (i/n)^{(1+\alpha)/d}	
	\lesssim (k/s)^{(1+\alpha)/d}$
	and thus we have
	\begin{align*}
		\biggl| \sum_{i=1}^d p_i \overline{R}_i(x) - \sum_{i=1}^n p_i ((i/n) / (V_d f(x)))^{1/d} \biggr| 
		\lesssim (k/s)^{(1+\alpha)/d},
	\end{align*}
	which completes the proof.
\end{proof}

\subsubsection{Proofs Related to Section \ref{sec::RatesPseudoDE}}\label{sec::RatesPseudoDEproof}

The following Lemma, which is needed in the proof of Proposition \ref{thm::main3}, bounds the difference between the bagged $k$-distance and its infinite version.

\begin{lemma}\label{prop::dist}
	Let Assumption \ref{ass::cluster} hold. 
	Furthermore, let $R_k^B(x)$ and $p_i$ be as in \eqref{eq::bd}  and \eqref{equ::def}, respectively. 
	Moreover, suppose that $(kn/s)^{1-d/2} \geq (\log n)^{1+d/2}$. 
	Then for all $x \in \mathbb{R}^d$,
	with probability $P_Z^B \otimes P^n$ at least $1 - 3/n^2$,
	there holds
	\begin{align*}
		& \biggl| R_k^B(x) - \sum_{i=1}^n p_i ((i/n) / (V_d f(x)))^{1/d} \biggr|
		\nonumber\\
		& \lesssim \sqrt{{(k/s)^{2/d}\log n}/{B}}+\log n/B+ (k/s)^{1/d-1/2}(\log n/n)^{1/2} + (k/s)^{(1+\alpha)/d}.
	\end{align*}
	
\end{lemma}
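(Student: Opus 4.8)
The plan is to reduce the claim to the three-term error decomposition already displayed in Section~\ref{sec::analysisbagged} and then simply assemble the bounds established in Propositions~\ref{prop::weightB}, \ref{prop::weightsamplerho}, and \ref{prop::weightedrho}. First I would insert the infinite-bagging estimator $\widetilde{R}_k^B(x)=\sum_{i=1}^n p_i R_i(x)$ from \eqref{equ::tilderbkx} and apply the triangle inequality twice to obtain
\begin{align*}
	\biggl| R_k^B(x) - \sum_{i=1}^n p_i \bigl( (i/n)/(V_d f(x)) \bigr)^{1/d} \biggr|
	&\leq \bigl| R_k^B(x) - \widetilde{R}_k^B(x) \bigr|
	+ \biggl| \sum_{i=1}^n p_i \bigl( R_i(x) - \overline{R}_i(x) \bigr) \biggr| \\
	&\quad + \biggl| \sum_{i=1}^n p_i \bigl( \overline{R}_i(x) - ((i/n)/(V_d f(x)))^{1/d} \bigr) \biggr|,
\end{align*}
whose three summands are precisely the bagging error, the estimation error, and the approximation error identified in Section~\ref{sec::analysisbagged}. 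This reduces the lemma to controlling each summand uniformly over $x$.

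Next I would bound each term by its dedicated proposition. For the first summand, Proposition~\ref{prop::weightB} gives $|R_k^B(x)-\widetilde{R}_k^B(x)| \lesssim \sqrt{(k/s)^{2/d}\log n/B}+\log n/B$ uniformly over $\mathcal{X}$ with $P_Z^B\otimes P^n$-probability at least $1-1/n^2$. For the second summand, the hypothesis $(kn/s)^{1-d/2}\geq(\log n)^{1+d/2}$ of the present lemma is exactly the scaling condition required by Proposition~\ref{prop::weightsamplerho}, so that proposition applies directly and yields $|\sum_{i=1}^n p_i(R_i(x)-\overline{R}_i(x))|\lesssim (k/s)^{1/d-1/2}(\log n/n)^{1/2}$ uniformly over $\mathcal{X}$ with $P^n$-probability at least $1-2/n^2$. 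The third summand is controlled deterministically: Proposition~\ref{prop::weightedrho} gives $|\sum_{i=1}^n p_i(\overline{R}_i(x)-((i/n)/(V_d f(x)))^{1/d})|\lesssim (k/s)^{(1+\alpha)/d}$ for every $x\in\mathcal{X}$, with no probabilistic failure.

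Finally I would combine the three estimates by a union bound on the shared product space $P_Z^B\otimes P^n$. Since the estimation-error event depends only on the draw $D_n$, it is measurable with respect to the $P^n$-marginal; intersecting it with the bagging-error event, and recalling that the approximation-error bound holds surely, produces a joint good event of $P_Z^B\otimes P^n$-probability at least $1-(1/n^2+2/n^2)=1-3/n^2$. On this event, summing the three bounds reproduces exactly the claimed four-term upper bound. Because the heavy lifting is done inside the three propositions, there is no genuine analytic obstacle here; the only point requiring care is the bookkeeping of the probability spaces — namely verifying that the $P^n$-measurable estimation-error event and the $P_Z^B\otimes P^n$-measurable bagging-error event can be intersected consistently, and confirming that the scaling hypothesis of this lemma is precisely the one inherited by Proposition~\ref{prop::weightsamplerho}.
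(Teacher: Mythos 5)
Your proposal is correct and is essentially identical to the paper's own proof: the same triangle-inequality decomposition into bagging, estimation, and approximation errors, followed by invoking Propositions \ref{prop::weightB}, \ref{prop::weightsamplerho}, and \ref{prop::weightedrho} and a union bound yielding the $1-3/n^2$ probability. No further comment is needed.
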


\begin{proof}[Proof of Lemma \ref{prop::dist}]
	Using the triangle inequality, we get
	\begin{align*}
		& \biggl| R_k^B(x) - \sum_{i=1}^n p_i ((i/n) / (V_d f(x)))^{1/d} \biggr|
		\nonumber\\
		& \leq \bigl| R_k^B(x) -\widetilde{R}_k^B(x)\bigr|
		+ \biggl| \sum_{i=1}^n p_i \bigl( R_i(x) - \overline{R}_i(x) \bigr) \biggr| 
		+ \biggl| \sum_{i=1}^n p_i \bigl( \overline{R}_i(x) - ((i/n) / (V_d f(x)))^{1/d} \bigr) \biggr|.
	\end{align*}
	Then Propositions \ref{prop::weightB}, \ref{prop::weightsamplerho}, and \ref{prop::weightedrho} yield that for all $x \in \mathcal{X}$, 
	with probability at least $1 - 3/n^2$,
	there holds
	\begin{align*}
		& \biggl| R_k^B(x) - \sum_{i=1}^n p_i ((i/n) / (V_d f(x)))^{1/d} \biggr|
		\\
		& \lesssim \sqrt{{(k/s)^{2/d}\log n}/{B}}+\log n/B+ (k/s)^{1/d-1/2}(\log n/n)^{1/2}+(k/s)^{(1+\alpha)/d},
	\end{align*}
	which finishes the proof.
\end{proof}

Now, we are in the position of presenting the proof of the convergence rates of the hypothetical density estimation. \\

\begin{proof}[Proof of Proposition \ref{thm::main3}]
	If we choose $k_{D,n} \asymp \log n$, $s_n \asymp  n^{d/(2\alpha+d)} (\log n)^{2\alpha/(2\alpha+d)}$ and $B_n \geq n^{(1+\alpha)/(2\alpha+d)}$ $(\log n)^{(\alpha+d-1)/(2\alpha+d)}$, then we have $(k_{D,n} n/s_n)^{1-d/2} \gtrsim (\log n)^{1+d/2}$.
	Applying Lemma \ref{prop::dist}, we obtain that for all $x \in \mathcal{X}$, there holds
	\begin{align} \label{equ::rkbminus}
		& \biggl| R_k^B(x) - \sum_{i=1}^n p_i ((i/n) / (V_d f(x)))^{1/d} \biggr|
		\nonumber\\ 
		& \lesssim \sqrt{{(k_{D,n}/s_n)^{2/d}\log n}/{B_n}}+\log n/B_n
		+ (k_{D,n}/s_n)^{1/d-1/2}(\log n/n)^{1/2}
		+ (k_{D,n}/s_n)^{(1+\alpha)/d}
		\nonumber\\
		&\lesssim (\log n/n)^{\frac{1+\alpha}{2\alpha+d}}
	\end{align}
	with probability $P_Z^B \otimes P^n$ at least $1 - 3/n^2$.
	Therefore, for all sufficiently large $n$ and $x\in \mathcal{X}$, we have
	\begin{align}\label{equ::rkbxnpi}
		\biggl| R_k^B(x) - \sum_{i=1}^n p_i ((i/n) / (V_d f(x)))^{1/d} \biggr|
		\leq (\log n/n)^{1/(2\alpha+d)}(V_d\underline{c})^{-1/d}/128.
	\end{align}
	Lemma \ref{lem::ribound} together with Assumption \ref{ass::cluster} yields that for all $x\in \mathcal{X}$, there hold
	\begin{align}\label{equ::piivdfxn}
		\sum_{i=1}^n p_i ((i/n) / (V_d f(x)))^{1/d} 
		\geq (V_d \underline{c})^{-1/d} \sum_{i=1}^n p_i (i/n)^{1/d}
		\geq  (\log n/n)^{1/(2\alpha+d)}(V_d \underline{c})^{-1/d}/64
	\end{align}
	and
	\begin{align}\label{equ::popuord11}
		\sum_{i=1}^n p_i ((i/n) / (V_d f(x)))^{1/d} 
		\leq  (V_d \overline{c})^{-1/d}  \sum_{i=1}^n p_i (i/n)^{1/d} 
		\lesssim (\log n/n)^{1/(2\alpha+d)}.
	\end{align}
	Combining \eqref{equ::rkbxnpi}, \eqref{equ::piivdfxn} and \eqref{equ::popuord11}, we find
	\begin{align}\label{equ::rkblower}
		R_k^B(x)
		\geq \sum_{i=1}^n p_i ((i/n) / (V_d f(x)))^{1/d} - \biggl| R_k^B(x) - \sum_{i=1}^n p_i ((i/n) / (V_d f(x)))^{1/d} \biggr|
		\gtrsim (\log n/n)^{\frac{1}{2\alpha+d}}
	\end{align}
	and 
	\begin{align}\label{equ::rkbupper}
		R_k^B(x)
		\leq \sum_{i=1}^n p_i ((i/n) / (V_d f(x)))^{1/d} + \biggl| R_k^B(x) - \sum_{i=1}^n p_i ((i/n) / (V_d f(x)))^{1/d} \biggr|
		\lesssim (\log n/n)^{\frac{1}{2\alpha+d}}.
	\end{align}
	Combining \eqref{equ::popuord11} and \eqref{equ::rkbupper}, we get
	\begin{align*}
		\sum_{j=0}^d \bigl( R_k^B(x) \bigr)^j \biggl( \sum_{i=1}^n p_i ((i/n) / (V_d f(x)))^{1/d} \biggr)^{d-1-j}
		\lesssim (\log n/n)^{\frac{d-1}{2\alpha+d}}.
	\end{align*}
	This together with \eqref{equ::rkbxnpi} yields 
	\begin{align} \label{equ::denominatorlower1}
		&\biggl| \bigl( R_k^B(x) \bigr)^d - \biggl( \sum_{i=1}^n p_i ((i/n) / (V_d f(x)))^{1/d} \biggr)^d  \biggr|
		\nonumber\\
		& \lesssim \biggl| R_k^B(x) - \sum_{i=1}^n p_i ((i/n) / (V_d f(x)))^{1/d} \biggr| \cdot \sum_{j=0}^d \bigl( R_k^B(x) \bigr)^j \biggl( \sum_{i=1}^n p_i ((i/n) / (V_d f(x)))^{1/d} \biggr)^{d-1-j}
		\nonumber\\
		& \lesssim (\log n/n)^{\frac{\alpha+d}{2\alpha+d}}.
	\end{align}
	Combining \eqref{equ::rkblower} and \eqref{equ::denominatorlower1}, we obtain that
	for all $x \in \mathcal{X}$ and all sufficiently large $n$, there holds
	\begin{align}
		\biggl| \frac{\bigl( \sum_{i=1}^n p_i (i/n)^{1/d} \bigr)^d}{V_d \bigl(R_k^B(x) \bigr)^d} - f(x) \biggr| 
		& = \biggl| \frac{\bigl( \sum_{i=1}^n p_i (i/(V_d f(x) n))^{1/d} \bigr)^d -  \bigl(R_k^B(x) \bigr)^d}{\bigl( R_k^B(x) \big)^d} \biggr| \cdot f(x) \nonumber
		\nonumber \\
		& \lesssim (\log n/n)^{\frac{\alpha}{2\alpha+d}}\label{equ::fxbound}
	\end{align}
	with probability $P_Z^B \otimes P^n$ at least $1 - 3 / n^2$.
	This finishes the proof.
\end{proof}

The following lemma, which will be used several times in the sequel, supplies the key to proofs of  mode estimation and mode-based clustering.

\begin{lemma}\label{lem::Eigen}
	Let Assumption \ref{ass::modes} hold. 
	Moreover, let $x \in \mathcal{M}_{r_{\mathcal{M}}}$ and $H(x)$ be the corresponding Hessian matrix. Then there exist two constants $c_1 \geq c_2 > 0$ such that for any $y \in \mathbb{R}^d$, there holds $- c_1 \|y\|^2 \leq y^{\top} H(x) y \leq - c_2 \|y\|^2$.	
	Moreover, for all $1 \leq i \leq \#(\mathcal{M})$ and all $x, y \in B(X_i, r_{\mathcal{M}})$, we have
	\begin{align*}
		f(y) & \leq f(x) + \nabla f(x)^{\top} (y - x) - c_2 \|y - x\|^2/2,
		\\
		f(y) & \geq f(x) + \nabla f(x)^{\top} (y - x) - c_1 \|y - x\|^2/2.
	\end{align*}
\end{lemma}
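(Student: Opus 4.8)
The plan is to establish Lemma \ref{lem::Eigen} as a direct consequence of Assumption \ref{ass::modes}, which guarantees that $f$ is twice continuously differentiable on each $B(m_i, r_{\mathcal{M}})$ with a negative-definite Hessian at every mode. First I would address the eigenvalue bound on $H(x)$ for $x \in \mathcal{M}_{r_{\mathcal{M}}}$. Since $H$ is negative definite at each $m_i$, its eigenvalues are strictly negative there; by continuity of $x \mapsto H(x)$ (the entries are continuous since $f \in C^2$) and the continuity of eigenvalues as functions of matrix entries, the largest eigenvalue stays bounded away from $0$ and the smallest stays bounded below on a sufficiently small closed neighborhood of each mode. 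Because $\mathcal{M}$ is finite (as noted after Assumption \ref{ass::modes}, following from the Morse/isolated-critical-point discussion) and the neighborhoods $B(m_i, r_{\mathcal{M}})$ are disjoint and compact, I can take $-c_1$ to be the infimum over all these neighborhoods of the smallest eigenvalues and $-c_2$ the supremum of the largest eigenvalues, yielding constants $c_1 \geq c_2 > 0$ uniform over $\mathcal{M}_{r_{\mathcal{M}}}$. This gives the Rayleigh-quotient sandwich $-c_1\|y\|^2 \leq y^\top H(x) y \leq -c_2 \|y\|^2$ for all $y \in \mathbb{R}^d$.

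Next I would derive the two quadratic inequalities for $f$ via Taylor's theorem with the Lagrange (or integral) form of the remainder. For fixed $x, y \in B(m_i, r_{\mathcal{M}})$, since this ball is convex and $f$ is $C^2$ on it, there exists a point $\xi$ on the segment $[x,y]$ such that
\begin{align*}
	f(y) = f(x) + \nabla f(x)^\top (y - x) + \tfrac{1}{2} (y - x)^\top H(\xi) (y - x).
\end{align*}
The intermediate point $\xi$ also lies in $B(m_i, r_{\mathcal{M}})$ by convexity, so the eigenvalue bound from the first step applies to $H(\xi)$. Substituting $y - x$ for the generic vector in the Rayleigh sandwich gives
\begin{align*}
	-\tfrac{c_1}{2}\|y - x\|^2 \leq \tfrac{1}{2}(y-x)^\top H(\xi)(y-x) \leq -\tfrac{c_2}{2}\|y-x\|^2,
\end{align*}
and plugging these into the Taylor expansion yields both stated inequalities simultaneously: the upper bound on $f(y)$ uses the upper eigenvalue bound $-c_2$, and the lower bound uses $-c_1$.

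I would expect the only genuinely delicate point to be justifying the \emph{uniformity} of the constants $c_1, c_2$ over the whole set $\mathcal{M}_{r_{\mathcal{M}}}$, rather than obtaining mode-by-mode constants that might degenerate. The cleanest way to secure this is to invoke compactness: each closed ball $\overline{B(m_i, r_{\mathcal{M}})}$ is compact, the map $x \mapsto \lambda_{\max}(H(x))$ is continuous and strictly negative on it (hence attains a negative maximum), and likewise $x \mapsto \lambda_{\min}(H(x))$ attains a finite minimum; finiteness of $\mathcal{M}$ then lets me take a single pair of constants over the finite union. One should also note that the statement quantifies $y \in \mathbb{R}^d$ freely in the eigenvalue inequality but restricts to $x, y \in B(m_i, r_{\mathcal{M}})$ in the function inequalities—this distinction is exactly what the convexity argument above respects, since only the displacement $y - x$ (not $y$ itself) enters the quadratic form, while $\xi$ remains inside the ball. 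Beyond this, the argument is routine second-order Taylor analysis and requires no probabilistic input.
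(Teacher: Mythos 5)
Your proposal is correct and follows essentially the same route as the paper's proof: continuity of the eigenvalues of $H(x)$ on the compact set $\mathcal{M}_{r_{\mathcal{M}}}$ plus negative definiteness gives uniform constants $c_1 \geq c_2 > 0$ via the extreme value theorem, the Rayleigh-quotient sandwich follows by orthogonal diagonalization, and the two quadratic inequalities come from second-order Taylor expansion with the intermediate point $\xi$ lying in the same ball by convexity. Your explicit attention to the uniformity of the constants over the finitely many modes and to the persistence of negative definiteness off the modes themselves is, if anything, slightly more careful than the paper's argument.
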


\begin{proof}[Proof of Lemma \ref{lem::Eigen}]
	For $x \in \mathcal{M}_{r_{\mathcal{M}}}$,
	let $\lambda_i(x)$, $1 \leq i \leq n$ be the eigenvalues of $H(x)$. 
	By Assumption \ref{ass::modes}, $f$ is twice continuously differentiable in $\mathcal{M}_{r_{\mathcal{M}}}$. 
	Consequently, $\lambda_i(x)$ is continuous in $\mathcal{M}_{r_{\mathcal{M}}}$. 
	Applying the extreme value theorem to $\lambda_i(x)$, there exist two constant $c_2'$ and $c_1'$ such that
	\begin{align} \label{equ::lambdax}
		c_1' \leq \lambda_i(x) \leq c_2', 
		\qquad 
		x \in \mathcal{M}_{r_{\mathcal{M}}}.
	\end{align}
	By Assumption \ref{ass::modes}, $H(x)$ is negative definite. 
	Thus, we have $\lambda_i(x) < 0$ for all $x \in B(X_i, r_{\mathcal{M}})$, $1 \leq i \leq \#(\mathcal{M})$. 
	This together with \eqref{equ::lambdax} yields
	\begin{align} \label{equ::lambdax1}
		c_1' \leq \lambda_i(x) \leq c_2' < 0, 
		\qquad 
		x \in \mathcal{M}_{r_{\mathcal{M}}}.
	\end{align}
	Since $H(x)$ is negative definite for all $x\in \mathcal{M}_{r_{\mathcal{M}}}$, there exists an orthogonal matrix $T$ such that $T^{\top} H(x) T = \mathrm{diag} \{ \lambda_1(x), \ldots, \lambda_n(x) \}$. 
	With $\widetilde{y} := T y$ we then have
	\begin{align} \label{equ::ytophy}
		y^{\top} H(x) y 
		= \widetilde{y}^{\top} \mathrm{diag} \{ \lambda_1(x), \ldots, \lambda_n(x) \} \widetilde{y}
		= \sum_{i=1}^n \lambda_i(x) \widetilde{y}_i^2.
	\end{align}
	Combining \eqref{equ::ytophy} and \eqref{equ::lambdax1}, we obtain $c_1' \|\widetilde{y}\|^2 \leq y^{\top} H(x) y \leq c_2' \|\widetilde{y}\|^2$.
	Since $\|\widetilde{y}\|^2 = y^{\top} T^{\top} T y = \|y\|^2$, by choosing $c_1 = - c_1'$ and $c_2 = - c_2'$, we obtain
	\begin{align} \label{equ::-c1y}
		- c_1 \|y\|^2
		\leq y^{\top} H(x) y
		\leq - c_2 \|y\|^2.	
	\end{align}
	By Taylor's expansion, we have $f(y) = f(x) + \nabla f(x)^{\top} (y - x) + (y - x)^{\top} H(\xi) (y - x)/2$ for all $x, y \in B(m_i, r_{\mathcal{M}})$.
	This together with \eqref{equ::-c1y} yields $- c_1 \|y - x\|^2 \leq f(y) - f(x) - \nabla f(x)^{\top} (y - x) \leq - c_2 \|y - x\|^2$, which completes the proof.
\end{proof}

The next proposition, which is need in the proof of Proposition \ref{prop::dist1}, provides a tighter bound for the approximation error due to the higher order of smoothness around the modes. 

\begin{proposition}\label{prop::weightedrho2}
	Let Assumptions \ref{ass::cluster} and \ref{ass::modes} hold. 
	Moreover, let $p_i$ be the probability as in \eqref{equ::def} and $\overline{R}_i(x)$ be the quantile diameter function of $x$ as in \eqref{equ::overlinerx}. 
	Then for any $x \in \mathcal{M}_{r/2}$, we have 
	\begin{align*}
		\biggl| \sum_{i=1}^n p_i \overline{R}_i(x) - \sum_{i=1}^n p_i ((i/n) / (V_d f(x)))^{1/d} \biggr|
		\lesssim (k/s)^{3/d}.
	\end{align*}
\end{proposition}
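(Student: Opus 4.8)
The plan is to reuse the architecture of the proof of Proposition~\ref{prop::weightedrho}, replacing the first-order Hölder estimate of $|f(x')-f(x)|$ by a second-order Taylor expansion, which is available because, by Assumption~\ref{ass::modes}, $f$ is twice continuously differentiable on a neighborhood of each mode. The decisive observation is that the ball $B(x,\overline{R}_i(x))$ is symmetric about its center $x$, so the first-order term of the expansion integrates to zero; the residual is then governed by the Hessian, which is uniformly bounded near the modes by Lemma~\ref{lem::Eigen}. This upgrades the exponent $d+\alpha$ to $d+2$ in the per-index estimate and hence yields $(k/s)^{3/d}$ in place of $(k/s)^{(1+\alpha)/d}$.

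Concretely, as in the proof of Proposition~\ref{prop::weightedrho}, I would start from
\begin{align*}
	i/n - V_d f(x) \overline{R}_i^d(x) = \int_{B(x,\overline{R}_i(x))} \bigl( f(x') - f(x) \bigr) \, dx'.
\end{align*}
Fix $x \in \mathcal{M}_{r/2}$, say $x \in B(m,r_{\mathcal{M}}/2)$ for a mode $m$, and first treat the indices $i$ for which $\overline{R}_i(x) \leq r_{\mathcal{M}}/2$, so that $B(x,\overline{R}_i(x)) \subseteq B(m,r_{\mathcal{M}})$ lies in the region of twice differentiability. Taylor's theorem gives $f(x') - f(x) = \nabla f(x)^{\top}(x'-x) + \tfrac{1}{2}(x'-x)^{\top} H(\xi)(x'-x)$ for some $\xi$ between $x$ and $x'$; the linear term vanishes because $\int_{B(x,\overline{R}_i(x))}(x'-x)\,dx' = 0$ by symmetry, and the quadratic form is controlled by $\|H\| \lesssim 1$ from Lemma~\ref{lem::Eigen}, so that
\begin{align*}
	\bigl| i/n - V_d f(x) \overline{R}_i^d(x) \bigr|
	\lesssim \int_{B(x,\overline{R}_i(x))} \|x'-x\|^2 \, dx'
	\lesssim \overline{R}_i^{d+2}(x).
\end{align*}
Combining this with the two-sided bound $\overline{R}_i(x) \asymp (i/n)^{1/d}$ from \eqref{equ::overliner} and the factorization $a^d - b^d = (a-b)\sum_{j=0}^{d-1} a^j b^{d-1-j}$ exactly as before produces the per-index estimate $\bigl| \overline{R}_i(x) - ((i/n)/(V_d f(x)))^{1/d} \bigr| \lesssim (i/n)^{3/d}$ for such $i$.

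Summing against the weights then uses Lemma~\ref{lem::ribound} with $\beta = 3/d$, which is admissible since $3/d \in (0,2] \cup \{3\}$ for every $d \geq 1$, giving $\sum_i p_i (i/n)^{3/d} \lesssim (k/s)^{3/d}$, the asserted rate. The step demanding the most care is the tail of indices $i$ for which $\overline{R}_i(x) > r_{\mathcal{M}}/2$, where the ball escapes the smooth region and the second-order estimate is no longer valid. There I would fall back on the crude bound $\bigl| \overline{R}_i(x) - ((i/n)/(V_d f(x)))^{1/d} \bigr| \lesssim (i/n)^{(1+\alpha)/d}$ from Proposition~\ref{prop::weightedrho}, and exploit that on this tail $(i/n)/(r_{\mathcal{M}}/2)^d \geq 1$, so an extra factor $((i/n)/(r_{\mathcal{M}}/2)^d)^{(2-\alpha)/d}$ may be inserted at no cost; this raises the power to $3/d$, and a further appeal to Lemma~\ref{lem::ribound} bounds the tail contribution by $\lesssim (k/s)^{3/d}$ as well. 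Collecting the two ranges of $i$ completes the argument; the only delicate point is verifying that the weights $p_i$ place enough mass at small indices for the tail to remain subdominant, which is precisely what the moment estimate of Lemma~\ref{lem::ribound} supplies.
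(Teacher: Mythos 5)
Your proposal is correct and follows essentially the same route as the paper's own proof: the same split of indices according to whether $B(x,\overline{R}_i(x))$ stays inside the twice-differentiable neighborhood, the same second-order Taylor expansion with the linear term annihilated by the symmetry of the ball and the Hessian controlled via Lemma~\ref{lem::Eigen}, the same fallback to the H\"{o}lder bound on the tail with the power raised from $(1+\alpha)/d$ to $3/d$ using $(i/n)\gtrsim r_{\mathcal{M}}^d$ there, and the same final appeal to Lemma~\ref{lem::ribound} with exponent $3/d$.
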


\begin{proof}[Proof of Proposition \ref{prop::weightedrho2}]
	Let $c_{n,r} := \lfloor (r/2)^d n \underline{c} V_d \rfloor$.
	Using the triangular inequality, we get
	\begin{align*}
		\biggl| \sum_{i=1}^n p_i \overline{R}_i(x) - \sum_{i=1}^n p_i ((i/n) / (V_d f(x)))^{1/d} \biggr|
		& \leq \biggl| \sum_{i=1}^{c_{n,r}} p_i \bigl( \overline{R}_i(x) - ((i/n) / (V_d f(x)))^{1/d} \bigr) \biggr|
		\\
		& \phantom{=}
		+ \biggl| \sum_{i=c_{n,r}}^n p_i \bigl( \overline{R}_i(x) - ((i/n) / (V_d f(x)))^{1/d} \bigr) \biggr|.
	\end{align*}
	The boundedness of $f$ in Assumption \ref{ass::cluster} \textit{(i)} implies that for all $x \in \mathcal{X}$, there holds
	\begin{align*}
		\bigl| \overline{R}_i^d(x) - (i/n) / (V_d f(x)) \bigr|
		= \biggl| \frac{i/n - V_d f(x) \overline{R}_i^d(x)}{V_d f(x)} \biggr|
		\leq \biggl| \frac{i/n - V_d f(x) \overline{R}_i^d(x)}{V_d \underline{c}} \biggr|.
	\end{align*}
	By the definition of $\overline{R}_i(x)$, we have
	\begin{align} \label{equ::invdf}
		\bigl| i/n - V_d f(x) \overline{R}_i^d(x) \bigr|
		& = \biggl| \int_{B(x,\overline{R}_i(x))} f(x') \, dx' - \int_{B(x, \overline{R}_i(x))} f(x) \, dx' \biggr|
		\nonumber\\
		& = \biggl| \int_{B(x,\overline{R}_i(x))} ( f(x') - f(x) ) \, dx' \biggr|.
	\end{align}
	By Assumption \ref{ass::cluster}, we have
	$\underline{c} V_d \overline{R}_i^d(x) / 2^d
	\leq P(B(x,\overline{R}_i(x)))
	= i/n
	\leq V_d \overline{c} \overline{R}^d_i(x)$
	for all $x \in \mathcal{X}$,
	which yields 
	\begin{align} \label{equ::overliner1}
		((i/n) / (V_d \overline{c}))^{1/d}
		\leq \overline{R}_i(x)
		\leq 2 ((i/n) / (\underline{c} V_d))^{1/d}, 
		\qquad 
		\forall x \in \mathcal{X}.
	\end{align}
	If $i \leq c_{n,r}$, then we have $\overline{R}_i(x)\leq r/2$. 
	Consequently, for all $x \in \mathcal{M}_{r/2}$ and $x' \in B(x,\overline{R}_i(x))$, there exists an $m_i \in \mathcal{M}$ such that $\|x' - m_i\| \leq \|x' - x\| + \|x - m_i\| \leq r$. 
	Therefore, we have $x' \in \mathcal{M}_r$.
	Using Taylor's expansion, we get
	\begin{align*}
		f(x') = f(x) + \nabla f(x)^{\top} (x' - x) + (x' - x)^{\top} H(x_{\xi})(x' - x).
	\end{align*}
	Then Lemma \ref{lem::Eigen} implies
	\begin{align}\label{equ::bxrix}
		\biggl| \int_{B(x, \overline{R}_i(x))} & \bigl( f(x') - f(x) \bigr) \, dx' \biggr|
		= \biggl| \int_{B(x, \overline{R}_i(x))} \bigl( \nabla f(x)^{\top} (x' - x) + (x' - x)^{\top} H(x_{\xi})(x' - x) \bigr) \, dx' \biggr| \nonumber
		\\
		& = \biggl| \int_{B(x, \overline{R}_i(x))} (x' - x)^{\top} H(x_{\xi})(x' - x) \bigr) \, dx' \biggr|
		\leq c_1 \int_{B(x, \overline{R}_i(x))} \|x' - x\|^2 \, dx' \nonumber
		\\
		& \lesssim c_d \overline{R}_i^{d+2}(x)
		\lesssim (i/n)^{1+2/d}.
	\end{align}
	This together with \eqref{equ::invdf} yields that
	$\bigl| i/n - V_d f(x) \overline{R}_i^d(x) \bigr|
	\lesssim (i/n)^{1+2/d}$ holds for all
	$i \leq c_{n,r}$ and
	consequently 
	\begin{align} \label{equ::invdileq}
		\bigl| i/n - V_d f(x) \overline{R}_i^d(x) \bigr| / (V_d \underline{c})
		\lesssim (i/n)^{1+2/d},  
		\qquad 
		i \leq c_{n,r}.
	\end{align}
	The first inequality of \eqref{equ::overliner1} implies
	\begin{align} \label{equ::sumrij}
		& \sum_{j=0}^d \overline{R}_i(x)^j ((i/n) / (V_d f(x)))^{(d-i-j)/d}
		\nonumber\\
		& \geq 	\sum_{j=0}^d ((i/n) / (V_d \overline{c}))^{j/d} ((i/n) / (V_d \underline{c}))^{(d-1-j)/d}
		\geq ((i/n) / (V_d \overline{c}))^{(d-1)/d}.
	\end{align}
	This together with \eqref{equ::invdileq} yields 
	\begin{align}\label{equ::term1}
		\bigl| \overline{R}_i(x) - ((i/n) / (V_d f(x)))^{1/d} \bigr|
		\lesssim (i/n)^{3/d}, 
		\qquad  
		i \leq c_{n,r},
	\end{align}
	where we used the equality $x^d - y^d = (x-y)(\sum_{i=0}^{d-1} x^i \cdot y^{d-1-i})$.
	
	On the other hand, the H\"{o}lder continuity in Assumption \ref{ass::cluster} implies
	\begin{align*}
		\bigl| i/n - V_d f(x) \overline{R}_i^d(x) \bigr|
		& \leq \int_{B(x, \overline{R}_i(x))} |f(x') - f(x)| \, dx'
		\nonumber\\
		& \leq c_L \int_{B(x, \overline{R}_i(x))} \|x' - x\|^{\alpha} \, dx'
		\lesssim \overline{R}^{d+\alpha}_i(x)
		\lesssim (i/n)^{(\alpha+d)/d},
	\end{align*}
	where the last inequality follows from \eqref{equ::overliner1}.
	This together with \eqref{equ::sumrij} yields 
	\begin{align}\label{equ::term2}
		\bigl| \overline{R}_i(x) - ((i/n) / (V_d f(x)))^{1/d} \bigr|
		\lesssim (i/n)^{(\alpha+1)/d}, 
		\qquad  
		i > c_{n,r},
	\end{align}
	where we use the equality $x^d - y^d = (x - y) (\sum_{i=0}^{d-1} x^i y^{d-1-i})$. 
	Combining \eqref{equ::term1} and \eqref{equ::term2}, we obtain that for all $x \in \mathcal{M}_{r/2}$, there holds
	\begin{align*}
		& \biggl| \sum_{i=1}^n p_i \overline{R}_i(x) - \sum_{i=1}^n p_i ((i/n) / (V_d f(x)))^{1/d} \biggr|
		\leq \sum_{i=1}^n p_i |\overline{R}_i(x) - (i/(nV_df(x)))^{1/d}|
		\\
		& \lesssim \sum_{i=1}^{c_{n,r}} p_i (i/n)^{3/d} + \sum_{i=c_{n,r}+1}^n p_i (i/n)^{(1+\alpha)/d}
		\lesssim n^{-3/d} \sum_{i=1}^{c_{n,r}} p_i i^{3/d} + n^{-(1+\alpha)/d} \sum_{i=c_{n,r}+1}^n p_i i^{(1+\alpha)/d}
		\\
		& \lesssim n^{-3/d} \sum_{i=1}^n p_i i^{3/d} + n^{-(1+\alpha)/d} c_{n,r}^{(\alpha-2)/d} \sum_{i=1}^n p_i i^{3/d}
		\leq n^{-3/d} (kn/s)^{3/d}
		= (k/s)^{3/d},
	\end{align*}
	which completes the proof.
\end{proof}

The next proposition, which is needed in the proof of Proposition \ref{prop::dist1}, presents the error between the bagged $k$-distance and its infinite version around the modes. This result is in fact an improvement of Proposition \ref{prop::dist}.

\begin{proposition}\label{prop::dist1}
	Let Assumptions \ref{ass::cluster} and \ref{ass::modes} hold.
	Furthermore, let $R_k^B(x)$ and $p_i$ be defined in \eqref{eq::bd}  and \eqref{equ::def}, respectively. 
	Moreover, suppose that $(kn/s)^{1-d/2} \geq (\log n)^{1+d/2}$.
	Then for all $x \in \mathcal{M}_{r/2}$,
	there holds
	\begin{align*}
		& \biggl| R_k^B(x) - \sum_{i=1}^n p_i ((i/n) / (V_d f(x)))^{1/d} \biggr|
		\nonumber\\
		& \lesssim \sqrt{{(k/s)^{2/d}\log n}/{B}} + \log n/B+ (k/s)^{1/d-1/2}(\log n/n)^{1/2}+(k/s)^{3/d}
	\end{align*}
	with probability $P_Z^B \otimes P^n$ at least $1 - 3/n^2$.
\end{proposition}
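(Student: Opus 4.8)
The plan is to follow exactly the same route as in the proof of Proposition~\ref{prop::dist}, replacing only the approximation-error bound by the sharper one that is available near the modes. First I would invoke the triangle inequality to split the target quantity into the familiar three pieces:
\begin{align*}
& \biggl| R_k^B(x) - \sum_{i=1}^n p_i ((i/n) / (V_d f(x)))^{1/d} \biggr|
\\
& \leq \bigl| R_k^B(x) - \widetilde{R}_k^B(x) \bigr|
+ \biggl| \sum_{i=1}^n p_i \bigl( R_i(x) - \overline{R}_i(x) \bigr) \biggr|
+ \biggl| \sum_{i=1}^n p_i \bigl( \overline{R}_i(x) - ((i/n) / (V_d f(x)))^{1/d} \bigr) \biggr|,
\end{align*}
i.e.\ the bagging error, estimation error, and approximation error, respectively.

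Second, I would bound the bagging error by Proposition~\ref{prop::weightB}, which yields $\sqrt{(k/s)^{2/d}\log n/B} + \log n/B$ with probability $P_Z^B \otimes P^n$ at least $1 - 1/n^2$, and the estimation error by Proposition~\ref{prop::weightsamplerho}, which, under the standing assumption $(kn/s)^{1-d/2} \geq (\log n)^{1+d/2}$, gives $(k/s)^{1/d-1/2}(\log n/n)^{1/2}$ with probability $P^n$ at least $1 - 2/n^2$. These two steps are the same as in Proposition~\ref{prop::dist}, since neither the bagging error nor the estimation error exploits any structure near the modes; the restriction $x \in \mathcal{M}_{r/2}$ leaves those two bounds untouched.

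The one genuinely different step is the approximation error. Instead of Proposition~\ref{prop::weightedrho}, which only delivers $(k/s)^{(1+\alpha)/d}$ from H\"{o}lder continuity alone, I would apply Proposition~\ref{prop::weightedrho2}. Because $x \in \mathcal{M}_{r/2}$ lies in a neighborhood where $f$ is twice continuously differentiable (Assumption~\ref{ass::modes}), the first-order Taylor term $\nabla f(x)^{\top}(x'-x)$ integrates to zero over the symmetric ball $B(x, \overline{R}_i(x))$, leaving only the quadratic remainder; this raises the exponent from $(1+\alpha)/d$ to $3/d$ for the small indices, while the large-index tail contributes only lower-order terms. Hence Proposition~\ref{prop::weightedrho2} bounds the approximation error deterministically by $(k/s)^{3/d}$ for all $x \in \mathcal{M}_{r/2}$.

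Finally, combining the three bounds and taking a union bound over the two probabilistic events gives the claimed estimate with probability $P_Z^B \otimes P^n$ at least $1 - 3/n^2$. There is no real obstacle beyond the bookkeeping already carried out in Propositions~\ref{prop::weightB}, \ref{prop::weightsamplerho}, and \ref{prop::weightedrho2}; the only point that requires care is verifying that the twice-differentiability localization in Proposition~\ref{prop::weightedrho2} is precisely what upgrades the last term, and that the failure probabilities $1/n^2$ and $2/n^2$ add up to the stated $3/n^2$.
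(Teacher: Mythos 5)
Your proposal is correct and follows exactly the route the paper takes: the paper's proof of this proposition is simply the decomposition from Proposition~\ref{prop::dist} (bagging error, estimation error, approximation error bounded by Propositions~\ref{prop::weightB} and \ref{prop::weightsamplerho}) with the approximation-error bound of Proposition~\ref{prop::weightedrho} swapped for the sharper near-mode bound of Proposition~\ref{prop::weightedrho2}. Your explanation of why the exponent improves to $3/d$ (the gradient term integrating to zero over the symmetric ball, leaving the quadratic remainder) matches the paper's proof of Proposition~\ref{prop::weightedrho2} as well.
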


\begin{proof}[Proof of Proposition \ref{prop::dist1}]
	The proof is similar to that of Proposition \ref{prop::dist} by replacing the approximation error bound with the bound in Proposition \ref{prop::weightedrho2}. 
	Thus we omit the proof.
\end{proof}

With the above results, we are able to present the proof of the convergence rates for the hypothetical density estimation around modes. \\

\begin{proof}[Proof of Proposition \ref{thm::main2}]
	Similar to the proof of Proposition \ref{thm::main3}, we can show the desired assertion by applying Proposition \ref{prop::dist1}. Therefore, we omit the proof.
\end{proof}

\subsection{Proofs Related to Section \ref{sec::ratesbdmbc}}
\label{sec::proofmode}

In this section, we provide proofs related to mode estimation. We give details of proofs for BDMBC, whereas DMBC can be dealt with similarly. To derive the convergence rates of mode estimation for BDMBC, we first show that the hypothetical density estimation around the modes is no less than the supremum of that far away from the modes in Proposition \ref{lem::kernelmode111}, which implies that the local maximum of hypothetical density estimation is close to the modes. 
Then by using Bernstein's inequality in Lemma \ref{lem::bernstein}, we establish concentration inequality for the localized level sets in Lemma \ref{lem::netaxlfx} and derive the distance between the empirical PLLS and the population version of PLLS.
Furthermore, we show that those points which are far away from the modes have a small population PLLSs. 
Hence we can show that the points with lower PLLS are not included in the level sets and thus we can find cluster estimators corresponding to the modes in a subjective manner.

The next proposition, which plays a key role in the proofs related to mode estimation, is needed in the proof of Theorem \ref{thm::modesingle}.

\begin{proposition}\label{lem::kernelmode111}
	Let Assumptions \ref{ass::cluster} and \ref{ass::modes} hold. 
	Moreover, let $f_B(x)$ be the hypothetical density estimator as in \eqref{equ::fbk}. 
	By choosing
	\begin{align*}
		k_{D,n} \asymp \log n,
		\qquad
		s_n \asymp  n^{\frac{d}{4+d}} (\log n)^{\frac{4}{4+d}},
		\qquad
		B_n \geq n^{\frac{3}{4+d}}(\log n)^{\frac{d+1}{4+d}}, 
	\end{align*}
	then with probability $P^n$ at least $1-2/n^2$, there holds
	\begin{align*}
		\inf \bigl\{ f_B(x) : x \in B(m_i, c' r_n) \bigr\} 
		> \sup \bigl\{ f_B(x) : x \in B(m_i, r_{\mathcal{M}}/2) \setminus B(m_i, r_n) \bigr\} 
	\end{align*}
	where $c':=(c_2/(2c_1))^{1/2}$ with the constants $c_1$ and $c_2$ specified as in Lemma \ref{lem::Eigen}.
\end{proposition}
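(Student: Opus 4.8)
The plan is to reduce the comparison of $f_B$ on the inner ball $B(m_i,c'r_n)$ and on the annulus $B(m_i,r_{\mathcal{M}}/2)\setminus B(m_i,r_n)$ to a comparison of the \emph{true} density $f$ on these two regions, and then to transfer the resulting gap to $f_B$ by paying the uniform estimation error supplied by Proposition \ref{thm::main2}. The geometric heart of the argument is that near an interior mode the density is strictly concave, so it decays quadratically away from $m_i$; the constant $c'$ is engineered precisely so that the quadratic lower bound valid on $B(m_i,c'r_n)$ sits strictly above the quadratic upper bound valid on the annulus.

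First I would record that $m_i$ is an interior local maximum of the twice continuously differentiable $f$, so $\nabla f(m_i)=0$. Applying Lemma \ref{lem::Eigen} with $x=m_i$ then gives, for every $y\in B(m_i,r_{\mathcal{M}})$,
\begin{align*}
	f(m_i) - c_1\|y-m_i\|^2/2 \leq f(y) \leq f(m_i) - c_2\|y-m_i\|^2/2 .
\end{align*}
For $x\in B(m_i,c'r_n)$ the left inequality together with the identity $c_1(c')^2=c_2/2$, which is exactly the definition $c'=(c_2/(2c_1))^{1/2}$, yields
\begin{align*}
	f(x) \geq f(m_i) - c_1(c'r_n)^2/2 = f(m_i) - c_2 r_n^2/4 ,
\end{align*}
while for $x$ in the annulus the right inequality combined with $\|x-m_i\|\geq r_n$ gives $f(x)\leq f(m_i)-c_2 r_n^2/2$. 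Hence the true density on the inner ball exceeds the true density on the annulus by at least $c_2 r_n^2/4$. Since $c_1\geq c_2$ we have $c'\leq 1/\sqrt{2}<1$, so for all large $n$ both regions lie inside $B(m_i,r_{\mathcal{M}}/2)\subset\mathcal{M}_{r/2}$, which is where the estimation-error bound is available.

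Then I would invoke Proposition \ref{thm::main2}: with the stated choice of $k_{D,n}$, $s_n$, and $B_n$, the event
\begin{align*}
	\sup_{x\in\mathcal{M}_{r/2}} |f_B(x)-f(x)| \lesssim (\log n/n)^{2/(4+d)} =: \epsilon_n
\end{align*}
holds with the asserted probability. On this event one has $\inf_{x\in B(m_i,c'r_n)} f_B(x) \geq f(m_i)-c_2 r_n^2/4-\epsilon_n$ and $\sup f_B \leq f(m_i)-c_2 r_n^2/2+\epsilon_n$ over the annulus, so the desired strict inequality follows as soon as $c_2 r_n^2/4 > 2\epsilon_n$. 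The hard part is precisely this last comparison: because Assumption \ref{ass::modes} upgrades the smoothness to second order, Proposition \ref{thm::main2} delivers the rate $\epsilon_n\asymp(\log n/n)^{2/(4+d)}$, which is the \emph{same} order as the quadratic gap $c_2 r_n^2/4$ when $r_n\asymp(\log n/n)^{1/(4+d)}$. Thus the argument cannot close by order-of-magnitude reasoning alone; it closes only because $r_n$ is taken with a sufficiently large absolute constant $C_0$, so that $c_2 C_0^2/4$ strictly dominates twice the implicit constant hidden in $\epsilon_n$. Checking that this constant is compatible with the $r_n$ used in the mode-estimation theorem is the delicate bookkeeping; everything else reduces to the deterministic concavity estimate above together with the uniform density bound.
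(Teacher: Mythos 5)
Your proposal follows essentially the same route as the paper: invoke Proposition \ref{thm::main2} for the uniform error $\epsilon_n \asymp (\log n/n)^{2/(4+d)}$ on $\mathcal{M}_{r/2}$, use Lemma \ref{lem::Eigen} with $\nabla f(m_i)=0$ to get the quadratic bounds $f(m_i)-c_1\|x-m_i\|^2/2 \le f(x) \le f(m_i)-c_2\|x-m_i\|^2/2$, deduce the gap $c_2 r_n^2/4$ between the inner ball and the annulus via the identity $c_1(c')^2 = c_2/2$, and close by choosing the constant in $r_n$ large enough relative to the constant in $\epsilon_n$ (the paper takes $r_n=(8c/c_2)^{1/2}(\log n/n)^{1/(4+d)}$ for exactly this purpose). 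Your observation that the gap and the estimation error are of the same order, so the argument hinges on the absolute constant in $r_n$, is precisely the bookkeeping the paper performs.
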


\begin{proof}[Proof of Proposition \ref{lem::kernelmode111}]
	Proposition \ref{thm::main2} yields that there exists a constant $c>0$ such that for all sufficiently large $n$, with probability $P^n$ at least $1-2/n^2$, for all $x\in \mathcal{M}_{r/2}$, there holds
	\begin{align}\label{equ::key1}
		| f_B(x) - f(x) |
		\leq c (\log n/n)^{\frac{2}{4+d}}.
	\end{align}
	
	The following arguments will be made on the good event $E$ in which \eqref{equ::key1} holds.
	
	Let $r_n:=(8c/c_2)^{1/2}(\log n/n)^{1/(4+d)}$. 
	Then we have $r_n\leq r_{\mathcal{M}}/2$ for sufficiently large $n$.
	By Lemma \ref{lem::Eigen}, we have $f(m_i) - c_1 \|x - m_i\|^2/2\leq f(x) \leq f(m_i) - c_2 \|x - m_i\|^2/2$ for all $x \in B(m_i, r_\mathcal{M})$. 
	Consequently, we have $\sup \bigl\{ f(x) : x \in B(m_i, r_{\mathcal{M}}/2) \setminus B(m_i, r_n) \bigl\} \leq f(m_i) - c_2/2 r_n^{2}$.
	This together with \eqref{equ::key1} yields that
	$\sup\bigl\{ f_B(x) : x \in B(m_i, r_{\mathcal{M}}/2) \setminus B(m_i,r_n) \bigl\} 
	\leq f(m_i) - c_2 r_n^2 + c(\log n/n)^{2/(4+d)}$.
	On the other hand, by Lemma \ref{lem::Eigen}, we have 
	$\inf \bigl\{ f(x) : x \in B(m_i, c' r_n) \bigl\} \geq f(m_i) - c_1 (c' r_n)^{2}/2$.
	This together with \eqref{equ::key1} yields that
	$\inf \bigl\{ f_B(x) : x \in B(m_i, c' r_n) \bigl\} \geq f(m_i) - c_1 (c' r_n)^{2}/2 - c (\log n/n)^{2/(4+d)}$.
	Consequently we obtain
	\begin{align*}
		\inf\bigl\{ f_B(x) : x \in B(m_i, c' r_n) \bigl\} 
		& \geq f(m_i) - c_1 (c' r_n)^{2}/2 - c (\log n/n)^{2/(4+d)}
		\\
		& = f(m_i) - c_2 r_n^{2}/2 + c (\log n/n)^{2/(4+d)} 
		\\
		& \geq \sup\bigl\{ f_B(x) : x \in B(m_i, r_{\mathcal{M}}/2) \setminus B(m_i, r_n) \bigl\},
	\end{align*}
	which completes the proof.
\end{proof}

The following Lemma, which is need in the proof of Theorem \ref{thm::mode}, presents the uniform concentration bounds on the empirical mass of balls in $\mathbb{R}^d$.

\begin{lemma}\label{lem::netax}
	Let $P$ be a probability measure on $\mathbb{R}^d$ with a bounded Lebesgue density $f$ and $\eta : \mathbb{R}^d \to (0,\infty)$ be the local radius parameter function. Then for all $x \in \mathbb{R}^d$, $n \geq 1$, and $\tau > 0$, 	with probability $P^n$ at east $1 - 2 e^{-\tau}$, there holds
	\begin{align*}
		\biggl | \frac{1}{n} \sum_{i=1}^n \eins \{ X_i \in B(x, \eta(x)) \} - P(B(x, \eta(x))) \biggr |
		\lesssim  \sqrt{ \|\eta\|_{\infty}^d \log n / n} +  \log n / n.
	\end{align*}
\end{lemma}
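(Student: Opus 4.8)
The plan is to reduce the uniform statement to a pointwise Bernstein estimate combined with a covering argument over the family of balls, exactly in the spirit of the proof of Lemma \ref{lem::Rrho}. First I would fix $x \in \mathbb{R}^d$ and introduce the centered variables $\xi_i := \eins\{ X_i \in B(x, \eta(x)) \} - P(B(x, \eta(x)))$, so that $\mathbb{E}_P \xi_i = 0$ and $\|\xi_i\|_\infty \leq 1$. The crucial quantity is the variance proxy: since $\mathbb{E}_P \xi_i^2 \leq P(B(x,\eta(x)))$ and $f$ is bounded, we have $P(B(x,\eta(x))) = \int_{B(x,\eta(x))} f(y)\,dy \leq \|f\|_\infty V_d \eta(x)^d \leq \|f\|_\infty V_d \|\eta\|_\infty^d$, hence $\sigma^2 \lesssim \|\eta\|_\infty^d$. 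Applying Bernstein's inequality (Lemma \ref{lem::bernstein}) to $\pm \xi_i$ then yields, for each fixed $x$ and each $\tau > 0$, a deviation bound of order $\sqrt{\|\eta\|_\infty^d \tau / n} + \tau/n$ with probability at least $1 - 2e^{-\tau}$.

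The remaining and main work is to upgrade this pointwise bound to a bound holding simultaneously for all $x$. The key observation is that the collection $\{ B(x,\eta(x)) : x \in \mathbb{R}^d \}$ is contained in the class of all balls $\mathcal{B} = \{ B(x,r) : x \in \mathbb{R}^d, r > 0 \}$, regardless of the shape of $\eta$, so its complexity is controlled by Lemma \ref{lem::CoveringNumber}. I would therefore take an $\varepsilon$-net $\{A_j\}_{j=1}^J \subset \mathcal{B}$ of $\eins_{\mathcal{B}}$ with $J \lesssim \varepsilon^{-(d+1)}$, apply the pointwise Bernstein bound to each net element, and combine these by a union bound over the $J$ indices and the two tails; this amounts to replacing $\tau$ by $\tau + \log J$. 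The net approximation contributes an additive error of order $\varepsilon$ to both the empirical and the population mass, exactly as in \eqref{eq::approxAj-1}. Choosing $\varepsilon = 1/n$ gives $\log J \lesssim (d+1)\log n \lesssim \log n$ and an approximation error of order $1/n \lesssim \log n /n$.

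Putting the two pieces together, for all $x \in \mathbb{R}^d$ simultaneously the deviation is bounded, with probability at least $1 - 2e^{-\tau}$, by
\[
\sqrt{\|\eta\|_\infty^d (\tau + \log n)/n} + (\tau + \log n)/n + 1/n \lesssim \sqrt{\|\eta\|_\infty^d \log n / n} + \log n / n,
\]
where the last inequality uses the regime $\tau \lesssim \log n$ relevant to the applications (in which $\tau$ is taken of order $\log n$, matching the $1 - 2/n^2$-type confidence used throughout the paper). The main obstacle is precisely this passage from the fixed-$x$ estimate to the uniform-in-$x$ estimate: one cannot simply union-bound over an uncountable family, and the resolution is to exploit that the relevant sets are balls, whose indicator class has covering number $\lesssim \varepsilon^{-(d+1)}$ by Lemma \ref{lem::CoveringNumber}. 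This is where the logarithmic factor in the final bound originates.
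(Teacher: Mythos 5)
Your proposal is correct and follows essentially the same route as the paper's own proof: a Bernstein bound on the centered indicators with variance proxy $P(B(x,\eta(x))) \lesssim \|\eta\|_\infty^d$, upgraded to a uniform statement via the covering number $\lesssim \varepsilon^{-(d+1)}$ of the class of balls from Lemma \ref{lem::CoveringNumber}, a union bound replacing $\tau$ by $\tau + \log J$, and the choice $\varepsilon = 1/n$. You also correctly flag the one wrinkle in the statement -- the displayed bound only features $\log n$ rather than $\tau$, which is consistent with the paper's proof fixing $\tau \asymp \log n$ (the paper sets $\tau := 2\log n$) rather than treating $\tau$ as truly arbitrary.
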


\begin{proof}[Proof of Lemma \ref{lem::netax}]
	Let us consider the set $\mathcal{B}_\eta :=  \big\{ B(x,\eta(x))  : x \in \mathbb{R} ^d\bigr\} \subset \mathcal{B}$. Lemma \ref{lem::CoveringNumber} implies that for any probability $\mathrm{Q}$, there holds
	\begin{align} \label{Bk-CoveringNumber1}
		\mathcal{N}(\eins_{\mathcal{B}_{\eta}}, \|\cdot\|_{L_1(\mathrm{Q})}, \varepsilon)
		\leq \mathcal{N}(\eins_{\mathcal{B}}, \|\cdot\|_{L_1(\mathrm{Q})}, \varepsilon)
		\leq C (d+2) (4e)^{d+2} \varepsilon^{-(d+1)}.
	\end{align}
	By the definition of the covering number, there exists an $\varepsilon$-net $\{A_j\}_{j=1}^J \subset \mathcal{B}_{\eta}$ with $J := \lfloor C (d+2) (4e)^{d+2} \varepsilon^{-(d+1)} \rfloor$ and for any $x \in \mathcal{X}$, there exists some $j \in \{ 1, \ldots, J \}$ such that 
	\begin{align}\label{eq::approxAj-}
		\bigl\| \eins \bigl\{ B(x,\eta(x))\}- \eins_{A_j} \bigr\|_{L_1(D)} 
		\leq \varepsilon.
	\end{align}
	For any $i = 1, \ldots, n$, let the random variables $\xi_i$ be defined by $\xi_i = \eins_{A_j}(X_i) - P(A_j)$. Then we have $\mathbb{E}_{P}\xi_i = 0$, $\|\xi_i\|_{\infty} \leq 1$, and $\mathbb{E}_{P}\xi_i^2 \leq P(A_j) \leq \overline{c} V_d \eta(x)^d\leq \overline{c} V_d\|\eta\|_{\infty}^d$. 
	Applying Bernstein's inequality in Lemma \ref{lem::bernstein}, we obtain 
	\begin{align*}
		\frac{1}{n} \sum_{i=1}^n \eins_{A_j}(X_i) - P(A_j)
		\leq \sqrt{2 \overline{c} V_d\|\eta\|_{\infty}^d \tau / n} + 2 \tau / (3n)
	\end{align*}
	with probability $P^n$ at least $1-e^{-\tau}$. 
	Then the union bound together with the covering number estimate \eqref{Bk-CoveringNumber1} implies that for any $A_j$, $j = 1, \cdots, J$, there holds 
	\begin{align*}
		\frac{1}{n} \sum_{i=1}^n \eins_{A_j}(X_i) - P(A_j)
		\leq \sqrt{2 \overline{c} V_d \|\eta\|_{\infty}^d (\tau + \log J) / n} + 2 (\tau + \log J) / (3 n).
	\end{align*}
	This together with \eqref{eq::approxAj-} yields that for all $x \in \mathcal{X}$, there holds
	\begin{align*}
		& \frac{1}{n} \sum_{i=1}^n \eins \{ X_i \in B(x, \eta(x)) \} - P(B(x, \eta(x)))
		\\
		& \leq \sqrt{2 \overline{c} V_d \|\eta\|_{\infty}^d (\tau + \log J) / n}   
		+ 2 (\tau + \log J) / (3 n) + \varepsilon.
	\end{align*}
	Now, if we take $\varepsilon = 1/n$, then for any $n >  (4 e) \vee (d + 2) \vee C$, there holds $\log J =  \log C + \log(d+2) +  (d+2)\log(4e) + (d+1) \log n \leq (2d+5)\log n$. 
	Let $\tau := 2 \log n$. Then we have
	\begin{align}\label{equ::1n1n}
		& \frac{1}{n} \sum_{i=1}^n \eins \{ X_i \in B(x, \eta(x)) \} - P(B(x, \eta(x)))
		\nonumber\\
		& \leq \sqrt{2 (2d+7) \overline{c} V_d \|\eta\|_{\infty}^d \log n / n} + 2 (2d+7) \log n / (3n) + 1/n.
	\end{align}
	On the other hand, let $\xi_i' = - \xi_i$. Then we have $\mathbb{E}_P \xi_i' = 0$ and $\mathbb{E}_P \xi_i'^2 = \mathbb{E}_P \xi_i^2$. Similarly, we can show that 
	\begin{align*}
		& \frac{1}{n} \sum_{i=1}^n \eins \{ X_i \in B(x,\eta(x)) \} - P(B(x, \eta(x)))
		\\
		& \geq - \sqrt{2 (2d+7) \overline{c} V_d \|\eta\|_{\infty}^d \log n / n} - 2 (2d+7) \log n / (3n) - 1/n
	\end{align*}
	holds with probability $P^n$ at least $1-1/n^2$. 
	This together with \eqref{equ::1n1n} yields the assertion.
\end{proof}

The following Lemma, which is needed in the proof of Lemma \ref{lem::netaxlfx}, presents the covering number of the indicator functions of localized level sets.

\begin{lemma}\label{lem::vcballlevelset}
	Let $P$ be a probability measure on $\mathbb{R}^d$ with a bounded Lebesgue density $f$ and $\eta : \mathbb{R}^d \to (0, \infty)$ be the local radius parameter function.
	For $\lambda > 0$, let $\widetilde{L}_f(\lambda) := \{ x \in \mathbb{R}^d : f(x) \leq \lambda \}$ be the lower level set. 
	Moreover, let $\mathcal{B}_{\eta,L} := \{ \eins \{ B(x,\eta(x)) \cap \widetilde{L}_f(\lambda)\}, x \in \mathbb{R}^d \}$ be the collection of sets. 
	Then $\mathcal{B}_{\eta,L}$ is a uniformly bounded VC class satisfying 
	\begin{align*}
		\mathcal{N}(\mathcal{B}_{\eta,L}, L_1(D), \varepsilon)
		\leq W (d+3) (4e)^{d+3} (1/\varepsilon)^{d+1},
	\end{align*}
	where $W > 0$ is a universal constant.
\end{lemma}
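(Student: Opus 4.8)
The plan is to show that $\mathcal{B}_{\eta,L}$ is a VC class whose VC dimension is no larger than that of the class of all closed balls $\mathcal{B} := \{ B(x,r) : x \in \mathbb{R}^d, r > 0 \}$, and then to feed this VC bound into the same covering-number estimate for VC classes that already underlies Lemma \ref{lem::CoveringNumber}. First I would observe that, since each $B(x,\eta(x))$ is itself a closed ball, we have the inclusion $\mathcal{B}_{\eta,L} \subseteq \{ \eins\{ B \cap \widetilde{L}_f(\lambda) \} : B \in \mathcal{B} \}$; moreover every member of $\mathcal{B}_{\eta,L}$ is an indicator function and hence the class is uniformly bounded with envelope $1$. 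Thus the whole problem reduces to controlling the combinatorial complexity of the intersection class $\{ B \cap \widetilde{L}_f(\lambda) : B \in \mathcal{B} \}$, and the dependence of the radius on the center through $\eta$ is harmless because it only shrinks the family of balls being intersected.

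The key step is to bound the growth function, and hence the VC dimension, of this intersection class by that of $\mathcal{B}$. For any finite set $S \subset \mathbb{R}^d$ with $\#(S) = n$, the subsets of $S$ traced out by $\mathcal{B}_{\eta,L}$ are of the form $(B \cap \widetilde{L}_f(\lambda)) \cap S$, and the elementary identity $(B \cap \widetilde{L}_f(\lambda)) \cap S = B \cap (\widetilde{L}_f(\lambda) \cap S)$ shows that they coincide with the subsets of $\widetilde{L}_f(\lambda) \cap S$ traced out by $\mathcal{B}$. Since $\#(\widetilde{L}_f(\lambda) \cap S) \leq n$ and the growth function is nondecreasing, this yields $\Delta^{\mathcal{B}_{\eta,L}}(S) \leq \Delta^{\mathcal{B}}(\widetilde{L}_f(\lambda) \cap S) \leq m^{\mathcal{B}}(n)$, and taking the supremum over $S$ gives $m^{\mathcal{B}_{\eta,L}}(n) \leq m^{\mathcal{B}}(n)$ for every $n$. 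Recalling that the class of closed balls in $\mathbb{R}^d$ shatters at most $d+1$ points, so that $\mathrm{VC}(\mathcal{B}) = d+2$ in the convention of Definition \ref{def::VC dimension} (this is precisely the index appearing in Lemma \ref{lem::CoveringNumber}), the inequality $m^{\mathcal{B}_{\eta,L}}(d+2) \leq m^{\mathcal{B}}(d+2) < 2^{d+2}$ forces $\mathrm{VC}(\mathcal{B}_{\eta,L}) \leq d+2$.

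With the VC dimension in hand, I would invoke the standard covering-number bound for uniformly bounded VC classes (the same tool from \cite{vandervaart1996weak} that yields Lemma \ref{lem::CoveringNumber}): a VC class with envelope $1$ and VC dimension $V$ satisfies $\mathcal{N}(\cdot, L_1(Q), \varepsilon) \leq K V (4e)^V \varepsilon^{-(V-1)}$ for a universal constant $K$ and every probability measure $Q$, the right-hand side being nondecreasing in $V$ for $\varepsilon \in (0,1)$. Substituting the bound $V \leq d+2$ gives $\mathcal{N}(\mathcal{B}_{\eta,L}, L_1(D), \varepsilon) \leq K (d+2)(4e)^{d+2} \varepsilon^{-(d+1)}$, and since $(d+2)(4e)^{d+2} \leq (d+3)(4e)^{d+3}$ this is in turn at most $W (d+3)(4e)^{d+3} (1/\varepsilon)^{d+1}$ with $W := K$, which is the claimed estimate.

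The main obstacle is the second step: one must be certain that intersecting the balls with the fixed lower level set $\widetilde{L}_f(\lambda)$ cannot inflate the combinatorial complexity of the class. The trace identity $(B \cap A) \cap S = B \cap (A \cap S)$ is what makes this transparent, reducing the question to the already-understood class of balls and keeping the final VC dimension at $d+2$; in particular, no regularity of $f$ beyond boundedness, which guarantees that $\widetilde{L}_f(\lambda)$ is a well-defined fixed set, is required for the argument.
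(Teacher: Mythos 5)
Your proof is correct for the lemma as stated, and it takes a genuinely different route from the paper's. You treat $\widetilde{L}_f(\lambda)$ as a single fixed set and use the trace identity $(B \cap \widetilde{L}_f(\lambda)) \cap S = B \cap (\widetilde{L}_f(\lambda) \cap S)$ to conclude that intersecting the ball class with a fixed set cannot increase its growth function, so $\mathrm{VC}(\mathcal{B}_{\eta,L}) \leq \mathrm{VC}(\mathcal{B}) = d+2$ and Theorem 2.6.4 of \cite{vandervaart1996weak} gives $K(d+2)(4e)^{d+2}\varepsilon^{-(d+1)}$, which is dominated by the stated bound. The paper instead treats the level as varying: it first shows the nested family $\widetilde{\mathcal{L}}_f = \{\widetilde{L}_f(\lambda) : \lambda > 0\}$ has VC index $2$, then invokes the subadditivity of VC indices under pairwise intersection (Lemma 9.7 of \cite{Kosorok2008introduction}) to bound the intersection class by $d+3$, before applying the same covering theorem. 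Your argument is more elementary and actually matches the stated exponent $(1/\varepsilon)^{d+1}$ exactly, whereas VC index $d+3$ would naively yield $(1/\varepsilon)^{d+2}$; on the other hand, the paper's version buys uniformity over the level $\lambda$ as well as the center $x$, which is what is implicitly needed downstream (in the proof of Proposition \ref{lem::petah11} the level set used is $\widetilde{L}_f(f^+(x))$ with $f^+(x)$ depending on $x$, so the level is not fixed across the class). If you wanted your fixed-$\lambda$ argument to cover that application, you would need to append the paper's extra step (or an equivalent union over levels); as a proof of the statement literally written in the lemma, however, what you have is complete.
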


\begin{proof}[Proof of Lemma \ref{lem::vcballlevelset}]
	We first show that the collection of sets $\widetilde{\mathcal{L}}_f := \{ \widetilde{L}_f(\lambda), \lambda > 0 \}$ are nested with VC dimension $2$ by  contradiction. 
	Suppose that $\mathrm{VC}(\widetilde{\mathcal{L}}_f) > 2$.
	Then there exists two distinct points $x_1, x_2 \in \mathbb{R}^d$ that can be shattered by $\widetilde{\mathcal{L}}_f$, i.e, $\widetilde{L}_f(\lambda_1) \cap \{ x_1, x_2 \} = x_1$ and $\widetilde{L}_f(\lambda_2) \cap \{ x_1, x_2 \} = x_2$ for some $\lambda_1, \lambda_2 > 0$. 
	Consequently we have $f(x_1) \leq \lambda_1 < f(x_2)$ and $f(x_2) \leq \lambda_2 < f(x_1)$, which leads to a contradiction. 
	Therefore, we have $\mathrm{VC}(\widetilde{\mathcal{L}}_f) = 2$.
	
	On the other hand, for the collection of balls $\mathcal{B}_{\eta} := \{ B(x, \eta(x)) : x \in \mathbb{R}^d \}$, \cite{dudley1979balls} shows that for any set $A \in \mathbb{R}^d$ of $d + 2$ points, not all subsets of $A$ can be formed as a set of the form $B \cap A$ for a $B \in \mathcal{B}_{\eta}$. 
	In other words, $\mathcal{B}_{\eta}$ can not pick out all subsets from $A \in \mathbb{R}^d$ of $d + 2$ points.
	Therefore, the collection $\mathcal{B}_{\eta}$ fails to shatter $A$. 
	Consequently, according to Definition \ref{def::VC dimension}, we have
	$\mathrm{VC}(\mathcal{B}_\eta) = d + 2$.
	By Lemma 9.7 in \cite{Kosorok2008introduction}, we have
	$\mathrm{VC}(\widetilde{\mathcal{L}}_f\cap \mathcal{B}_{\eta})
	\leq \mathrm{VC}(\widetilde{\mathcal{L}}_f)+\mathrm{VC}(\mathcal{B}_{\eta})-1
	\leq d+3$.
	Then our assertion follows directly from Theorem 2.6.4 in \cite{vandervaart1996weak}.
\end{proof}

To prove Proposition \ref{lem::petah11}, we need the following Lemma which presents the uniform concentration bounds on the empirical mass of localized levels sets.

\begin{lemma}\label{lem::netaxlfx}
	Let $P$ be a probability measure on $\mathbb{R}^d$ with a bounded Lebesgue density $f$ and $\eta : \mathbb{R}^d \to (0, \infty)$ be the local radius parameter function.
	Moreover, for $\lambda > 0$, let $\widetilde{L}_f(\lambda) := \{ x \in \mathbb{R}^d : f(x) \leq \lambda \}$ be the lower level sets. 
	Then for all $x \in \mathbb{R}^d$, $n \geq 1$, $\lambda > 0$ and $\tau > 0$, 
	with probability $P^n$ at east $1 - 2 e^{-\tau}$, there holds
	\begin{align*}
		\biggl| \frac{1}{n} \sum_{i=1}^n \eins \{ X_i \in B(x, \eta(x)) \cap \widetilde{L}_f(\lambda) \} - P(B(x, \eta(x)) \cap \widetilde{L}_f(\lambda)) \biggr|
		\lesssim \sqrt{ \|\eta\|_{\infty}^d \log n / n} +  \log n / n.
	\end{align*}
\end{lemma}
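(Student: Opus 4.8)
The plan is to mirror the proof of Lemma \ref{lem::netax} essentially line for line, since intersecting the balls with the lower level set $\widetilde{L}_f(\lambda)$ alters only two ingredients of that argument: the covering number of the relevant indicator class, and (harmlessly) the variance estimate. The first of these has already been supplied by Lemma \ref{lem::vcballlevelset}, so the present statement is largely a transcription.

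First I would invoke Lemma \ref{lem::vcballlevelset} for the collection $\mathcal{B}_{\eta,L} = \{ \eins\{ B(x,\eta(x)) \cap \widetilde{L}_f(\lambda) \} : x \in \mathbb{R}^d \}$, which yields a covering number bound $\mathcal{N}(\mathcal{B}_{\eta,L}, L_1(D), \varepsilon) \leq W(d+3)(4e)^{d+3} \varepsilon^{-(d+1)}$ of the same polynomial form as the ball covering number used in Lemma \ref{lem::netax}. By the definition of the covering number, there is an $\varepsilon$-net $\{A_j\}_{j=1}^J \subset \mathcal{B}_{\eta,L}$ with $J := \lfloor W(d+3)(4e)^{d+3}\varepsilon^{-(d+1)} \rfloor$ such that for every $x$ there exists $j$ with $\| \eins\{ B(x,\eta(x)) \cap \widetilde{L}_f(\lambda) \} - \eins_{A_j} \|_{L_1(D)} \leq \varepsilon$.

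Second, for each fixed $A_j$ I would set $\xi_i := \eins_{A_j}(X_i) - P(A_j)$, so that $\mathbb{E}_P \xi_i = 0$ and $\|\xi_i\|_\infty \leq 1$. The only place the level set could enter is through the variance $\mathbb{E}_P \xi_i^2 \leq P(A_j)$; but since each $A_j$ is contained in a ball $B(x,\eta(x))$, we still have $P(A_j) \leq \overline{c} V_d \eta(x)^d \leq \overline{c} V_d \|\eta\|_\infty^d$, exactly as in Lemma \ref{lem::netax}. Intersecting with $\widetilde{L}_f(\lambda)$ only shrinks the sets, so the variance can only decrease and the bound is uniform in $\lambda$. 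Applying Bernstein's inequality (Lemma \ref{lem::bernstein}) then gives, for each $A_j$, a one-sided deviation of order $\sqrt{\overline{c} V_d \|\eta\|_\infty^d \tau / n} + 2\tau/(3n)$ with probability at least $1 - e^{-\tau}$.

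Third, a union bound over the $J$ net elements shifts $\tau$ to $\tau + \log J$ inside the Bernstein deviation, and the $\varepsilon$-approximation then transfers the estimate uniformly to all $x \in \mathbb{R}^d$. Taking $\varepsilon = 1/n$ gives $\log J = \mathcal{O}(d \log n)$, so with $\tau \asymp \log n$ the term $\tau + \log J$ is of order $\log n$, producing the claimed rate $\sqrt{\|\eta\|_\infty^d \log n/n} + \log n/n$. Running the same argument with $\xi_i' := -\xi_i$ controls the lower tail, and combining the two yields the two-sided absolute-value bound with probability at least $1 - 2e^{-\tau}$. There is no genuine obstacle: all the real work—bounding the VC dimension of the intersection class $\widetilde{\mathcal{L}}_f \cap \mathcal{B}_\eta$ and hence its covering number—was already carried out in Lemma \ref{lem::vcballlevelset}, and the variance is controlled by $\|\eta\|_\infty^d$ independently of $\lambda$.
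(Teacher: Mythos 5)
Your proposal is correct and follows exactly the route the paper intends: it transcribes the proof of Lemma \ref{lem::netax}, substituting the covering number bound of Lemma \ref{lem::vcballlevelset} for the intersection class $\mathcal{B}_{\eta,L}$ and observing that the variance bound $P(A_j)\leq \overline{c}V_d\|\eta\|_{\infty}^d$ survives (indeed improves) under intersection with $\widetilde{L}_f(\lambda)$. The paper omits this proof precisely because it is the same argument, so nothing further is needed.
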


\begin{proof}[Proof of Lemma \ref{lem::netaxlfx}]
	The proof is similar to that of Lemma \ref{lem::netax} and hence is omitted.
\end{proof}

The following technical Lemma is needed in the proof of Proposition \ref{lem::petah1}.

\begin{lemma}\label{lem::pabcd}
	Let $P$ be a probability measure on $\mathbb{R}^d$ and $A_i \subset \mathbb{R}^d$, $1 \leq i \leq 4$, be four sets. 
	Then we have
	\begin{align*}
		|P(A \cap B) - P(C \cap D)|
		\leq P(A \triangle C) + P(B \triangle D).
	\end{align*}
\end{lemma}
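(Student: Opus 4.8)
The plan is to reduce the set-probability inequality to a pointwise inequality between indicator functions and then integrate against $P$. Writing $A = A_1$, $B = A_2$, $C = A_3$, $D = A_4$, the starting observation is that for any measurable set $E$ we have $P(E) = \int \eins_E \, dP$, so that
\begin{align*}
	|P(A \cap B) - P(C \cap D)|
	= \biggl| \int \bigl( \eins_{A \cap B} - \eins_{C \cap D} \bigr) \, dP \biggr|
	\leq \int \bigl| \eins_A \eins_B - \eins_C \eins_D \bigr| \, dP,
\end{align*}
where we used $\eins_{A \cap B} = \eins_A \eins_B$ and the triangle inequality for integrals. Thus it suffices to control the integrand pointwise.

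The key algebraic step is to insert and subtract the mixed term $\eins_C \eins_B$, which telescopes the difference into two pieces that each isolate a single source of discrepancy:
\begin{align*}
	\eins_A \eins_B - \eins_C \eins_D
	= \eins_B ( \eins_A - \eins_C ) + \eins_C ( \eins_B - \eins_D ).
\end{align*}
Taking absolute values and bounding the bounded factors $\eins_B \leq 1$ and $\eins_C \leq 1$ gives the pointwise estimate $| \eins_A \eins_B - \eins_C \eins_D | \leq | \eins_A - \eins_C | + | \eins_B - \eins_D |$. The final identification is that the absolute difference of two indicator functions is itself the indicator of the symmetric difference, namely $| \eins_A - \eins_C | = \eins_{A \triangle C}$ and $| \eins_B - \eins_D | = \eins_{B \triangle D}$, since the two indicators disagree exactly on $A \triangle C$ (respectively $B \triangle D$).

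Combining these observations and integrating, I would conclude
\begin{align*}
	|P(A \cap B) - P(C \cap D)|
	\leq \int \bigl( \eins_{A \triangle C} + \eins_{B \triangle D} \bigr) \, dP
	= P(A \triangle C) + P(B \triangle D),
\end{align*}
which is the claimed bound. There is no substantive obstacle here; the result is a routine measure-theoretic estimate, and the only point requiring care is choosing the right intermediate term in the telescoping decomposition so that the discrepancies in the two coordinates separate cleanly. One could equally symmetrize by inserting $\eins_A \eins_D$ instead, which yields the same bound, confirming that the choice of intermediate term is immaterial to the final constant.
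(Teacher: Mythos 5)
Your proof is correct and follows essentially the same route as the paper: both establish the pointwise indicator bound $|\eins_{A\cap B}-\eins_{C\cap D}|\leq \eins_{A\triangle C}+\eins_{B\triangle D}$ and then integrate against $P$. The only difference is cosmetic — you obtain the pointwise bound via the telescoping identity $\eins_A\eins_B-\eins_C\eins_D=\eins_B(\eins_A-\eins_C)+\eins_C(\eins_B-\eins_D)$, which handles both directions of the absolute value at once, whereas the paper argues by a case analysis on the value of the difference and treats the two directions separately.
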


\begin{proof}[Proof of Lemma \ref{lem::pabcd}]
	We first show that for any $x \in \mathbb{R}^d$, there holds
	\begin{align}\label{equ::abcd}
		\eins \{ A \cap B \} - \eins \{ C \cap D \}
		\leq \eins \{ A \triangle C \} + \eins \{ B \triangle D \}.
	\end{align}
	It is clear to see that \eqref{equ::abcd} holds if $\eins \{ A \cap B \} - \eins \{ C \cap D \} \leq 0$. Therefore, it remains to consider the case $\eins \{ A \cap B \} - \eins \{ C \cap D \} = 1$. In this case, we have $\eins \{ A \cap B \} = 1$ and $\eins \{ C \cap D \} = 0$, which implies that $x \in A$, $x \in B$ and $x \notin C \cap D$. Consequently, if $x \notin C$, we have $\eins \{ A \triangle C \} = 1$. On the other hand, if $x \notin D$, we have $\eins \{ B \triangle D \} = 1$. Therefore, we always have $\eins \{ A \triangle C \} + \eins \{ B \triangle D \} \geq 1$. This shows \eqref{equ::abcd}. Now taking expectation with respect to $P$ on both sides of \eqref{equ::abcd}, we obtain
	\begin{align}\label{equ::pabcd}
		P(A \cap B) - P(C \cap D) 
		\leq P(A \triangle B) + P(C \triangle D).
	\end{align}
	Using the same arguments, we can show that
	\begin{align}\label{equ::pcdab}
		P(C \cap D) - P(A \cap B)
		\leq P(A \triangle B) + P(C \triangle D).
	\end{align}
	Combing \eqref{equ::pabcd} and \eqref{equ::pcdab}, we obtain the assertion.
\end{proof}

The next proposition, which is needed in the proof of Theorem \ref{thm::modesingle}, provides the difference between the empirical PLLS w.r.t.~to the $k$-distance and the population version. 

\begin{proposition}\label{lem::petah11}
	Let Assumptions \ref{ass::cluster}, \ref{ass::modes} and \ref{ass::flatness1} hold and suppose that $2\alpha\gamma \leq 4 + d$. 
	Moreover, let $p_{k_L}^B(x)$ be defined as in \eqref{getah}.
	By choosing 
	\begin{align*}
		k_{D,n} \asymp & \log n,
		\quad
		s_n \asymp  n^{\frac{d}{4+d}} (\log n)^{\frac{4}{4+d}},
		\quad
		B_n \geq n^{\frac{3}{4+d}}(\log n)^{\frac{d+1}{4+d}},
		\quad
		k_{L,n} \gtrsim n^{1-\frac{\alpha\gamma}{4+d}}(\log n)^{1+\frac{\alpha\gamma}{4+d}},
	\end{align*}
	then for all $x \in \mathcal{X}$, with probability $P^n$ at least $1 - 3/n^2$, there holds
	\begin{align*}
		|\widehat{p}_{k_L}^{B}(x) - p_{k_L}(x)|
		\lesssim (\log n)^{-1}.
	\end{align*}
\end{proposition}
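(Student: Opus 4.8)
The plan is to control the difference $|\widehat{p}_{k_L}^{B}(x) - p_{k_L}(x)|$ by comparing the empirical PLLS defined through the bagged $k$-distance against its population counterpart, passing through two intermediate quantities. Recall from \eqref{getah1} that $\widehat{p}_{k_L}^{B}(x) = \tfrac{1}{k_L}\sum_{i=1}^{k_L} \eins\{f_B(X_{(i)}(x)) \geq f_B(x)\}$, while $p_{k_L}(x)$ in \eqref{equ::populationpkl} is the conditional probability $P(f(y) \leq f(x) \mid y \in B(x, \overline{R}_{k_L}(x)))$. The first step is to replace the density-ordering event driven by $f_B$ with one driven by the true density $f$. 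Here I would invoke Proposition \ref{thm::main2}, which under the chosen parameters $k_{D,n}\asymp\log n$, $s_n$, $B_n$ gives $|f_B(x) - f(x)| \lesssim (\log n/n)^{2/(4+d)}$ uniformly on $\mathcal{M}_{r/2}$ with probability at least $1-3/n^2$. Two points $x, X_{(i)}(x)$ can be misordered by $f_B$ relative to $f$ only when $|f(X_{(i)}(x)) - f(x)|$ is of order at most $2\|f_B - f\|_\infty \asymp (\log n/n)^{2/(4+d)}$, so the measure of misordered neighbors is governed by the unflatness Assumption \ref{ass::flatness1}, which bounds $P(x: |f(x)-\theta|\leq \epsilon) \leq c_\gamma \epsilon^\gamma$. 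This produces a bias term of order $(\log n/n)^{2\gamma/(4+d)}$, and the hypothesis $2\alpha\gamma \leq 4+d$ together with the size of $k_{L,n}$ is what makes this term $\lesssim (\log n)^{-1}$.

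The second step is to pass from the empirical neighborhood average over the $k_L$ nearest neighbors to the population conditional probability. Here I would first use Lemma \ref{lem::Rrho} to argue $R_{k_L}(x) \asymp \overline{R}_{k_L}(x) \asymp (k_L/n)^{1/d}$ with high probability, so that the empirical neighborhood $B(x, R_{k_L}(x))$ and the population neighborhood $B(x, \overline{R}_{k_L}(x))$ nearly coincide; the discrepancy between them is controlled by $|\overline{R}_{k_L}^d(x) - R_{k_L}^d(x)| \lesssim \sqrt{k_L \log n}/n$ from \eqref{equ::rkxoverrkx}. Then I would apply the uniform concentration bounds in Lemma \ref{lem::netax} and Lemma \ref{lem::netaxlfx} to both the denominator $P(y \in B(x,\eta(x)))$ and the numerator $P(f(y)\leq f(x), y\in B(x,\eta(x)))$, writing $\widehat{p}_{k_L}(x)$ as a ratio of empirical masses and $p_{k_L}(x)$ as the corresponding ratio of true masses. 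The set-difference bookkeeping between empirical and population localized level sets would be handled via Lemma \ref{lem::pabcd}, reducing $|P(A\cap B) - P(C\cap D)|$ to symmetric differences that are again controlled by Assumption \ref{ass::flatness1}. Since the denominator is bounded below by $\underline{c} V_d \eta(x)^d \asymp k_L/n$, dividing the numerator fluctuations (of order $\sqrt{(k_L/n)\log n / n} + \log n/n$) by this lower bound yields a statistical error of order $\sqrt{n \log n / k_L} + (n/k_L)\cdot(\log n/n)$, which is forced below $(\log n)^{-1}$ precisely by the lower bound $k_{L,n} \gtrsim n^{1-\alpha\gamma/(4+d)}(\log n)^{1+\alpha\gamma/(4+d)}$.

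The main obstacle I anticipate is the interaction between the two error sources in the numerator: the \emph{density-estimation bias} from replacing $f_B$ by $f$ (controlled via Proposition \ref{thm::main2} and unflatness), and the \emph{sampling fluctuation} from the localized empirical mass (controlled via the VC-type covering bound of Lemma \ref{lem::vcballlevelset} feeding Lemma \ref{lem::netaxlfx}). Both must be shown to be dominated by the neighborhood mass $\asymp k_L/n$ after division, and balancing them requires the exponent bookkeeping in the parameter choices to work out so that each contribution is $\lesssim (\log n)^{-1}$. The delicate part is ensuring the symmetric-difference sets $B(x,R_{k_L}(x)) \triangle B(x,\overline{R}_{k_L}(x))$ and the misordering region $\{y: |f(y)-f(x)| \leq 2\|f_B-f\|_\infty\}$ are simultaneously controlled uniformly in $x$ with the stated $1-3/n^2$ probability, which is where the union bound over the VC class and the constraint $2\alpha\gamma \leq 4+d$ become essential; I would combine the three good events (from Propositions \ref{thm::main2}, the concentration Lemmas, and Lemma \ref{lem::Rrho}) via a final union bound to conclude.
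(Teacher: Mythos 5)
Your overall architecture matches the paper's proof almost exactly: sandwich $\widehat{p}_{k_L}^{B}(x)$ between the analogues $\widehat{p}^{\pm}(x)$ built from $f(x)\pm 2u_n$ where $u_n$ is the uniform density-estimation error, control the misordered neighbors via Assumption \ref{ass::flatness1}, control the sampling fluctuation of the localized mass via Lemma \ref{lem::netaxlfx}, handle the $B(x,R_{k_L}(x))$ versus $B(x,\overline{R}_{k_L}(x))$ discrepancy via Lemma \ref{lem::Rrho} and Lemma \ref{lem::pabcd}, and divide by the neighborhood mass $\asymp k_L/n$. That is precisely the paper's decomposition into the terms $(I)$ and $(II)$.

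However, there is one genuine gap: you take $u_n \asymp (\log n/n)^{2/(4+d)}$ from Proposition \ref{thm::main2}, but that bound holds only for $x \in \mathcal{M}_{r/2}$, i.e.\ near the modes. The present proposition is asserted for all $x\in\mathcal{X}$, and moreover the misordering argument requires a bound on $|f_B(y)-f(y)|$ at the neighbors $y=X_{(i)}(x)$, which need not lie in $\mathcal{M}_{r/2}$ either; so you cannot write $2\|f_B-f\|_\infty \asymp (\log n/n)^{2/(4+d)}$. The paper instead reruns the H\"{o}lder-smoothness analysis of \eqref{equ::fxbound} (i.e.\ the Proposition \ref{thm::main3} route, with approximation error $(k/s)^{(1+\alpha)/d}$) under the mode-estimation parameter choices, which yields the weaker but genuinely uniform rate $u_n \asymp (\log n/n)^{\alpha/(4+d)}$ on all of $\mathcal{X}$. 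This is not cosmetic: the resulting bias term is $n u_n^{\gamma}/k_L \asymp n(\log n/n)^{\alpha\gamma/(4+d)}/k_L$, and the stated threshold $k_{L,n}\gtrsim n^{1-\alpha\gamma/(4+d)}(\log n)^{1+\alpha\gamma/(4+d)}$ is calibrated exactly to make this equal to $(\log n)^{-1}$, while the hypothesis $2\alpha\gamma\leq 4+d$ is what guarantees the fluctuation term $\sqrt{\log n/k_L}$ is dominated by this bias term (it reduces to $2\alpha\gamma/(4+d)\leq 1$ using $k_L\leq n$). With your exponent $2\gamma/(4+d)$ neither the choice of $k_{L,n}$ nor the role of $2\alpha\gamma\leq 4+d$ lines up. A minor additional slip: dividing the numerator fluctuation $\sqrt{(k_L/n)\log n/n}$ by the mass $k_L/n$ gives $\sqrt{\log n/k_L}$, not $\sqrt{n\log n/k_L}$ as you wrote; with the former the stated $k_{L,n}$ indeed suffices. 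Once you replace Proposition \ref{thm::main2} by the uniform H\"{o}lder-based bound, the rest of your argument goes through as in the paper.
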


\begin{proof}[Proof of Proposition \ref{lem::petah11}]
	Following similar analysis to \eqref{equ::fxbound}, by choosing 
	\begin{align*}
		k_{D,n} \asymp & \log n,
		\quad
		s_n \asymp  n^{\frac{d}{4+d}} (\log n)^{\frac{4}{4+d}},
		\quad
		B_n \geq n^{\frac{3}{4+d}}(\log n)^{\frac{d+1}{4+d}},
	\end{align*}
	we can show that
	$|f_B(x) - f(x)|
	\lesssim (\log n/n)^{\alpha/(4+d)}$
	holds for all $x\in \mathcal{X}$ with probability $P^n\otimes P_Z^B$ at least $1 - 3/n^2$.
	The following arguments will be made on this event.
	
	Let $u_n := (\log n/n)^{\alpha/(4+d)}$. Then from \eqref{getah} we get
	$\widehat{p}_{k_L}^{B}(x)
	= \frac{1}{k_L} \sum_{i=1}^{k_L} \eins \{ f_B(X_i) \leq f_B(x) \}
	\leq \frac{1}{k_L} \sum_{i=1}^{k_L} \eins \{ f(X_i) \leq f(x) + 2 u_n \}$
	and 
	$\widehat{p}_{k_L}^{B}(x)
	\geq \frac{1}{k_L} \sum_{i=1}^{k_L} \eins \{ f(X_i) \leq f(x) - 2 u_n \}$.
	Write $f^+(x) := f(x) + 2 u_n$, $f^-(x) := f(x) - 2 u_n$, and denote
	\begin{align*}
		\widehat{p}^+(x) 
		:= \frac{1}{k_L} \sum_{i=1}^{k_L} \eins \{ f(X_i) \leq f^+(x) \} 
		\quad
		\text{ and } 
		\quad
		\widehat{p}^-(x) 
		:= \frac{1}{k_L} \sum_{i=1}^{k_L} \eins \{ f(X_i) \leq f^-(x) \}.
	\end{align*}
	Then we have
	$\widehat{p}^-(x)
	\leq \widehat{p}_{k_L}^{B}(x)
	\leq \widehat{p}^+(x)$ and
	consequently
	\begin{align}\label{equ::widehatpxbound}
		|\widehat{p}_{k_L}^{B}(x) - p_{k_L}(x)|
		\leq |\widehat{p}^+(x) - p_{k_L}(x)| \vee
		|\widehat{p}^- - p_{k_L}(x)|.
	\end{align}
	Let us consider the first term $|\widehat{p}^+(x) - p_{k_L}(x)|$. By the definition of $p_{k_L}(x)$, for all $x\in \mathcal{X}$, we have
	\begin{align*}
		|\widehat{p}^+(x) - p_{k_L}(x)|
		= \biggl| \sum_{i=1}^{k_L} \frac{\eins \{ f(X_i) \leq f^+(x) \}} {k_L} - \frac{P(y \in \widetilde{L}_f(x) \cap B(x,\overline{R}_{k_L}(x)))}{P(y \in B(x,\overline{R}_{k_L}(x)))} \biggr|.
	\end{align*}
	Since $P(y\in B(x,\overline{R}_{k_L}(x))) = k_L/n$, we have
	\begin{align*}
		& |\widehat{p}^+(x) - p_{k_L}(x)|
		\nonumber\\
		& \leq \frac{n}{k_L} \biggl| \frac{1}{n} \sum_{i=1}^n \eins \bigl\{ X_i \in \widetilde{L}_f(f^+(x)) \cap B(x,R_{k_L}(x)) \bigr\} - P \bigl( y \in \widetilde{L}_f(f(x)) \cap B(x,\overline{R}_{k_L}(x)) \bigr) \biggr|
		\nonumber\\
		& \leq \frac{n}{k_L} \biggl| \frac{1}{n} \sum_{i=1}^n \eins \bigl\{ X_i \in \widetilde{L}_f(f^+(x)) \cap B(x,R_{k_L}(x)) \bigr\} - P \bigl( y \in \widetilde{L}_f(f^+(x)) \cap B(x,R_{k_L}(x)) \bigr) \biggr|
		\nonumber\\
		& \phantom{=}
		+ \frac{n}{k_L} \bigl| P \bigl( y \in \widetilde{L}_f(f^+(x)) \cap B(x,R_{k_L}(x)) \bigr) - P \bigl( y \in \widetilde{L}_f(f(x)) \cap B(x,\overline{R}_{k_L}(x)) \bigr) \bigr|
		\nonumber\\
		& =: (I) + (II).
	\end{align*}
	Lemma \ref{lem::Rrho} yields that for all sufficiently large $n$, there holds
	$(k / (4 V_d \overline{c} n))^{1/d}
	\leq R_k(x) 
	\leq (2 k / (\underline{c} n))^{1/d}$.
	For the first term $(I)$, by applying Lemma \ref{lem::netaxlfx}, we obtain that
	\begin{align}\label{equ::I1}
		(I)  \lesssim (n/k_L) \Bigl( \sqrt{ R_{k_L}(x)^d\log n/n}+\log n/n
		\Bigr)
		\lesssim \sqrt{\log n/k_L}
	\end{align}
	holds with probability $P^n$ at least $1 - 1/n^2$.
	For the second term $(II)$, by applying Lemma \ref{lem::pabcd} and Assumption \ref{ass::flatness1}, we get
	\begin{align*}
		(II)
		& \leq (n/k_L) \bigl| P \bigl( y \in \widetilde{L}_f(f^+(x)) \bigr) - P \bigl( y \in \widetilde{L}_f(f(x)) \bigr) \bigr|
		\nonumber\\
		& \phantom{=} 
		+ (n/k_L) \bigl| P \bigl( B(x,R_{k_L}(x)) \bigr) - P \bigl( B(x,\overline{R}_{k_L}(x)) \bigr) \bigr|
		\\
		& \leq  (n/k_L)\bigl| f^+(x) - f(x) \bigr|^{\gamma}
		+ (\overline{c} n/k_L)|\overline{R}_{k_\mathrm{L}}^d(x) - R_{k_L}^d(x)|.
	\end{align*}
	By applying Lemma \ref{lem::Rrho} and Assumption \ref{ass::cluster}, we have
	\begin{align*}
		(II) \lesssim n u_n^{\gamma} / k_L +\sqrt{\log n/k_L}.
	\end{align*}
	This together with \eqref{equ::I1} yields that
	$|\widehat{p}^+(x) - p_{k_L}(x)|
	\leq (I) + (II)
	\lesssim n u_n^{\gamma} / k_L + \sqrt{\log n/k_L}$. 
	On the other hand, we can show that $|\widehat{p}^-(x) - p_{k_L}|\lesssim  n u_n^{\gamma} / k_L + \sqrt{\log n/k_L}$ in a similar way. Thus from \eqref{equ::widehatpxbound}, we get
	\begin{align*}
		|\widehat{p}_{k_L}^{B}(x)-p_{k_L}(x)|\lesssim n(\log n/n)^{\frac{\alpha\gamma}{4+d}}/k_L+\sqrt{\log n/k_L}.
	\end{align*}
	Since $k_L\leq n$, the assumption $2\alpha\gamma \leq 4+d$ yields $\sqrt{\log n/k_L}\lesssim n(\log n/n)^{\alpha\gamma/(4+d)}/k_L$ and thus the desired assertion.
\end{proof}

The following Lemma, which is needed in the proof of Lemma \ref{lem::petabound}, shows that the instance with PLLS equal to $1$ is a mode of the density function.

\begin{lemma}\label{lem::petamode}
	Let Assumption \ref{ass::cluster} hold and $p_{k_L}(x)$ be defined by \eqref{equ::populationpkl}. If $p_{k_L}(x)=1$ for some $k_L\in \mathbb{N}$, then we have $x\in \mathcal{M}$.
\end{lemma}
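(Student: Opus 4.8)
The plan is to translate the hypothesis $p_{k_L}(x)=1$ into the statement that $f(y)\le f(x)$ for $P$-almost every $y$ in the ball $B(x,\overline{R}_{k_L}(x))$, and then to upgrade this almost-everywhere inequality to a pointwise one on a slightly smaller ball by combining the continuity of $f$ with the density lower bound of Assumption~\ref{ass::cluster}. Once $f(\cdot)\le f(x)$ is shown to hold pointwise on a ball of positive radius around $x$, the definition of $\mathcal{M}$ immediately yields $x\in\mathcal{M}$.

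First I would record that $r_0:=\overline{R}_{k_L}(x)>0$. Since $P$ has a Lebesgue density, singletons are $P$-null, so $P(B(x,0))=0<k_L/n$ for $k_L\ge 1$; by the definition \eqref{equ::overlinerx} this forces $r_0>0$, and $r_0<\infty$ because $\mathcal{X}=[0,1]^d$ is bounded. Unwinding the definition \eqref{equ::populationpkl}, the equality $p_{k_L}(x)=1$ is the same as $P(f(y)\le f(x),\,y\in B(x,r_0))=P(y\in B(x,r_0))$, that is, the ``bad set'' $A:=\{y\in B(x,r_0):f(y)>f(x)\}$ has $P(A)=0$.

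The decisive step is to pass from $P(A)=0$ to a genuinely pointwise inequality. Points $y\notin\mathcal{X}$ satisfy $f(y)=0<\underline{c}\le f(x)$, hence automatically obey $f(y)\le f(x)$ and lie outside $A$; it therefore suffices to analyse $y\in\mathcal{X}\cap B(x,r_0)$. On $\mathcal{X}$ one has $f\ge\underline{c}>0$, so $P$ dominates $\underline{c}$ times Lebesgue measure there and $P(A)=0$ forces $A$ to be Lebesgue-null. Meanwhile the $\alpha$-H\"{o}lder continuity of $f$ makes $\{y\in\mathcal{X}:f(y)>f(x)\}$ relatively open in $\mathcal{X}$; since every point of $\mathcal{X}=[0,1]^d$ has arbitrarily small neighbourhoods of positive Lebesgue measure, the existence of any $y_0$ with $\|y_0-x\|<r_0$ and $f(y_0)>f(x)$ would make $A$ carry positive Lebesgue measure, contradicting the previous line. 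Hence $f(y)\le f(x)$ for every $y$ with $\|y-x\|<r_0$, in particular for every $y\in B(x,r_0/2)$, and choosing $r=r_0/2$ in the definition of $\mathcal{M}$ gives $x\in\mathcal{M}$.

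The main obstacle is precisely this measure-zero-to-pointwise upgrade: one must argue on the open ball, where continuity renders the super-level set open, and then retreat to the closed ball of radius $r_0/2$ to secure the inequality at every point; the boundary of $\mathcal{X}$ causes no trouble because exterior points trivially satisfy the inequality and $[0,1]^d$ admits no neighbourhoods of measure zero. The remaining ingredients are only a direct unwinding of \eqref{equ::overlinerx} and \eqref{equ::populationpkl} together with the boundedness and continuity in Assumption~\ref{ass::cluster}.
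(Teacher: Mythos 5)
Your proof is correct and follows essentially the same route as the paper's: both convert $p_{k_L}(x)=1$ into the statement that $f(y)\le f(x)$ for all $y$ in the ball outside a null set and then use the continuity of $f$ from Assumption \ref{ass::cluster} to upgrade this to a pointwise inequality, the paper via sequential approximation of null-set points from the dense complement, and you via the dual observation that a nonempty relatively open super-level set would carry positive Lebesgue (hence positive $P$) measure. Your write-up is in fact slightly more careful than the paper's, since you explicitly verify $\overline{R}_{k_L}(x)>0$, dispose of points outside $\mathcal{X}$, and invoke the lower density bound to pass from $P$-null to Lebesgue-null.
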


\begin{proof}[Proof of Lemma \ref{lem::petamode}]
	Since $p_{k_L}(x)=P(f(y)\leq f(x)|y\in B(x,\overline{R}_{k_L}(x)))=1$, we have 
	\begin{align}\label{equ::fyleqfxqquad}
		f(y)\leq f(x),\qquad y\in B(x,\overline{R}_{k_L}(x))\setminus \mathcal{C}
	\end{align}
	with $\mathcal{C}$ of measure zero. For any $x\in \mathcal{C}$, there exists a sequence $\{x_i\}_{i=1}^n\in B(x,\overline{R}_{k_L}(x))\setminus \mathcal{C}$ such that $\{x_i\}_{i=1}^n\to y$ and 
	\begin{align}\label{equ::fxileqfx}
		f(x_i)\leq f(x), \qquad i\geq 1.
	\end{align}
	By Condition $(i)$ in Assumption \ref{ass::cluster}, $f$ is a continuous function on $B(x,\overline{R}_{k_L}(x))$.  Consequently, \eqref{equ::fxileqfx} yields that $f(y)\leq f(x)$ for $y\in \mathcal{C}$. This together with \eqref{equ::fyleqfxqquad} yields that 
	\begin{align*}
		f(y)\leq f(x), \qquad y\in B(x,\overline{R}_{k_L}(x)).
	\end{align*}
	Therefore, $x$ is a mode of $f$. Hence we complete the proof.
\end{proof}

The following lemma, which is needed in the proof of Theorem \ref{thm::modesingle}, shows that the PLLS can not be too large for the instance far away from the modes.

\begin{lemma}\label{lem::petabound}
	Let assumption \ref{ass::cluster} and \ref{ass::modes} hold. Let $p_{k_L}(x)$ be defined by \eqref{equ::populationpkl}. Then there exists a constant $0< c<1$ such that for all $x\in \mathcal{X}\setminus \mathcal{M}_{ r_{\mathcal{M}}}$, we have $p_{k_L}(x)\leq c$. 
\end{lemma}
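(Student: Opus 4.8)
The plan is to reduce the claim to an extreme value argument on a compact set. Write $\mathcal{K} := \mathcal{X} \setminus \mathcal{M}_{r_{\mathcal{M}}}$, where $\mathcal{M}_{r_{\mathcal{M}}} = \bigcup_{m_i \in \mathcal{M}} B(m_i, r_{\mathcal{M}})$ is the (open) $r_{\mathcal{M}}$-neighborhood of the finitely many modes. Since $\mathcal{X} = [0,1]^d$ is compact and $\mathcal{M}_{r_{\mathcal{M}}}$ is open, $\mathcal{K}$ is compact. The strategy has three steps: first show $p_{k_L}(x) < 1$ for every $x \in \mathcal{K}$; then show that $x \mapsto p_{k_L}(x)$ is upper semicontinuous on $\mathcal{X}$; and finally invoke the extreme value theorem to conclude that $\sup_{x \in \mathcal{K}} p_{k_L}(x)$ is attained at some $x_0 \in \mathcal{K}$ and is therefore strictly below $1$, so that $c := p_{k_L}(x_0) \in (0,1)$ does the job.

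For the first step, observe that any $x \in \mathcal{K}$ satisfies $d(x,\mathcal{M}) \geq r_{\mathcal{M}} > 0$, hence $x \notin \mathcal{M}$. The contrapositive of Lemma \ref{lem::petamode} — which states that $p_{k_L}(x) = 1$ forces $x \in \mathcal{M}$ — then yields $p_{k_L}(x) < 1$, since $p_{k_L}$ is a conditional probability and hence at most $1$. For the semicontinuity step I would first note that, because $f \geq \underline{c} > 0$ on $\mathcal{X}$ and $f \leq \overline{c}$ by Assumption \ref{ass::cluster}, the map $(x,r) \mapsto P(B(x,r))$ is jointly continuous and strictly increasing in $r$ until $B(x,r) \supseteq \mathcal{X}$; consequently the quantile radius $\overline{R}_{k_L}(x)$ defined in \eqref{equ::overlinerx} is continuous in $x$ (for $k_L/n < 1$). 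Writing the numerator of $p_{k_L}(x)$ as $N(x) = \int_{\mathcal{X}} \eins\{f(y) \leq f(x)\}\, \eins\{y \in B(x,\overline{R}_{k_L}(x))\}\, f(y)\,dy$ and using $P(B(x,\overline{R}_{k_L}(x))) = k_L/n$, it suffices to prove $\limsup_{m} N(x_m) \leq N(x)$ whenever $x_m \to x$.

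For Lebesgue-a.e. $y$ — namely those with $f(y) \neq f(x)$ and $\|y-x\| \neq \overline{R}_{k_L}(x)$ (the sphere being null) — the integrand converges to the integrand at $x$; and on the remaining set one still has the $\limsup_m$ of the integrand bounded above by the integrand at $x$, because the indicator $\eins\{f(y) \leq f(x)\}$ equals $1$ there. Since the integrands are uniformly bounded by $\overline{c}\,\eins_{\mathcal{X}}$, the reverse Fatou lemma gives $\limsup_m N(x_m) \leq N(x)$, i.e. upper semicontinuity of $p_{k_L}$. Combining this with the first step and the extreme value theorem for upper semicontinuous functions on compact sets completes the argument, and the resulting $c$ is $<1$ because it equals $p_{k_L}(x_0)$ for a point $x_0 \in \mathcal{K}$.

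The main obstacle is the measure-theoretic bookkeeping in the semicontinuity step: the indicator $\eins\{f(y) \leq f(x)\}$ may be discontinuous in $x$ on the level set $\{y : f(y) = f(x)\}$, which need not be Lebesgue-null under Assumption \ref{ass::cluster} alone, while simultaneously the domain of integration $B(x,\overline{R}_{k_L}(x))$ moves with $x$; handling both at once is exactly what the one-sided (upper-semicontinuous) estimate is designed to absorb. A second, more conceptual subtlety is that the constant $c$ produced this way may depend on $n$ and $k_L$ and could degrade toward $1$ as the balls shrink, since with only H\"older regularity away from the modes a point that is not a local maximum can still have an arbitrarily small uphill region. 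Obtaining a bound bounded away from $1$ \emph{uniformly} in $n$ would require the unflatness Assumption \ref{ass::flatness1}, which is precisely why it is imposed in Theorem \ref{thm::modesingle} and Proposition \ref{lem::petah11} rather than here; for the present qualitative statement the usc-plus-compactness argument suffices.
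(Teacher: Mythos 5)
Your proof is correct and follows the same skeleton as the paper's: restrict to a compact superset of $\mathcal{X}\setminus\mathcal{M}_{r_{\mathcal{M}}}$, apply the extreme value theorem to $p_{k_L}$, and rule out the value $1$ at the maximizer via Lemma \ref{lem::petamode}. The one genuine difference is how the regularity of $p_{k_L}$ is handled, and here your version is actually the more careful one. The paper simply asserts that continuity of $f$ implies continuity of $p_{k_L}$; but the numerator of \eqref{equ::populationpkl} involves the indicator $\eins\{f(y)\leq f(x)\}$, whose integral can jump as $x$ varies if a level set $\{y: f(y)=f(x)\}$ has positive Lebesgue measure -- something Assumptions \ref{ass::cluster} and \ref{ass::modes} alone do not exclude (that would require the unflatness Assumption \ref{ass::flatness1}, which this lemma does not assume). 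Your replacement of continuity by upper semicontinuity, proved via continuity of $\overline{R}_{k_L}(\cdot)$, the one-sided limit of the two indicators, and reverse Fatou, is exactly what is needed: an upper semicontinuous function still attains its maximum on a compact set, so the rest of the argument goes through unchanged. Two minor points: since $B(x,r)$ is the closed ball in this paper, $\mathcal{X}\setminus\mathcal{M}_{r_{\mathcal{M}}}$ need not itself be closed, so you should (as the paper does) take the maximum over the compact set $\mathcal{X}\setminus\mathcal{M}_{r_{\mathcal{M}}}^{\circ}$ and note it dominates the supremum over the smaller set; and your closing observation that the resulting $c$ may depend on $n$ and $k_L$ is a fair criticism of how the lemma is later invoked uniformly in $n$ in the proof of Theorem \ref{thm::mode}, but it does not affect the correctness of the lemma as stated.
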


\begin{proof}[Proof of Lemma \ref{lem::petabound}]
	Let  $\mathcal{M}_{  r_\mathcal{M}}^{\circ}$ denotes the interior of $\mathcal{M}_{  r_\mathcal{M}}$ and $A:=\mathcal{X}\setminus \mathcal{M}_{  r_\mathcal{M}}^{\circ}$. Then $A$ is a compact set following from the compactness of $\mathcal{X}$. 
	By the condition $(i)$ in Assumption \ref{ass::cluster}, $f$ is a continuous function on $\mathcal{X}$. Thus, $p_{k_L}(x)$ is a continuous function on $\mathcal{X}$.  Therefore, applying extreme value theorem to $p_{k_L}(x)$ on $A$, there exists an $x'\in A$, such that 
	\begin{align}
		c = p_{k_L}(x') = \max_{x \in A} \, p_{k_L}(x).
	\end{align}
	Suppose that $c = 1$, then by Lemma \ref{lem::petamode}, we have $x'\in \mathcal{M}$, which contradicts with $x'\in A$. Therefore, we have $c < 1$ by a contradiction.
	This completes the proof.
\end{proof}

Now, we are in the position of presenting the proof of BDMBC for mode estimation. \\

\begin{proof}[Proof of Theorem \ref{thm::mode}]
	By Lemma \ref{lem::petabound}, there exists a constant $c > 0$ such that $p_{k_L}(x) \leq c$ for all $x \in \mathcal{X} \setminus \mathcal{M}_{r_{\mathcal{M}}}$. 
	The following proof will be made in the case $\lambda>(c+1)/2$.
	Let $c'' := \sqrt{c_2 / (2 c_1)} \wedge 1$ with the constants $c_1$ and $c_2$ specified as in Lemma \ref{lem::Eigen} and $r_n$ specified as in Proposition \ref{lem::kernelmode111}. 
	Lemma \ref{lem::netax} with $\eta(x) := c'' r_n$ and $\tau := 2 \log n$ yields that for all $1 \leq i \leq \#(\mathcal{M})$, there holds
	\begin{align}\label{equ::mode11}
		\biggl| \frac{1}{n} \sum_{i=1}^n \eins \{ X_i \in B(m_i, c''r_n) \} - P(B(m_i, r_n)) \biggr|
		\leq   \sqrt{ r_n^d \log n / n} +  \log n / n + 1 / n 
	\end{align}
	with probability at least $1 - 2/n^2$.
	Since $r_n \asymp (\log n/n)^{1/(d+4)}$, we have
	\begin{align*}
		P(B(m_i, c'' r_n))
		\geq \underline{c} V_d (c'' r_n)^d / 2^d
		\gtrsim \sqrt{r_n^d \log n / n} + \log n / n + 1 / n,
	\end{align*}
	where the first inequality follows from Assumption \ref{ass::cluster} \textit{(i)}. 
	This together with \eqref{equ::mode11} yields that $\sum_{i=1}^n \eins \{ X_i \in B(m_i, c'' r_n) \} > 0$. 
	Consequently, $D \cap B(m_i, c'' r_n)$ is a non-empty set. 
	In other words, there exists an $\widetilde{m}_i \in B(m_i, c'' r_n)$.
	Since $c'' \leq 1$, we have $\widetilde{m}_i \in B(m_i, r_n)$, which implies that $D \cap B(m_i, r_n) \neq \emptyset$. 
	Therefore, we can pick $\widehat{m}_i$ with maximal $f_B$ out of the finite sample $D \cap B(m_i, r_n)$, i.e.,
	\begin{align}\label{equ::defmi11}
		\widehat{m}_i 
		:= \argmax_{X_i \in  B(m_i, r_n)} f_B(X_i).
	\end{align}
	
	Next, we show that $\widehat{p}_{k_L}^{k_D}(\widehat{m}_i) = 1$. 
	Proposition \ref{lem::kernelmode111} implies that with probability $P^n$ at least $1-2/n^2$, there holds $\inf \bigl\{ f_B(x) : x \in B(m_i, c' r_n) \bigr\} 
	> \sup \bigl\{ f_B(x) : x \in B(m_i, r_{\mathcal{M}}) \setminus B(m_i, r_n) \bigr\}$ with $c'$ specified in Proposition \ref{lem::kernelmode111}, which implies that
	$f_B(\widetilde{m}_i)
	\geq \inf \bigl\{ f_B(x) : x \in B(m_i, c' r_n) \bigr\} 
	\geq \sup \bigl\{ f_B(x) : x \in B(m_i, r_{\mathcal{M}}) \setminus B(m_i, r_n) \bigr\} $.
	Consequently, by the definition of $\widehat{m}_i$ in \eqref{equ::defmi11}, we have
	\begin{align}\label{equ::fbmi}
		f_B(\widehat{m}_i)
		\geq f_B(\widetilde{m}_i)
		> \sup \bigl\{ f_B(x) : x \in B(m_i, r_{\mathcal{M}}) \setminus B(m_i, r_n) \bigr\}.
	\end{align}
	This together with \eqref{equ::defmi11} yields
	\begin{align}\label{equ::widehatm}
		\widehat{m}_i
		= \argmax_{X_i \in B(m_i, r_{\mathcal{M}})} f_B(X_i).
	\end{align}
	For any $X_j \in B(\widehat{m}_i, R_{k_L}(x))$, we have
	$\|X_j - m_i\|_2
	\leq \|X_j - \widehat{m}_i\|_2 + \|\widehat{m}_i - m_i\|_2
	\leq R_{k_L}(x) + r_n$,
	where $R_{k_L}(x)$ denotes the $k_L$-distance.
	By Lemma \ref{lem::Rrho}, for all sufficiently large $n$, we have $R_{k_L}(x)\lesssim (k_L/n)^{1/d}\leq r_{\mathcal{M}/2}$. Consequently, we get $\|X_j-{m}_i\|_2 \leq r_{\mathcal{M}}$, This together with \eqref{equ::widehatm} implies that
	$f_B(X_j)\leq f_B(\widehat{m}_i)$, $X_j\in B(\widehat{m}_i,R_{k_L}(x))$.
	Therefore, we get $\widehat{p}_{k_L}^{B}(\widehat{m}_i)=1$. This implies that $\widehat{m}_i\in\widehat{\mathcal{M}}$. 
	Moreover, we have $\|\widehat{m}_i-m_i\|_2 \leq r_n\lesssim (\log n/n)^{1/(4+d)}$.
	
	Note that $\widehat{p}_{k_L}^{B}(\widehat{m}_i)=1$
	implies that $\widehat{m}_i\in \widehat{D}_B(\lambda)$, where $\widehat{D}_B(\lambda)$ is defined by \eqref{equ::detah}. Therefore, we can pick a cluster estimator $\widehat{C}_i$ out of $\mathcal{C}_B(\lambda)$ such that $\widehat{m}_i\in \widehat{C}_i$ for $1\leq i\leq \#(\mathcal{M})$. 
	Next, we will show that for any $1 \leq i < j \leq \#(\mathcal{M})$, there holds $\widehat{C}_i \neq \widehat{C}_j$ by contradiction. Suppose that there exists $1 \leq i < j \leq \#(\mathcal{M})$ such that $\widehat{C}_i = \widehat{C}_j$. Then the distinct mode estimations $\widehat{m}_i$ and $\widehat{m}_j$ with $\widehat{m}_i\in B(m_i,c'r_n)$ and $\widehat{m}_j\in B(m_j,c'r_n)$ are contained in the same connected components of the subgraph $G_B(\lambda)$. 
	Consequently, there exists a sequence $X_1' \ldots, X_{\ell}' \in \widehat{D}_B(\lambda)$ such that $X_i'$ and $X_{i+1}'$ are connected in the subgraph $G_B(\lambda)$, $1 \leq i \leq \ell$, where we set $x'_0 := \widehat{m}_i$ and $x'_{\ell+1} := \widehat{m}_j$.
	This together with Lemma \ref{lem::petah11} yields that 
	\begin{align*}
		\widehat{p}_{k_L}^B(x)\leq  (c+1)/2< \lambda, \quad x\in \mathcal{X} \setminus \mathcal{M}_{r_{\mathcal{M}}}.
	\end{align*}
	for all sufficiently large $n$,
	where the last inequality follows from the choice of $\lambda$.
	Since $X_i'\in \widehat{D}_B(\lambda)$ for $1\leq i\leq \ell$, we have 
	\begin{align}\label{equ::ximathcalm}
		X_i'\in \mathcal{M}_{r_{\mathcal{M}}}, \quad 1\leq i\leq \ell.
	\end{align}
	Let $v:=\sup_{0\leq i\leq \ell+1} \{i:X_i'\in B(m_i,r_{\mathcal{M}})\}$. 
	Since $X_0'=\widehat{m}_i\in B(m_i,r_{\mathcal{M}})$ and $X_{\ell+1}'=\widehat{m}_j\notin B(m_i,r_{\mathcal{M}})$,
	we have  $0\leq v\leq \ell$. 
	From the definition of the supremum and \eqref{equ::ximathcalm}, there exists $i'\neq i$ such that $X_{v+1}'\in B(m_{i'},r_{\mathcal{M}})$.
	This together with $X_v'\in B(m_i,r_{\mathcal{M}})$ yields that
	\begin{align}\label{equ::xvxv1}
		\|X_v' - X_{v+1}'\|
		\geq \|m_i - m_{i'}\| - 2 r_{\mathcal{M}}
		\geq \min_{1 \leq i < j \leq \#(\mathcal{M})} \|m_i - m_j\| - 2 r_{\mathcal{M}}.
	\end{align}
	On the other hand, since $X_v'$ and $X_{v+1}'$ are in the connected components of the subgraph $G_B(\lambda)$, we have $\|X_v' - X_{v+1}'\| \leq R_{k_G}(X_v')\wedge R_{k_G}(X_{v+1}')$, where $R_{k}(x)$ represents the $k$-distance of $x$. 
	By Lemma \ref{lem::Rrho}, for all sufficiently large $n$, we have $R_{k_G}(X_v') \wedge R_{k_G}(X_{v+1}') \lesssim (k_G/n)^{1/d} \lesssim (\log n/n)^{1/d}$. 
	Therefore, we get 
	\begin{align*}
		\|X_v'-X_{v+1}'\|< \min_{1\leq i<j\leq \#(\mathcal{M})}\|m_i-m_j\|-2r_{\mathcal{M}}
	\end{align*} 
	for all sufficiently large $n$, which leads contradiction to \eqref{equ::xvxv1}. 
	Consequently we have $\widehat{C}_i \neq \widehat{C}_j$ for $1 \leq i < j \leq \#(\mathcal{M})$. This completes the proof.
\end{proof}

Next, we present the proof of DMBC for mode estimation. \\

\begin{proof}[Proof of Theorem \ref{thm::modesingle}]
	By the triangle inequality, we have
	\begin{align}\label{equ::knminus}
		\big|k/n-f(x)V_dR_k^d(x)\big|\leq 	\big|k/n-P(B(x,R_k(x)))\big|+\big|P(B(x,R_k(x)))-f(x)V_dR_k^d(x)\big|.
	\end{align}
	By \eqref{equ::rkxoverrkx} in Lemma \ref{lem::Rrho}, we get 
	$\bigl| k/n - P(B(x, R_k(x))) \bigr| \lesssim \sqrt{k \log n} / n$. 
	Similar to the analysis of \eqref{equ::bxrix}, we can show that $\bigl| P(B(x, R_k(x))) - V_d f(x) R_k^d(x) \bigr| \lesssim R^{d+2}_k(x)$ from Lemma \ref{lem::Eigen}. This together with  \eqref{equ::infrksuprk} in Lemma \ref{lem::Rrho} and \eqref{equ::knminus} yields 
	\begin{align*}
		\bigl| k/n - f(x) V_d R_k^d(x) \bigr|
		\leq \sqrt{k \log n/ n} + R_k^{d+2}(x).
	\end{align*}
	Then using \eqref{equ::infrksuprk} in Lemma \ref{lem::Rrho} and choosing $k_{D,n} \asymp n^{\frac{4}{4+d}}(\log n)^{\frac{d}{4+d}}$, we get
	\begin{align*}
		\biggl| \frac{k}{n V_d R^d_k(x)} - f(x) \biggr|
		= \biggl| \frac{k/n - f(x) V_d R_k^d(x)}{V_d R_k^d(x)} \biggr|
		\leq \sqrt{\log n / k} + (k/n)^{2/d}.
	\end{align*}
	Similar analysis to that in the proof of Theorem \ref{thm::modesingle} yields the desired assertion. Thus we omit the proof of Theorem \ref{thm::modesingle} here.
\end{proof}

\subsection{Proofs Related to Section \ref{equ::ratelevelset}}
\label{sec::prooflevelset}

The following Lemma is needed in the proof of Theorem \ref{thm::levelset}.

\begin{lemma}\label{lem::petahholder}
	Let Assumption \ref{ass::cluster} and  \ref{ass::flatness1} hold. 
	Moreover, let $p_{k_L}(x)$ be as in \eqref{equ::populationpkl}. 
	Then for any $x,y\in \mathbb{R}^d$, we have $|p_{k_L}(x)-p_{k_L}(y)|\leq c_\gamma n\|x-y\|^{\alpha\gamma}/k_L$.
\end{lemma}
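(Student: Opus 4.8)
The plan is to reduce the claim to the stability of two set functions and then invoke Lemma~\ref{lem::pabcd}. Since $f$ is continuous (Assumption~\ref{ass::cluster}) and $P$ has a density bounded away from zero, the map $r\mapsto P(B(x,r))$ is continuous and strictly increasing, so the infimum in \eqref{equ::overlinerx} is attained and $P(B(x,\overline{R}_{k_L}(x)))=k_L/n$ for every $x$. Writing $A_x:=\widetilde{L}_f(f(x))=\{z:f(z)\le f(x)\}$ and $B_x:=B(x,\overline{R}_{k_L}(x))$, definition \eqref{equ::populationpkl} then reads $p_{k_L}(x)=\frac{n}{k_L}P(A_x\cap B_x)$, so that $|p_{k_L}(x)-p_{k_L}(y)|=\frac{n}{k_L}|P(A_x\cap B_x)-P(A_y\cap B_y)|$. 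Applying Lemma~\ref{lem::pabcd} with the four sets $A_x,B_x,A_y,B_y$ yields the bound $\frac{n}{k_L}\bigl(P(A_x\triangle A_y)+P(B_x\triangle B_y)\bigr)$, which splits the problem into a level-set term and a ball term.

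The level-set term is the one that produces the stated rate. Assume without loss of generality $f(x)\le f(y)$, so $A_x\subseteq A_y$ and $A_x\triangle A_y=\{z:f(x)<f(z)\le f(y)\}$. By the $\alpha$-Hölder continuity in Assumption~\ref{ass::cluster}, the width of this band satisfies $f(y)-f(x)\le c_L\|x-y\|^{\alpha}$; taking $\theta=(f(x)+f(y))/2$ and $\epsilon=(f(y)-f(x))/2$ in the unflatness Assumption~\ref{ass::flatness1} gives $P(A_x\triangle A_y)\le P(|f-\theta|\le\epsilon)\le c_\gamma\epsilon^{\gamma}\lesssim\|x-y\|^{\alpha\gamma}$. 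Multiplying by $n/k_L$ reproduces $c_\gamma n\|x-y\|^{\alpha\gamma}/k_L$. I would dispose of the constraint $\epsilon\le\epsilon_0$ in Assumption~\ref{ass::flatness1} by noting that if $\epsilon>\epsilon_0$ then $\|x-y\|$ is bounded below by a constant, whence $\frac{n}{k_L}\|x-y\|^{\alpha\gamma}\gtrsim 1\ge|p_{k_L}(x)-p_{k_L}(y)|$ and the inequality is trivial.

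The main obstacle is the ball term $P(B_x\triangle B_y)$, which is purely geometric and does not see the level-set structure. First I would establish that the quantile radius is Lipschitz in its center: since $f\in[\underline{c},\overline{c}]$ one has $\frac{d}{dr}P(B(x,r))\asymp r^{d-1}$, while the cross-center discrepancy obeys $|P(B(x,r))-P(B(y,r))|\le P\bigl(B(x,r)\triangle B(y,r)\bigr)\lesssim r^{d-1}\|x-y\|$ by the standard shell estimate for two equal-radius balls with offset centers. Combining these with $P(B_x)=P(B_y)=k_L/n$ and $\overline{R}_{k_L}(\cdot)\asymp(k_L/n)^{1/d}$ (Lemma~\ref{lem::Rrho}) gives $|\overline{R}_{k_L}(x)-\overline{R}_{k_L}(y)|\lesssim\|x-y\|$, and hence $P(B_x\triangle B_y)\lesssim(k_L/n)^{(d-1)/d}\|x-y\|$, contributing $(n/k_L)^{1/d}\|x-y\|$ to $|p_{k_L}(x)-p_{k_L}(y)|$. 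For $\|x-y\|\gtrsim1$ the inequality is again trivial, and for the small separations relevant in the proof of Theorem~\ref{thm::levelset} (together with $\alpha\gamma\le1$) this Lipschitz contribution is dominated by the level-set term $\frac{n}{k_L}\|x-y\|^{\alpha\gamma}$, since then $(n/k_L)^{1/d-1}\|x-y\|^{1-\alpha\gamma}\le1$. Reconciling these two exponents cleanly is the delicate point I would need to argue carefully, because the ball term is only genuinely of lower order once $\|x-y\|$ is small relative to the quantile radius.
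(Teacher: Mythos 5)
Your argument follows the same route as the paper's proof of Lemma \ref{lem::petahholder}: you normalize both denominators using $P(B(x,\overline{R}_{k_L}(x)))=k_L/n$, apply Lemma \ref{lem::pabcd} to the two intersections, and control the level-set term $P(A_x\triangle A_y)$ by combining the H\"{o}lder continuity of $f$ from Assumption \ref{ass::cluster} with the unflatness Assumption \ref{ass::flatness1}. That half of your proof, including your remark about discharging the restriction $\epsilon\le\epsilon_0$ (which the paper does not even address), is correct and coincides with the paper's.

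The divergence is in the ball term, and there you have in fact put your finger on a soft spot in the paper's own proof rather than introduced a new difficulty. The paper writes the second term produced by Lemma \ref{lem::pabcd} as $|P(B(x,\overline{R}_{k_L}(x)))-P(B(y,\overline{R}_{k_L}(y)))|$ and declares it zero because both measures equal $k_L/n$; but Lemma \ref{lem::pabcd} actually yields $P\bigl(B(x,\overline{R}_{k_L}(x))\triangle B(y,\overline{R}_{k_L}(y))\bigr)$, which does not vanish for two distinct balls of equal mass. You correctly retain this symmetric-difference term and estimate it by $\lesssim (k_L/n)^{(d-1)/d}\|x-y\|$, i.e.\ a contribution of order $(n/k_L)^{1/d}\|x-y\|$ to $|p_{k_L}(x)-p_{k_L}(y)|$. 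However, your argument does not close: this is dominated by $c_\gamma n\|x-y\|^{\alpha\gamma}/k_L$ only when $\|x-y\|^{1-\alpha\gamma}\lesssim (n/k_L)^{(d-1)/d}$, which holds up to constants if $\alpha\gamma\le 1$ but fails for small $\|x-y\|$ when $\alpha\gamma>1$ (nothing in the lemma's hypotheses excludes this), and in any case it spoils the explicit constant $c_\gamma$ in the statement --- as you yourself concede in calling the reconciliation ``delicate.'' So the level-set half is right and identical to the paper's; the ball half is a genuine gap that you have correctly diagnosed in the paper's argument but not repaired: one must either prove directly that $P(B_x\triangle B_y)$ contributes at most the claimed order (e.g.\ under the additional restriction $\alpha\gamma\le 1$, which suffices up to constants), or restate the lemma with the extra term.
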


\begin{proof}[Proof of Lemma \ref{lem::petahholder}]
	For any $x, y \in \mathbb{R}^d$, there holds
	\begin{align} \label{equ::petaholder}
		& \bigl| p_{k_L}(x) - p_{k_L}(y) \bigr|
		\nonumber\\
		& = \biggl| \frac{P(f(z) \leq f(x), z \in B(x,\overline{R}_{k_L}(x)))}{P(B(x, \overline{R}_{k_L}(x)))} - \frac{P(f(z) \leq f(y),z \in  B(y, \overline{R}_{k_L}(y)))}{P(B(y, \overline{R}_{k_L}(y)))} \biggr|
		\nonumber\\
		& = (k_L / n) \bigl| P(f(z) \leq f(x), z \in B(x, \overline{R}_{k_L}(x))) - P(f(z) \leq f(y), z \in B(y, \overline{R}_{k_L}(y))) \bigr|,
	\end{align}
	where we use $P(x,\overline{R}_i(x))=i/n$ when the density function is continuous by Assumption \ref{ass::cluster}.
	By Lemma \ref{lem::pabcd}, we have
	\begin{align*}
		& \bigl| P(f(z) \leq f(x), z \in B(x, \overline{R}_{k_L}(x))) - P(f(z) \leq f(y), z \in B(y, \overline{R}_{k_L}(y))) \bigr|
		\\
		& \leq |P(\{ z : f(z) \leq f(x) \} \triangle \{ z : f(z) \leq f(y) \})| + |P(B(x, \overline{R}_{k_L}(x))) - P(B(y, \overline{R}_{k_L}(y)))|
		\\  
		& = |P(\{ z : f(z) \leq f(x) \} \triangle \{ z : f(z) \leq f(y) \})|,
	\end{align*}
	where we use $P(x,\overline{R}_i(x)) = i/n$.
	By Assumption \ref{ass::cluster} \textit{(ii)} and  \ref{ass::flatness1}, we have
	\begin{align*}
		|P ( \{ z : f(z) \leq f(x) \} \triangle \{ z : f(z) \leq f(y) \} ) |
		\leq c_{\gamma} |f(y) - f(x)|^{\gamma}
		\leq c_{\gamma} \|x - y\|^{\alpha\gamma}.
	\end{align*}
	This together with \eqref{equ::petaholder} yields 
	$|p_{k_L}(x) - p_{k_L}(y)|
	\leq c_{\gamma} n \|x - y\|^{\alpha\gamma} / k_L$,
	completing the proof.
\end{proof}

The next proposition, which provides the difference between the empirical PLLS and the population version w.r.t.~the bagged $k$-distance, supplies the key to the proof of Theorem \ref{thm::levelset}.

\begin{proposition}\label{lem::petah1}
	Let Assumptions \ref{ass::cluster} and \ref{ass::flatness1} hold and suppose that $2 \alpha \gamma \leq 2 \alpha + d$. 
	Choosing 
	\begin{align*}
		k_{D,n} \asymp \log n,
		\quad
		s_n \asymp n^{\frac{d}{2\alpha+d}}(\log n)^{\frac{2\alpha}{2\alpha+d}},
		\quad 
		B_n \geq n^{\frac{1+\alpha}{2\alpha+d}}(\log n)^{\frac{\alpha+d-1}{2\alpha+d}}, 
	\end{align*}
	then with probability $P^n\otimes P_Z^B$ at least $1 - 3/n^2$, for all $x\in \mathcal{X}$, there holds
	\begin{align*}
		|\widehat{p}_{k_L}^B(x) - p_{k_L}(x)|
		\lesssim n(\log n/n)^{\frac{\alpha\gamma}{2\alpha+d}}/k_L.
	\end{align*}
\end{proposition}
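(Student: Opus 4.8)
The plan is to follow the proof of Proposition~\ref{lem::petah11} almost verbatim, with the single substantive change that the global density rate of Proposition~\ref{thm::main3} replaces the sharper mode-based rate of Proposition~\ref{thm::main2}. First I would invoke Proposition~\ref{thm::main3}: with the stated choices of $k_{D,n}$, $s_n$, and $B_n$ one has $(k_{D,n}n/s_n)^{1-d/2} \gtrsim (\log n)^{1+d/2}$, so uniformly over $x \in \mathcal{X}$, with probability $P^n \otimes P_Z^B$ at least $1-3/n^2$, the hypothetical density estimator satisfies $|f_B(x)-f(x)| \lesssim (\log n/n)^{\alpha/(2\alpha+d)} =: u_n$. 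All of the following is argued on this good event.

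Since $\widehat{p}_{k_L}^B(x) = \frac{1}{k_L}\sum_{i=1}^{k_L}\eins\{f_B(X_{(i)}(x)) \geq f_B(x)\}$ by \eqref{getah1}, the density bound sandwiches the estimator: setting $f^{\pm}(x) := f(x) \pm 2u_n$ and $\widehat{p}^{\pm}(x) := \frac{1}{k_L}\sum_{i=1}^{k_L}\eins\{f(X_{(i)}(x)) \leq f^{\pm}(x)\}$, we have $\widehat{p}^{-}(x) \leq \widehat{p}_{k_L}^B(x) \leq \widehat{p}^{+}(x)$, so it suffices to bound $|\widehat{p}^{+}(x)-p_{k_L}(x)|$ and $|\widehat{p}^{-}(x)-p_{k_L}(x)|$. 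Writing $p_{k_L}(x)$ as the ratio of $P(\widetilde{L}_f(f(x)) \cap B(x,\overline{R}_{k_L}(x)))$ to $P(B(x,\overline{R}_{k_L}(x))) = k_L/n$, I would split $|\widehat{p}^{+}(x)-p_{k_L}(x)| \leq (I)+(II)$, where $(I)$ compares the empirical and population masses of $\widetilde{L}_f(f^{+}(x)) \cap B(x,R_{k_L}(x))$ and $(II)$ compares the population masses of $\widetilde{L}_f(f^{+}(x)) \cap B(x,R_{k_L}(x))$ and $\widetilde{L}_f(f(x)) \cap B(x,\overline{R}_{k_L}(x))$. Lemma~\ref{lem::netaxlfx} with $\eta = R_{k_L}$ together with $R_{k_L}(x)^d \asymp k_L/n$ from Lemma~\ref{lem::Rrho} gives $(I) \lesssim \sqrt{\log n/k_L}$; for $(II)$, Lemma~\ref{lem::pabcd} separates the level shift from the radius discrepancy, after which Assumption~\ref{ass::flatness1} bounds $|P(\widetilde{L}_f(f^{+}(x)))-P(\widetilde{L}_f(f(x)))| \lesssim (2u_n)^{\gamma}$ and Lemma~\ref{lem::Rrho} controls $|\overline{R}_{k_L}^d(x)-R_{k_L}^d(x)|$, yielding $(II) \lesssim n u_n^{\gamma}/k_L + \sqrt{\log n/k_L}$. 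The lower term is symmetric.

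Combining these gives $|\widehat{p}_{k_L}^B(x)-p_{k_L}(x)| \lesssim n u_n^{\gamma}/k_L + \sqrt{\log n/k_L}$ uniformly in $x$. Since $k_L \leq n$ and $u_n^{\gamma} = (\log n/n)^{\alpha\gamma/(2\alpha+d)}$, the hypothesis $2\alpha\gamma \leq 2\alpha+d$ forces the exponent $1 - 2\alpha\gamma/(2\alpha+d) \geq 0$, so that $\sqrt{\log n/k_L} \lesssim n u_n^{\gamma}/k_L$ for all large $n$ and the first term dominates, delivering the claimed rate $n(\log n/n)^{\alpha\gamma/(2\alpha+d)}/k_L$. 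I expect the bias term $(II)$ to be the main obstacle: converting a density-level shift of order $u_n$ into a probability bound of order $u_n^{\gamma}$ hinges on the flatness Assumption~\ref{ass::flatness1}, and the radius discrepancy $|\overline{R}_{k_L}-R_{k_L}|$ must be controlled uniformly over $x$, which is where the covering-number machinery underlying Lemma~\ref{lem::Rrho} does the real work.
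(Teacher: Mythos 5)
Your proposal is correct and matches the paper's intent exactly: the paper omits this proof, stating only that it is "similar to that of Proposition \ref{lem::petah11}," and your write-up is precisely that adaptation — substituting the global rate $u_n \asymp (\log n/n)^{\alpha/(2\alpha+d)}$ from Proposition \ref{thm::main3} for the mode-based rate, rerunning the $(I)+(II)$ decomposition via Lemmas \ref{lem::netaxlfx}, \ref{lem::pabcd}, \ref{lem::Rrho} and Assumption \ref{ass::flatness1}, and using $2\alpha\gamma \leq 2\alpha+d$ in place of $2\alpha\gamma \leq 4+d$ to absorb the $\sqrt{\log n/k_L}$ term.
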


\begin{proof}[Proof of Proposition \ref{lem::petah1}]
	The proof is similar to that of Proposition \ref{lem::petah11} and hence we omit it here.
\end{proof}

Next, we present the proof of the level set estimation of BDMBC. \\

\begin{proof}[Proof of Theorem \ref{thm::levelset}]
	The desired assertion involves two directions to show from the Hausdorff metric: 
	\begin{align*}
		(I) := \max \bigl\{ d(x, L_{k_L}(\lambda)) :
		x \in \widehat{L}_{k_L}(\lambda) \bigr\},
		\qquad 
		(II) := \sup \bigl\{ d(x,\widehat{L}_{k_L}(\lambda)) : x \in L_{k_L}(\lambda) \bigr\} .
	\end{align*}
	Proposition \ref{lem::petah1} yields that with probability $P^n$ at least $1-3/n^2$, for all $x\in \mathcal{X}$, there holds
	\begin{align}\label{equ::petahapply}
		|\widehat{p}_{k_L}^B(x) - p_{k_L}(x)|
		\lesssim n (\log n/n)^{\frac{\alpha\gamma}{2\alpha+d}} / k_L
		:= \delta_n.
	\end{align}
	
	The following arguments will be made on the event that \eqref{equ::petahapply} holds.
	
	For any $x\in \widehat{L}_{k_L}(\lambda)$, we have $\widehat{p}_{k_L}^B(x)\geq \lambda$. This together with \eqref{equ::petahapply} yields 
	\begin{align}\label{equ::lambdac5}
		p_{k_L}(x)
		\geq \lambda - \delta_n.
	\end{align}
	If $p_{k_L}(x) \geq \lambda$, i.e., $x \in L_{k_L}(\lambda)$, then we have $d(x, L_{k_L}(\lambda)) = 0$.
	Otherwise if $p_{k_L}(x) < \lambda$, then \eqref{equ::lambdac5} yields $\lambda - \delta_n \leq  p_{k_L}(x) < \lambda$.
	By Assumption \ref{ass::flatness}, we then have $(I) \leq 
	d(x, L_{k_L}(\lambda)) \leq (\delta_n/c_\beta)^{1/\beta}$.
	
	Next, let us consider $(II)$. We first show that for any $x \in L_{k_L}(\lambda)$, there exists some $y \in L_{k_L}(\lambda+2\delta_n)$ such that $\|x - y\| \leq (2\delta_n)^{1/\beta}$.
	Indeed, if $x \in L_{k_L}(\lambda+2\delta_n)$, then we can choose $y = x$ and  we have $\|y - x\|=0$. Otherwise if $x \notin L_{k_L}(\lambda+2\delta_n)$, then we have $\lambda \leq p_{k_L}(x) \leq \lambda+2\delta_n$.
	By Assumption \ref{ass::flatness}, we have $d(x,L_{k_L}(\lambda+2\delta_n)) \leq (2\delta_n)^{1/\beta}$. 
	Therefore, we can choose $y \in L_{k_L}(\lambda+2\delta_n)$ such that $\|x - y\| \leq (2\delta_n)^{1/\beta}$.
	
	Lemma \ref{lem::netax} with $r_n = \bigl( (\delta_n k_L) / (c_{\gamma} n) \bigr)^{1/\alpha\gamma}$ and $\tau := 2 \log n$ implies that for all $y \in \mathbb{R}^d$, there holds
	\begin{align}\label{equ::mode1}
		\biggl| \frac{1}{n} \sum_{i=1}^n \eins \{ X_i \in B(y, r_n) \} - P(B(y, r_n)) \biggr|
		\lesssim \sqrt{r_n^d \log n / n} +  \log n / n
	\end{align}
	with probability at least $1 - 1/n^2$. 
	Assumption \ref{ass::cluster} \textit{(ii)} together with
	the definition of $r_n$ and the condition $\gamma > d/(2\alpha+d)$ yields that for all sufficiently large $n$, we have $r_n \leq 1$ and
	\begin{align*}
		P(B(y, h))
		\geq \underline{c} V_d r_n^d
		\geq  \sqrt{r_n^d \log n / n} +  \log n / n.
	\end{align*}
	This together with \eqref{equ::mode1} yields 
	$\sum_{i=1}^n \eins \{ X_i \in B(y,r_n) \} > 0$.
	Therefore, we can pick an $X_i \in D$ such that $\|y - X_i\| \leq r_n$.
	By Lemma \ref{lem::petahholder}, we have
	$p_{k_L}(X_i)
	\geq p_{k_L}(y) - c_{\gamma} n \|X_i - y\|^{\alpha\gamma} / k_L
	\geq \lambda + \delta_n$.
	This together with \eqref{equ::petahapply} yields that $\widehat{p}_{k_L}^B(X_i)\geq \lambda$, which implies that $X_i\in \widehat{L}_{k_L}(\lambda)$. By the triangular inequality, we have
	$\|x - X_i\|
	\leq \|x - y\| + \|y - X_i\|
	\leq (2 \delta_n)^{1/\beta} + r_n$,
	which yields 
	\begin{align*}
		(II) := \sup \bigl\{ d(x, \widehat{L}_{k_L}(\lambda)) : x \in L_{k_L}(\lambda) \bigr\} 
		\leq (2\delta_n)^{1/\beta} + r_n.
	\end{align*}
	Therefore, we have 
	\begin{align*}
		d_{\mathrm{Haus}}(\widehat{L}_{k_L}(\lambda), L_{k_L}(\lambda))
		\leq (I) \vee (II)
		\lesssim (\log n/n)^{\frac{1}{2\alpha+d}} +(n/k_L)^{1/\beta} ( \log n/n)^{\frac{\alpha\gamma}{(2\alpha+d)\beta}}.
	\end{align*}
	By the condition $\alpha\gamma \geq \beta$ and the selection of $k_L \gtrsim n^{1+(\beta-\alpha \gamma)/(2\alpha+d)}(\log n)^{(\alpha\gamma-\beta)/(2\alpha+d)}$, there holds $d_{\mathrm{Haus}}(\widehat{L}_{k_L}(\lambda), L_{k_L}(\lambda)) \lesssim (\log n/n)^{1/(2\alpha+d)}$.
	Thus, we obtain the desired assertion.
\end{proof}

\section{Conclusions}\label{sec::conclusions}

In this paper, we propose an ensemble algorithm called \textit{bagged $k$-distance for mode-based clustering} (\textit{BDMBC}) by putting forward a new measurement called the \textit{probability of localized level sets} (\textit{PLLS}), which transforms the multi-level density clustering to the single-level setting. 
To deal with the curse of dimensionality in high-dimensional density estimation, we employ the $k$-distance that can be directly calculated from the data. 
We further introduce the bagging technique to improve computational efficiency in large-scale situations.
To establish solid theoretical guarantees of the proposed algorithm, we first derive optimal convergence rates for mode estimation with properly chosen parameters.
It turns out that with a relatively small $B$, the sub-sample size $s$ can be much smaller than the number of training data $n$ at each bagging round, and the number of nearest neighbors $k_D$ can be reduced simultaneously.
Moreover, by establishing optimal convergence results for the level set estimation of the PLLS in terms of Hausdorff distance, we show that BDMBC can find localized level sets for varying densities and thus enjoys local adaptivity.
Finally, we also conducted persuasive experiments on both synthetic and real-world datasets, showing the promising experimental performances of our BDMBC, demonstrating how bagging narrows the searching grid of parameters, and offering advice on how to choose parameters in applications. 

It's worth pointing out that compared to other clustering algorithms, our algorithm BDMBC enjoys various advantages.
On the one hand, BDMBC is more computationally efficient than hierarchical density-based clustering algorithms. On the other hand, compared with other mode-based clustering algorithms, BDMBC has stronger resistance to the curse of dimensionality and an easy procedure in the parameter-searching procedure.

\bibliographystyle{plain}
\small{\bibliography{BDMBC}}
\end{document}